\newlist{steps}{enumerate}{1}
\setlist[steps, 1]{label = Step \arabic*:,leftmargin=5em,rightmargin=1em}
\definecolor{gray}{gray}{0.5}
\definecolor{dark_green}{rgb}{0, 0.5, 0}
\definecolor{dark_red}{rgb}{0.5, 0, 0}
\newtheoremstyle{highlight}
  {\topsep}
  {\topsep}
  {\itshape}
  {0pt}
  {\bfseries}
  {.}
  { }
  {\thmname{#1}\thmnumber{ #2}\thmnote{ (#3)}}
\newcommand{\Tau}{\mathcal{T}}
\theoremstyle{definition}
\newtheorem*{question*}{Question}
\newtheorem*{answer*}{Answer}
\newtheorem{theorem}{Theorem}
\newtheorem*{proof_sketch*}{Proof Sketch}
\newtheorem{lemma}{Lemma}[chapter]
\newtheorem*{claim*}{Claim}
\newtheorem{proposition}{Proposition}[chapter]
\newtheorem{definition}{Definition}[chapter]
\newtheorem{remark}{Remark}[chapter]
\newtheorem{example}{Example}[chapter]
\theoremstyle{highlight}
\renewenvironment{proof}{\noindent\textbf{Proof.}}{\qed}
\DeclareRobustCommand\iff{\;\Longleftrightarrow\;}
\newcommand{\dc}{\cellcolor{blue!25}}
\DeclareMathOperator*{\argmin}{\arg\!\min}
\newcommand{\newfontobj}[2]{
  \newcommand{#1}[1]{
    \expandafter\def\csname##1\endcsname{{#2 ##1}}}}
\newcommand{\comment}[1]{}
\let\temp\cleardoublepage
\renewcommand{\cleardoublepage}{\clearpage{\pagestyle{empty}\temp} \thispagestyle{empty}} 
\newcommand\thesistitlemedskip{0.2in}
\newcommand\thesistitlebigskip{0.5in}
\newcommand\thesistitleBigskip{1.2in}
\def\title{Motion Planning on Visual Manifolds}
\def\author{M Seetha Ramaiah}
\def\degree{Doctor of Philosophy}
\def\department{DEPARTMENT OF COMPUTER SCIENCE \& ENGINEERING}
\def\institute{INDIAN INSTITUTE OF TECHNOLOGY KANPUR}
\def\date{June, 2018}
\begin{document}
\pagenumbering{roman}
\doublespacing
  \begin{titlepage}
  
    \begin{center}
      \vfill
      \mbox{}\\
      {\Huge\title} \\[\thesistitleBigskip]
      {\large \textit{A Thesis Submitted}} \\
      {\large in Partial Fulfillment of the Requirements} \\
      {\large for the Degree of}\\
      {\large \textit \degree} \\[\thesistitlebigskip]
      {\large \textit{by}} \\
      {\Large  \author} \\[\thesistitlebigskip]
      \vfill
      {\large \textit{to the}} \\[\thesistitlemedskip]
      \begin{figure}[htbp]
	\centering
	\includegraphics[width=4cm,bb=0 0 200 200]{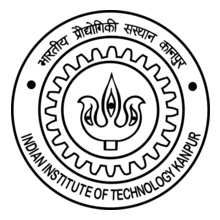}
\end{figure}
      {\large \bf \department} \\[\thesistitlemedskip]
      {\large \institute}\\[\thesistitlemedskip]
      {\large \bf \date}
      \vfill
    \end{center}
  \end{titlepage}
\cleardoublepage 
\setcounter{page}{3}

    
    


\includegraphics[bb=60 700 0 0]{./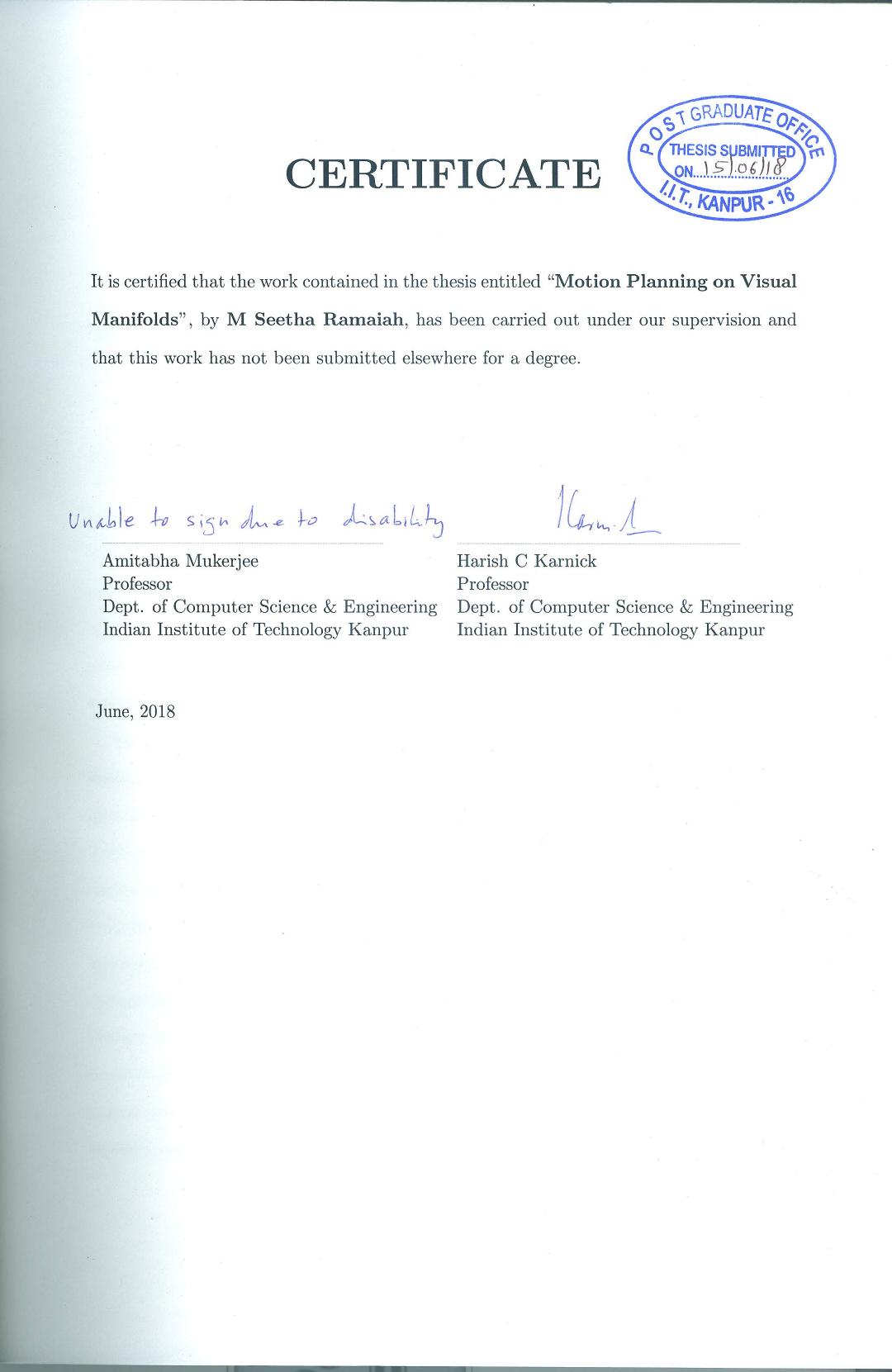}
\cleardoublepage 
\setcounter{page}{5}

\chapter*{SYNOPSIS}
\section*{Motion Planning}
\noindent Humans and most other animals are very good at avoiding obstacles and navigating in complex environments just by visually observing the world. They routinely use prior sensorimotor experience to build motor models, and use vision for gross motor tasks in novel environments. Achieving similar abilities in robots, without having to calibrate a robot’s own body structure, or estimate exact 3-D positions, is a touchstone problem for robotics. 

The notion of Configuration Space (also known as C-Space) is fundamental to conceptualizing multibody motion and to address the problem of \textit{Robot Motion Planning} -- planning obstacle-free paths from a source configuration to a destination configuration. Traditional solutions to the motion planning problem involve constructing an explicit representation of the robot in terms of the number of degrees of freedom, joint and link geometry, rules for forward kinematics and inverse kinematics to map from the configuration space to the work-space and vice versa. Once such a representation is available, obstacle-free paths can be planned in the configuration space. This approach is quite cumbersome because different representations need to be hand-engineered for different robots, obstacles and workspaces. If we consider the triple $\langle$robot, obstacles, workspace$\rangle$ as a system, the traditional approach is quite system specific. Any change in the system would require operating on a different hand-engineered model.

In this thesis, we propose an alternative characterization of configuration space, which we call the \textit{Visual Configuration Space} (VCS), to address the motion planning problem. Beginning with a dense enough set of images of the robot in random configurations, it is possible to discover the topology of its motion manifold, using manifold learning methods such as Isomap~\citep{tenenbaum2000global}. This is possible because, in spite of the robot images being very high-dimensional, the governing parameters of its motion are much fewer and as a result the robot images can only change in very restricted ways. This manifold, discovered using the robot images, is the visual configuration space.  A path between two configurations of the robot is a curve between the corresponding points on the VCS. Since we can only use a finite sample of images, we will only have a discrete representative of the VCS, which we call the \textit{Visual Roadmap} (VRM). The VRM is a graph in which each node corresponds to an image of the robot in some configuration with an edge between two nodes whose poses are near-by according to some metric in the image space. We use the VRM to compute shortest paths between configurations. 

We discuss the conditions under which the visual configuration space is homeomorphic to the canonical C-Space. We also discuss how obstacles in the workspace are mapped on to VCS and VRM and the issues involved in and advantages of using the VRM for motion planning. We present empirical results related to motion planning for various types of simulated planar robots and some real 3-D robots. 

\subsection*{Basic Framework}
Given a set of robot images in random configurations, we build a graph using these images and plan paths using this graph. In this graph, each node corresponds to a configuration of the robot and so the graph will have as many nodes as the number of images sampled. Two nodes will be connected if the corresponding images are such that one of them is a $k$-nearest neighbour of the other one, for some value of $k$. The corresponding edge will have a weight equal to the distance between those two images, for some distance metric such as the Euclidean distance. We call this $k$-nearest neighbours ($k$-NN) graph a \textit{Visual Roadmap} (VRM). The idea of VRM is an analog to Probabilistic Roadmap (PRM)~\citep{kavraki-latombe-overmars-96_PRM-high-dimensional}. While PRM samples the conventional C-space, VRM samples the VCS.

In addition to the raw images of the robot, we also consider random projections of raw images, a set of ideal points tracked on the robot body, Shi-Tomasi features~\citep{shi-tomasi_1994_cvpr_good-features-to-track} for each link of the robot, to save distance computation time. For these different representations, we use different distance metrics to construct the VRM. We use Euclidean metric for the raw images of the robot, for random projection vectors, and for ideal track points. For Shi-Tomasi feature based representation, we use Hausdorff distance. We study empirically, the effect of using these different representations and distance metrics to compute the visual roadmap.

When there are no obstacles in the workspace, finding a path for the robot from a source configuration to a destination configuration corresponds to finding a path in the $k$-NN graph.

\subsection*{Static Obstacle Avoidance}

When there are any static obstacles, we find the configurations in which the robot images have a non-empty intersection with the obstacle image and remove from the roadmap, the nodes corresponding to these configurations. Paths planned using the remaining graph will be \textit{almost} obstacle-free. The paths are not yet fully safe because the edge between two free configurations may not be free from obstacles. This is the problem of local planning and will be handled separately.

For spatial robots, avoiding an obstacle would require multiple cameras. We collect multiple images of each configuration from different fixed views. A given configuration is free, if there is at least one view in which the robot image does not intersect with the obstacle image. If the images from all the views have a non-empty intersection with the obstacle image in the respective views, then that configuration will be considered a collision configuration.

\subsection*{Local Planner}
Once we know that all the nodes in the $k$-NN graph correspond to free configurations, it is the responsibility of the local planner to make sure that all the edges connecting the free nodes are free from obstacles. For this purpose, we propose three approaches: (i) Local Tangent Space based planner, (ii) Track-points based local planner, and (iii) Local planner using Shi-Tomasi features. We present an empirical analysis of these three local planners for some simulated robots. 

\subsection*{Dynamic Obstacle Avoidance}
To handle dynamic obstacles, we make certain assumptions about the speed of the obstacle. These assumptions facilitate us to update only small portions of the graph that correspond to collision configurations, instead of having to recreate the whole graph all over again every time the obstacle moves. Firstly, we assume that the speed of the obstacle is less than the speed of the robot, so that there is enough time for the robot to plan alternate paths. Secondly, we assume that in each move, the obstacle moves at most by certain amount, so that the updates on the graph can be limited to a small number of layers of neighbours around the current obstacle region in the graph.


\section*{Modelling Body Schema}
We also discuss how similar manifold-based computational models can be used to model body schema and sensorimotor integration. Body schema of a cognitive agent is a representation of its body that allows it to infer the position and orientation of its limbs relative to its world, and to move and perform actions in that world.  Such a model could be learned by the agent in the early stages of its life by performing random motions and observing  their outcomes (motor babbling) and is grown and adapted to deal with more and more complex situations to perform complex tasks.

\section*{Head Motion Animation of Avatars in Virtual Environments}
We apply the idea of Visual Roadmap for animating avatars in a virtual environment designed to support remote collaboration between distributed work teams in which users are represented by avatars. There are usually long periods of time when a user is not actively controlling the avatar and working on his official task. We need to animate his avatar with a 'working-at-desk' animation that should be non-looping and sufficiently random for a single avatar as well as between multiple avatars to appear realistic. We present a technique for generating multiple head motions using the gaze space images to control the avatar motion. Our technique can automatically generate long sequences of motion without any user intervention. We present results from synthetic data.
\clearpage 


\setstretch{1.3} 

\pagestyle{empty} 
\newenvironment{dedication}
        {\vspace{6ex}\begin{quotation}\begin{center}\begin{em}}
        {\par\end{em}\end{center}\end{quotation}}
\begin{dedication}
To humanity \ldots
\end{dedication}

\chapter*{Acknowledgements}

I would like to express my deep gratitude towards my thesis supervisors Prof. Amitabha Mukerjee and Prof. Harish Karnick. Prof. Mukerjee has helped me find some fundamental problems in the area of Cognitive Robotics. He has offered a lot of discussions which helped me gain confidence in the area and a direction for my research. Besides research-related discussions, we have also had many refreshing discussions at his home and during cycling trips. The moral support that he gave me during not-so-good times is immeasurable. I would like to thank him for all the good things he has done to me.

Prof. Karnick has helped me get moving when Prof. Mukerjee became unavailable for guidance, due to his unfortunate medical condition. He offered me convincing explanations when I had doubts regarding the validity of my PhD work, for which I am quite grateful to him. He has also given me moral support when I needed it. Without his help, I would not have been able to complete this thesis. Also, his courses have helped me gain interest and knowledge in the foundations of Machine Learning, which is another area of my interest.

I would like to thank Prof. T V Prabhakar for motivating me to pursue PhD and getting me started with PhD. He was my guide during the initial times of my PhD. He encouraged me to pursue my interests without forcing anything on me. We also spent quite some time discussing things outside academics during some lunch/dinner sessions at his home. Without his support, I may not have continued or even started to pursue my PhD.

Besides my supervisors, I have also taken help from almost all the faculty members of the CSE department at IIT-Kanpur, in some form or the other. The help offered by Prof. Shashank Mehta and Prof. Sumit Ganguly deserves a special mention. Prof. Mehta helped me transition from one area of research to another when I was struggling to find research problems. Prof. Ganguly has given me valuable suggestions on many occasions. 

I would like to thank my collaborators for their help in the preparation of this thesis. I have collaborated with Dr. Geetika Sharma and Ankit Vijay at TCS innovation labs in Delhi. I have also collaborated with Arindam Chakraborty and Sadbodh Sharma at IIT Kanpur. I have received help from Shubham Goel, Ankit Maheshwari and Abhishek Kashyap who interned here at CSE, IIT Kanpur. 

I have received several kinds of support from the CSE Department at IIT Kanpur, without which my stay here would have been impossible. I have received financial support through various projects and through the Research-I Foundation fund. My stay at IIT-Kanpur has been very enjoyable because of the wonderful campus and the facilities provided by the Institute. Many thanks are due to the CSE Department and to the Institute for all the great things they have given me.

I would like to take this opportunity to thank all my friends at IIT-Kanpur. Special thanks are due to Kiran, Madhu, Ramesh, Hemanadhan, Ravi, Vishwesh, Narendra, Ashok, Guna, Venu, Srikanth and Prasanna. The time we spent together and the discussions we have had during lunch and dinner sessions were quite fun-filled and refreshing. I have also spent fruitful time with my lab mates Deepanjan, Saurabh, Sagarmoy, Ramprasad, Badri, Diptarka, Debarati, Keerti, Garima, Sumanta, Amit, Rajendra, and Mahesh. Without the company of these people, stay at IIT-Kanpur may not have been as enjoyable as it was.

Finally, I would like to thank my family for all their support during the rather long period of my PhD life. My wife Anusha deserves special thanks for all her support. She has made some commendable sacrifice to support the family, without which my pursuit of PhD may have been in great trouble.

\tableofcontents
\cleardoublepage 
\singlespacing
\chapter*{List of Publications}
\addcontentsline{toc}{chapter}{List of Publications}

\begin{enumerate}
    \item {
        \textbf{Body Schema as a Collection of Manifolds};
        M Seetha Ramaiah, Amitabha Mukerjee; Poster at \emph{International Conference on Cognition, Brain and Computation \textbf{CBC-2015}}. December 5–7, 2015. Ahmedabad, India.
    }
    \item {
        \textbf{Visual Generalized Coordinates};
        M Seetha Ramaiah, Amitabha Mukerjee, Arindam Chakraborty, Sadbodh Sharma;
        arXiv preprint arXiv:1509.05636, 2015.
    }
    \item {
        \textbf{The Baby at One Month: Visuo-motor discovery in the  infant robot};
    	Amitabha Mukerjee, M Seetha Ramaiah, Sadbodh Sharma, Arindam Chakraborty;
    	Poster at \emph{IEEE \textbf{ICRA-2013} Workshop on Bootstrapping Structural Knowledge
    	from Sensory-motor Experience}. May 6-10, 2013. Karlsruhe, Germany.
    }
    \item{
        \textbf{Head Motion Animation using Avatar Gaze Space};
    	M Seetha Ramaiah, Geetika Sharma, Ankit Vijay, Amitabha Mukerjee;
        Poster at \emph{IEEE Virtual Reality \textbf{VR-2013}}. March 18-20, 2013. Lake Buena Vista, FL, USA.
    }
\end{enumerate}

Contents of Chapter~\ref{chap:vcs} and Chapter~\ref{chap:rmp_in_vcs} are based on (2). Contents of Chapter~\ref{chap:manifold_body_schema} are based on (1) and (3). Contents of Chapter~\ref{chap:head_motion} are based on (4).

\cleardoublepage

\listoffigures
\addcontentsline{toc}{chapter}{List of Figures}
\cleardoublepage

\pagestyle{myheadings}

\pagenumbering{arabic}

\doublespacing 


\chapter{Introduction and Preliminaries} 

\label{chap:intro_n_prelim} 

\lhead{\textit{Introduction and Preliminaries}} 

\section{Motion Planning}
\label{sec:motion_planning}
\textit{Motion planning} is the problem of getting an agent, such as a robot, to move from one configuration to another configuration without hitting any obstacles present in its environment. This is a very important problem in robotics because planning obstacle-free paths is a necessary step to perform any task involving a robot's motion. For example, if a robotic arm has to reach a tool to fix something, it has to move from its current configuration to a configuration in which its end effector is ready to hold the tool. This is a very common task to be performed for any industrial robotic arm. Similarly for a mobile robot, moving from one place to another place in its environment is a common task. During this motion, we do not want any part of the robot to hit anything in the environment, in order that no damage be caused either to the robot or to any other object in the environment.

\subsection{Motion Planning: Examples}
\label{sec:motion_planning_examples}
Consider a circular mobile robot in a planar workspace as in Figure~\ref{fig:rmp:circle_example}. Given a source position and a goal position, how does the robot move from the source to the goal? Planning a path for a mobile robot such as this one corresponds to computing a continuous sequence of positions in the workspace, that begins with the source position and ends with the goal position, such that in no position of the sequence does the robot collide with any obstacle in the workspace.

\begin{figure}[ht!]
    \centering
    \includegraphics[width=0.7\textwidth,bb=0 0 1000 680]{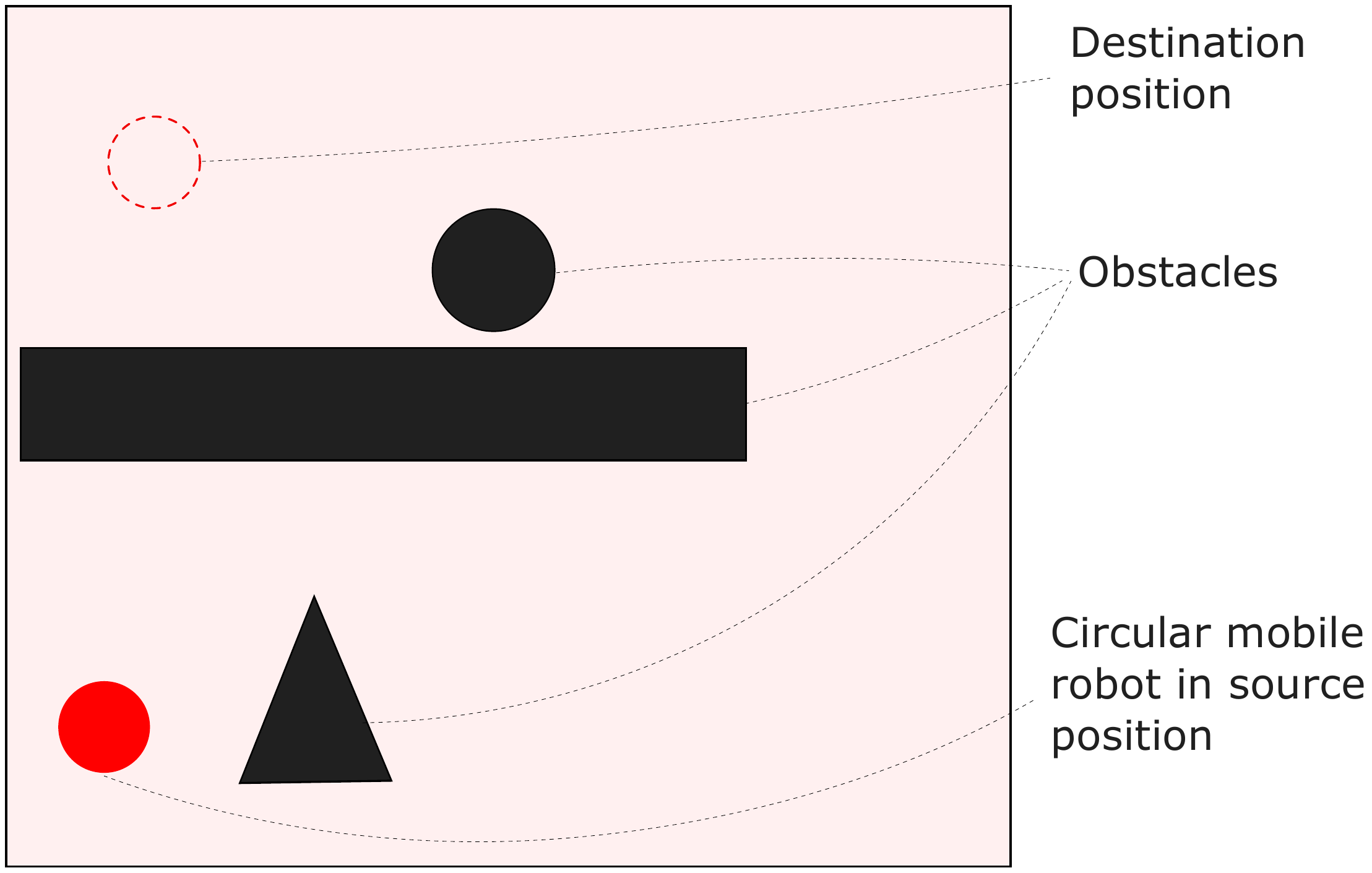}
    \caption{
        A circular mobile robot in a planar workspace with obstacles. The goal is to generate a sequence of motion instructions for the robot to move from the source position to the destination position without hitting obstacles.
    }
    \label{fig:rmp:circle_example}
\end{figure}

\begin{figure}[ht!]
    \centering
    \includegraphics[page=3, clip, trim = 5cm 3cm 5cm 7cm,  width=0.8\textwidth]{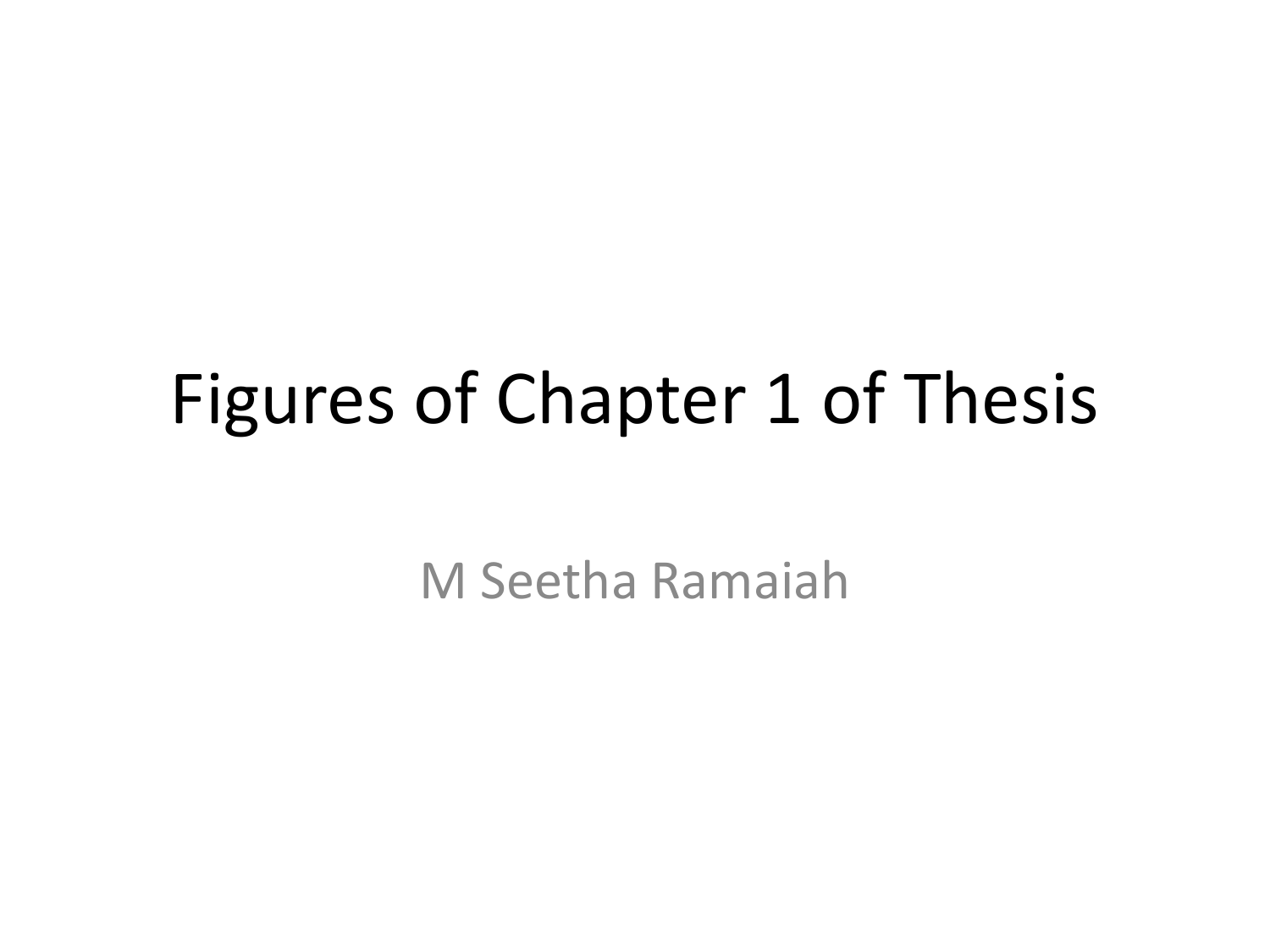}
    \caption{
        A 2-link planar arm in a planar workspace with obstacles (white objects). The goal is to guide the arm to change its configuration from the source pose to the destination pose, through a continuous motion without hitting obstacles.
    }
    \label{fig:rmp:2link_arm_example}
\end{figure}

Motion is not always necessarily translational. A lot of industrial robots are articulated chains of links connected by joints. Consider a 2-link planar arm as in Figure~\ref{fig:rmp:2link_arm_example}. Motion planning for such an arm corresponds to creating a continuous sequence of poses of the arm that begins with the source pose and ends with the destination pose such that no pose in the sequence leads to a collision with an obstacle in the workspace. 

Even though we used simple planar models for illustration, the motion planning problem is universally applicable to all kinds of autonomous robots such as industrial robots, humanoid robots, self-driving cars, unmanned air vehicles (UAVs), medical robots, firefighting robots, robots for defense and landmine clearance, etc. In addition to robotics, motion planning also has applications in many other fields like animating digital agents in virtual environments, architectural design walk-through, studying protein folding and RNA folding, etc\footnote{More details on the applications of motion planning can be found in~\citep{choset-05_robot-motion-theory}.}.


In this thesis, we propose a purely vision-based approach to motion planning, inspired by human cognition. We will apply the proposed visual approach to robot motion planning (chapter~\ref{chap:rmp_in_vcs}), modelling body schema and infant motor learning (chapter~\ref{chap:manifold_body_schema}) and animating avatars in virtual environments (chapter~\ref{chap:head_motion}). Before introducing the proposed approach, we will first look at the traditional approach to robot motion planning and the complexity associated with it. 

\section{Traditional Approach to Robot Motion Planning}
\label{sec:rmp_traditional_approach}

To make things slightly more specific, we define a \textit{multibody system} as a collection of rigid or flexible bodies, called links, connected through joints allowing for various kinds of translational and rotational motion of the links. A \textit{robot} is a programmable multibody system, equipped with some sensors and actuators. Under this definition, even human bodies qualify as robots. Programmability of robots allows us to make them autonomous. In this thesis, we will assume that the links of a robot are rigid. The \textit{workspace} ($\mathcal{W}$) of the robots considered in this thesis is a part of $\mathbb{R}^k$, where $k$ is either 2 or 3. If the workspace of a robot is a subset of $\mathbb{R}^2$, it is a \textit{planar robot} and if the workspace is a subset of $\mathbb{R}^3$, it is a \textit{spatial robot}. 

For the purposes of automatic motion planning, we need to be able to specify the robot completely. A fundamental question in this context is, what is a minimal representation for a complete specification of the robot. Answer to this question lies in the concept of \textit{configuration} and \textit{configuration space}~\citep{lozano1983spatial}. 

\subsection{Configuration Space}
The notion of \textit{configuration space} (also known as C-space) is fundamental to conceptualizing multibody motion. The \textit{configuration} of a multibody system is a complete description of the position of every point of the system, with respect to a fixed point in its world. The \textit{configuration space} of the system is the set of all possible configurations of the system. For a robotic system, typically, its configuration is given by a parameter vector which can be mapped to the positions of all points on the robot body. So, each configuration of the robot is a point in its configuration space. A configuration is denoted by $q$ and configuration space is denoted by $\mathcal{Q}$. Essentially, configuration space is a representational tool that allows us to view the robot as a point in a space. 

The configuration of a robot can be specified by a parameter vector, and hence the configuration space is a vector space. The minimum number of parameters needed to specify the configuration or the dimension of the configuration space, is called the \textit{degrees of freedom} (DOF) of the robot.

\subsubsection{Configuration Space: Examples}
For the circular mobile robot in Figure~\ref{fig:rmp:circle_example}, given its radius, which is fixed, coordinates of its center are sufficient to specify the position of every point on the robot. So, the set of all possible coordinate values that the center can take forms the configuration space of this robot. Hence, its C-space is $\mathbb{R}^2$. Thus, both the workspace and the configuration space, in this case, have the same topology\footnote{See Appendix~\ref{app:topology} for a quick refresher on topology.} (topology of $\mathbb{R}^2$). However, because of the size of the robot, there are some regions of the workspace where the center of the robot cannot be --- any point which is at a distance less than the radius of the robot, from a border of the workspace, cannot be the center of the robot. So, as shown in Figure~\ref{fig:cspace:circle_example}, its configuration space excludes some part of the workspace to accommodate for the size of the robot. This is done so that each point in the configuration space corresponds to the robot in some position in the workspace, and different points of the C-space correspond to the robot in different positions in the workspace. Since each configuration of this robot is a point in a 2-dimensional configuration space, it's a 2-DOF robot.

\begin{figure}[ht!]
    \centering
    \includegraphics[width=0.98\textwidth]{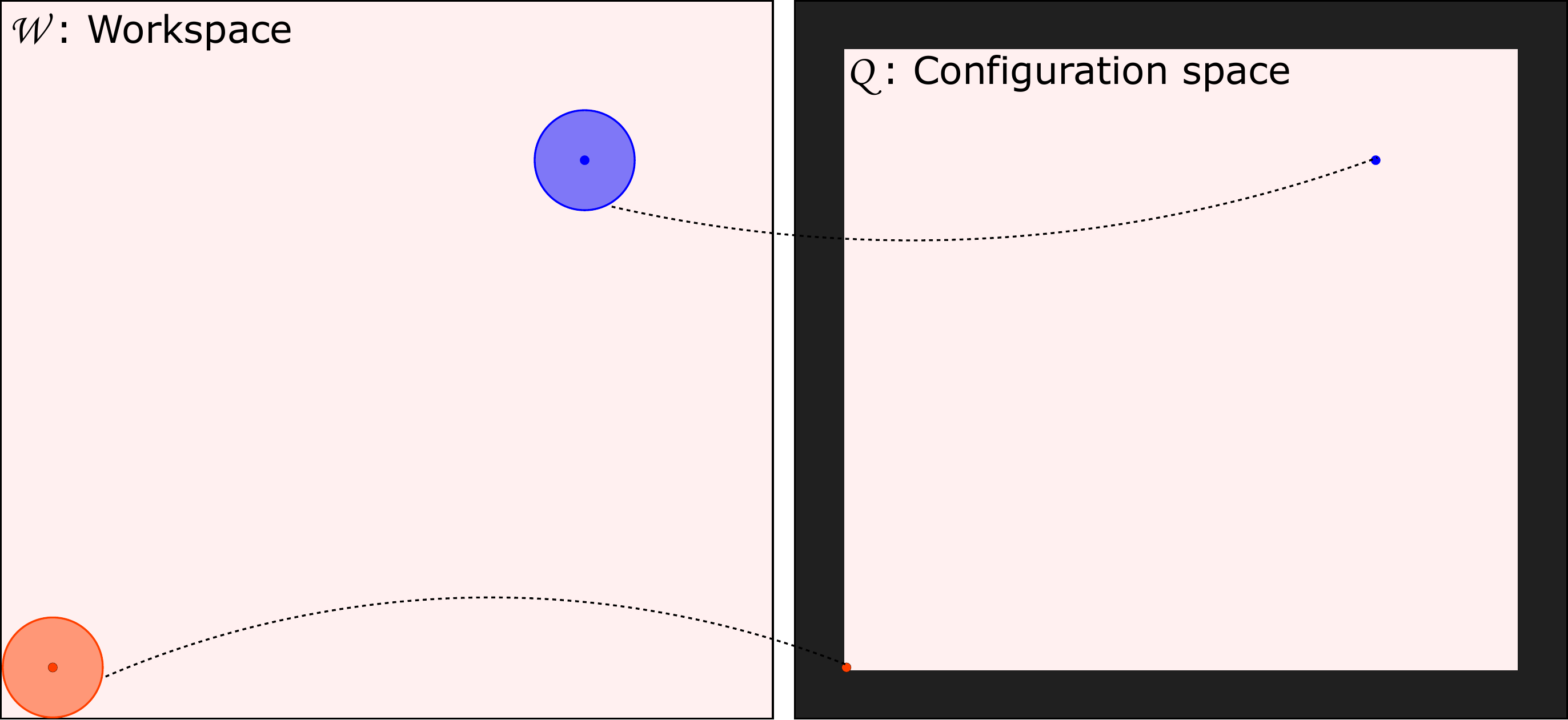}
    \caption{
        Configuration space of a circular mobile robot, with the position of the center being used as its configuration, excludes the points (the dark part in the right half of the figure) which cannot be the center of the robot in the workspace. Correspondence between the workspace and the configuration space representations of the robot is shown by the dotted lines. In this case, the workspace and the configuration space, both have an $\mathbb{R}^2$ topology.
    }
    \label{fig:cspace:circle_example}
\end{figure}

\begin{figure}[ht!]
    \centering
    \includegraphics[page=5, clip, trim = 1cm 4cm 1cm 6.8cm,  width=0.98\textwidth]{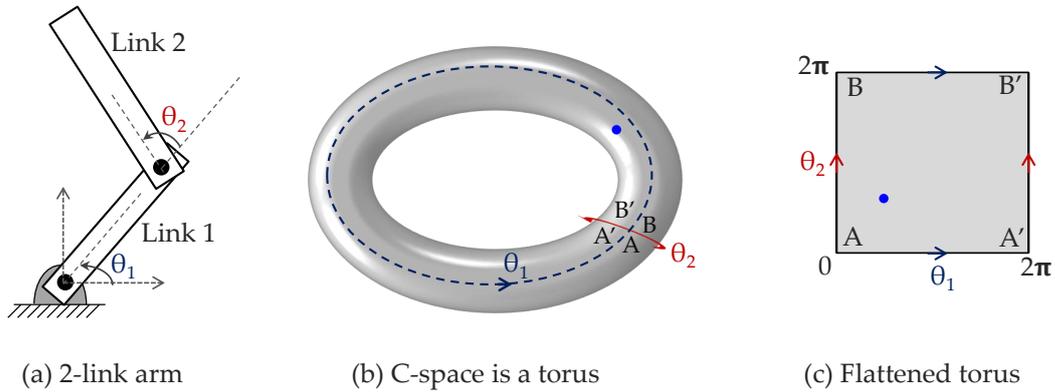}
    \caption{
        Configuration space of a 2-link robotic arm, with the joint angle vector $(\theta_1, \theta_2)$ being used as its configuration, is a torus which can be cut and flattened into a plane. The robot's configuration is shown as a point on the torus and in the plane. Because of the circular topology of each joint angle, each pair of opposite edges of the plane is actually the same line. This illustration is based on a figure from~\citep{choset-05_robot-motion-theory}.
    }
    \label{fig:cspace:2link_arm_example}
\end{figure}

Configuration space is not always a simple Euclidean space. For a 2-link robotic arm such as the one in Figure~\ref{fig:cspace:2link_arm_example}(a), we can use its joint angles $(\theta_1, \theta_2)$ to specify its configuration. If both the links can rotate fully around independently, then $\theta_1$ and $\theta_2$ take values from $[0, 2\pi]$. Since 0 and $2\pi$ are indistinguishable from each other, each joint angle has a circular topology $S^1$ and because of the independence of the two joint angles, the combined set of values has an $S^1 \times S^1 = T^2$ topology. That makes the configuration space a torus as shown in Figure~\ref{fig:cspace:2link_arm_example}(b). This torus can be cut and flattened into a plane as in Figure~\ref{fig:cspace:2link_arm_example}(c).

The basic motion planning problem is to compute a collision-free path between a pair of source and destination configurations. A path between two configurations $q_s, q_d$ is a curve $\gamma: [0, 1] \to \mathcal{Q}$ such that $\gamma(0) = q_s$ and $\gamma(1) = q_d$. When there are no obstacles in the workspace, a straight line (geodesic) between the two configurations would serve as a path. But when there are obstacles, things get complicated. We need to first map each workspace obstacle to a set of corresponding points in the C-space. A collision-free path from $q_s$ to $q_d$ would then be a curve from $q_s$ to $q_d$ that does not pass through any obstacle configurations. 

\subsection{Free Configuration Space}
An \textit{obstacle in the workspace} is a region of the workspace occupied by anything other than the robot, and hence not accessible to the robot. \textit{Free workspace}, denoted by $\mathcal{W}_{free}$, is the workspace excluding all the obstacles in the workspace:
\[
    \mathcal{W}_{free} = \mathcal{W} \setminus \bigcup\limits_i B_i,
\]
where $B_i$ is the region occupied by the $i^{th}$ obstacle in the workspace.

We define the \textit{shape} of the robot corresponding to a configuration $q$, denoted by $R(q)$, to be the set of all points of the workspace which are occupied by the robot, when its configuration is $q$. If $B_i$ is an obstacle in the workspace, the corresponding \textit{obstacle in C-space}, denoted by $CB_i$, is the set of configurations $q$ for which the corresponding shape of the robot has a non-empty intersection with $B_i$, i.e.,
\[
    CB_i = \{ q \in \mathcal{Q} : R(q) \cap B_i \ne \emptyset \}.
\]
The \textit{free configuration space}, denoted by $\mathcal{Q}_{free}$, is the configuration space minus the union of configuration space obstacles:
\[
    \mathcal{Q}_{free} = \mathcal{Q} \setminus \bigcup\limits_i CB_i.
\]
These definitions and notations are inspired by \citep{choset-05_robot-motion-theory}.

The shape of an obstacle in the C-space depends on the size, shape, and position of the obstacle and the robot in the workspace. Precise maps need to be defined between the workspace and the configuration space. A map from the configuration space to the workspace is called \textit{forward kinematics} and the inverse map is called \textit{inverse kinematics}. The definition of forward and inverse kinematics heavily depends on obstacle and robot geometry. So, the standard way of defining free configuration space requires the knowledge of the geometry of the obstacles as well as that of the robot in order to compute the free space. 

\subsubsection{Free Space Computation: Example}
Figure~\ref{fig:cspace_obstacle:circle_example} shows an obstacle in the workspace and in the configuration space, for the circular mobile robot considered in Figure~\ref{fig:cspace:circle_example}. As noted earlier, each point of the configuration space corresponds to the robot in some position of the workspace. The shape of the C-space obstacle depends on the geometry of the robot and of the obstacle. In particular, for this robot, we need to consider the shape and size of the robot and obstacles.

\begin{figure}[ht!]
    \centering
    \includegraphics[width=0.98\textwidth]{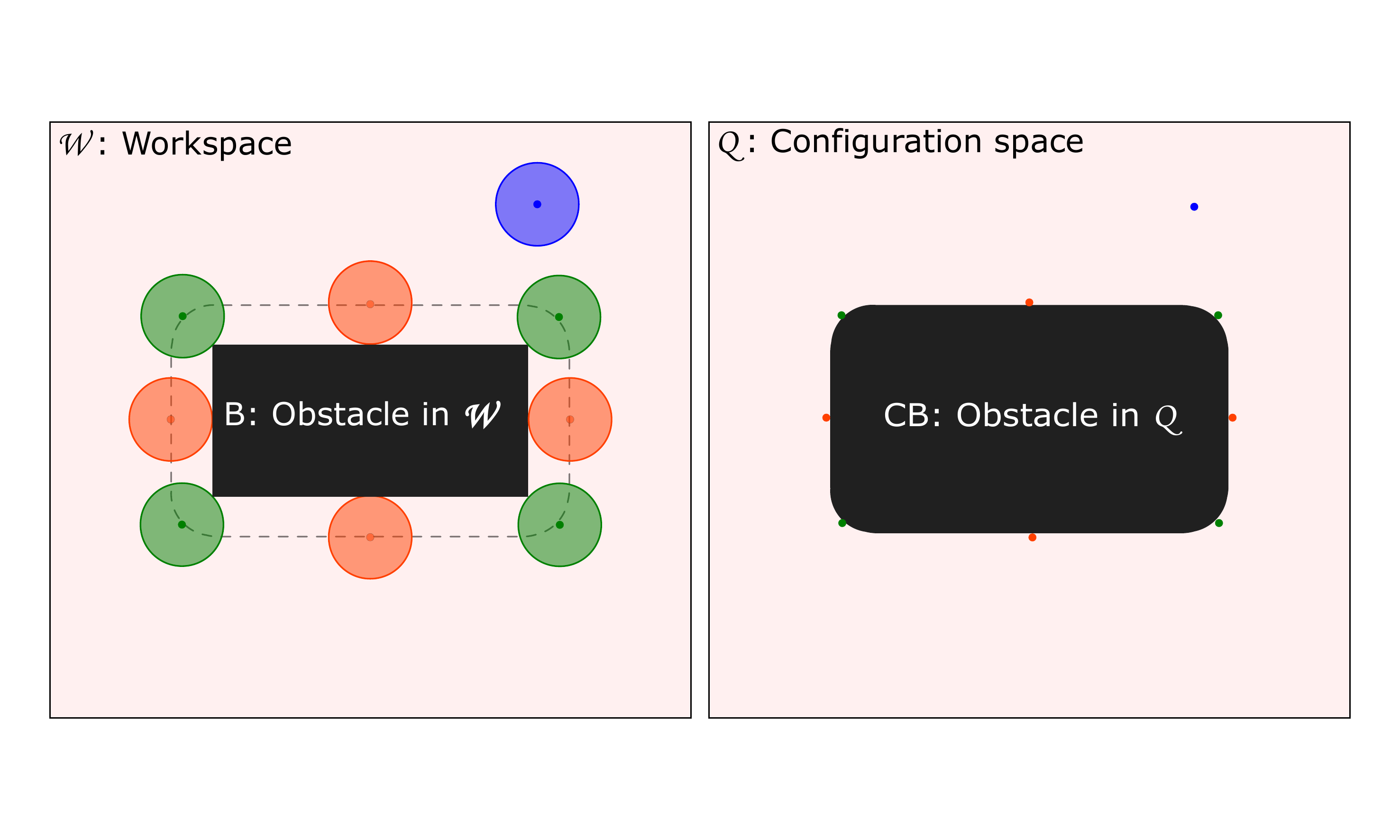}
    \caption{
        An obstacle in the workspace vs. in the configuration space for a circular mobile robot. The geometry of the robot and obstacle play a crucial role in computing the free C-space. Center of the robot cannot be inside the dotted region in the workspace, for any configuration of the robot, and hence the shape of a rectangular obstacle in the workspace takes the form of a bigger rounded rectangle in the C-space as shown on the right side. Some corresponding configurations are shown in the free workspace and in the free C-space. 
    }
    \label{fig:cspace_obstacle:circle_example}
\end{figure}

\subsection{Path Planning in Free C-Space}
A path from a source configuration to a destination configuration is a curve in the free C-space, that starts at the source configuration and ends at the destination configuration. 

There is a large variety of methods of motion planning using the traditional configuration space approach. The method of \emph{potential functions} treats the robot and obstacle configurations as positive charges and the goal configuration as a negative charge so that there is a natural attraction between the robot and the goal configuration and a natural repulsion between the robot and obstacle configurations. And the path planning problem is solved in terms of moving in a force field induced by these charges. 

Roadmap methods construct a data structure called a \emph{roadmap} in the form of a graph, which is a topological map of the environment based on the robot's sensory data. A roadmap is a graph embedded in the free configuration space, where each node corresponds to a free configuration of the robot and an edge between two nodes corresponds to a path between the corresponding configurations. Once a roadmap is computed, multiple path queries can be answered using it. Paths can be computed using standard graph algorithms such as the Dijkstra's algorithm for shortest path computation. 

A particular class of roadmap methods is cell-decomposition methods that divide the free C-space into a union of regions called cells. The most popular methods in this class are trapezoidal decomposition and Morse cell decomposition. Trapezoidal decomposition divides the free space into a collection of simple trapezoidal regions and treats each region as a node in a graph and connects the nodes that correspond to neighbouring regions. It is applicable to polygonal planar C-spaces which can be broken down into polygonal regions. Morse decomposition is a more general method that is particularly useful in scenarios where the robot has to cover the whole free space, such as a de-mining robot.

Another class of roadmap methods is probabilistic methods. The most popular ones in this category are \emph{probabilistic roadmap} (PRM)~\citep{kavraki-latombe-overmars-96_PRM-high-dimensional}, \emph{rapidly exploring random trees} (RRT)~\citep{lavalle1998rapidly} and their derivatives. They sample the free configuration space and build a roadmap using the sampled points. Even though the nodes of the network correspond to free configurations, it is still not guaranteed that the edge connecting two free configurations is completely in the free-space. Ensuring edge safety is the problem of \textit{local planning}. To check for the safety of an edge, a typical local planner in PRM interpolates several points between the configurations corresponding to the terminal nodes of that edge. If any of the interpolated points are not in the free C-space, then that edge is discarded; otherwise, it is retained in the roadmap. Paths planned on a roadmap after it is pruned using a local planner are safe. An extensive survey of sampling-based methods can be found in~\citep{karaman2011sampling, elbanhawi2014sampling}.

\subsection{Complexity of the Traditional Approaches}

Traditional solutions to the \textit{robot motion planning} (RMP) problem involve constructing an explicit representation of the robot in terms of the number of degrees of freedom, joint and link geometry, rules for forward kinematics and inverse kinematics to map from the configuration space to the workspace and vice versa. Once such a representation is available, obstacle-free paths can be planned in the configuration space. This approach is cumbersome because different representations need to be hand-engineered for different robots, obstacles, and workspaces. If we consider the triple $\langle \text{robot, obstacles, workspace} \rangle$ as a \textit{system}, the traditional approach is quite \textit{system specific}. Any change in the system would require operating on a different hand-engineered model.

In this thesis, we propose an approach which requires much less system-specific information to plan paths. In order to plan obstacle-free paths, we assume the availability of a set of images of the robot in various configurations in its workspace. Using these images alone, we construct models that can be used for planning safe paths. This approach is motivated by studies in human cognition which suggest that human babies around the age of one month vigorously move their limbs and observe them to learn latent representations for voluntary motion.

\section{An Inspiration from Human Cognition}
Motion planning is a problem that humans and other animals have to solve all the time to move from one place to another place and to perform the necessary motor tasks for their survival. They can easily navigate through complex environments, without hitting each other or any other objects in the environment. For example, a person can easily navigate through a busy marketplace or a road congested with heavy traffic. They do this, largely, by just visually observing the world. Vision also plays a major role in performing various tasks in the peripersonal space (i.e., part of the surrounding space which is immediately reachable to the agent's limbs, while the body is in a fixed position). From a very early stage of life, humans acquire sensorimotor models by observing the world visually and through touch. 

With the aim of bringing similar capabilities in robots, in this work, we propose to solve the motion planning problem purely based on vision. For this purpose, we develop a vision based characterization of configuration space, which we call \textit{visual configuration space} (VCS). We use the formalism of \textit{manifolds} to develop the notion of VCS. We use VCS in the following ways: 
\begin{enumerate}[label=(\roman*)]
    \item To address the problem of robot motion planning. (Chapter~\ref{chap:rmp_in_vcs})
    \item To model body schema and explain how a cognitive agent, such as a human baby, might learn the ability to perform motor tasks in its peripersonal space, as it grows and gains experience with its world. (Chapter~\ref{chap:manifold_body_schema})
    \item To generate animations for avatars in virtual environments. (Chapter~\ref{chap:head_motion})
\end{enumerate}

\subsection{Main Claims of This Thesis}
\begin{enumerate}
    \item {
        The proposed approach to the RMP problem is much less system specific than the traditional methods because it does not assume the knowledge of robot geometry,  kinematic and inverse kinematic maps of the robot and obstacle geometry.
    }
    \item {
        The proposed manifold based model of body schema for infant motor learning has more explanatory power than other models in the current literature on body schema.
    }
\end{enumerate}

In the subsequent sections, we discuss some preliminaries necessary to develop the concept of visual configuration spaces. 

\section{Configuration Manifold}
\label{sec:configuration_manifold}

Configuration space is a vector space defined by the coordinates used to specify the configuration of the robot. The dimension of this vector space is the degrees of freedom of the robot. However, in many cases, not all the vectors in this space are valid configurations of the robot. The set of actual values taken by the configuration parameters is usually a subspace of the vector space. This subspace is sometimes linear, especially for mobile robots, and most of the times non-linear, especially for articulated arms whose motion involves an $S^1$ topology. Often, this subspace satisfies the requirements of a \textit{manifold} and whenever it does, the configuration space is called a \textit{configuration manifold}.

\subsection{Manifolds}
Let $(X, \tau_X)$ and $(Y, \tau_Y)$ be two topological spaces. A function $f: X \to Y$ is said to be \textit{continuous} if the inverse image under $f$, of every open set in $\tau_Y$ is open in $\tau_X$, i.e., 
\[
\forall V \in \tau_Y, f^{-1}(V) = \{x \in X: f(x) \in V \} \in \tau_X.
\]
If $f$ is a continuous bijection and its inverse is also continuous, then $f$ is said be a \textit{homeomorphism} and if a homeomorphism exists between the spaces $X$ and $Y$, then they are said to be \textit{homeomorphic} to each other. A homeomorphism is some kind of equivalence of spaces. Two homeomorphic spaces can be transformed into each other through a series of continuous deformations like stretching, shrinking, twisting, etc. For example, a circle and an ellipse are homeomorphic, and a coffee cup and a torus are homeomorphic.

If $f$ and its inverse are smooth (i.e., infinitely differentiable), then $f$ is said to be a \textit{diffeomorphism}. If a diffeomorphism exists between the spaces $X$ and $Y$, then they are said to be \textit{diffeomorphic} to each other.

A space $(X, \tau_X)$ is said to be \textit{locally homeomorphic} to $(Y, \tau_Y)$, if for every point $x \in X$, there is an open set $U \in \tau_X$ containing $x$, which is homeomorphic to some open set $V$ in $\tau_Y$. 

A \textit{$p$-dimensional topological manifold} $\mathcal{M}$ is a Hausdorff topological space, with a countable basis for the topology, which is locally homeomorphic to $\mathbb{R}^p$. See Figure~\ref{fig:manifold_def} for an illustration. 

\begin{figure}[ht!]
    \centering
    \includegraphics[width = 0.5\textwidth]{./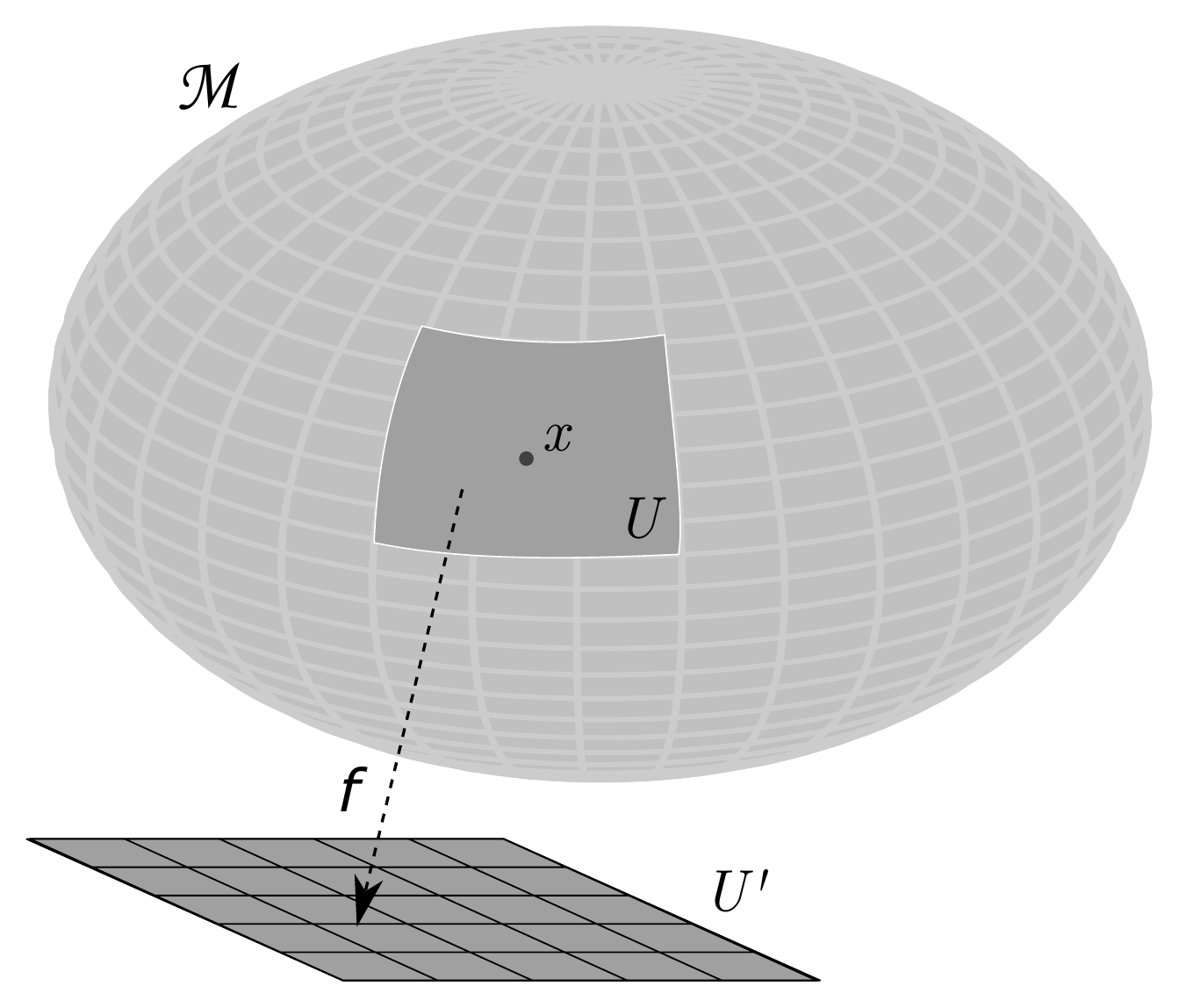}
    \caption{For every point $x \in \mathcal{M}$, there is an open neighbourhood $U$ containing $x$, an open set $U'\subset \mathbb{R}^p$ and a homeomorphism  $f: U \to U'$.}
    \label{fig:manifold_def}
\end{figure}

Intuitively speaking, $\mathcal{M}$ looks like a $p$-dimensional Euclidean space on a small enough scale. Another way of looking at it is, that the tangent space at any point on $\mathcal{M}$ is a $p$-dimensional vector space.

For each open set $U \subseteq \mathcal{M}$ that is homeomorphic to an open neighbourhood of $\mathbb{R}^p$ by a homeomorphism $\phi$, the pair $(U, \phi)$ is called a \textit{chart} of $\mathcal{M}$. A collection of charts that covers $\mathcal{M}$ is called an \textit{atlas} of $\mathcal{M}$. See \citep{choset-05_robot-motion-theory} for details of charts and atlases in the context of configuration spaces.

\subsubsection{Examples of Manifolds}
For example, a circle, an ellipse and a parabola can be formed by stitching together a collection of small lines; that is, all these objects are locally homeomorphic to $\mathbb{R}^1$ and hence they are 1-dimensional manifolds. See Figure~\ref{fig:manifold_examples} for examples of some 2-dimensional manifolds.

\begin{figure}[ht!]
    \centering
    \includegraphics[clip, trim = 2cm 3cm 1cm 4.5cm, width=\textwidth]{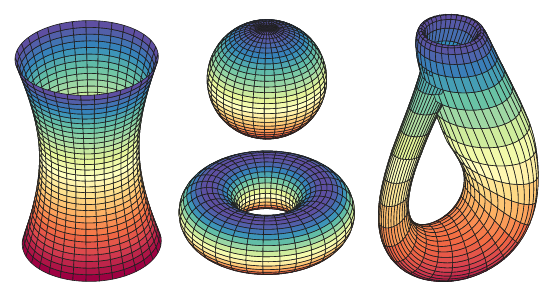}
    \caption{
        The surface of a hyperboloid, a sphere, a torus and a Klein bottle are some examples of 2-dimensional manifolds. These surfaces share the common property that they can be formed by stitching together a set of 2-dimensional patches; that is, all these surfaces are locally homeomorphic to $\mathbb{R}^2$ and hence they are all 2-dimensional manifolds.
    }
    \label{fig:manifold_examples}
\end{figure}

The configuration of an articulated robot with $p$ rotary links connected by $p$ joints, can be given by its $p$ joint angles $\theta_1,...,\theta_p$. If there are no constraints put on the joint angles, then its configuration space is a $p$-dimensional torus, which is a $p$-dimensional manifold. In particular, for a 2-link arm, whose links can rotate fully around is a 2-dimensional torus as was seen in Figure~\ref{fig:cspace:2link_arm_example}. 

In the next section, we discuss some methods of discovering the intrinsic structure of a manifold, which we employ in later chapters. These methods are called manifold learning methods, or more generally dimensionality reduction methods because they use a set of high-dimensional points to discover the intrinsic low-dimensional structure of the subspace on which the given points lie.

\section{Manifold Learning}
A dimensionality reduction method is a procedure that maps points living in a high-dimensional space onto a low-dimensional space without losing much information. This is possible when the given data points lie on or near a low-dimensional subspace of the high-dimensional ambient space. More precisely, given $X \in \mathbb{R}^{m \times n}$, a set of $n$ points living in an $m$ dimensional space, dimensionality reduction methods seek a $p$-dimensional embedding $Y \subseteq \mathbb{R}^{p \times n}$ of $X$, with $p \le \min(m, n-1)$, when the data points lie on or near a $p$-dimensional subspace of the $m$-dimensional ambient space. Usually, for a lot of real-life data sets, $p \ll m$. See ~\citep{ghodsi2006dimensionality}.

\begin{figure}[ht!]
    \centering
    \includegraphics[page=9, clip, trim = 3.5cm 5cm 4cm 5cm, width=0.5\textwidth]{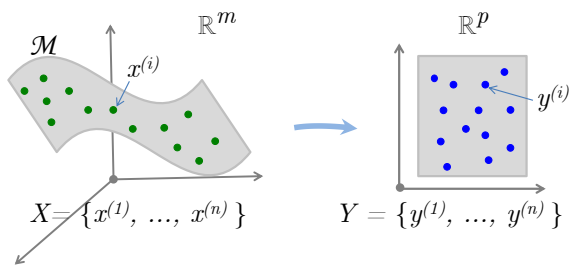}
    \caption{
        Manifold learning seeks to obtain a low-dimensional representation of high-dimensional points lying on or near a low-dimensional manifold.
    }
    \label{fig:manifold_learning_def}
\end{figure}

There are several methods of dimensionality reduction which can be classified into two broad types: (i) linear methods, and (ii) non-linear methods. Linear methods perform well when the given data points lie on or near a linear subspace of the ambient space. If the data lies on a non-linear subspace, then we will need to employ non-linear methods. Non-linear dimensionality reduction is also called \textit{manifold learning}, since the non-linear subspace on which the given data points lie, is assumed to be a manifold. 

Given a set of $n$ points $X = \{x^{(1)}, x^{(2)}, \ldots, x^{(n)}\} \subset \mathbb{R}^m$, lying on or near a $p$-dimensional manifold $\mathcal{M}$, with $p < m$, \textit{manifold learning} algorithms seek to learn a $p$-dimensional representation $Y = \{y^{(1)}, y^{(2)}, \ldots, y^{(n)}\} \subset \mathbb{R}^p$ of $X$. See Figure~\ref{fig:manifold_learning_def}.

\subsection{Principal Components Analysis}
\label{sec:pca}

Principal components analysis (PCA)\citep{jolliffe1986principal,hotelling1933analysis} is a linear dimensionality reduction method that reduces the data dimension while preserving the variance in data. In other words, PCA is a statistical procedure that converts a set of data points measured on a set of possibly correlated variables into a set of values of uncorrelated variables, called \textit{principal components} (or PCs). One way of doing this is to find a sequence of directions in the descending order of data variance, and project the data points onto some of these directions, so as to retain most of the variance after the projection. Then the resulting points are a low-dimensional representation of the original data.

Another way of looking at PCA is as a rotation of the coordinate axes, after mean-centering the given data, to align them with the directions of maximum variance, and then discarding some of the insignificant directions, i.e., the directions in which the data does not vary much. See Figure~\ref{fig:pca_example} for an illustration. 
\begin{figure}[ht!]
    \centering
    \includegraphics[page=10, clip, trim=0 4.2cm 0cm 6cm, width=0.98\textwidth]{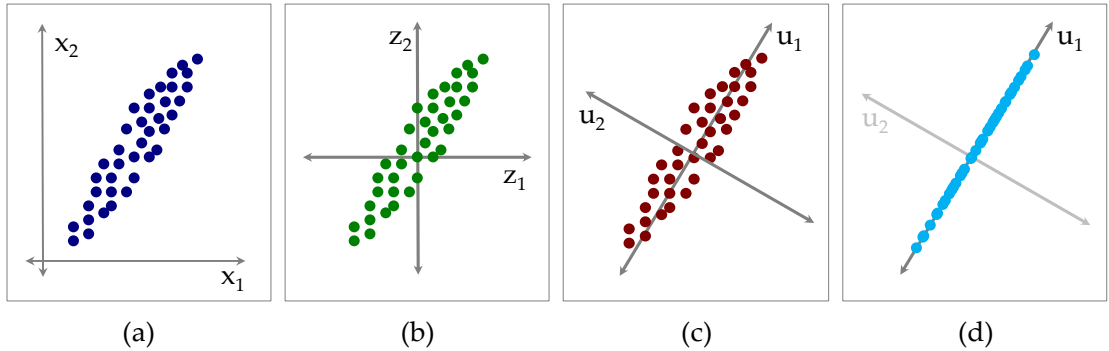}
    \caption{
        \textit{Illustration of PCA on a 2-dimensional data set:}
        (a) Original 2-D data points (b) Mean-centring the data (c) Rotation of the coordinate axes to align them with the directions of maximum variance in data (d) Projection of points onto $u_1$ (the first principal component) and discarding $u_2$ (the second principal component).
    }
    \label{fig:pca_example}
\end{figure}

Given a set $X \in \mathbb{R}^{m \times n}$ of $n$ iid random points in $\mathbb{R}^m$; the goal of PCA is to find an orthonormal basis $U = \{\hat{u}_1, \hat{u}_2, \ldots, \hat{u}_m\}$ such that the columns of $Y = U^TX$ are in the descending order of variance. Without loss of generality, assume that $X$ is mean-centered along each of the $m$ directions so that the empirical mean $\bar{X} = 0$ and the covariance matrix $C = XX^T$.

A $p$-dimensional embedding of $X$ can be obtained by projecting the data to the $p$ eigenvectors of $C$, corresponding to the top $p$ eigenvalues.

\subsubsection{Reconstruction and Interpolation}
\label{sec:pca:reconstruction}

Along with the projections on to the top $p$ principal components, PCA also gives a projection matrix $W_{m \times p}$ s.t. $W^Tx \in \mathbb{R}^p, \forall x \in \mathbb{R}^m$. So, if $y^{(i)} \in Y$ is the PCA projection of $x^{(i)} \in X$, we can have $x^{(i)}$ reconstructed from $y^{(i)}$, as $\hat{x}^{(i)} = W y^{(i)}$. This is applicable to any point in the projection space, even to those points which are not in $Y$. For example, we can interpolate between $y^{(i)}$ and $y^{(j)} \in Y$, as $y_\alpha = \alpha y^{(i)} + (1-\alpha)y^{(j)}$, for some $\alpha \in (0, 1)$, and construct the corresponding point in the original space as $\hat{x}_\alpha = W y_\alpha$.

\subsection{Multidimensional Scaling}
Multidimensional Scaling (MDS) is another method of dimensionality reduction, which tries to preserve the pairwise distances between points in the high-dimensional space. Given $X_{m \times n}$, let $D^X$ be the matrix of pairwise distances such that $D^X_{ij} = dist(x^{(i)}, x^{(j)})$. To find a lower dimensional embedding $Y$, conceptually, MDS solves the following optimization problem:
\[
    Y = \argmin_Y {\sum\limits_{i=1}^n\sum\limits_{j=1}^n (D^X_{ij} - D^Y_{ij})^2}, 
\]
where $D^Y_{ij} = dist(y^{(i)}, y^{(j)})$. Solution of this optimization problem is obtained through an eigenvalue decomposition (EVD) of the similarity matrix $S = X^TX$, after $X$ is mean-centered. If $V \Lambda V^T$ is the EVD of $S$, where $\Lambda$ is the diagonal matrix of eigenvalues and the columns of $V$ are the eigenvectors of $S$, then
\[
    Y_{p \times n} = \sqrt{\Lambda_p} V_p^T,
\]
is a $p$-dimensional embedding of $X$, where $\Lambda_p$ and $V_p$ are the matrices of top-$p$ eigenvalues and eigenvectors.

If we have access to the mean-centered data matrix $X$, we can apply EVD on $X^TX$ and get $Y$. Alternatively, we can also apply SVD on $X$ as done in PCA and get the $p$-dimensional embedding as 
\[
    Y_{p \times n} = \Sigma_p V_p^T,
\]
where $\Sigma_p$ is the diagonal matrix of top-$p$ singular values and the columns of $V_p$ are the top-$p$ right-singular vectors of $X$, which are same as the top-$p$ eigenvectors of $X^TX$.

If we are not given the data matrix $X$, but only the pairwise distance matrix $D^X$, then we can compute the similarity matrix $S$ by the following formula:
\[
    S = -\frac{1}{2}HD^XH, \text{ where } H = I_n - \frac{1}{n}\textbf{1}\textbf{1}^T, 
\]
with $I_n$ being the $n \times n$ identity matrix and $\textbf{1}$ being an $n \times 1$ column vector of all ones. Here, $H$ is called the centering matrix, which is needed to make sure that the similarities are calculated for the mean-centered data. This is needed because distances are translation-invariant while similarities are not.

The classical version of metric MDS preserves Euclidean distances, but any other distance metric can be used. In particular, Isomap preserves (approximate) geodesic distances between points on the manifold on which the given data points lie in the original space. The geodesic distance between two points on a manifold is the length of the shortest path between them, along the manifold.

\subsection{Isomap}
\label{sec:isomap}

The Isomap algorithm is a non-linear extension of the metric MDS algorithm. It generates a low-dimensional embedding of high-dimensional points, that approximately preserves the geodesic distance between the points lying on a manifold. 

As before, we are given a finite sample $X$ of $n$ high-dimensional points in $\mathbb{R}^{m}$ --- i.e., $X \in \mathbb{R}^{m \times n}$ --- lying on or near a smooth and uniform low-dimensional manifold of dimension $p$. We want to learn a low dimensional representation $Y \in \mathbb{R}^{p \times n}$ of $X$, such that $\forall x \in X, \exists y \in Y : x = f(y) + \epsilon$, where $f$ is the function that generated $X$ from a latent parameter space (the underlying manifold).

Isomap first computes a nearest-neighbours graph $G(V, E)$ such that $|V| = n$ and each vertex corresponds to a data point in $X$ and an edge is added between two vertices $v^{(i)}, v^{(j)}$ if the corresponding points $x^{(i)}, x^{(j)} \in X$ are near neighbours. Neighbourhoods can be decided in two ways: \textit{$\epsilon$-neighbourhood} and $k$-NN. In the first method, $(v^{(i)}, v^{(j)}) \in E$ if $dist(x^{(i)}, x^{(j)}) < \epsilon$, for a predetermined real value $\epsilon > 0$. In the second method, $(v^{(i)}, v^{(j)}) \in E$ if $x^{(i)}$ is one of the $k$ nearest neighbours of $x^{(j)}$ or vice versa, for some predetermined integer $k > 0$. In each case, an edge $(v^{(i)}, v^{(j)})$ is weighted by $dist(x^{(i)}, x^{(j)})$. This distance function is usually the Euclidean distance.

Then a distance matrix $D_{n \times n}$ is computed such that $D_{ij}$ is the shortest path length on $G$. The shortest path length between two vertices of $G$ is an approximation of the geodesic distance between the corresponding points on the manifold. See Figure~\ref{fig:isomap_illustration}.

\begin{figure}
    \centering
    \includegraphics[width=0.98\textwidth]{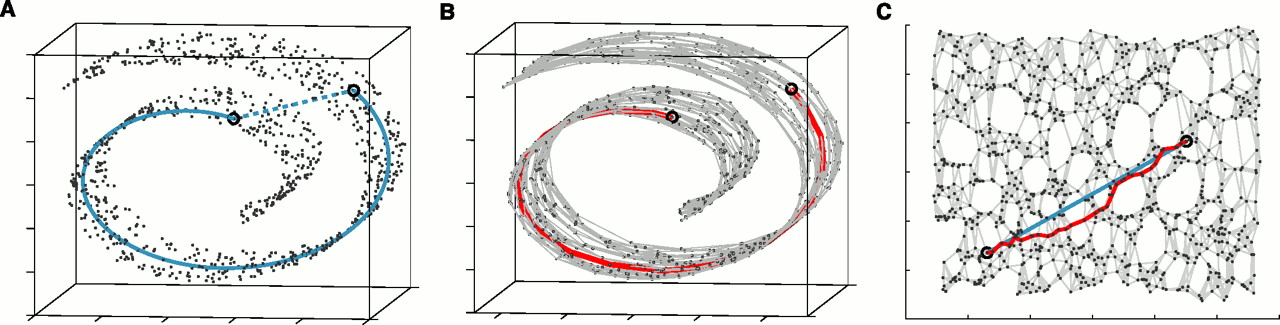}
    \caption{
        Illustration of geodesic approximation using shortest path distance on the neighbourhood graph. (A) A set of points lying on a ``Swiss roll" in $\mathbb{R}^3$, with the Euclidean distance (dotted straight line in blue) and the geodesic distance (solid curved line in blue) between two points (marked by small circles). (B) The neighbourhood graph $G$ computed on a set of 1000 data points with K = 7, along with the shortest path on $G$ (in red), approximating the geodesic between the chosen points. (C) The 2-dimensional Isomap embedding of points, along with the actual geodesic (in blue) and the approximate geodesic (in red). The embedding preserves the shortest path distances on $G$, with the hope of preserving the actual geodesic distances. This illustration has been taken from \citep{tenenbaum2000global}. 
    }
    \label{fig:isomap_illustration}
\end{figure}

Finally, MDS is applied on $D$ to get a low-dimensional embedding $Y_{p \times n}$.

We next discuss \textit{run length encoding}, which we will use in chapter~\ref{chap:rmp_in_vcs}, for fast collision detection.

\section{Run Length Encoding (RLE)}
\label{sec:rle_intro} 
Run Length Encoding (RLE) is a data compression mechanism that works by representing each run of a symbol (i.e., a sequence in which the same symbol occurs consecutively) by a pair $(c, s)$, where $s$ is the symbol in the run and $c$ is the length of that run. It is space-efficient when the data contains long runs.  For example, the string
\begin{center}
    \texttt{00000000001111111111111110000000000001111111111} 
\end{center}
has the encoding
\begin{center}
    $\langle (10, 0), (15, 1), (12, 0), (10, 1) \rangle$,
\end{center}
which tells that the given string has 10 $0s$ followed by 15 $1s$ followed by 12 $0s$ followed by 10 $1s$. Here, in the encoding, symbols other than numbers are used only for readability and explanation purposes.

If the data contains only binary digits, then we can store just the run lengths of each bit with the assumption that the first run is always a run of $0s$ and the runs alternate between $0s$ and $1s$. Under this encoding, the string in the previous example can be represented as
\begin{center}
    $\langle 10, 15, 12, 10 \rangle$.  
\end{center}
If the data begins with a 1, then the first run can be treated as a run of $0s$
of length zero.

\subsection{RLE for Binary Images}
\label{subsec:rle_bin_img}
%

A binary image is a pixel grid which can be represented as a matrix of $0s$ and
$1s$. An example of this is shown in Figure~\ref{fig:binary_image_example}, with
a white pixel represented by a 0 and a dark pixel represented by a 1.

\begin{figure}[t]
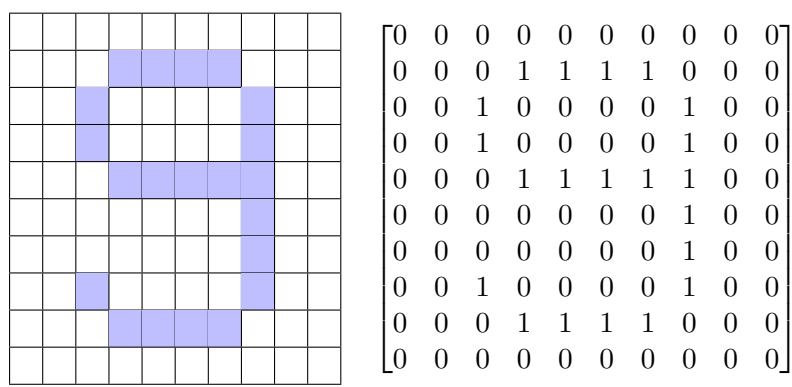

\centering
\begin{tabular}{ll}
\begin{tabular}{|l|l|l|l|l|l|l|l|l|l|}
    
    \hline
    & & & & & & & & & \\
    \hline
    & & & \dc & \dc & \dc &    \dc & & & \\
    \hline
    & & \dc & & & & & \dc &  &  \\
    \hline
    & & \dc & & & & &  \dc & & \\
    \hline
    & & & \dc & \dc & \dc &    \dc & \dc & &    \\
    \hline
    & &  & & & & & \dc &  &  \\
    \hline
    & & & & & & &  \dc & & \\
    \hline
    & & \dc  & & & & & \dc  & & \\
    \hline
    & & & \dc & \dc & \dc & \dc & & & \\
    \hline
    & & & & & & & & & \\
    \hline
    
\end{tabular} &
$
\begin{bmatrix}
    0 & 0 & 0 & 0 & 0 & 0 & 0 & 0 & 0 & 0 \\

    0 & 0 & 0 & 1 & 1 & 1 & 1 & 0 & 0 & 0 \\

    0 & 0 & 1 & 0 & 0 & 0 & 0 & 1 & 0 & 0 \\

    0 & 0 & 1 & 0 & 0 & 0 & 0 & 1 & 0 & 0 \\

    0 & 0 & 0 & 1 & 1 & 1 & 1 & 1 & 0 & 0 \\

    0 & 0 & 0 & 0 & 0 & 0 & 0 & 1 & 0 & 0 \\

    0 & 0 & 0 & 0 & 0 & 0 & 0 & 1 & 0 & 0 \\

    0 & 0 & 1 & 0 & 0 & 0 & 0 & 1 & 0 & 0 \\

    0 & 0 & 0 & 1 & 1 & 1 & 1 & 0 & 0 & 0 \\

    0 & 0 & 0 & 0 & 0 & 0 & 0 & 0 & 0 & 0 \\

\end{bmatrix}
$
\end{tabular}
\caption{A binary image shown as a pixel grid and as a binary matrix.}
\label{fig:binary_image_example}
\end{figure}

For binary images RLE can be applied row-wise, in which case the image in Figure~\ref{fig:binary_image_example} will have the following encoding:
\begin{center}
    $10; \langle 10, 3, 4, 3, 2, 1, 4, 1, 2, 2, 1, 4, 1, 2, 3, 5, 2, 7, 1,
    2, 7, 1, 2, 2, 1, 4, 1, 2, 3, 4, 3, 10 \rangle$,
\end{center}
where the 10 at the beginning indicates the width of the image in pixels, without which it will not be possible to know when to start a new row while trying to reconstruct the original image from the encoding.

To decode such an encoded sequence, one just needs to go through the sequence and create data bits as per the run lengths specified in the sequence. In the beginning, a variable, indicating the sum of run lengths processed so far, is initialized to 0. A new row is created in the image when the run lengths sum to the image width.

Alternatively, we can first flatten the image matrix into a one-dimensional array by concatenating all the rows into a single row and then apply RLE on the resulting array. In this case, the image in Figure~\ref{fig:binary_image_example} will be encoded as:
\begin{center}
    $10; \langle 13, 4, 5, 1, 4, 1, 4, 1, 4, 1, 5, 5,
    9, 1, 9, 1, 4, 1, 4, 1, 5, 4, 13 \rangle$.
\end{center}

\subsection{Interval Based RLE for Binary Images}
\label{subsec:interval_rle}
Equivalently, we can store the positions of intervals of $1s$ in the image array. An integer interval $[lb, ub)$ can be taken to mean that the array contains a 1 from position $lb$ (short for lower bound) till and excluding position $ub$ (short for upper bound), with the indices beginning from 0. All other positions can be assumed to have a 0. If we do that for the image in Figure~\ref{fig:binary_image_example}, the resulting encoding will be as follows:
\begin{center}
    $10 \times 10; \langle [13, 17), [22, 23), [27, 28), [32, 33), [37,    38),
    \;\;\;\;\;\;\;\;\;\;\;\;$
    \\
    $\;\;\;\;\;\;\;\;\;\;\;\;\;\;\;\;\;
    [43, 48), [57, 58), [67, 68), [72, 73), [77, 78), [83, 87) \rangle$.
\end{center}
Here, $10 \times 10$ at the beginning indicates the original image size without which reconstruction of the image from the encoding would not be possible. An interval such as $[13, 17)$ indicates that the image array contains a 1 from position 13 to position 16 (indexing is assumed to be zero-based and the intervals do not include upper bounds). 

We will use interval based RLE for the purposes of collision detection in robot motion planning in Chapter~\ref{chap:rmp_in_vcs}.

\chapter{Literature Survey} 

\label{chap:literature_survey} 

\lhead{\emph{Literature Survey}} 

Humans and animals routinely use prior sensorimotor experience to build motor models, and use vision for gross motor tasks in novel environments. Achieving similar abilities in a robotic system, without having to calibrate the robot's own body structure, or estimate the exact 3-D positions, is a touchstone problem for robotics (e.g. see~\citep{engelberger-Joseph-1980_robotics-in-practice} ch.9). Such an approach would enable a robot to work in less controlled environments, as is being increasingly demanded in social and interactive applications for robots. 

There have been two methods for approaching this problem --- either based on learning a \emph{body schema}~\citep{poincare-1895-space-and-geometry, hoffmann2010body, pierce-kuipers-97_map-learning-uninterpreted-sensors, philipona-oregan-2003_perception-of-structure-unknown-sensors, arleo-smeraldi-04_cognitive-navigation_nonuniform-gabor_reinforcement, stober-fishgold-kuipers-09_sensor-map-discovery}, or by fitting a canonical robot model~\citep{sigaud-sala-11_online-regression-learning-robot-models_survey}. Body schema approaches have not scaled up to full scale robotic models or used for  global motion planning, and robot model regression requires intrusive structures on the robot~\citep{sturm2013approaches} and even then it cannot sense the environment.

Another approach, \emph{visual servoing} attempts to estimate the motion needed for small changes in image features. However, visual servoing models cannot construct models spanning large changes in robot pose, since the pseudo-inverse of the image Jacobian can be computed only over small motions.  Recently, global motion planning algorithms have been proposed by stitching together local visual servos~\citep{kazemi-gupta-10_path-planning-visual-servoing}, but these require that the goal be constantly visible. 


A number of schemes have been proposed for learning the body schema from visual inputs (Review:~\citep{hoffmann2010body}). Most of them involve varying degrees of knowledge about the kinematics~\citep{hikita2008visual,martinez2010body}. Some require special decals pasted on the robot to remain visible~\citep{sturm2013approaches}.  Alternately, one may discover the dimensionality of the input-output relation by analyzing locally linear tangent spaces~\citep{philipona-oregan-nadal:2003}. Other work, often termed \emph{developmental robotics}, attempts to establish visuomotor correlations by analyzing random motions (motor babbling)~\citep{caligiore-baldassarre-08_motor-babbling-developmental-reaching-w-obstacles}, by observing smooth patches in optical flow data~\citep{olsson-nehaniv-polani-06_from-sensorimotor-to-actions-info-theory}, or via clusters of sensory data that permit the recognition of object categories and shapes~\citep{modayil-kuipers-07aaai_robot-learning-grounded-object-ontology, modayil-10_discovering-sensor-space}. Other approaches have focused on discovering the topology~\citep{ranganathan-dellaert-11_online-probabilistic-topological-mapping}, or on constructing dynamical system models~\citep{shatkay-kaelbling-02_learning-hmms-sensorimotor-topological-geometric}. 

\citep{sturm2013approaches} proposes methods of learning the kinematic structure and properties of a mobile manipulation robot and methods of learning novel manipulation tasks from human demonstrations. This work uses visual self-observation to learn the robot’s own body schema from scratch and use it to learn the properties of articulated objects. Here, the visual observations of the links of the arm are modelled as a Gaussian process and the body schema of the robot is defined in terms of Bayesian networks that describe the kinematics of the system. This work also describes approaches to automatically update the body schema after the robot uses a tool.

\citep{hoffmann2014minimally} addresses cognitive developmental robotics through a case study on a quadruped robot. This work interprets the results of the case study from an \textit{enactive} perspective. Enactive robots construct their identity by interacting with the environment continuously. \citep{lanillos2017enactive} is another work that acquires the sensorimotor self through enaction. It describes algorithms to learn the body schema and to enable tool-extension, and shows their usefulness in generalizing the computational model of enactive self.

\textit{Goal babbling} is an extension of the findings in developmental psychology that suggest that human infants try to make goal-directed movements, in addition to random motor babbling, in the early stages of development. \citep{schmerling2015goal} investigates how \textit{goal babbling} relates to a visuomotor coordination task in  Aldebaran Nao. This work also suggests that goal babbling is effective in coordinating the motion of head and arm.

\citep{schillaci2016exploration} surveys the studies in exploration behaviors such as random motor babbling, internal body representations such as body schema, and sensorimotor simulation processes for cognitive development in artificial agents such as robots. 

\citep{lanillos2018adaptive} is a work that formulates body learning as a problem of predictive coding. In this work, the authors propose a method of obtaining a forward model which encodes the sensor values as a function of the body variables, and solve it using Gaussian process regression. They model the problem of body estimation as one of minimizing the discrepancy between the robot's belief about its body configuration and the observed posterior. They test the proposed formulation on a real multisensory robotic arm and show how different sensory modalities help in refining the body estimation. 

\citep{zenha2018incremental} considers touch events to incrementally adapt a body schema in robots. The authors of this work enable a humanoid robot to incrementally estimate model inaccuracies by allowing it to touch some known planar surfaces like walls, through motor babbling, thereby making it adapt its own body schema using the contact information alone. They formulate this problem as an adaptive parameter estimation using Extended Kalman Filter, that uses planar constraints obtained at each contact detection. They perform a set of experiments to compare different incremental update methods, using a simulated version of the iCub humanoid robot.
\chapter{Visual Configuration Space}
\label{chap:vcs}

\section{Configuration Space and Generalized Coordinates}
The notion of \textit{Configuration Space} (also known as C-Space) is fundamental to conceptualizing multibody motion. The \textit{configuration} of a multibody system is \textit{any} complete description of the position of every point of the system, with respect to a fixed point in its world. The \textit{configuration space} of the system is the set of all possible configurations of the system. For a robotic system, typically, its configuration is given by a parameter vector which can be mapped to the set of positions of all points on the robot body. Each configuration is a point in the configuration space. The \textit{degrees of freedom} of the robot is the dimension of the configuration space, which is given by the minimum number of independent parameters required to specify the configuration. These independent parameters are called the \textit{generalized coordinates}. The configuration of a system with $d$ degrees of freedom can be specified in terms of $d$ generalized coordinates. In this and the subsequent chapters, we will sometimes use the word \textit{pose} synonymously with \textit{configuration}.


For a planar robotic arm with two links, the canonical choice for generalized coordinates is to use the joint angles $(\theta_1, \theta_2)$. Given the knowledge of a robot's geometry, such as the shape of the links, position of joints on the links, link lengths, and other such parameters, all of which are fixed for a given multibody system, generalized coordinates like joint angles are sufficient to specify the position of every point on the robot body. More generally, the generalized coordinates of an open chain articulated robot with $d$ links connected by $d$ revolute joints, can be given by its $d$ joint angles  $(\theta_1,...,\theta_d)$. 

However, this is only one of the many (potentially infinite) choices of coordinates, each resulting in a different C-space. Generalized coordinates need not specify joint angles or any motion parameter --- they just need to uniquely specify the position of every point on the body. One of our main aims is to show that an alternate set of generalized coordinates can be learned from the robot's appearance alone, i.e. from a set of images of the robot in various configurations.

\section{Visual  Configuration Space (VCS)}
\label{sec:vcs}

In order to understand the idea of a \textit{Visual Configuration Space} (VCS), let us consider the images of a robot, all of which are captured by a camera in a fixed position in the robot's workspace, such that each image corresponds to a configuration of the robot, for example for the 2-DOF robot of Figure~\ref{fig:2dof_arm_image_space}. Let the set of all such images, where each image corresponds to one configuration of the robot, be denoted by $\mathcal{I}$. If each image is $600 \times 600$ RGB pixels, it is represented by $3*600*600 \approx 10^6$ integer values each in the range $[0, 255]$. Here the factor $3$ corresponds to the three color channels: red, green and blue.  Then, $\mathcal{I} \subset \mathbb{R}^{10^6}$, so that each image of the robot is a point in a million dimensional space. So, at first sight, the elements of $\mathcal{I}$ seem to be very high dimensional points.

However, not every point in $\mathbb{R}^{10^6}$ corresponds to an image of the robot in a valid configuration. To see why this is the case, consider an image formed by randomly turning on some of the pixels in a $600 \times 600$ RGB image. The probability that such an image looks like an image of the robot in a valid configuration is vanishingly small. Thus, only a very small fraction of the points of $\mathbb{R}^{10^6}$ make up robot images. 

Moreover, given an image $x \in \mathcal{I}$, it can be altered in only as many ways as the degrees of freedom of the robot, without the resulting image leaving $\mathcal{I}$, because all the pixels corresponding to a moving link of the robot vary together. So the robot images lie on a very small subspace $\mathcal{I}$ of the ambient space $\mathbb{R}^{10^6}$. As we will see later in this chapter, for a $d$-DOF robot, the intrinsic dimensionality of $\mathcal{I}$ would be $d$.

\begin{figure}[h]
    \centering
    \begin{subfigure}{0.9\textwidth}
        \includegraphics[width=\textwidth]{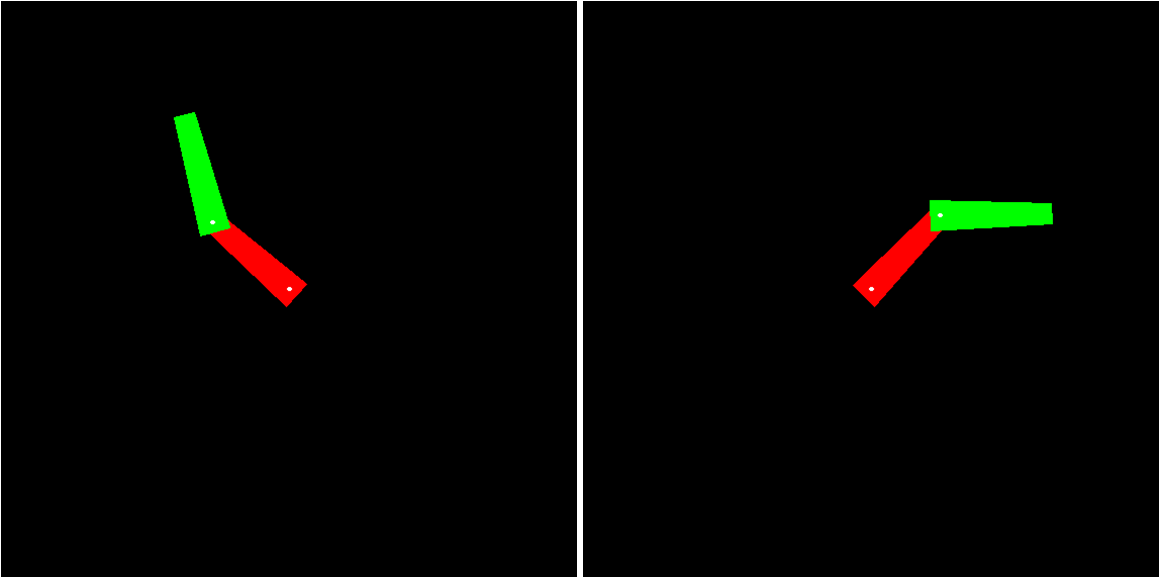}
        \caption{Two images of a 2-DOF planar arm, each in a different
        configuration of the arm.}
        \label{fig:2dof_arm_image}
    \end{subfigure}
    
    \begin{subfigure}{0.9\textwidth}
        \includegraphics[width=\textwidth]{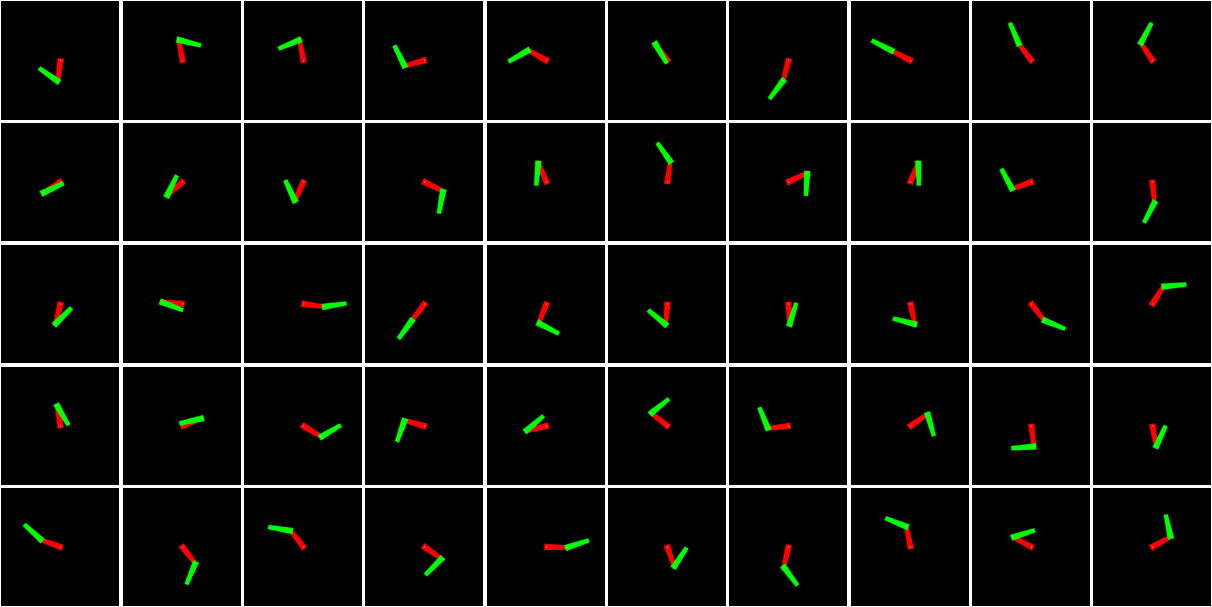}
        \caption{A sample of 50 images from the image space of the arm in
        Figure~\ref{fig:2dof_arm_image}.}
        \label{fig:2dof_arm_image_sample}
    \end{subfigure}
    \caption{
        \textit{Image space of a 2-DOF planar arm}: each image corresponds to one configuration of the robot. If the images are of $600 \times 600$ RGB pixels, then each image is a point in a space of $3*600*600 \approx 10^6$ dimensions.
    }
    \label{fig:2dof_arm_image_space}
\end{figure}

For the 2-DOF robot of Figure~\ref{fig:2dof_arm_image_space}, the intrinsic dimensionality of $\mathcal{I}$ is 2. Further, if we allow both joints to rotate fully around from $0$ to $2\pi$ radians, as each link keeps rotating starting from a home configuration, at some point the image sequence of that link returns to the original image, because that link reaches the original position after a full rotation of $2\pi$ radians. Thus the topology of $\mathcal{I}$ is not Euclidean ($\mathbb{R}^2$), but $S^1 \times S^1$, i.e., a 2-dimensional torus. And in general, the topology of $\mathcal{I}$ for a $d$-DOF articulated arm, without any restrictions on the link rotations, is a $d$-torus.

This low-dimensional subspace of the ambient Euclidean space is a manifold which is homeomorphic to a canonical C-space, under some conditions discussed in section~\ref{sec:visual_distinguishability}. We call this manifold a \textit{Visual Configuration Space} (VCS), as it is discovered from a set of images. The coordinates assigned to the points on the VCS will be called \textit{Visual Generalized Coordinates}, akin to the generalized coordinates of the conventional C-space.

We now establish the conditions under which the space of all images of the robot would form a manifold. 

\subsection{Visual Distinguishability Assumption}
\label{sec:visual_distinguishability}
Let $R_q$ be the set of all points of the workspace occupied by the robot (its volume) in configuration $q$ (i.e., $R_q$ is the shape of the robot in configuration $q$) and let $R(\mathcal{Q}) = \{R_q: q \in \mathcal{Q}\}$ be the set of all robot shapes. Let $\phi: \mathcal{Q} \rightarrow R(\mathcal{Q})$ and $\psi: R(\mathcal{Q}) \rightarrow \mathcal{I}$ be the functions that map a configuration to a shape and
a shape to an image respectively. Then the visual distinguishability assumption requires that the function $\psi \circ \phi: \mathcal{Q} \rightarrow \mathcal{I}$ be a bijection as illustrated in Figure~\ref{fig:visual_manifold_theorem}.

\begin{figure}[h] 
    \centering
    \includegraphics[page=3, clip, trim = 1cm 4cm 1cm 4cm, width=\textwidth]{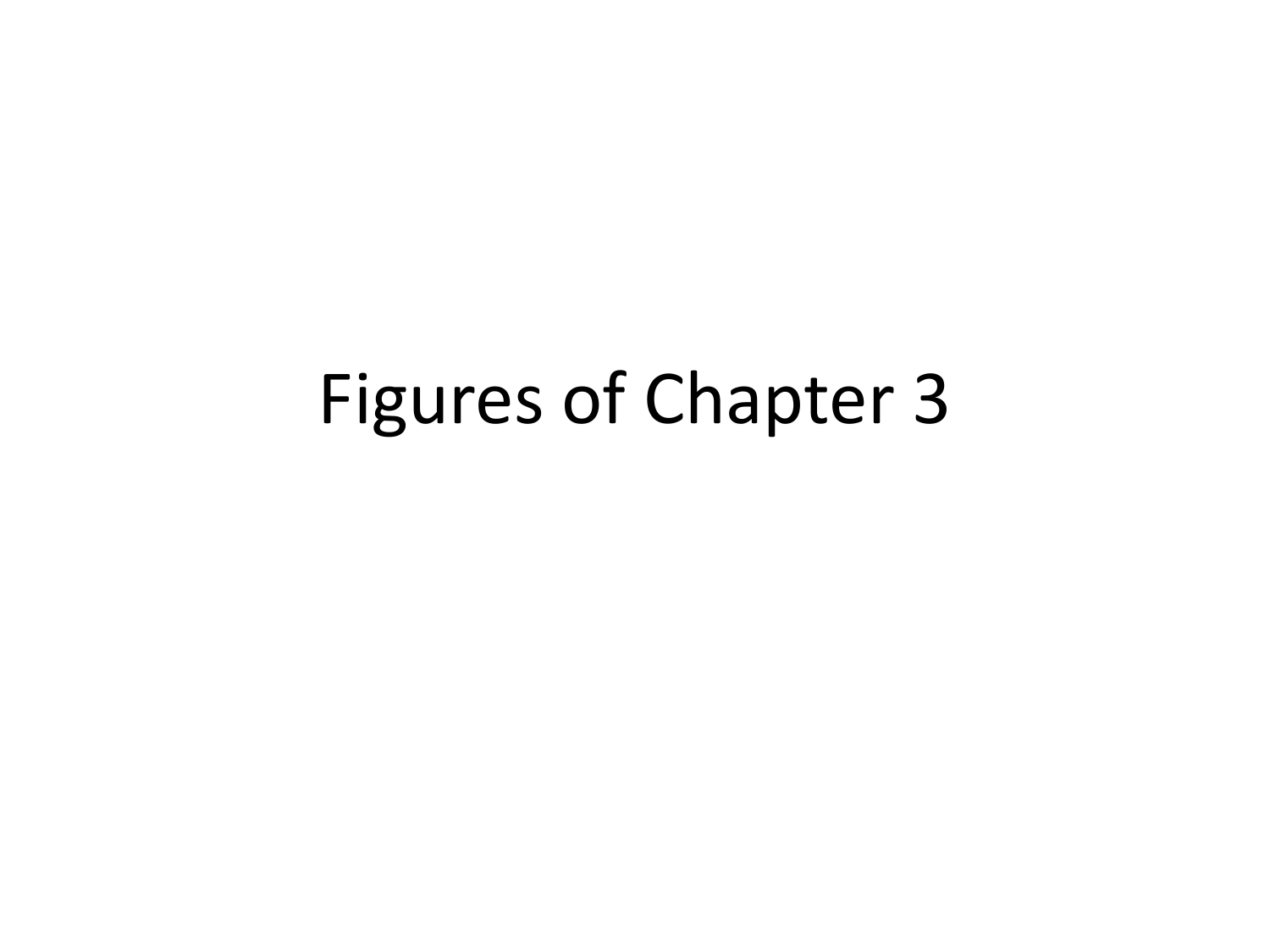}
    \caption{ 
        \textit{Correspondence between robot configuration, robot shape, image and visual configuration spaces}: 
        In order for visual generalized coordinates to exist, two robot configurations cannot generate the same image. If this condition holds, we show that any coordinate for the image manifold constitutes a generalized coordinate system. Under such conditions, the map $\psi\circ\phi$ between $\mathcal{Q}$, the shape space $R(\mathcal{Q})$ and the image space $\mathcal{I}$ is bijective, and the image manifold does not self-intersect.   The latent space $\mathcal{V}$ is a specification of generalized coordinates on the image manifold and is a member of the collection of C-spaces. The bijective map $f: \mathcal{V} \leftrightarrow \mathcal{I}$ relates robot images to unique points in $\mathcal{V}$. 
    }
    \label{fig:visual_manifold_theorem}
\end{figure}

\begin{figure}[h]
\centering
    \begin{subfigure}{0.35\textwidth}
        \includegraphics[width=\columnwidth]{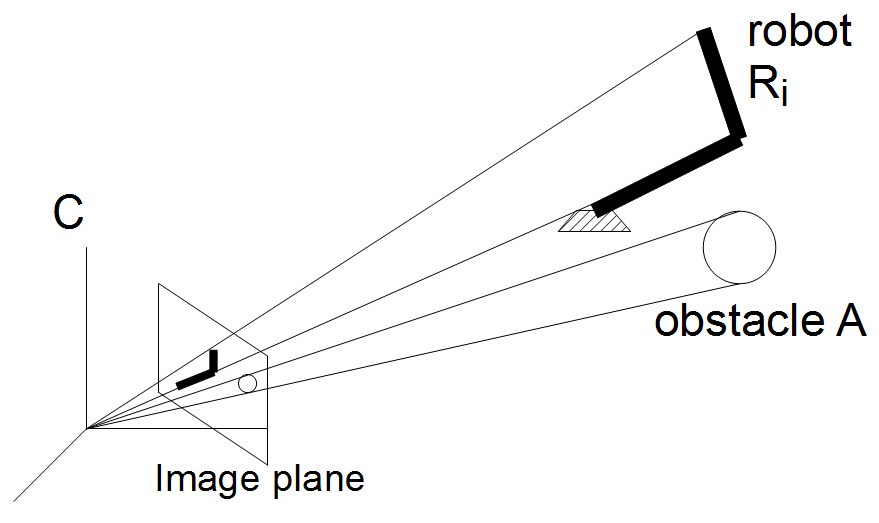}
        \caption{}
    \end{subfigure}
    \begin{subfigure}{0.25\textwidth}
        \includegraphics[width=\columnwidth]{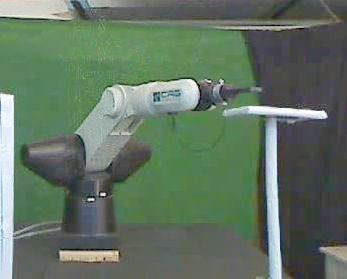}
        \caption{}
    \end{subfigure}
    \begin{subfigure}{0.25\textwidth}
        \includegraphics[width=\columnwidth]{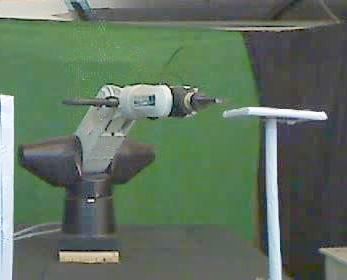}
        \caption{}
    \end{subfigure}
    \caption{
        \textit{Imaging the workspace}.  (a) The robot and obstacle lie along the projection bundle from the optical center via their image regions in the virtual image plane.  If these bundles do not intersect, $R \cap A = \emptyset$.  However, the converse is not true.  (b, c) Images of CRS A465 6-axis robot appear to be neighbouring poses, but close observation reveals that the base joint $\theta_1$ has rotated by nearly 180 degrees, while $\theta_2$ and $\theta_3$ have changed sign. Such situations are avoided in the analysis by additional cameras, or by adding decals.
    }
    \label{fig:visual-symmetry}
\end{figure}

In general, the imaging transformation $\psi$ is not invertible --- i.e., the 3D positions are not recoverable from the image. In such cases, the visual distinguishability assumption does not hold. This can happen when there is any visual symmetry in the robot's motion, as in Figure~\ref{fig:visual-symmetry}. For example, for a cylinder rotating about its own axis, unless the boundary is coloured to break the visual symmetry, all configurations result in the same image. This can also happen when the changes in the robot pose are hidden from the camera view, and hence two different configurations of the robot result in the same image. 

To ensure visual distinguishability in all such cases, we assume that at least one of the following requirements is satisfied:
\begin{itemize}
    \item {
        The robot is coloured differently at each local patch on its body, for example by use of some colour texture. 
   }
   \item {
        There is a restricted range of motion, that permits distinguishability of all robot poses.
   }
   \item {
        There are multiple cameras, as discussed in section~\ref{sec:collision_detection_vcs}, and for every pair of configurations of the robot, the resulting images look different in at least one view.
   }
\end{itemize}
This is the \textit{visual distinguishability} assumption. Formally, this assumption holds when $\psi \circ \phi: \mathcal{Q} \rightarrow \mathcal{I}$ is a bijection.

The imaging transformation $\psi\circ\phi$ maps each configuration $q$ to an image $I_q$ projected by the boundary $\delta R_q$ of shape $R_q$.  If the visual distinguishability assumption holds, then both $\phi$ and $\phi^{-1}$ exist. Moreover, these maps are continuous because small changes in the robot configuration lead to small changes in its shape and the corresponding images and vice versa. So, whenever $\mathcal{Q}$ is a manifold, $\mathcal{I}$ is also a manifold of the same dimension (i.e., for a $d$ DOF robot whose configuration space is a manifold, the image space is a $d$-dimensional manifold). We refer to this image manifold as the \textit{Visual Configuration Space}. We will later discuss how to build a Visual Roadmap (VRM) on this manifold.

Even though for practical purposes, we use discrete pixel values on a discrete pixel grid to represent digital images, the intensity of the optical signal that makes up the image and the image plane are continuous in theory. So, the image space changes smoothly as long as the robot motions are smooth. 


\subsection{Visual Manifold Theorem}
\begin{definition}
A \textit{Smoothly Moving Piece-wise Rigid body} (SMPR) is any multibody system with a smooth map from its configuration space $\mathcal{Q}$ to its shape space $R(\mathcal{Q})$.
\end{definition} 


\begin{definition}
A \textit{visually distinguishable system}  is one for which the visual distinguishability assumption holds.  
\end{definition}

Hence, for a visually distinguishable SMPR, $\psi \circ \phi: \mathcal{Q} \rightarrow \mathcal{I}$ is a homeomorphism.

\begin{theorem}
For a visually distinguishable SMPR, whenever $\mathcal{Q}$ is a manifold, $\mathcal{I}$ is a manifold of the same dimension.
\end{theorem}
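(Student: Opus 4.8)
The plan is to exploit the fact, already recorded just above the statement, that visual distinguishability makes $h := \psi\circ\phi : \mathcal{Q}\to\mathcal{I}$ a homeomorphism, and then to observe that ``being a $p$-dimensional topological manifold'' is a property that transfers across a homeomorphism. So the proof reduces to checking that each of the three defining conditions of a manifold --- Hausdorff, second countable, locally homeomorphic to $\mathbb{R}^p$ --- is preserved by $h$, together with a short argument that the dimension is the \emph{same}.

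First I would assemble the homeomorphism carefully. By the SMPR hypothesis $\phi:\mathcal{Q}\to R(\mathcal{Q})$ is smooth, hence continuous, where $R(\mathcal{Q})$ carries the Hausdorff-metric topology it inherits as a family of compact subsets of the workspace $\mathcal{W}\subseteq\mathbb{R}^k$; the imaging map $\psi:R(\mathcal{Q})\to\mathcal{I}$ is continuous because a small Hausdorff perturbation of a body produces a small perturbation of its appearance in the (continuous) image plane; and by visual distinguishability $h=\psi\circ\phi$ is a bijection onto $\mathcal{I}$. Granting continuity of $h^{-1}$ (the substantive point, discussed below), $h$ is a homeomorphism of $\mathcal{Q}$ onto $\mathcal{I}$ with its subspace topology.

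Next I would carry out the transfer of structure. Since $\mathcal{I}$ sits inside a Euclidean (pixel) space, it is automatically Hausdorff and second countable --- alternatively both properties pass from $\mathcal{Q}$ through $h$. For the local model, fix $x\in\mathcal{I}$ and set $q=h^{-1}(x)$. Choose a chart $(U,\varphi)$ of the manifold $\mathcal{Q}$ with $q\in U$ and $\varphi: U\to U'$ a homeomorphism onto an open $U'\subseteq\mathbb{R}^p$. Because $h$ is a homeomorphism it is an open map, so $h(U)$ is an open neighbourhood of $x$ in $\mathcal{I}$, and $\varphi\circ h^{-1}|_{h(U)} : h(U)\to U'$ is a composition of homeomorphisms, hence a homeomorphism. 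Thus every point of $\mathcal{I}$ has a neighbourhood homeomorphic to an open subset of $\mathbb{R}^p$, so $\mathcal{I}$ is a $p$-dimensional topological manifold. Finally the dimension is well defined and equals $\dim\mathcal{Q}$, because $\mathbb{R}^p$ and $\mathbb{R}^{p'}$ are not homeomorphic for $p\neq p'$ (topological invariance of dimension), so the local Euclidean dimension recorded in $\mathcal{I}$ must coincide with the one recorded in $\mathcal{Q}$.

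The main obstacle is the continuity of $h^{-1}$ --- equivalently, that configurations whose images are close in pixel space are themselves close in $\mathcal{Q}$. The forward estimate (small motion $\Rightarrow$ small image change) is geometrically transparent and is what the surrounding text invokes; the reverse estimate is exactly where visual distinguishability has to do real work, and in full generality it can fail if a sequence of configurations ``escapes'' while its images converge. I would close this gap in one of two ways: either (i) restrict, as the thesis implicitly does, to the smooth bounded-workspace setting in which the small-change argument runs in both directions; or, more cleanly, (ii) observe that whenever $\mathcal{Q}$ is compact --- e.g. the $p$-torus of an unconstrained articulated arm, or any closed-and-bounded parameter region --- a continuous bijection onto the Hausdorff space $\mathcal{I}$ is automatically a homeomorphism, so no separate argument for $h^{-1}$ is needed.
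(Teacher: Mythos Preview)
Your proposal is correct and follows essentially the same route as the paper: establish that $\psi\circ\phi$ is a homeomorphism and then transfer the manifold structure from $\mathcal{Q}$ to $\mathcal{I}$. Your version is more careful---you explicitly verify Hausdorff, second countability, and the chart transfer, and you isolate the continuity of $h^{-1}$ as the delicate point---whereas the paper handles this informally by asserting that $\phi$, $\phi^{-1}$, $\psi$, $\psi^{-1}$ are each continuous (appealing to perspective projection for $\psi$ and to ``infinitesimal changes'' for $\phi$) and then concluding that $\mathcal{I}$ is locally homeomorphic to $\mathcal{Q}$.
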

\begin{proof}
The imaging transformation $\psi \circ \phi$ maps each configuration $q$ to an image $I_q$ projected by the boundary $\delta R_q$ of shape $R_q$. If the \textit{visual distinguishability} assumption holds, then
both $\phi^{-1}$ and $\psi^{-1}$ exist. Since $\psi$ is a perspective projection with no singularities, $\psi$ and $\psi^{-1}$ are continuous. The functions $\phi$ and $\phi^{-1}$ map infinitesimal changes in a configuration to infinitesimal changes in the corresponding shape and vice versa; hence $\phi$ and $\phi^{-1}$ are continuous. Therefore, the image space $\mathcal{I}$ is locally homeomorphic to the configuration space $\mathcal{Q}$.  Hence  $\mathcal{I}$ constitutes a manifold of the same dimension as that of $\mathcal{Q}$, whenever $\mathcal{Q}$ is a manifold.
\end{proof}

While these properties hold for the continuous image space, in practice we work with a representative sample $X = \{x^{(1)}, ..., x^{(n)}\} \subset \mathcal{I}$. We now discuss how the VCS can be discovered using manifold learning algorithms.

\subsection{VCS Discovery through Manifold Learning}
Since the robot images lie on a very small subspace of the ambient space, it should be possible to assign a low-dimensional point to each image. This can be achieved by any of a number of non-linear dimensionality reduction (NLDR) algorithms~\citep{lee-verlysen-07_nonlinear-dimensionality-reduction}. These algorithms discover the subspace on which the given high-dimensional data points lie and give a low-dimensional representation to each of those points. One such algorithm is the well-known Isomap algorithm~\citep{tenenbaum2000global}, which generates a low-dimensional representation for the high-dimensional points, in such a way as to preserve the distances between points. 

\begin{figure}[h!]
    \begin{subfigure}{\textwidth}
        \centering
        \includegraphics[clip, trim = 3cm 9cm 4cm 9cm, width=0.7\columnwidth]{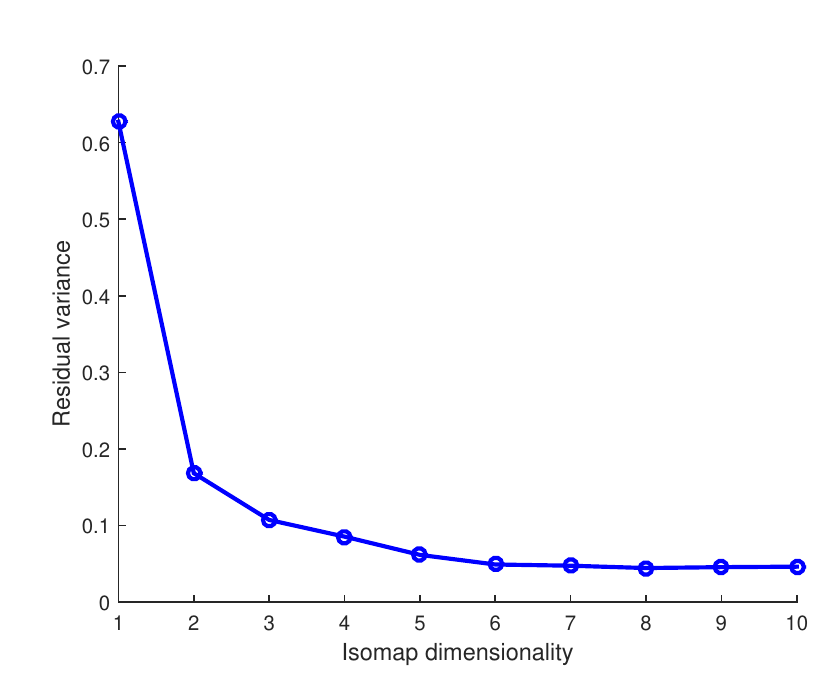}
        \caption{Scree plot of Isomap dimensions}
        \label{fig:isomap_2dof_arm:scree_plot}
    \end{subfigure}
    
    \begin{subfigure}{\textwidth}
        \centering
        \includegraphics[width=0.7\columnwidth]{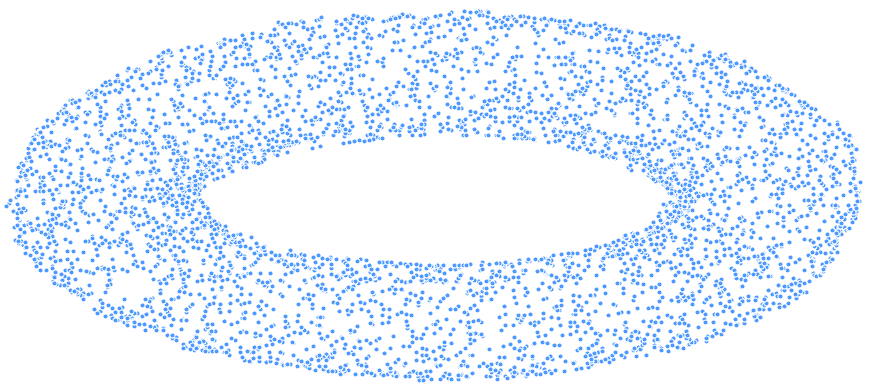}
        \caption{A 3-dimensional Isomap embedding}
        \label{fig:isomap_2dof_arm:3d_embedding}
    \end{subfigure}
    
    \caption{
        Isomap on a set of 5000 images of the 2-DOF articulated robotic arm of Figure~\ref{fig:2dof_arm_image_space}: (a) The scree plot has a `knee' at dimension 2, which suggests that the intrinsic dimension of the image manifold is 2, which is same as the DOF of the arm. (b) The embedding very closely resembles the toroidal structure of the topology of the canonical C-space of a 2-link arm in Figure~\ref{fig:cspace:2link_arm_example}. Even though the intrinsic dimension of the image manifold is 2, this Isomap embedding is not globally homeomorphic to $\mathbb{R}^2$ and hence cannot be embedded in 2-D, without losing some information along $\theta_2$.
    }
    \label{fig:isomap_2dof_arm}
\end{figure}

In many such algorithms, the first step is to construct a nearest neighbourhood graph (based on a measure of distance in $\mathcal{I}$). If the graph is connected, then distances between any two distal images can now be approximated by a shortest path through the edges connecting near neighbours. The Isomap algorithm constructs a low-dimensional embedding by attempting to preserve this geodesic distance. 

For the demonstrations presented here, we restrict ourselves to Isomap for this reason: though far from exact, it gives a closer resemblance to the global metric distances compared to other algorithms.  In order to estimate the robot DOFs, we simply try out a range of target dimensions and choose the lowest dimension that is able to adequately explain the variance in the data (based on residual variance). See Figure~\ref{fig:isomap_2dof_arm}.

If we are able to discover the DOF of the robot, the lower-dimensional space can be described in terms of $d$ latent parameters $v_1, \ldots, v_d$, which act as (state) parameters of the robot. These are visual analogs of nonlinear combinations of the configuration parameters of the robot and the space of these visual latent parameters is the \textit{Visual Configuration Space} (VCS).

However, such methods have difficulty in introducing new data points and in interpolating local data. So, for the purposes of motion planning, we avoid computing the manifold altogether, and restrict ourselves to a piecewise algorithm, as in~\citep{kambhatla-leen-97_dimension-red-by-local-PCA,yang-wang-05_better-scaled-local-tangent-space}.

\subsection{Difficulties with Manifold Discovery Algorithms}
\label{sec:non-euclidean}
For robots which involve a motion with an $S^1$ topology, the C-space and hence the VCS is not globally Euclidean. For example, the C-space of a freely-rotating 2-DOF articulated robot is $S^{1} \times S^{1} = \bf{T}^2$, which is a torus~\citep{choset-05_robot-motion-theory}. Traditional nonlinear dimensionality reduction (NLDR) algorithms assume that the target space for dimensionality reduction is a  Euclidean space (a subspace of $\mathbb{R}^D$). This means that a $d$-torus manifold, which is $d$-dimensional, cannot be globally mapped to an $\mathbb{R}^d$ space, with which it is locally homeomorphic. Another practical difficulty with NLDR algorithms is that it is very challenging to add new points to the manifold without recomputing the entire structure~\citep{bengio2004out}.  

\begin{figure}[h!]
    \centering
    \includegraphics[width=0.6\columnwidth]{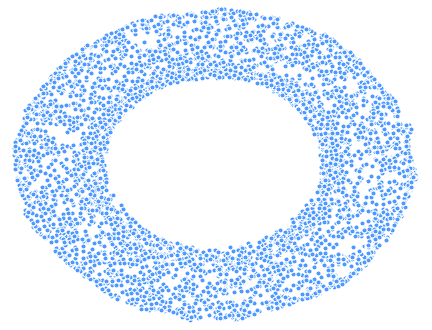}
    \caption{A 2-dimensional Isomap embedding of the 2-DOF arm images of Figure~\ref{fig:2dof_arm_image}.}
    \label{fig:isomap_2dof_arm:2d_embedding}
\end{figure}

Hence, a 2-dimensional Isomap embedding of the image manifold, as shown in Figure~\ref{fig:isomap_2dof_arm:2d_embedding}, resembles the torus in capturing the variability along the $\theta_1$ dimension of this space, but not $\theta_2$. 

At the same time, the global non-linear coordinate is little more than a convenience and does not materially affect the modelling, which can be done in a piecewise linear manner. Thus, we avoid computing global coordinates altogether and use the local neighbourhood graphs for planning global
paths and local tangent spaces, discovered using Principal Component Analysis (PCA), for checking the safety of edges (local planner). These local tangent spaces, in theory, correspond to charts which when stitched together form an atlas for the image manifold.

We note that much of this topological complexity would be reduced for most of the real world robots because they have a restricted range of rotation and their motion manifold does not have a torus topology. For example, the Scara arm demonstrated later in section~\ref{sec:scara} has $-135^\circ$ to $135^\circ$ for both $\theta_1$ and $\theta_2$. This implies that the mapping, though it is part of the surface of a torus, can be stretched and would fit in a subspace of $\mathbb{R}^2$. 

We next describe how obstacles are mapped on the VCS for collision detection.


\subsection {Collision Detection in VCS}
\label{sec:collision_detection_vcs}
In the imaging process, robot and obstacle are mapped to a bundle of rays converging on the camera optical center (Figure~\ref{fig:visual-symmetry}). Robot configurations that do not intersect with this bundle are guaranteed to be in free space.  
 
\begin{figure}[th]
\begin{center}
\includegraphics[width=0.6\columnwidth]{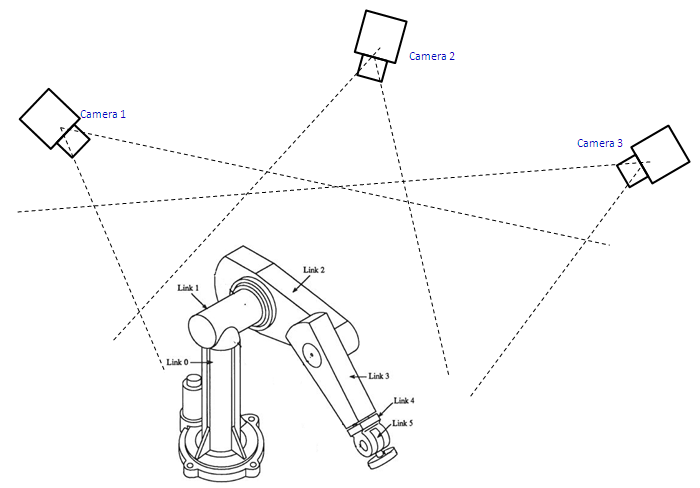}
\end{center}
\caption{
\textit{Conservative modelling of 3-D obstacles}. 
    For 3-D obstacles, the robot image must not overlap with the obstacle in at least one camera view. This is a depiction of this situation, shown via a 2D simplification. If some part of the robot occludes the obstacle in the cones for \textit{all} the cameras, then our system will consider it to be a collision situation.  
} 
\label{fig:multi_view_cameras}
\end{figure}

Let $\prescript{C}{}{R_i}$ be the bundle subtended at camera optical center $\prescript{C}{}{O}$ by the robot in configuration $q^{(i)}$, $\prescript{C}{}{B}$ be the bundle subtended at $\prescript{C}{}{O}$ by the obstacle $B$ and $\prescript{I}{}{R_i}$, $\prescript{I}{}{B}$ be the image regions corresponding to the robot and the obstacle.

\begin{lemma}
\label{lem:vc1}
If $\prescript{C}{}{R_i} \; \cap \; \prescript{C}{}{B} = \emptyset$, then $B \cap R(q^{(i)}) = \emptyset$.
\end{lemma}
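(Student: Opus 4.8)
The statement is essentially the contrapositive of an elementary fact about central (pinhole) projection, so the plan is to make the set-theoretic picture precise and then simply read off the inclusion. First I would fix the interpretation of the two bundles as subsets of the workspace $\mathcal{W} \subseteq \mathbb{R}^3$: $\prescript{C}{}{R_i}$ is the union of all rays emanating from the optical center $\prescript{C}{}{O}$ and passing through some point of the robot shape $R(q^{(i)})$, and likewise $\prescript{C}{}{B}$ is the union of all rays from $\prescript{C}{}{O}$ through points of the obstacle $B$. Here I would invoke the standing assumption, implicit in the imaging model of Figure~\ref{fig:visual-symmetry}, that $\prescript{C}{}{O}$ lies strictly outside both $R(q^{(i)})$ and $B$, so that each such ray is well defined and the construction is non-degenerate.

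With this interpretation, the key step is the trivial inclusion $R(q^{(i)}) \subseteq \prescript{C}{}{R_i}$: any point $p \in R(q^{(i)})$ lies on the ray from $\prescript{C}{}{O}$ through $p$, and that ray is by definition one of the rays comprising $\prescript{C}{}{R_i}$. Symmetrically, $B \subseteq \prescript{C}{}{B}$. Intersecting these two inclusions yields
\[
 B \cap R(q^{(i)}) \;\subseteq\; \prescript{C}{}{B} \cap \prescript{C}{}{R_i} \;=\; \emptyset,
\]
which is exactly the claim. Equivalently, I could argue by contraposition: if $B \cap R(q^{(i)}) \neq \emptyset$, pick a point $p$ in the intersection; the ray from $\prescript{C}{}{O}$ through $p$ then lies in $\prescript{C}{}{R_i}$ because $p$ is on the robot and in $\prescript{C}{}{B}$ because $p$ is on the obstacle, so the two bundles share at least this ray and are not disjoint.

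I do not expect a genuine obstacle in this argument; the only thing that needs care is bookkeeping, namely stating clearly that the bundles are solid sets of rays in $\mathbb{R}^3$ and not their two-dimensional image footprints $\prescript{I}{}{R_i}, \prescript{I}{}{B}$. It is precisely the passage to the image plane that discards depth and makes the converse false (occlusion can give $\prescript{I}{}{R_i} \cap \prescript{I}{}{B} \neq \emptyset$ while $R(q^{(i)}) \cap B = \emptyset$), a point the surrounding text already flags and which I would reiterate in a short remark. I would also note that the lemma is purely geometric: it uses neither the visual distinguishability assumption nor any smoothness of the robot's motion, so it applies to an arbitrary configuration $q^{(i)}$.
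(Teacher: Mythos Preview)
Your argument is correct and is essentially the same as the paper's: both proceed by contraposition, observing that a common workspace point of $B$ and $R(q^{(i)})$ forces the two optical bundles to share the ray through that point. Your version simply makes the underlying inclusions $R(q^{(i)}) \subseteq \prescript{C}{}{R_i}$ and $B \subseteq \prescript{C}{}{B}$ explicit, which the paper leaves implicit in its one-sentence proof.
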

\begin{proof}
If the obstacle $B$ had a non-empty intersection with the robot region in the workspace for configuration $q^{(i)}$, so that $B \cap R(q^{(i)}) \ne \emptyset$, then the optical bundles subtended by the robot and the obstacle would also overlap, so that $\prescript{C}{}{R_i} \; \cap \; \prescript{C}{}{B} \ne \emptyset$.
\end{proof}

Thus, robot configurations for which the bundles do not intersect with the obstacle bundle are guaranteed to be in the free space $\mathcal{Q}_{free}$. Note that the converse is not true. Even when a configuration of the robot is free, it is possible, in some view of the camera, that the obstacle is in between the camera and the robot, or the robot is in between the camera and the obstacle and hence that configuration may appear as a collision configuration from that view.

\begin{lemma}
\label{lem:vc2}
$\prescript{C}{}{B} \; \cap \; \prescript{C}{}{R} = \emptyset$ iff $\prescript{I}{}{B} \; \cap \; \prescript{I}{}{R} = \emptyset$. 
\end{lemma}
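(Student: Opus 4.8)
The plan is to reduce the statement to the elementary fact that a perspective projection through the optical center sets up a bijection between the rays emanating from $\prescript{C}{}{O}$ (other than those parallel to the image plane) and the points of the image plane $\Pi$. First I would recall, from the imaging set-up, the precise definitions: the bundle $\prescript{C}{}{R}$ is the union of the rays from $\prescript{C}{}{O}$ through the points of the robot shape $R$, the bundle $\prescript{C}{}{B}$ is the union of the rays from $\prescript{C}{}{O}$ through the points of the obstacle $B$, and the image regions are the sections of these cones by the image plane, i.e. $\prescript{I}{}{R} = \psi(R) = \prescript{C}{}{R}\cap\Pi$ and $\prescript{I}{}{B} = \psi(B) = \prescript{C}{}{B}\cap\Pi$, where $\psi$ is the perspective projection. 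Since $\psi$ is assumed to have no singularities (so $R$ and $B$ lie in front of the camera and, by the finite field of view, no ray through them is parallel to $\Pi$), I would record the key sublemma: each ray of a bundle meets $\Pi$ in exactly one point, and two points $x\ne \prescript{C}{}{O}\ne y$ lie on a common ray from $\prescript{C}{}{O}$ if and only if $\psi(x)=\psi(y)$. Everything else is bookkeeping on top of this sublemma.

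With that in hand, I would establish the two implications directly and then contrapose. For the ``image overlap $\Rightarrow$ bundle overlap'' direction: if $p\in \prescript{I}{}{R}\cap \prescript{I}{}{B}$, then $p=\psi(r)=\psi(b)$ for some $r\in R$, $b\in B$, so by the sublemma $r$, $b$ and $p$ all lie on the single ray $\rho$ from $\prescript{C}{}{O}$ through $p$; since $\rho\subseteq \prescript{C}{}{R}$ and $\rho\subseteq \prescript{C}{}{B}$, we get $\prescript{C}{}{B}\cap \prescript{C}{}{R}\ne\emptyset$. For the converse: if $z\in \prescript{C}{}{B}\cap \prescript{C}{}{R}$ with $z\ne \prescript{C}{}{O}$, then $z$ lies on a ray of $\prescript{C}{}{R}$ through some $r\in R$ and on a ray of $\prescript{C}{}{B}$ through some $b\in B$, and the injectivity in the sublemma forces these to be the same ray, whose piercing point $\psi(z)$ then lies in both $\psi(R)=\prescript{I}{}{R}$ and $\psi(B)=\prescript{I}{}{B}$. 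Contraposing each implication yields $\prescript{C}{}{B}\cap \prescript{C}{}{R}=\emptyset \iff \prescript{I}{}{B}\cap \prescript{I}{}{R}=\emptyset$, which is the claim.

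The part that needs the most care — really the only obstacle — is the treatment of degenerate cases, since the argument itself is just the definition of central projection. The apex $\prescript{C}{}{O}$ lies on every bundle, so literally $\prescript{C}{}{B}\cap \prescript{C}{}{R}\supseteq\{\prescript{C}{}{O}\}$ always holds; I would note that the statement is meant with the bundles regarded as families of rays (equivalently, as cones with the apex removed), which is the same reading already used in the proof of Lemma~\ref{lem:vc1}, so that ``$=\emptyset$'' refers to having no common ray. I would also make explicit the standing hypotheses that keep $\psi$ a genuine singularity-free perspective projection — objects strictly in front of the camera and inside its field of view — because without them the ray-to-piercing-point correspondence ceases to be a bijection and the equivalence can fail (an object behind the image plane projects to the empty region yet still subtends a nonempty bundle). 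Granting these, the proof is complete.
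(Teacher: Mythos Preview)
Your proposal is correct and follows essentially the same approach as the paper: both arguments rest on the observation that $\prescript{I}{}{R}$ and $\prescript{I}{}{B}$ are the sections of the cones $\prescript{C}{}{R}$ and $\prescript{C}{}{B}$ by the image plane, so that rays through $\prescript{C}{}{O}$ correspond bijectively to image points and common rays correspond to common image points. The paper's proof is a two-sentence sketch of this geometric fact, whereas you spell out both implications and flag the degenerate cases (the shared apex, rays parallel to the image plane) that the paper leaves implicit.
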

\begin{proof}
$\prescript{I}{}{R}$ and $\prescript{I}{}{B}$ are sections of cones $\prescript{C}{}{R}$ and $\prescript{C}{}{B}$; so $\prescript{C}{}{B}$ and $\prescript{C}{}{R}$ are pyramids with their apexes at $\prescript{C}{}{O}$, and $\prescript{I}{}{B}$ and $\prescript{I}{}{R}$ are their projections on the image plane. So, a non-empty intersection of images requires and is required by a non-empty intersection of the corresponding optical bundles.
\end{proof}

\begin{theorem}
\textit{(Visual Collision Theorem)}
For a robot in a given pose $q^{(i)}$, if  $\prescript{I}{}{R_i} \; \cap \; \prescript{I}{}{B} = \emptyset$, then $q^{(i)} \in \mathcal{Q}_{free}$.
\end{theorem}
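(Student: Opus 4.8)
The plan is to obtain the theorem by composing the two lemmas already established, read in the right order. First I would note that the hypothesis $\prescript{I}{}{R_i} \cap \prescript{I}{}{B} = \emptyset$ is exactly the right-hand side of the biconditional in Lemma~\ref{lem:vc2}, instantiated at the pose $q^{(i)}$ (so that the generic $\prescript{C}{}{R}$, $\prescript{I}{}{R}$ there become $\prescript{C}{}{R_i}$, $\prescript{I}{}{R_i}$). Applying that lemma gives $\prescript{C}{}{R_i} \cap \prescript{C}{}{B} = \emptyset$: the optical bundle subtended by the robot at the camera center is disjoint from the bundle subtended by the obstacle. This step uses only that images are planar sections of the pyramidal bundles with apex at $\prescript{C}{}{O}$, which is what Lemma~\ref{lem:vc2} records.

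Next I would feed this conclusion into Lemma~\ref{lem:vc1}, whose hypothesis is precisely $\prescript{C}{}{R_i} \cap \prescript{C}{}{B} = \emptyset$. That yields $B \cap R(q^{(i)}) = \emptyset$, i.e.\ the shape of the robot in pose $q^{(i)}$ does not meet the obstacle region $B$ in the workspace. By the definition of the free configuration space as $\mathcal{Q}$ minus the C-space obstacles $CB = \{ q \in \mathcal{Q} : R(q) \cap B \ne \emptyset \}$, the pose $q^{(i)}$ lies outside $CB$, hence $q^{(i)} \in \mathcal{Q}_{free}$. For a workspace with several obstacles $B_1, B_2, \ldots$ one simply runs this argument once per obstacle: disjointness of the robot image from the image of every $B_j$ forces $R(q^{(i)})$ to avoid each $B_j$, so $q^{(i)} \notin \bigcup_j CB_j$.

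I do not expect a genuine obstacle in the proof — it is a two-line composition. The one point worth spelling out is that the implication is genuinely one-directional: the converse fails because of occlusion, so a free pose can still yield overlapping image regions when the obstacle lies between camera and robot (or vice versa). This is exactly why, in the multi-camera setting of Figure~\ref{fig:multi_view_cameras}, it suffices that the non-overlap hold in a \emph{single} view: the theorem applied to that view already certifies $q^{(i)} \in \mathcal{Q}_{free}$, and overlaps in the remaining views merely make the collision test conservative, never unsound. I would close the proof immediately after invoking Lemmas~\ref{lem:vc2} and~\ref{lem:vc1} in that order, adding only the remark on multiple obstacles and views.
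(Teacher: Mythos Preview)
Your proposal is correct and matches the paper's own proof, which simply states that the result follows from Lemma~\ref{lem:vc1} and Lemma~\ref{lem:vc2}. You have spelled out the composition in the right order (first Lemma~\ref{lem:vc2} to pass from image disjointness to bundle disjointness, then Lemma~\ref{lem:vc1} to reach workspace disjointness and hence $q^{(i)} \in \mathcal{Q}_{free}$), and your added remarks on multiple obstacles and the one-directional nature are accurate and consistent with the surrounding discussion in the paper.
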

\begin{proof}
Follows from Lemma~\ref{lem:vc1} and Lemma~\ref{lem:vc2}.
\end{proof}


We note that the above is a necessary condition, but it is often rather conservative.  Indeed, the inverse condition defines \textit{occlusion} situations: where $B \cap R = \emptyset$ but $\prescript{C}{}{R} \; \cap \; \prescript{C}{}{B}$ is non-null. This limitation is a result of the information loss in the imaging process. This can cause particular difficulties for articulated arms.  In such cases, one may use multiple cameras; since the \textit{visual collision theorem} holds for all cameras, we may define any space as free if $\prescript{C}{}{R} \; \cap \; \prescript{C}{}{B} = \emptyset$ in at least one view.  In this situation, both robot and obstacle are less conservatively modelled as the intersection of multiple cones.

For non-planar motions, such as for 3D robot motion, one can consider multiple cameras; free space is guaranteed to be the disjunction of the free spaces guaranteed by each camera.  Hence, the obstacle is guaranteed to be a subset of the conjunction. 


Visual Configuration Space (VCS) is the manifold on which the sampled robot images lie. However, discovering this manifold is an expensive operation and we do not need the concrete visual generalized coordinates of the VCS, for motion planning purposes. We can work with a graph embedded on the VCS, which is constructed using the robot images. We call this graph a \textit{Visual Roadmap} (VRM). 

In the next chapter, we describe how VRM is constructed and discuss how it could be used to address various issues in robot motion planning and illustrate its use to plans motions for some simulated planar robots and real 3-D robots. 
\chapter[Robot Motion Planning in Visual Configuration Space]{Robot Motion Planning in \\ Visual Configuration Space} 

\label{chap:rmp_in_vcs} 
\lhead{\emph{Motion Planning in VCS}} 

\textit{Robot Motion Planning} (RMP) is the problem of getting a robot to move from one configuration to another configuration without hitting any obstacles present in the workspace. Traditional solutions to the RMP problem involve constructing an explicit representation of the robot in terms of the number of degrees of freedom, joint and link geometry, rules for forward kinematics and inverse kinematics to map from the configuration space to the workspace and vice versa. Once such a representation is available, obstacle-free paths can be planned in the configuration space. This approach is cumbersome because different representations need to be hand-engineered for different robots, obstacles and
workspaces. If we consider the triple $\langle \text{robot, obstacles, workspace} \rangle$ as a \textit{system}, the traditional approach is quite \textit{system specific}. Any change in the system would require operating on a different hand-engineered model.

We propose an approach to motion planning that requires much less system-specific information to plan paths. We use the ideas developed in the previous chapter and plan obstacle-free motions on a graph that we call \textit{Visual Roadmap} (VRM), which is embedded on the Visual Configuration Space of the robot. This approach is motivated by studies in human cognition which suggest that human babies around the age of one month vigorously move their limbs and observe them to learn latent representations for voluntary motion.

\section{Visual Roadmap (VRM)}
The VCS is the manifold on which the robot images lie. However, discovering this manifold is an expensive operation and we do not need the concrete visual generalized coordinates of the VCS, for motion planning purposes. We can work with a graph embedded on the VCS, which is constructed using the robot images. 

We assume the availability of a set of images of the robot in various configurations in its workspace. Given a set of robot images, we build a neighbourhood graph $G$ of these images and plan paths using this graph. In this graph, each node corresponds to an image of the robot in some configuration and so the graph will have as many nodes as the number of images sampled. Two nodes will be connected by an edge, if the corresponding images are such that one of them is a $k$-nearest neighbour of the other one, for some value of $k$. The corresponding edge will have a weight equal to the distance between those two images, under some distance metric on robot pose images. We call this graph a \emph{Visual Roadmap} (VRM). 

We begin with a random image sample $X = \{x^{(1)}, x^{(2)}, \ldots, x^{(n)}\}$ of robot configurations $\{q^{(1)}, q^{(2)}, \ldots, q^{(n)}\}$. We construct a graph $G(V, E)$, which has $n$ vertices, each vertex $v^{(i)}$ corresponding to a robot image $x^{(i)}$ and hence to a robot configuration $q^{(i)}$. An edge is added between $v^{(i)}$ and $v^{(j)}$, if and only if $x^{(i)}$ and $x^{(j)}$ are near-by. For example, we can find the $k$-nearest neighbours of each image $x^{(i)}$ in the image space and an edge between $v^{(i)}$ and the nodes corresponding to the $k$-nearest neighbours of $x^{(i)}$. This $k$-nearest neighbours ($k$-NN) graph $G$ is a VRM. We note that the construction of VRM does not require any knowledge of the robot geometry.

\begin{figure}[t]
    \centering
    \includegraphics[page=2, clip, trim = 0cm 0cm 0cm 4cm,  width=\textwidth]{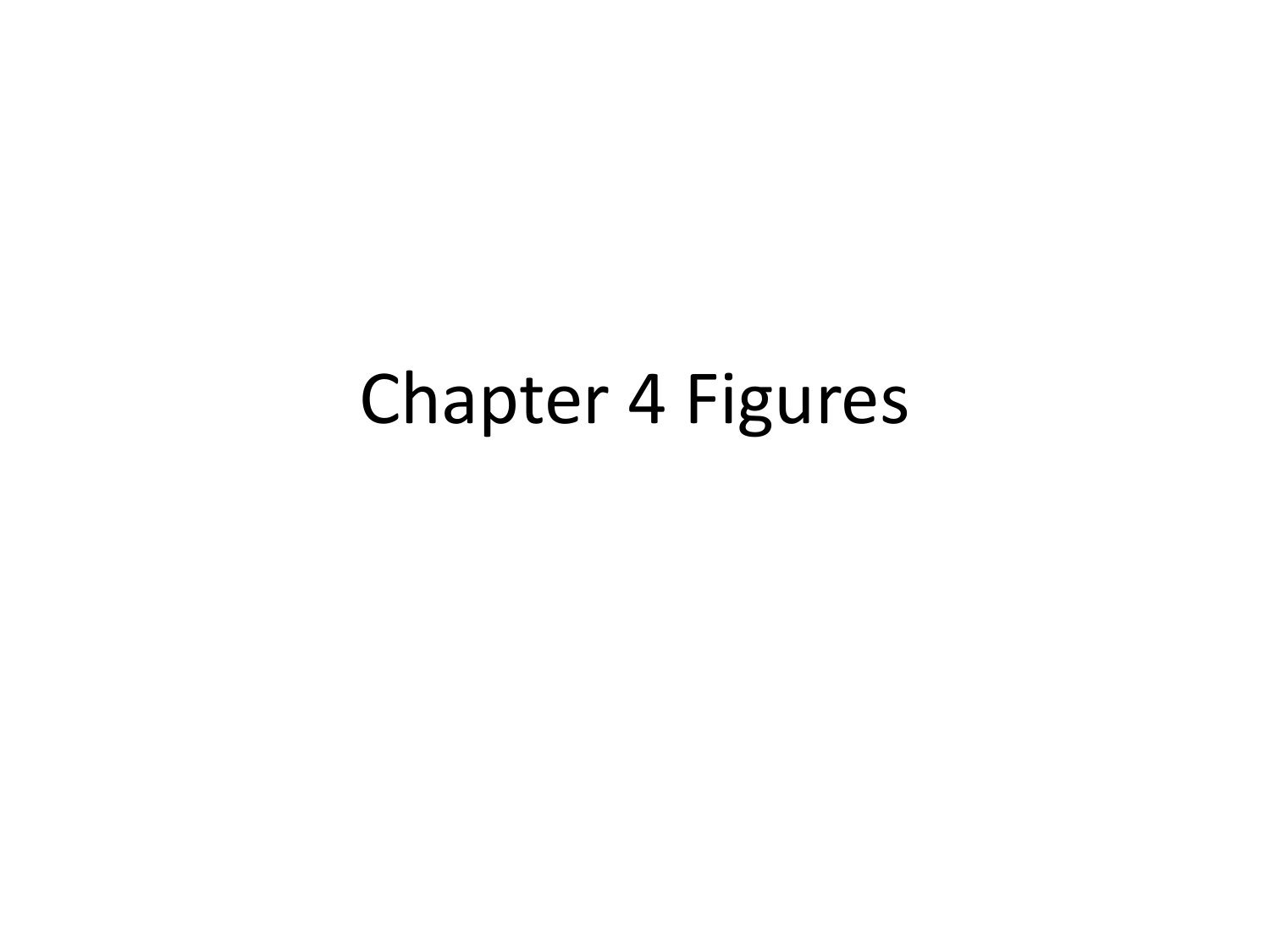}

    \caption{
        \textit{Visual Roadmap as an analogue of Probabilistic Roadmap (PRM)}.
        In the Visual Roadmap (VRM) approach, a graph is constructed from the neighbourhoods in the image space. This requires no knowledge of robot kinematics or geometry.  Just as with PRM, one now deletes nodes overlapping the obstacles, and constructs a path on the remaining edges of the graph. The process is illustrated with a simulated mobile robot : the manifold (bottom right) is constructed solely from a sample of 2000  images. The VRM graph is shown with all obstacle nodes removed and a path identified for a given source and goal. The manifold discovery process preserves topology but may flip or deform the map, as it has done in this case.
    } 
    \label{fig:flowchart}
\end{figure}

When there are no obstacles in the workspace, finding a path for the robot from a source configuration to a destination configuration corresponds to finding a path in the VRM graph.

Figure~\ref{fig:flowchart} shows an overview of the VRM algorithm, demonstrated on a simulated mobile robot.  The idea of VRM is analogous to Probabilistic Roadmap (PRM)~\citep{kavraki-latombe-overmars-96_PRM-high-dimensional}. While PRM samples the conventional C-space, VRM samples the VCS.

\subsection{Static Obstacle Avoidance}
When there are any static obstacles, we find the configurations in which the robot images have a non-empty intersection with the obstacle image and remove from the roadmap the nodes corresponding to these configurations. Paths planned using the remaining graph will be \textit{almost} obstacle-free. The paths are not yet fully safe because the edge between two free configurations may not be free from obstacles. This is the problem of local planning and will be handled separately.

For spatial robots, avoiding an obstacle would require multiple cameras. We collect multiple images of each configuration from different fixed views. A given configuration is free if there is at least one view in which the robot image does not intersect with the obstacle image. If the images from all the views have a non-empty intersection with the obstacle image in the respective views, then that configuration will be considered a collision configuration.

\subsection{Local Planner}
Once we know that all the nodes in the VRM graph correspond to free configurations, it is the responsibility of the local planner to make sure that all the edges connecting the free nodes are free from obstacles. For this purpose, we propose three approaches: (i) Local Tangent Space based planner, (ii) Track-points based local planner, and (iii) Local planner using Shi-Tomasi features. We discuss these approaches in Section~\ref{sec:local_planner}.

\subsection{Dynamic Obstacle Avoidance}
To handle dynamic obstacles, we make certain assumptions about the speed of the obstacle. These assumptions allow us to update only small portions of the graph that correspond to collision configurations, instead of having to recreate the whole graph all over again every time the obstacle moves. Firstly, we assume that the speed of the obstacle is less than the speed of the robot so that there is enough time for the robot to plan alternate paths. Secondly, we assume that in each move, the obstacle moves at most by a certain amount, so that the updates on the graph can be limited to a small number of layers of neighbours around the current obstacle region in the graph. We discuss the details of dynamic obstacle avoidance in Section~\ref{sec:dynamic_obstacle}.

\section{Motion Planning without Obstacles}
When there are no obstacles in the environment, we directly plan a path on the VRM. Given a source configuration, say $s$, and a destination configuration, say $t$, as images, first we identify the nearest configurations to $s$ and $t$ in the initial image-set and add them to the VRM temporarily. Next, we find a shortest path, say $p$, on the VRM from $s$ to $t$ using an algorithm such as Dijkstra's shortest path finding algorithm.

After finding a feasible path $p$, it is the job of a controller to decide how the robot would move from one node to another node on the path $p$. Each node on $p$ is a waypoint for the controller. If the configuration parameters are known for all the configurations in the image-set, the controller could use them to execute the path; otherwise, a visual servo controller may be employed.

Figure~\ref{fig:circular_mobile_robot_ex_wo_obs} illustrates the use of VRM to plan paths for a circular mobile robot in a planar workspace.  For this robot, the C-space, as well as the workspace, are both $\mathbb{R}^2$. This example uses a sample size of 5000 images and a neighbourhood size of 10, to build the VRM.

\begin{figure}
    \centering
    \begin{subfigure}{0.45\textwidth}
        \includegraphics[width=\columnwidth]{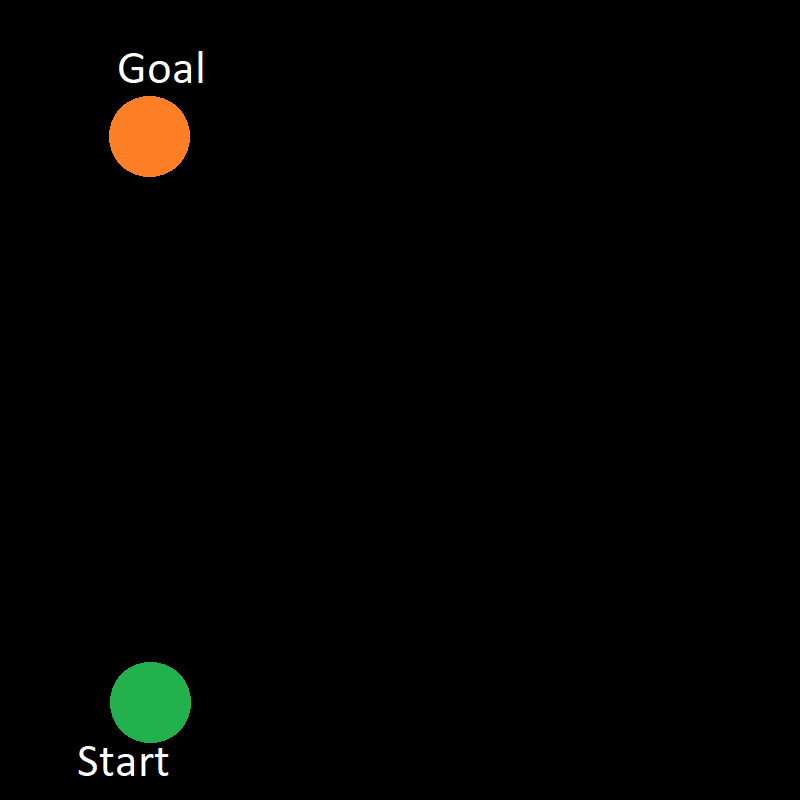}
        \caption{}
    \end{subfigure}
    \begin{subfigure}{0.45\textwidth}
        \includegraphics[width=\columnwidth]{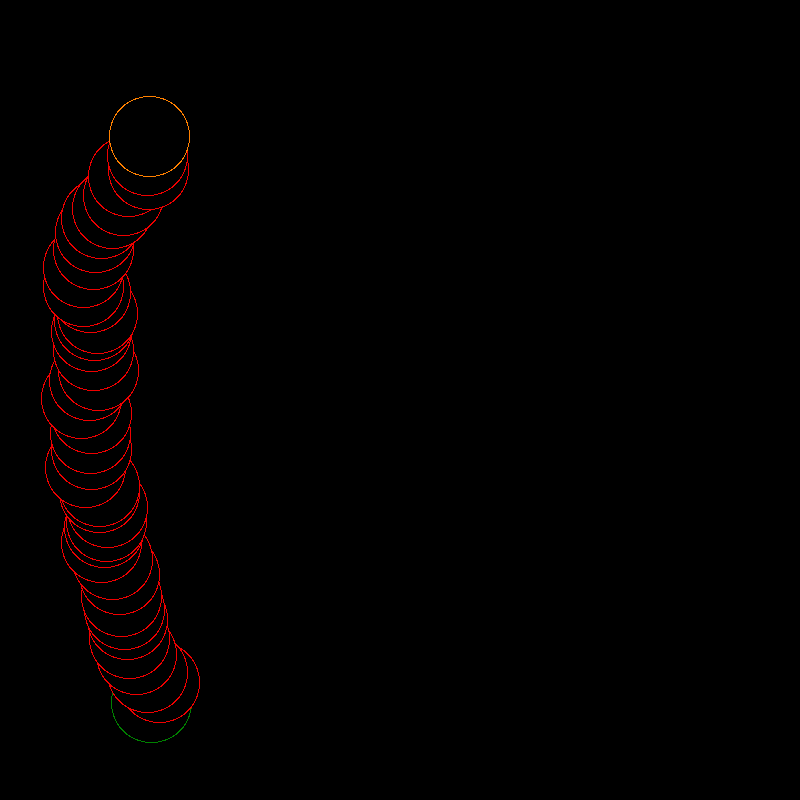}
        \caption{}
    \end{subfigure}
    
    \begin{subfigure}{0.45\textwidth}
        \includegraphics[width=\columnwidth]{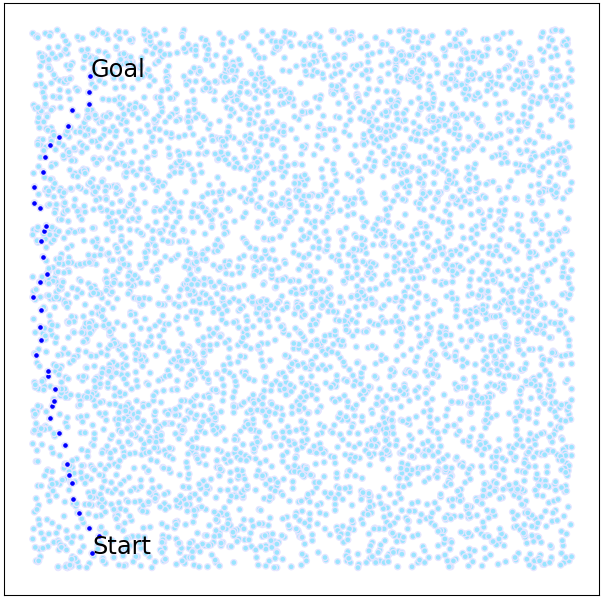}
        \caption{}
    \end{subfigure}
    \begin{subfigure}{0.45\textwidth}
        \includegraphics[width=\columnwidth]{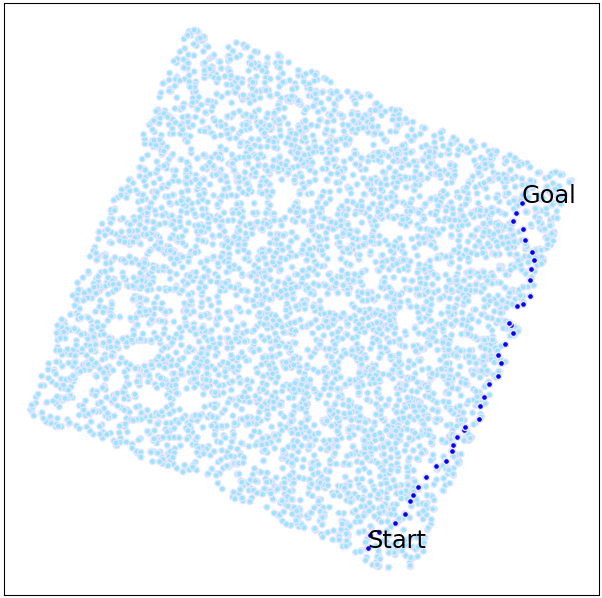}
        \caption{}
    \end{subfigure}
    \caption{
        \textit{Path planning on VRM, for a circular mobile robot}. (a) Start and goal configurations of a circular mobile robot marked in a workspace with no obstacles. A path is computed on the VRM, from the start to the goal configuration. This example uses $N=5000$ images and neighbourhood size $k=10$. (b) The computed path is shown in the workspace, as a superimposition of an outline of the robot images corresponding to the configurations on the path. (c) The same path is shown, as a sequence of blue dots, on the conventional C-space formed by the actual positions of the robot's centre. (d) The path is shown on the VCS computed by Isomap. Notice that Isomap preserves the topology of the C-space and the metric relations among neighbours, but may flip and/or rotate the map, as it has done in this case.
    }
    \label{fig:circular_mobile_robot_ex_wo_obs}
\end{figure}

\begin{figure}
    \centering
    \begin{subfigure}{0.45\textwidth}
        \includegraphics[width=\columnwidth]{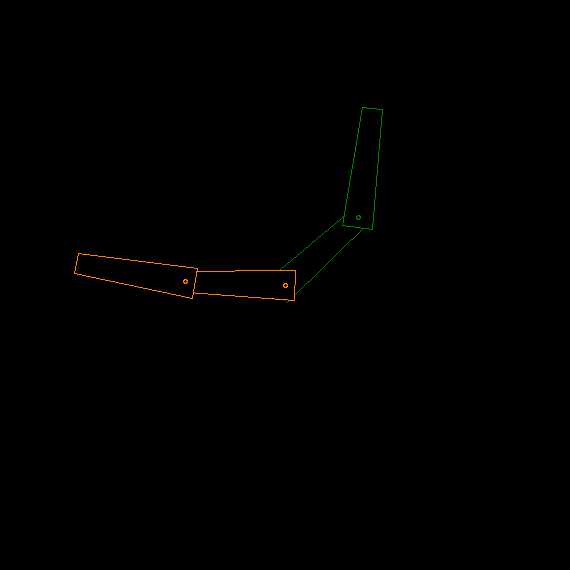}
        \caption{}
    \end{subfigure}
    \begin{subfigure}{0.45\textwidth}
        \includegraphics[width=\columnwidth]{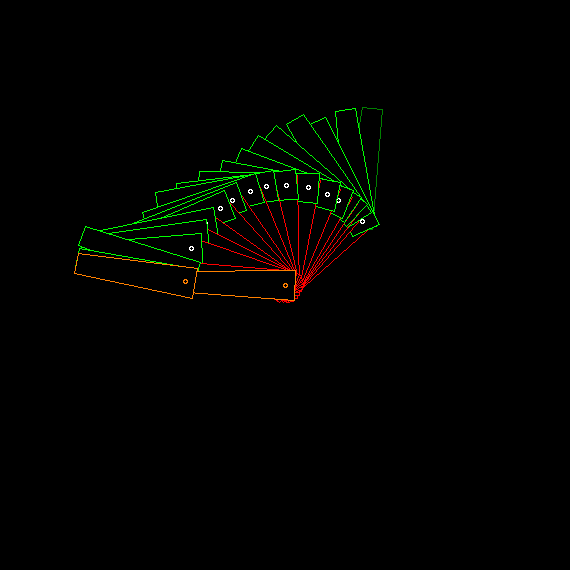}
        \caption{}
    \end{subfigure}
    
    \begin{subfigure}{0.45\textwidth}
        \includegraphics[width=\columnwidth]{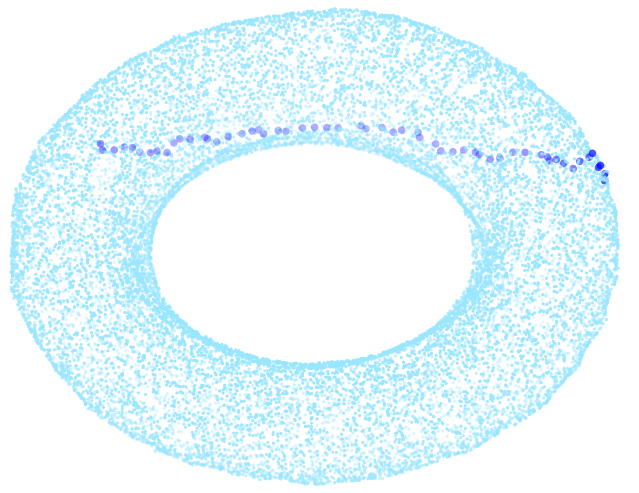}
        \caption{}
    \end{subfigure}
    \begin{subfigure}{0.45\textwidth}
        \includegraphics[width=\columnwidth]{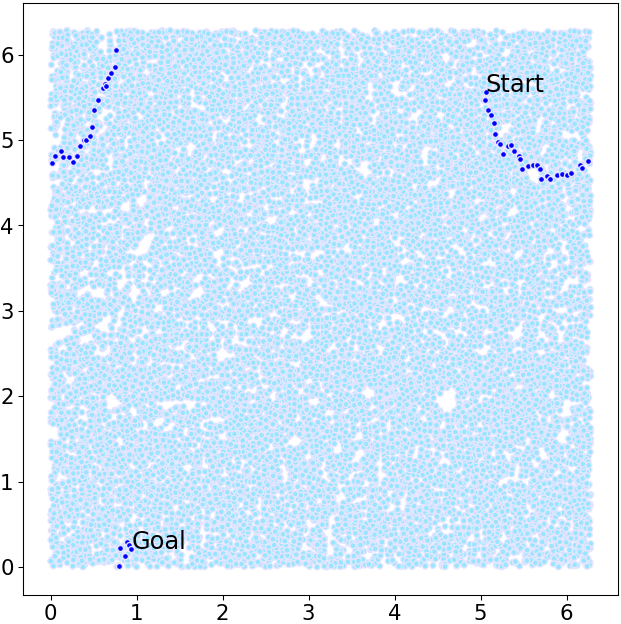}
        \caption{}
    \end{subfigure}
    \caption{
        \textit{Path planning on VRM, for an articulated arm}. (a) Start (green) and goal (orange) configurations of a 2-link arm in its workspace. A path is computed on the VRM, from the start to the goal configuration. This example uses $N=20000$ images and neighbourhood size $k=10$. (b) The computed path is shown in the workspace, as a superimposition of an outline of the robot images corresponding to the configurations on the path. 
        (c) The VCS, discovered by Isomap, has a toroidal shape since the C-space of this arm has an $S^1 \times S^1$ topology.
        (d) We cut the torus in such a way that it will stretch into a rectangle and show the path between the start and goal configurations. Notice that because of the toroidal topology, there is a wrap-around effect in the plot.
    }
    \label{fig:2dof_arm_ex_wo_obs}
\end{figure}

Figure~\ref{fig:2dof_arm_ex_wo_obs} illustrates the use of VRM to plan paths for a 2-DOF articulated arm in a planar workspace. Both of its links can rotate fully around and hence its C-space has an $S^1 \times S^1$ topology. Correspondingly, the VCS discovered by Isomap has a toroidal structure. This example uses a sample size of 20000 images and a neighbourhood size of 10, to build the VRM. 

Computation of VCS using Isomap is done only for the purpose of demonstrations here. The actual path planning is done on the VRM, which is just a neighbourhood graph and hence computing VRM is much simpler than computing VCS.

\section{Static Obstacle Avoidance}
\label{sec:static_obstacles}

For handling obstacles, we take the background subtracted images, and test for intersection with the obstacle image. We find the configurations in which the robot images have a non-empty intersection with the obstacle image and remove from the roadmap the nodes corresponding to these configurations.  A non-empty intersection implies that the configuration is not free and we remove the corresponding node and its incident edges from $G$. If $b \in \mathbb{R}^p$ is the obstacle image vector, then the set of nodes to be removed from $G$ is $V_{collision} = \{v^{(i)}: x^{(i)} * b \ne \bf{0}\}$, where $*$ denotes entry-wise product (Hadamard product) and $\bf{0}$ is the zero-vector. Thus, we obtain a modified graph in which every node represents a free configuration.

However, the edges may still touch some part of the obstacle in an intermediate pose. Guaranteeing edge-safety is the responsibility of the local planner, discussed in Section~\ref{sec:local_planner}. Note that this process applies to any number of static obstacles. 

Figure~\ref{fig:circular_mobile_robot_ex_obs_multiple} illustrates the use of VRM to plan paths for a circular mobile robot in a planar workspace, with multiple obstacles. While the topological properties of the C-space (with obstacles) are preserved in the VCS, there can be a rotation and reflection of the points as can be seen in the figure. Such transformations are not problematic, as the usability of the computed paths is not affected by them.

\begin{figure}
    \centering
    \begin{subfigure}{0.45\textwidth}
        \includegraphics[width=\columnwidth]{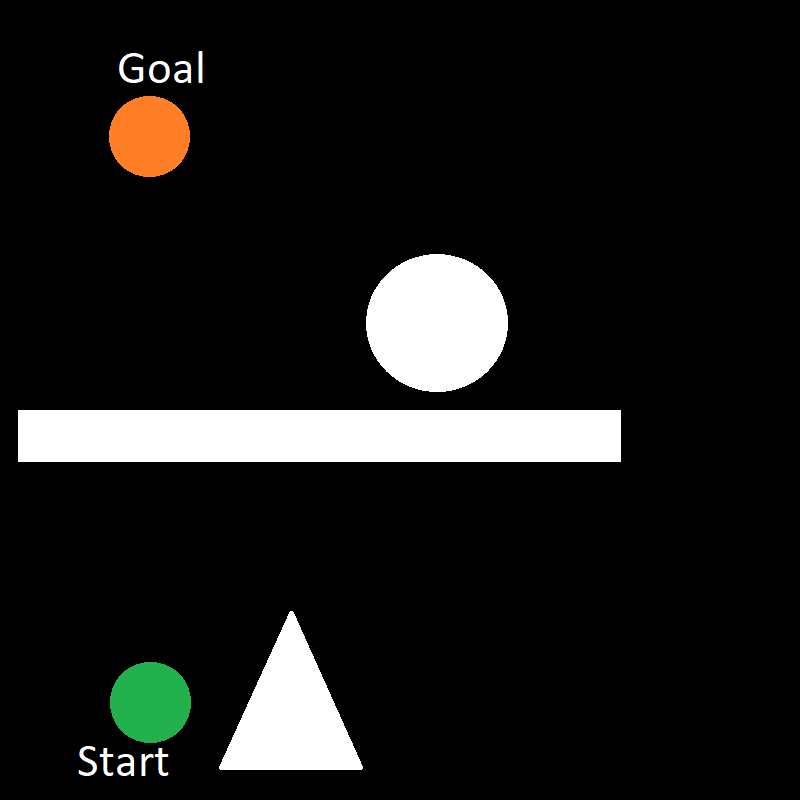}
        \caption{}
    \end{subfigure}
    \begin{subfigure}{0.45\textwidth}
        \includegraphics[width=\columnwidth]{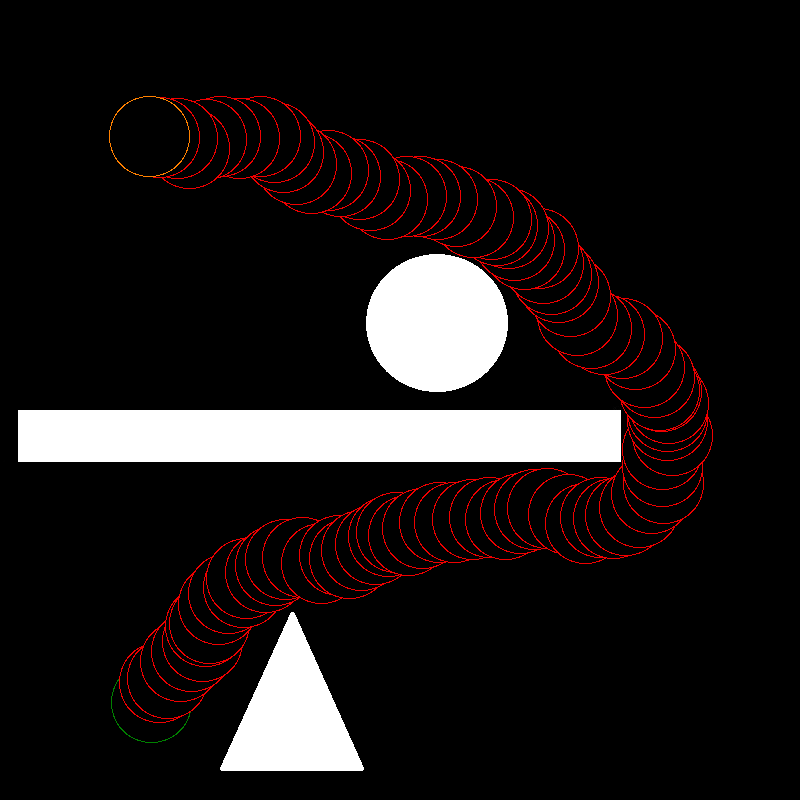}
        \caption{}
    \end{subfigure}
    
    \begin{subfigure}{0.45\textwidth}
        \includegraphics[width=\columnwidth]{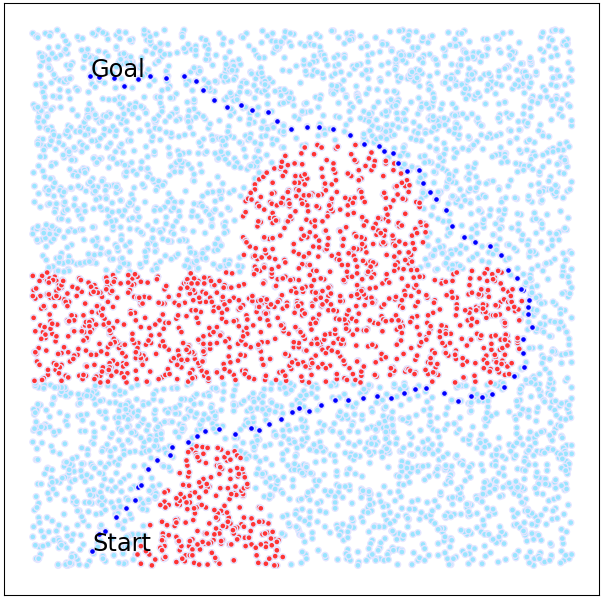}
        \caption{}
    \end{subfigure}
    \begin{subfigure}{0.45\textwidth}
        \includegraphics[width=\columnwidth]{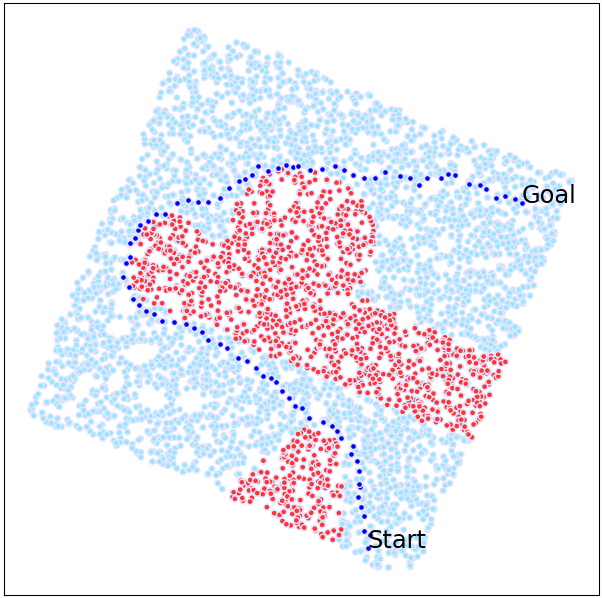}
        \caption{}
    \end{subfigure}
    \caption{
        \textit{Path planning on VRM, for a circular mobile robot, in presence of obstacles}. (a) Start and goal configurations of a circular mobile robot marked in a workspace with multiple obstacles (white objects). An obstacle-free path is computed on the VRM from the start to the goal configuration, after discarding the collision configurations. This example uses $N=5000$ images and neighbourhood size $k=10$. (b) The computed path is shown in the workspace, as a superimposition of an outline of the robot images corresponding to the configurations on the path. (c) The same path is shown, as a sequence of blue dots, on the conventional C-space formed by the actual positions of the robot's centre. Here, the obstacle configurations are shown as red points. (d) The path is shown on the VCS computed by Isomap. Notice that Isomap preserves the topology of the C-space and the metric relations among neighbours, but may flip and/or rotate the map, as it has done in this case.
    }
    \label{fig:circular_mobile_robot_ex_obs_multiple}
\end{figure}

\begin{figure}
    \centering
    \begin{subfigure}{0.45\textwidth}
        \includegraphics[width=\columnwidth]{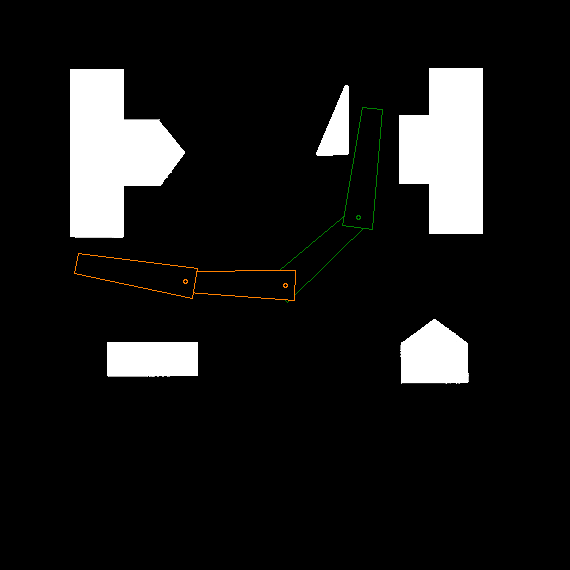}
        \caption{}
    \end{subfigure}
    \begin{subfigure}{0.45\textwidth}
        \includegraphics[width=\columnwidth]{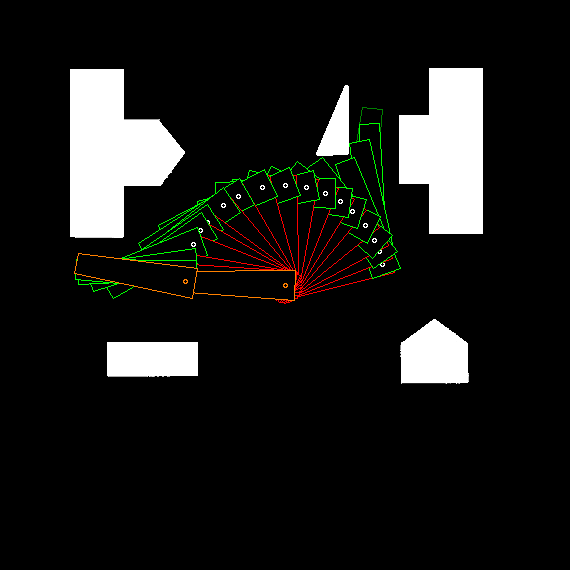}
        \caption{}
    \end{subfigure}
    
    \begin{subfigure}{0.45\textwidth}
        \includegraphics[width=\columnwidth]{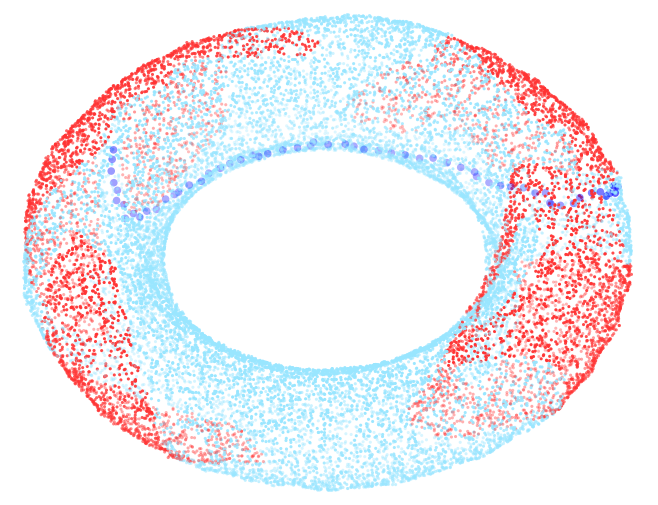}
        \caption{}
    \end{subfigure}
    \begin{subfigure}{0.45\textwidth}
        \includegraphics[width=\columnwidth]{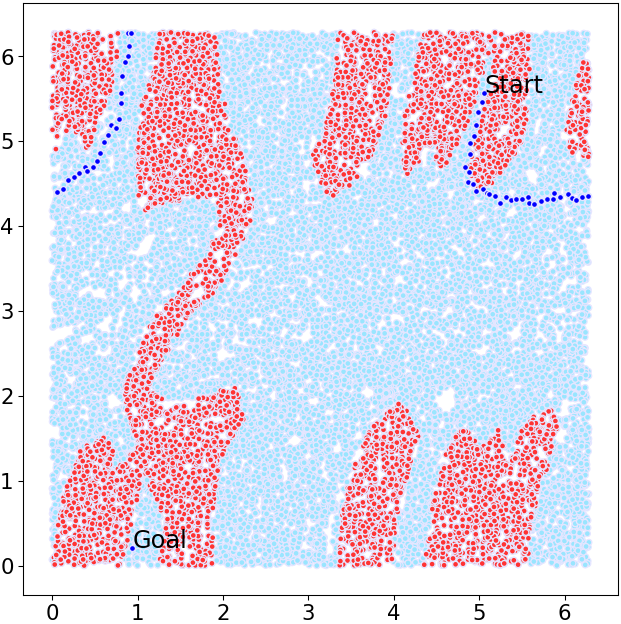}
        \caption{}
    \end{subfigure}
    \caption{
        \textit{Path planning on VRM, for an articulated arm, in presence of obstacles}. (a)  Start (green) and goal (orange) configurations of a 2-link arm in its workspace with multiple obstacles (white objects). An obstacle-free path is computed on the VRM from the start to the goal configuration, after discarding the collision configurations. This example uses $N=20000$ images and neighbourhood size $k=10$. (b) The computed path is shown in the workspace, as a superimposition of an outline of the robot images corresponding to the configurations on the path. (c) The same path is shown, as a sequence of blue dots, on the VCS discovered by Isomap. Here, the obstacle configurations are shown as red points. In this 3-D embedding of the VCS, it is not clear that the path shown is obstacle-free. (d) We cut the torus in such a way that it will stretch into a rectangle and plot the points of the free space and obstacle space, and show the obstacle-free path between the start and goal configurations. Notice that because of the toroidal topology, there is a wrap-around effect in the plot.
    }
    \label{fig:2dof_arm_ex_obs_narrow}
\end{figure}

Figure~\ref{fig:2dof_arm_ex_obs_narrow} illustrates the use of VRM to plan paths for a 2-DOF articulated arm in a planar workspace with many obstacles. Both of its links can rotate fully around and hence its C-space has an $S^1 \times S^1$ topology. Correspondingly, the VCS discovered by Isomap has a toroidal structure. This example uses a sample size of 20000 images and a neighbourhood size of 10, to build the VRM. 

For the purposes of collision detection, we use binary images without affecting the correctness of the process and make it faster. See Section~\ref{sec:fast_collision_detection} for details.

\section{Local Planner in VRM}
\label{sec:local_planner}
We say that an edge $(u, v) \in E(G)$ is \emph{safe}, if every point on the geodesic from $u$ to $v$ in the configuration space is in the free space. Assuming that every node of $G$ is in the free space, we need to guarantee that every edge is also safe. We describe three local planners that work with robot images and can be used on visual roadmaps. To make sure that an edge is safe, these methods construct a new image that estimates the swept volume of the robot in the workspace and check this image for collision. 

To illustrate these local planners, we will use a 3-DOF arm and the obstacles shown in Figure~\ref{fig:3dof_arm_obs_lp}.

\begin{figure}[h]
    \centering
    \begin{subfigure}{0.32\textwidth}
        \includegraphics[width=\columnwidth]{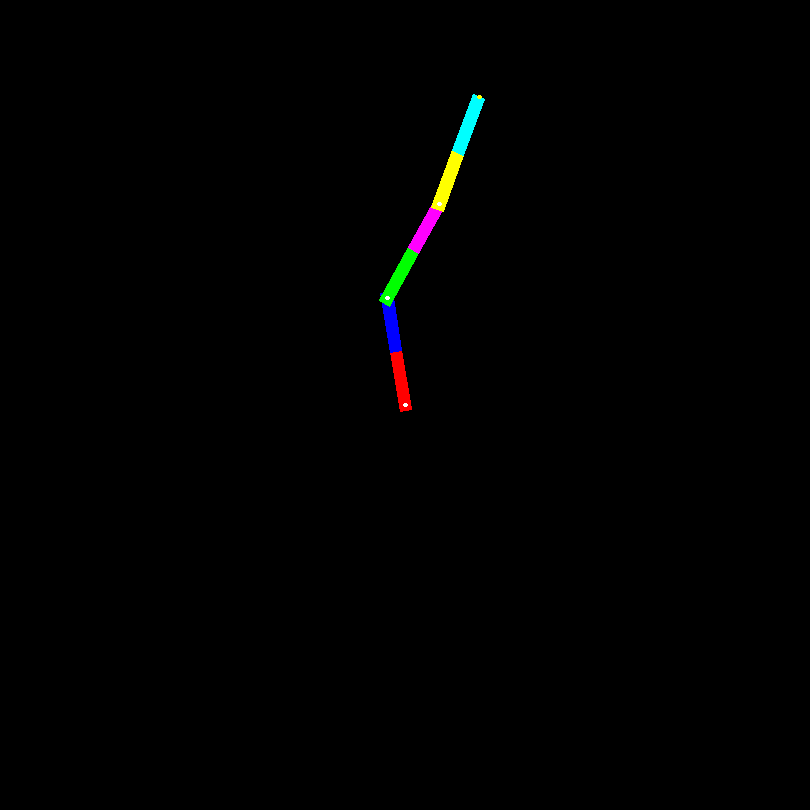}
        \caption{}
    \end{subfigure}
    \begin{subfigure}{0.32\textwidth}
        \includegraphics[width=\columnwidth]{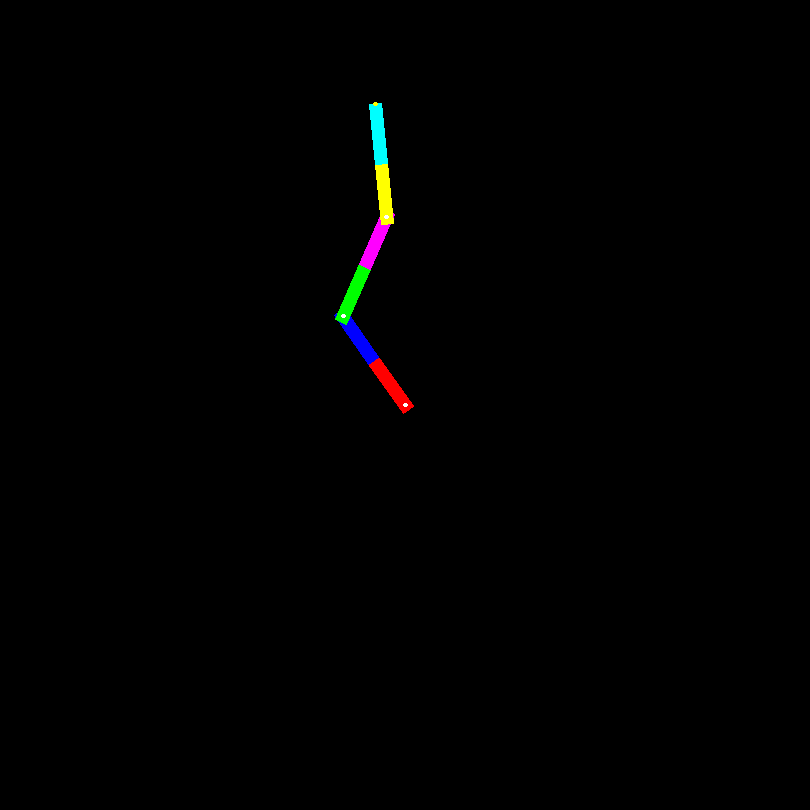}
        \caption{}
    \end{subfigure}
    \begin{subfigure}{0.32\textwidth}
        \includegraphics[width=\columnwidth]{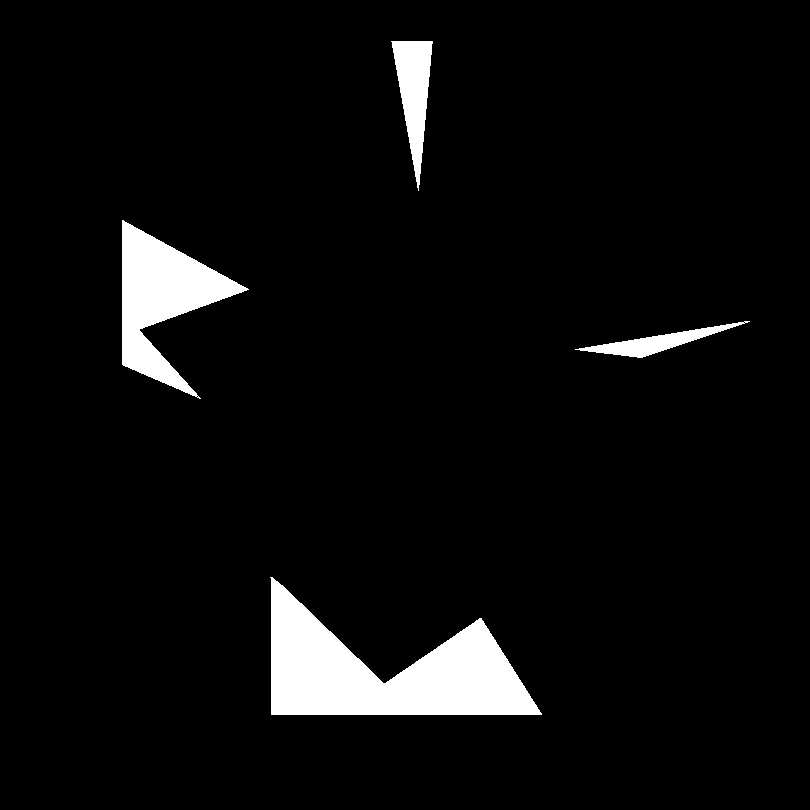}
        \caption{}
    \end{subfigure}
    \caption{
        Two poses of a 3-DOF arm and the obstacles used to illustrate the proposed local planners on VRM.
    }
    \label{fig:3dof_arm_obs_lp}
\end{figure}

\subsection{Interpolation on the Local Tangent Space (LTS)}
\label{pca_lp}
 
For each edge $(u, v) \in E$, let $X^{(u, v)} = \{x_q: q \in \mathcal{N}(u) \cap \mathcal{N}(v)\}$ be the ${p \times m}$ matrix of images corresponding to the intersection of neighbours of $u$ and neighbours of $v$ (including $u$ and $v$), where $m$ is the cardinality of $X^{(u, v)}$. 
To see if $(u, v)$ is \emph{safe}, we interpolate the intermediate images on the tangent space spanned by $X^{(u, v)}$, obtained using PCA. The target dimension is the number of degrees of freedom $d$, and PCA
maps $X^{(u, v)}$ to a $Y^{(u, v)}$ ($d \times m$). In addition to $Y^{(u, v)}$, PCA also gives a ${p \times d}$ orthonormal matrix $W^{(u, v)}$ such that $X^{(u, v)} = W^{(u, v)}Y^{(u, v)}$ or $Y^{(u, v)} = W^{(u, v)^T}X^{(u,  v)}$. We then interpolate between $y_u$ and $y_v$ to construct $y^{(\alpha)} =  \alpha*y_u + (1-\alpha)*y_v$ for various values of $\alpha \in (0, 1)$. For each $\alpha$, the image $x^{(\alpha)} = Wy^{(\alpha)}$ must be in free space.

If this image has a non-empty overlap with the obstacle image, then the edge under consideration is marked \textit{unsafe}. In practice, the resulting image is a poor interpolation, and rejects many valid edges; however, the probability of an edge being unsafe after being passed by the local planner is low (i.e. it is conservative).

\begin{figure}[h]
    \centering
    \begin{subfigure}{0.35\textwidth}
        \includegraphics[width=\columnwidth]{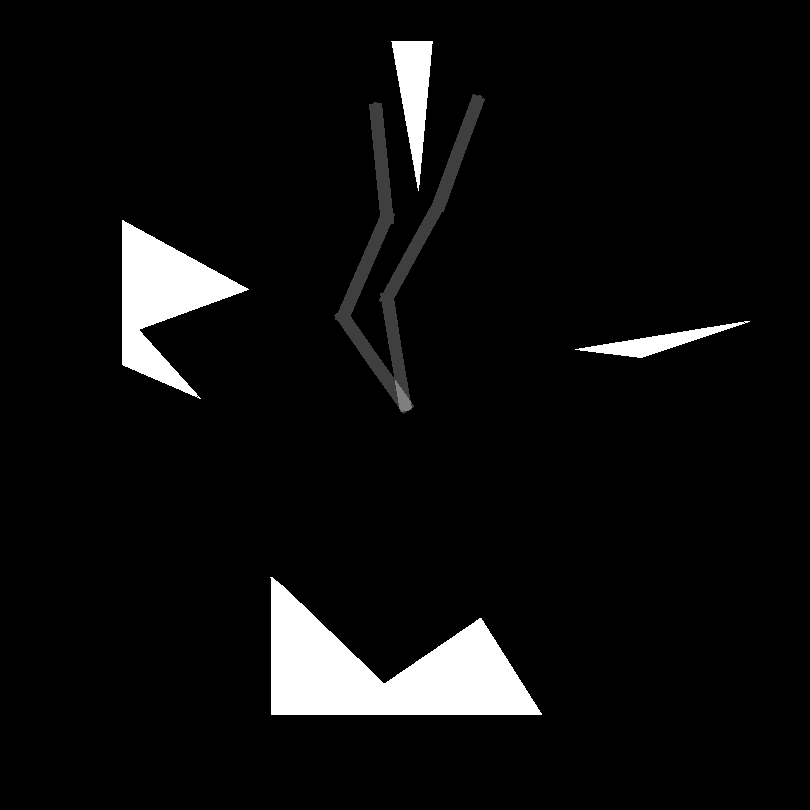}
        \caption{}
        \label{fig:pca_lp_interpolation_intersection}
    \end{subfigure}
    \begin{subfigure}{0.35\textwidth}
        \includegraphics[width=\textwidth]{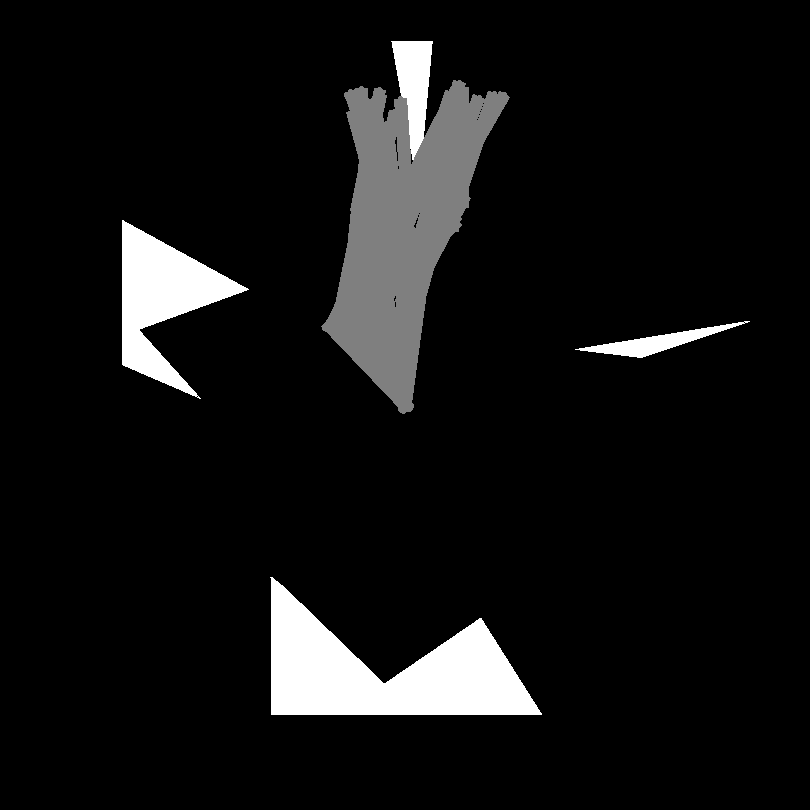}
        \caption{}
        \label{fig:pca_lp_interpolation_union}
    \end{subfigure}
    \caption{
        \textit{PCA based interpolation}: (a) Interpolation on the LTS using the intersection of neighbourhoods of the two terminal nodes of the edge under consideration. (b) Interpolation using union of neighbourhoods.
    }
    \label{fig:pca_lp_interpolation}
\end{figure}

The image obtained by a linear interpolation on the local tangent space (LTS) is a weighted sum of the images in $X^{(u, v)}$. Thus, for collision detection purposes, it is sufficient to look at the superimposition of images in $X^{(u, v)}$. This achieves the same effect as the PCA based method described above and avoids the PCA computation.

Figure~\ref{fig:pca_lp_interpolation} illustrates this interpolation. In the example shown in Figure~\ref{fig:pca_lp_interpolation_intersection}, we see just a superimposition of two configuration images, because the intersection of the neighbourhoods of the nodes corresponding to these configurations is empty. In such cases, we may consider a union of neighbourhoods instead of an intersection. The union-based interpolation would result in an image shown in Figure~\ref{fig:pca_lp_interpolation_union}. However, the union based interpolation would discard many more valid edges from the VRM, and would be more conservative. 

\subsection{Ideal Tracked Points (ITP)}
\label{sec:itp}
Here we assume that a set of points on the robot body can be tracked in all poses (including occlusions). Then to see if an edge $(u, v)$ is safe, we join each pair of corresponding tracked-points to create
a new image, as in Figure~\ref{fig:itp_lp_interpolation}. This image is used for collision detection. If this image has a non-empty overlap with the obstacle image, then the edge under consideration is marked \textit{unsafe}.

\begin{figure}
    \centering
    \includegraphics[page=3, clip, trim=3cm 1.2cm 3cm 5cm, width=\columnwidth]{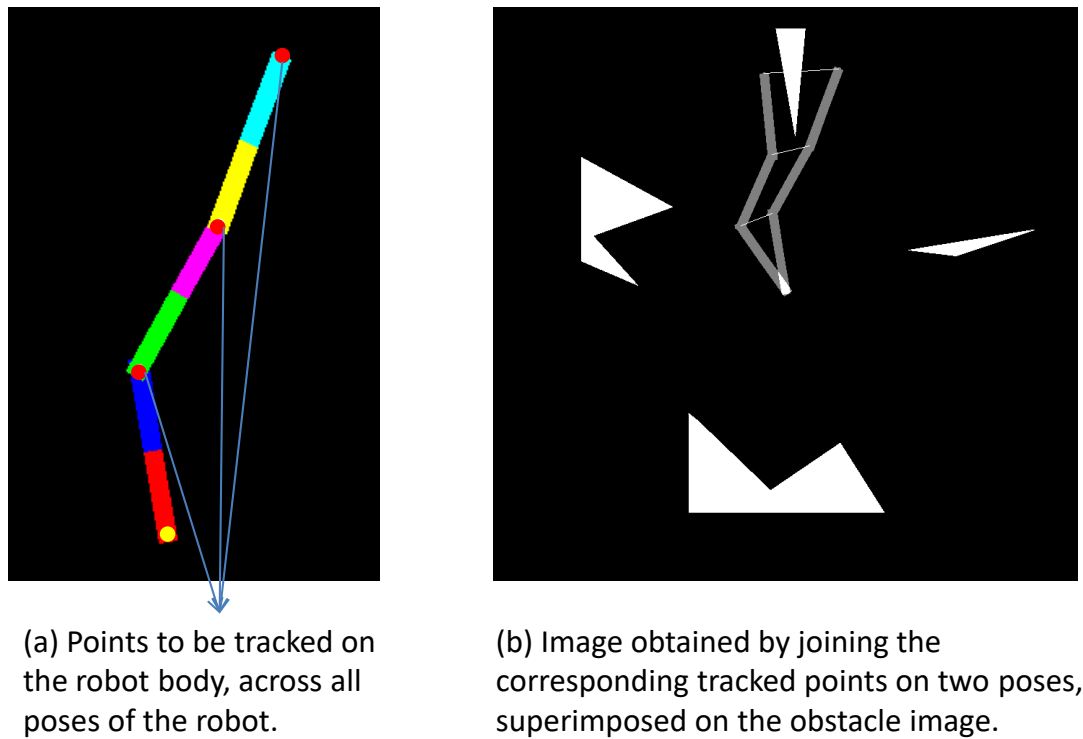}
    \caption{
        \textit{Local planner using Ideal Tracked Points}. We assume that some fixed points on the robot body are tracked across all the poses. We use the image obtained by joining the corresponding tracked points, for collision detection. 
    }
    \label{fig:itp_lp_interpolation}
\end{figure}

\begin{figure}
    \centering
    \includegraphics[width=0.5\columnwidth]{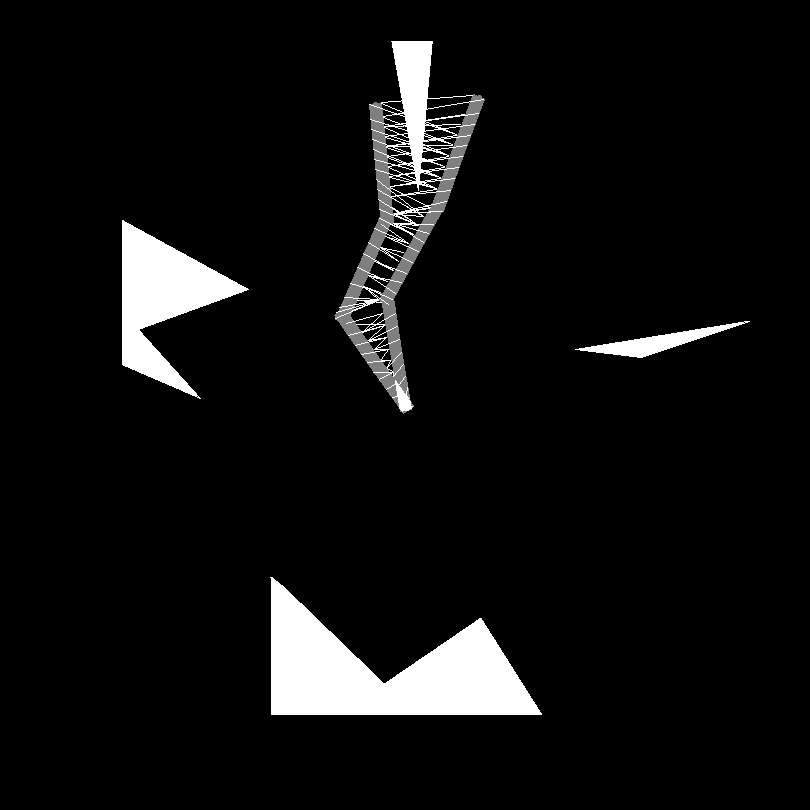}
    \caption{Local planner using Joins of Nearest Shi-Tomasi features link-wise (JNST).}
    \label{fig:stfl_lp_interpolation}
\end{figure}

\subsection{Joins of Nearest Shi-Tomasi features (JNST)}
In practice, occlusion precludes the tracking of any set of points on the robot body.  So, ITP method is not always practical. Here, we propose an approximation to the ITP method, based on high-contrast points known as the Shi-Tomasi features~\citep{shi-tomasi_1994_cvpr_good-features-to-track}. These are high contrast salient regions such as corners, ends of lines, intersection points etc.,  in an image, and are good for tracking. We assume that each link of the robot can be separated and that the Shi-Tomasi features are computed on each link. 

As before, to ensure safety for an edge $(u, v)$, we create the superimposed image of $u$ and $v$. Here, we do not know the correspondences between points in the two images. The Joins of Nearest Shi-Tomasi features approach (JNST) involves associating each feature point on each link in $u$ with the nearest feature point in the corresponding link in $v$.  We do the same in both directions and add a line between each pair of such nearest points as before and use the resulting image for collision detection (Figure~\ref{fig:stfl_lp_interpolation}). If the image thus obtained has a non-empty overlap with the obstacle image, then the edge under consideration is marked \textit{unsafe}. We will refer to this local planner as JNST in the subsequent sections. 

\subsection{Start and Goal States}

For motion planning on the VRM, we need to map  the source ($s$) and target ($t$) images onto the
VRM $G$. We first ensure that the poses $s,t$ themselves are in free space.  We then add these to $G$ and connect them with their $k$-nearest neighbours in $X$. We then run a local planner on the new edges and find the shortest path between $s$ and $t$ as before. Adding a new node (image) to the graph is a computation that requires $O(nk)$ distance computation steps for finding. Time for distance calculation depends on the metric used. This approach again is almost identical to traditional roadmap methods~\citep{choset-05_robot-motion-theory}, except that the tests are all visual. 

\section{Empirical Analysis: Metrics and local planners}
\label{sec:empirical_results}
Factors affecting the quality of paths in VRM include sampling density, the metric used, and the local planner. We now present an empirical study of these aspects on a planar 3-link simulated arm and a set of obstacles shown in Figure~\ref{fig:3dof_arm_and_obstacle_lp}. 
\begin{figure}[t]
\centering
    \begin{subfigure}[t]{0.45\columnwidth}
        \includegraphics[width=\columnwidth]{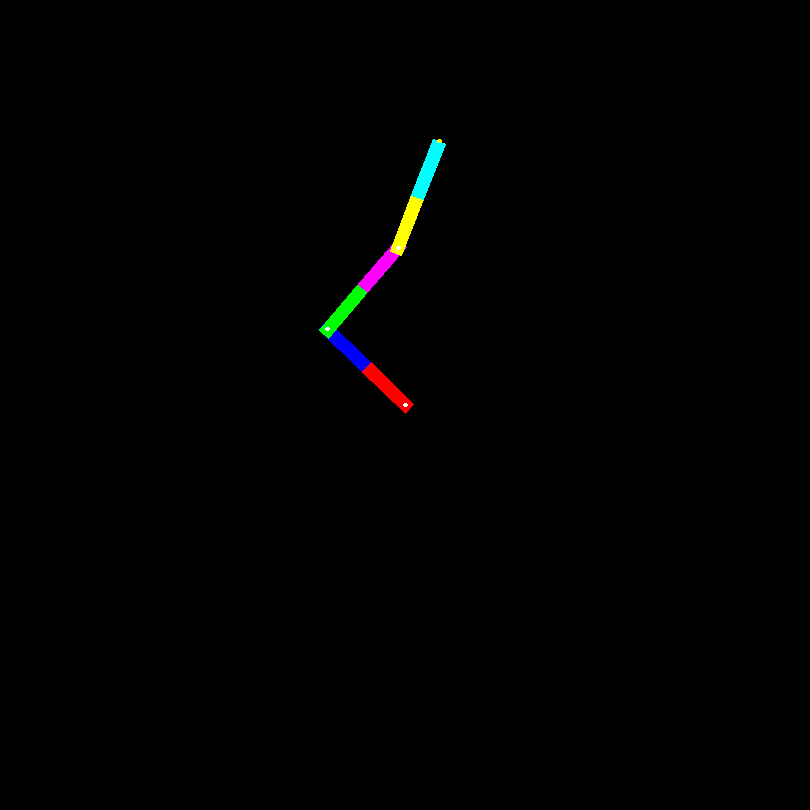}%
        \caption{}
    \end{subfigure}
    \begin{subfigure}[t]{0.45\columnwidth}%
        \includegraphics[width=\columnwidth]{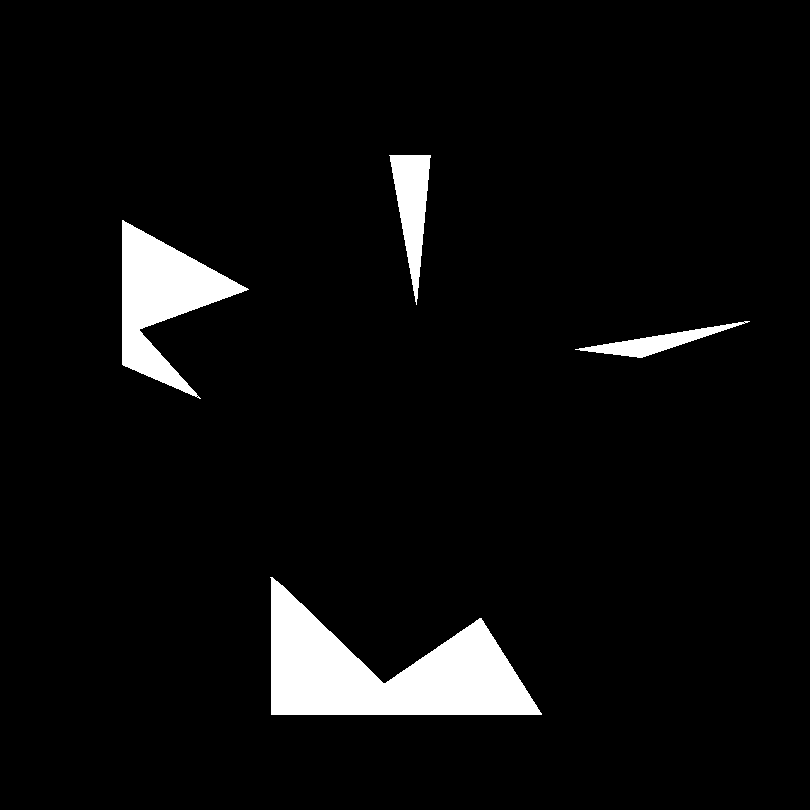}%
        \caption{}
    \end{subfigure}
    \caption{A 3-DOF simulated arm and a set of obstacles used for empirical analysis.}
\label{fig:3dof_arm_and_obstacle_lp}
\end{figure}

\subsection{Gold Standard Local Planner}
In the traditional configuration space, two configurations are assumed to be joined by a linear join between them. To see if an edge $(u, v)$ is actually safe, we generate intermediate pose images by interpolating joint angle vectors at an $\epsilon$ resolution. We observe that a linear interpolation in joint angle space need not be the same as an interpolation on visual C-space, but we assume the difference would be fairly small for a reasonable sampling density.  If all these images are collision-free, we treat $(u, v)$ to be safe. The performance of the local planners is evaluated relative to this gold standard local planner. Results reported here use $\epsilon = 1^\circ$.



\begin{figure}[t]
\centering
    \begin{subfigure}[t]{0.48\columnwidth}%
        \includegraphics[width=\columnwidth]{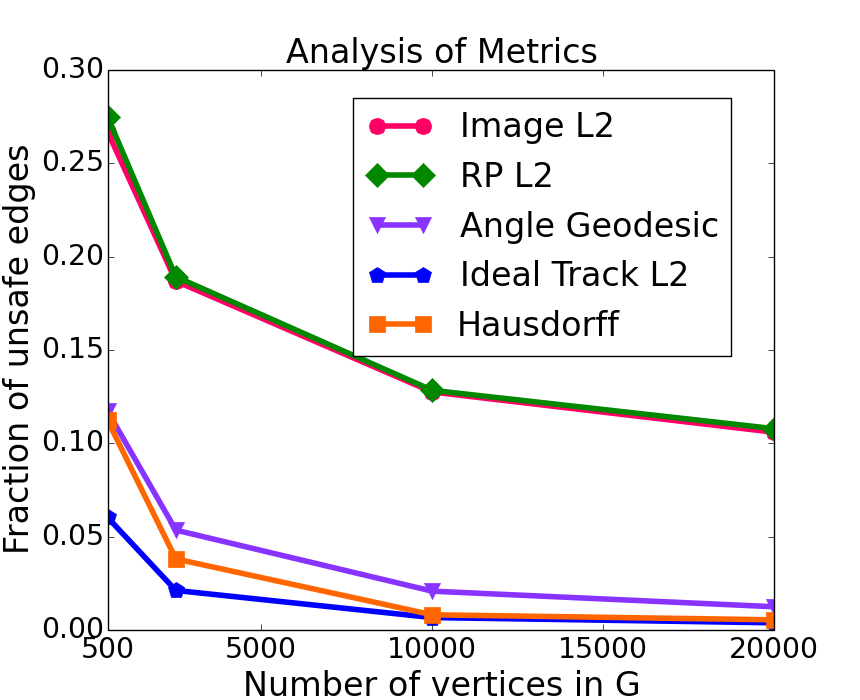}%
        \caption{Comparison of Metrics}
        \label{fig:comparison_metrics}
    \end{subfigure}
    \begin{subfigure}[t]{0.48\columnwidth}%
        \includegraphics[width=\columnwidth]{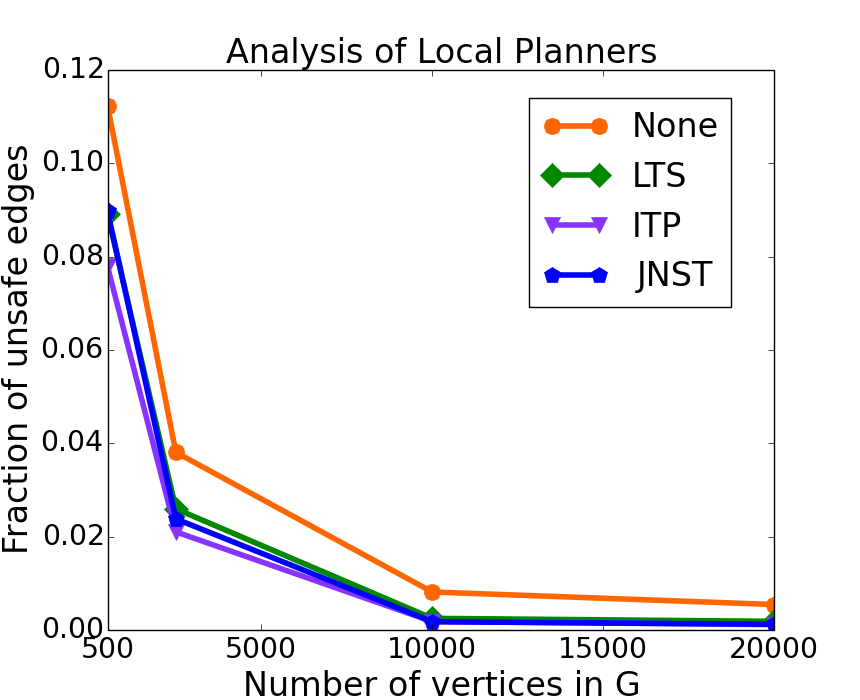}%
        \caption{Comparison of local planners}
        \label{fig:comparison_lp}
    \end{subfigure} 
    \caption{
        \textit{Empirical analysis of metrics and local planners}.
        (a) Edge failures under various metrics, without a local planner: for each case, we consider a representation of the robot and an appropriate metric to measure the distance between two poses during the computation of the neighbourhoods in the VRM.  
        (b) Local planner performance plots based on Hausdorff metric on the link-wise Shi-Tomasi feature-based representation of robot images. The JNST local planner performs almost as well as the ITP local planner, which is not implementable in practice. This suggests that JNST local planner can be used as a good approximation for ITP local planner.
    }
    \label{fig:lp_plots}
\end{figure}


\subsection{Effect of Sampling Density and Distance Metric}
\label{subsec:metrics}

Plots in Figure~\ref{fig:comparison_metrics} clearly suggest that the sampling density (i.e., the number of images used to construct the visual roadmap) heavily affects the fraction of unsafe edges and hence the quality of paths. The denser the sample is, the better the paths.

We present the effect of several representations of the configuration space along with appropriate distance metric for each case. Table~\ref{tab:metrics} lists the different representations and the corresponding distance metric used to compute neighbourhoods.

\begin{table}[h]
    \centering
    \caption{
        Different representations of the configuration space and the distance metric used with each    
        representation. Short forms mentioned here have been used in Table~\ref{tab:metrics_n_lp}.
    }
    \begin{tabular}{| l | l | l | l |}
    \hline
    Representation & Distance Metric & Short Form \\
    \hline
    1. Raw RGB images of the robot & $L_2$  &  Img $L_2$ \\
    2. Random projections of images & $L_2$ & RP $L_2$ \\
    3. Joint angle vector of the robot & Geodesic & $\theta$-G \\
    4. Ideal tracked points & $L_2$ & ITP $L_2$ \\
    5. Shi-Tomasi features link-wise & Hausdorff & ST-H\\
    \hline
    \end{tabular}
    \label{tab:metrics}
\end{table}

\begin{table}[h]
    \centering
    \caption{
        Percentage of bad edges remaining after pruning the VRM using each local planner on a graph with 20000 nodes with different metrics. See Table~\ref{tab:metrics} for an explanation of these metric spaces.
    }
    \begin{tabular}{| c | l | l | l | l | l |}
    \hline
    \multirow{2}{*}{Local Planner} & \multicolumn{5}{c|}{Metric Space} \\
    \cline{2-6}
    & Img $L_2$ & RP $L_2$ & $\theta$-G & ITP $L_2$ & ST-H \\
    \hline
    None & 10.59 & 10.79 &  1.25 &  0.39 & 0.55 \\
    LTS & 9.18 &  9.34 &  0.43 &  0.09 & 0.19 \\
    ITP & 7.97 &  8.11 &  0.17 &  0.11 & 0.12 \\
    JNST & 9.58 &  9.74 &  0.16 &  0.12 & 0.12 \\
    \hline
    \end{tabular}
    \label{tab:metrics_n_lp}
\end{table}

To find the distance between two images we just flatten all the channels of each image into a single vector and use the standard Euclidean  ($L_2$) distance on the resulting vectors. In our experiments we used 30,000 (100x100x3) dimensional vectors for image distance.

Random Projection (RP)~\citep{bingham2001random,dasgupta2000experiments} is a dimensionality reduction method that preserves $L_2$ distances. In our experiments, we projected the 30,000-dimensional image vectors onto 2000 Gaussian random unit vectors to obtain a 2000-dimensional representation of each image. The experiments show that the $L_2$ distance on RP vectors does almost as well as that on the image vectors. Since the distance computation is done on much smaller vectors, the graph construction gets much faster while preserving the neighbourhoods, when using RP.

The distance between two joint angle vectors is computed as the sum of the shortest circular-distances (i.e., treating 0 and $2\pi$ to be the same angle) between individual components. This is in some sense the geodesic distance between the two vectors.

The ideal tracked point (ITP) $L_2$ distance between two configurations is computed as the $L_2$ distance between the vectors obtained by concatenating all the tracked point coordinates of each configuration.

Finally, the Hausdorff distance between two configurations is computed as the sum of Hausdorff distances between the sets of Shi-Tomasi feature points on the corresponding links for the two configurations. Given two sets of points $A$ and $B$, Hausdorff distance $d_H(A, B)$ is defined as 
\[
d_H(A, B) = \max \left\{ \adjustlimits \sup_{a \in A} \inf_{b \in B} d_E(a, b), \adjustlimits \sup_{b \in B} \inf_{a \in A} d_E(a, b) \right\},
\]
where $d_E(a, b)$ is the Euclidean distance between $a$ and $b$.

As can be seen from Figure~\ref{fig:comparison_lp} and Table~\ref{tab:metrics_n_lp}, JNST local planner performs almost as well as ITP local planner.

\section{Dynamic Obstacle Avoidance}
\label{sec:dynamic_obstacle}
In this section, we consider the problem of planning paths when there are moving obstacles. To plan a path from a given source to a destination configuration, the planner initially treats the current state of the obstacle as a static obstacle and finds the shortest path to the destination, after removing the collision configurations from the graph. Then, before making the next step on the already-computed path, it checks for a change in the obstacle's position. If there is a change in the obstacle's position, then it updates the graph as required and finds a new path from the current configuration to the destination. When there are multiple moving obstacles, the planner updates the graph every time a change is detected in any of these moving obstacles. 

We assume that the speed of a moving obstacle is within some range so that during the update step, we do not have to update the entire graph, but only a small subset of it. In particular, we maintain a set of boundary nodes among the collision nodes, which have the property that the corresponding robot configurations just touch the boundary of the obstacle in consideration. We say that the robot \textit{touches} the boundary of an obstacle, if the amount of overlap between the robot image and the obstacle image is below a certain threshold. When an obstacle moves, we only look at the nodes which are within a certain number of hops away from any of the boundary nodes. The number of hops to consider depends on the speed of the obstacle and needs to be determined experimentally. Some of these near-by nodes will become collision nodes for the new position of the obstacle and some of the old collision nodes now become free nodes. We mark the nodes accordingly in the graph and proceed to plan a path from the current state to the goal state. We repeat this process until the robot reaches the goal state.

It is possible that a moving obstacle reaches a position such that the goal configuration becomes a collision configuration. During this time, we can have the robot wait in its current configuration until that obstacle moves away and the goal configuration becomes free.

\begin{figure}[t]
    \centering
    \includegraphics[page=4, clip, trim = 2cm 1cm 2cm 4cm,  width=0.7\textwidth]{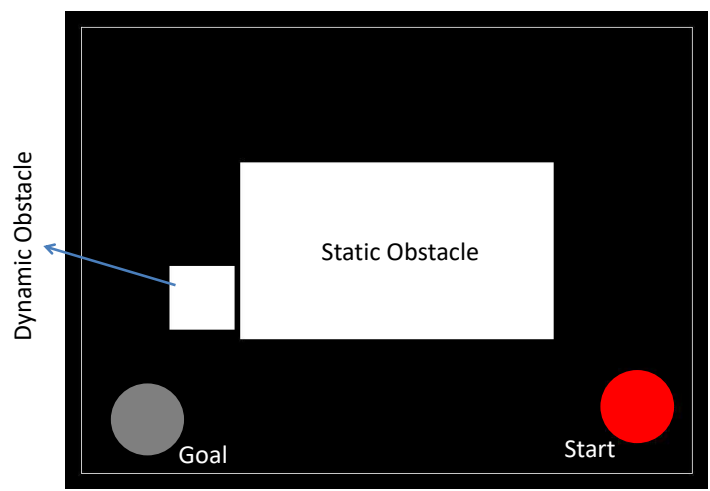}
    \caption{
        Circular mobile robot in a planar workspace with a static obstacle and a dynamic obstacle, with the start and goal configurations marked. The objective is to plan a path for the robot from the start state to the goal state, a path that avoids both static and dynamic obstacles.
    }
    \label{fig:dyn_obs_ex_start_goal}
\end{figure}

Figure~\ref{fig:dyn_obs_ex_start_goal} shows a circular mobile robot in a planar workspace. The workspace has both static and dynamic obstacles. We wish to plan paths for this robot between any start and goal configurations, such that the robot avoids hitting any of the obstacles. Figure~\ref{fig:dyn_obs_ex1} shows the frames of the motion executed by this robot, for the start and goal configurations marked in Figure~\ref{fig:dyn_obs_ex_start_goal}.

\clearpage

\begin{figure}[H]
    \centering
    \begin{subfigure}[b]{0.24\textwidth}
        \includegraphics[width=\textwidth]{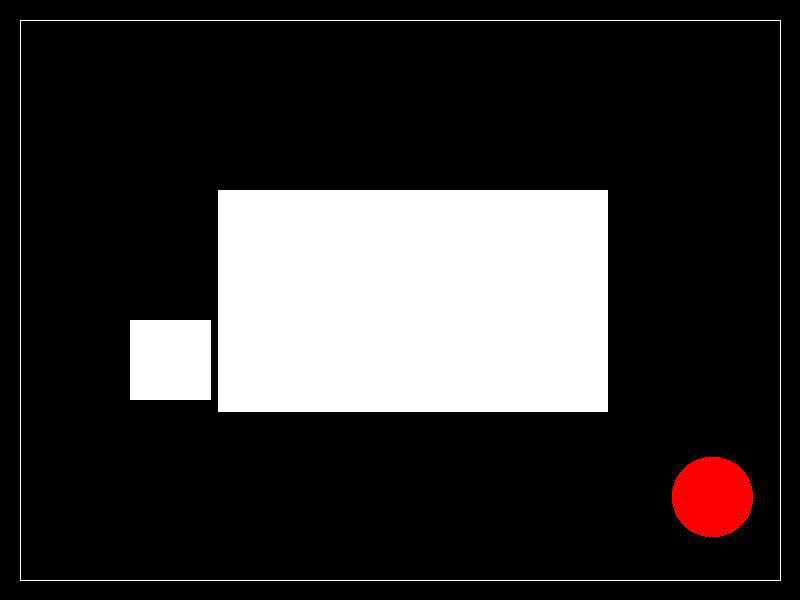}
        \caption{Frame 1}
    \end{subfigure}
    \begin{subfigure}[b]{0.24\textwidth}
        \includegraphics[width=\textwidth]{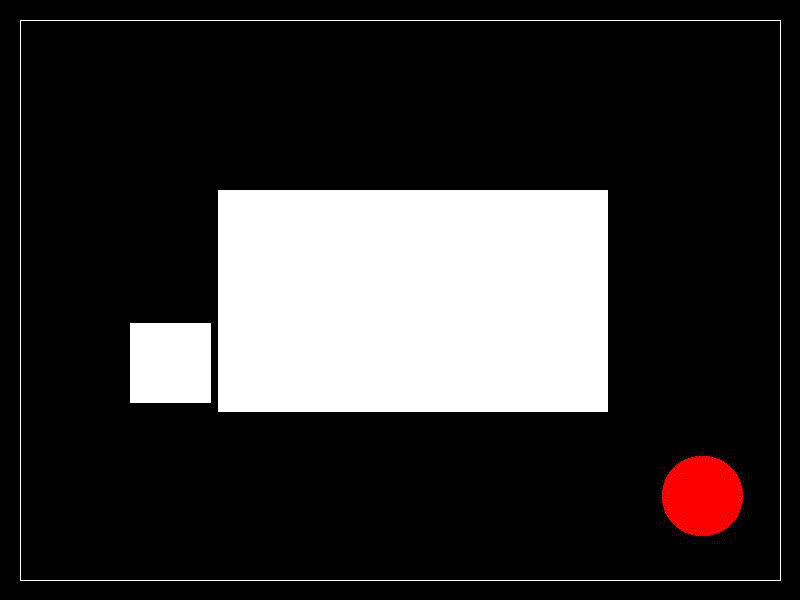}
        \caption{Frame 2}
    \end{subfigure}
    \begin{subfigure}[b]{0.24\textwidth}
        \includegraphics[width=\textwidth]{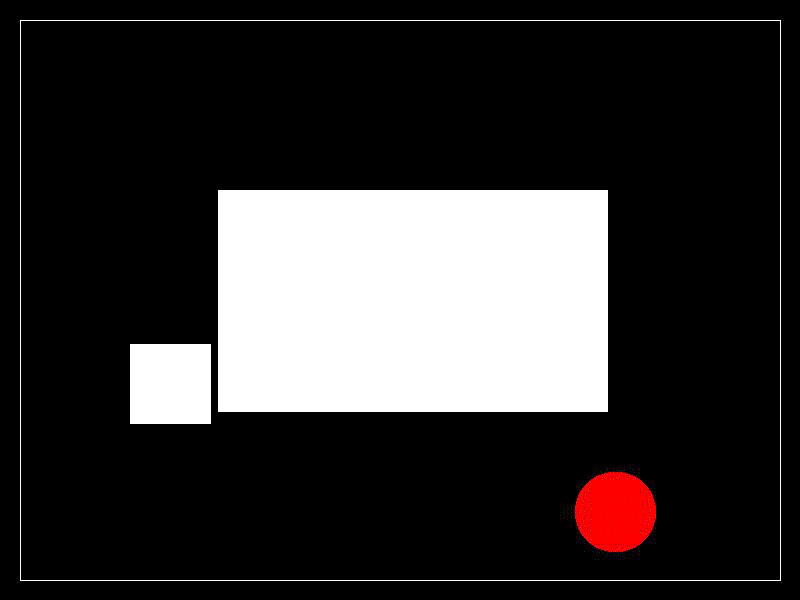}
        \caption{Frame 3}
    \end{subfigure}
    \begin{subfigure}[b]{0.24\textwidth}
        \includegraphics[width=\textwidth]{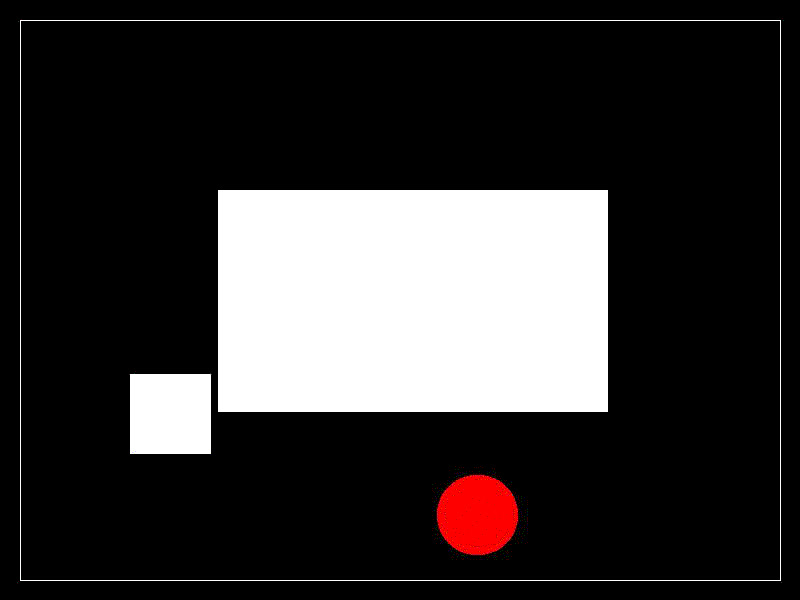}
        \caption{Frame 4}
    \end{subfigure}
    
    \begin{subfigure}[b]{0.24\textwidth}
        \includegraphics[width=\textwidth]{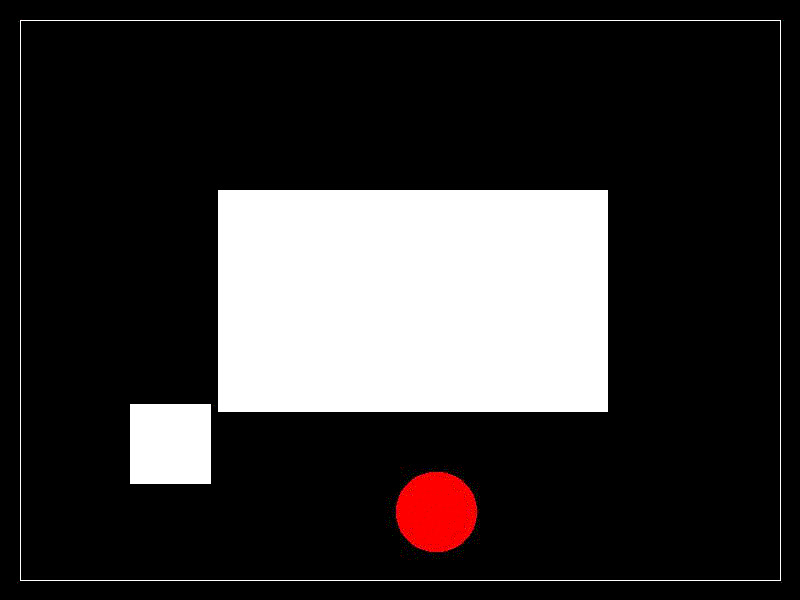}
        \caption{Frame 5}
    \end{subfigure}
    \begin{subfigure}[b]{0.24\textwidth}
        \includegraphics[width=\textwidth]{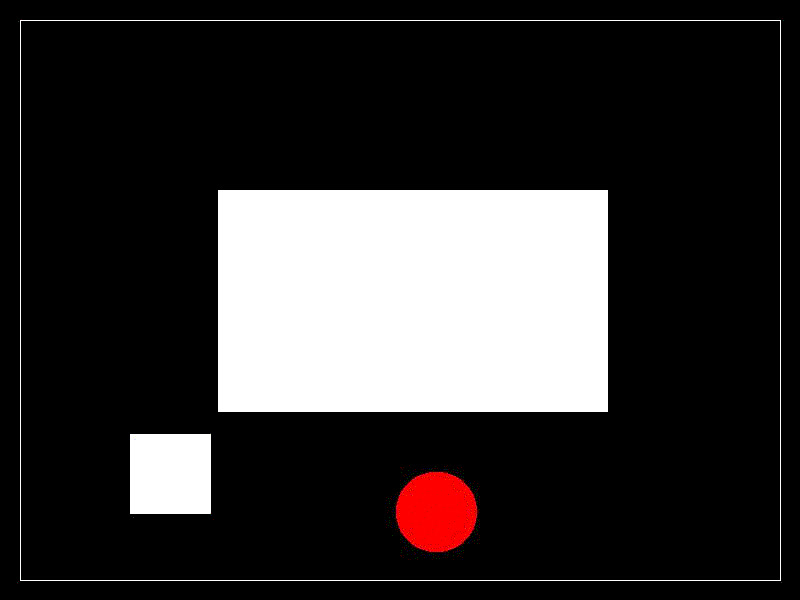}
        \caption{Frame 6}
    \end{subfigure}
    \begin{subfigure}[b]{0.24\textwidth}
        \includegraphics[width=\textwidth]{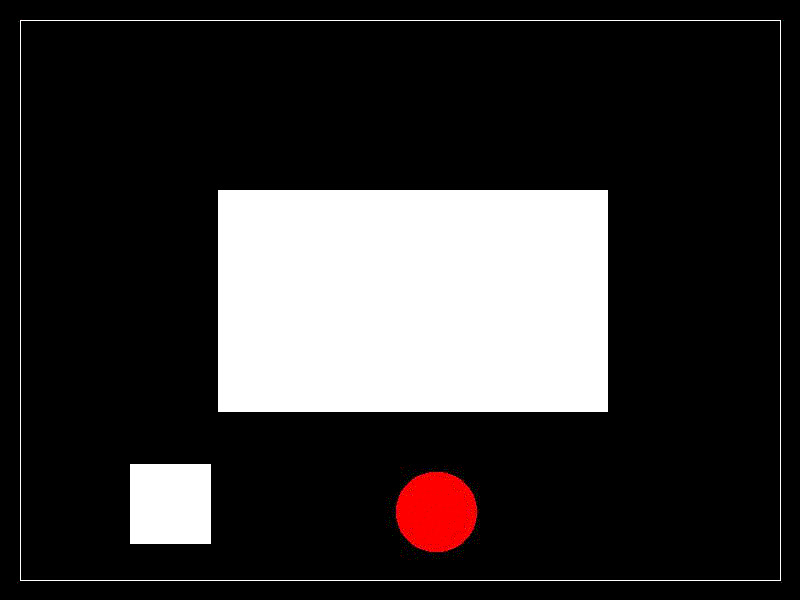}
        \caption{Frame 7}
    \end{subfigure}
    \begin{subfigure}[b]{0.24\textwidth}
        \includegraphics[width=\textwidth]{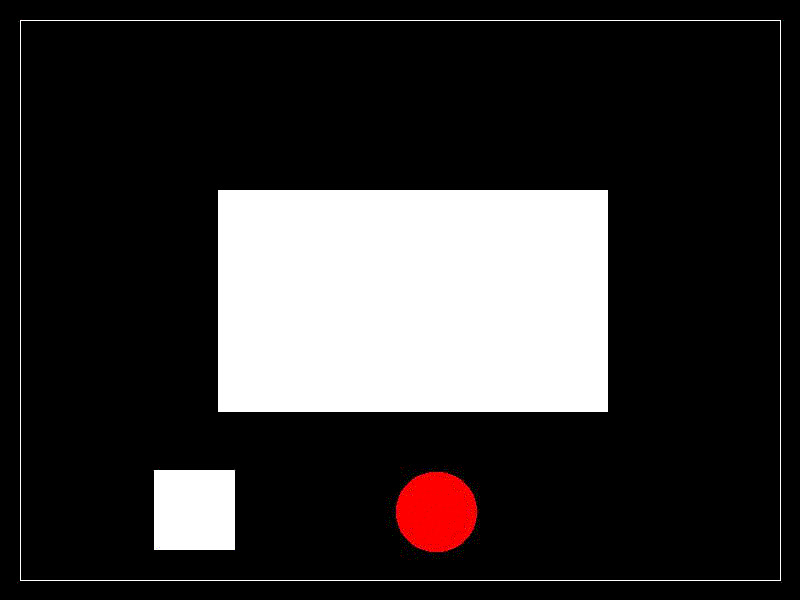}
        \caption{Frame 8}
    \end{subfigure}
    
    \begin{subfigure}[b]{0.24\textwidth}
        \includegraphics[width=\textwidth]{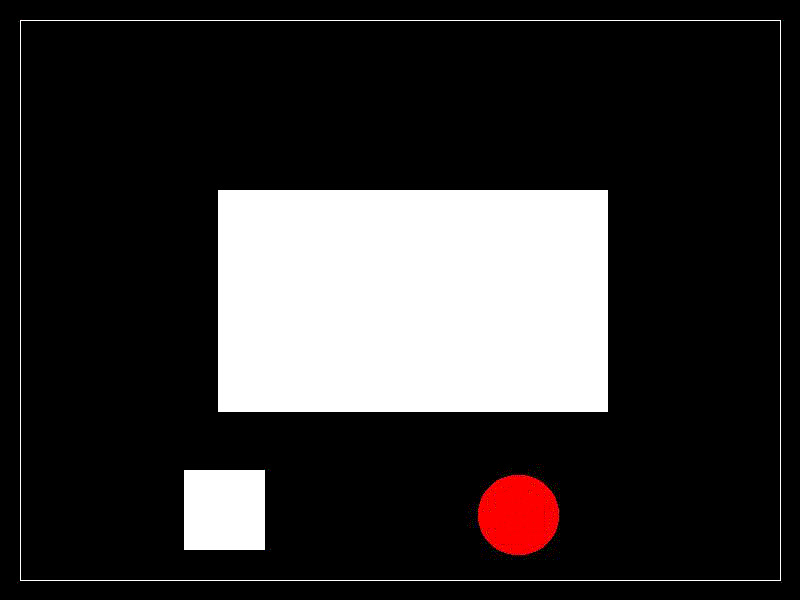}
        \caption{Frame 9}
    \end{subfigure}
    \begin{subfigure}[b]{0.24\textwidth}
        \includegraphics[width=\textwidth]{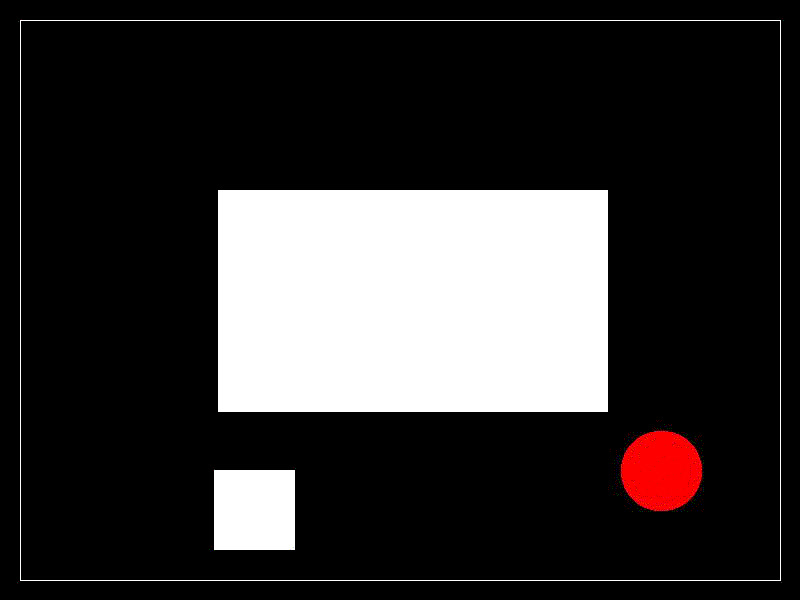}
        \caption{Frame 10}
    \end{subfigure}
    \begin{subfigure}[b]{0.24\textwidth}
        \includegraphics[width=\textwidth]{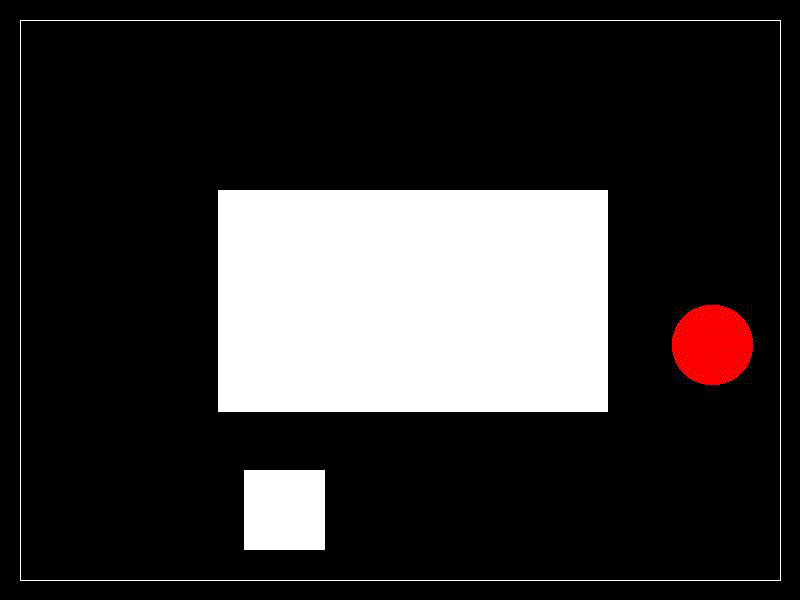}
        \caption{Frame 11}
    \end{subfigure}
    \begin{subfigure}[b]{0.24\textwidth}
        \includegraphics[width=\textwidth]{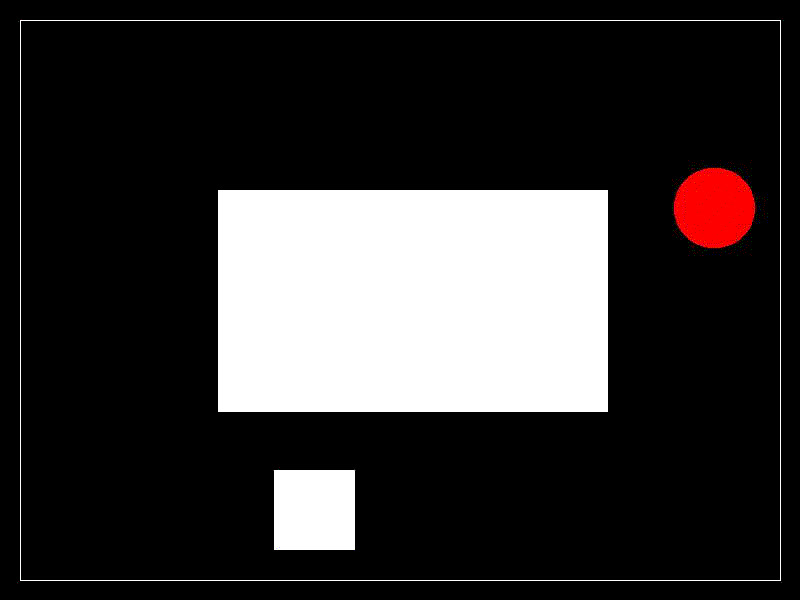}
        \caption{Frame 12}
    \end{subfigure}
    
    \begin{subfigure}[b]{0.24\textwidth}
        \includegraphics[width=\textwidth]{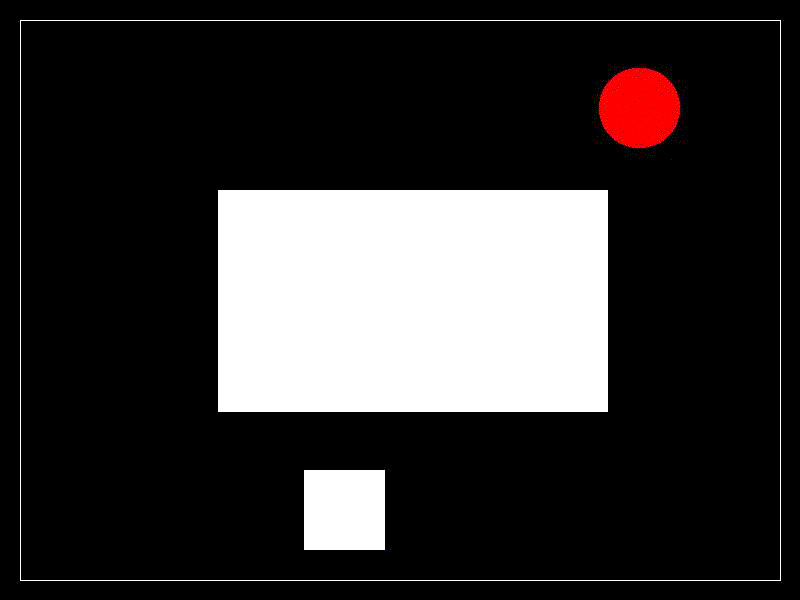}
        \caption{Frame 13}
    \end{subfigure}
    \begin{subfigure}[b]{0.24\textwidth}
        \includegraphics[width=\textwidth]{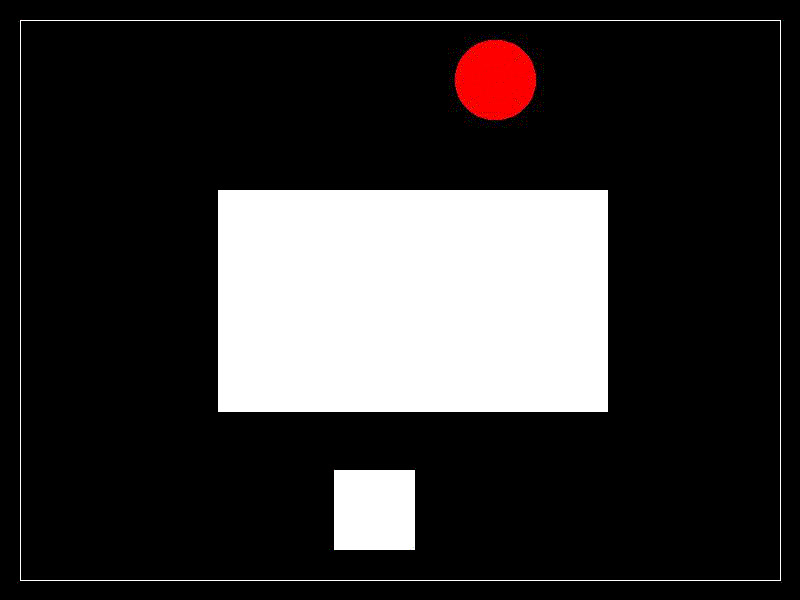}
        \caption{Frame 14}
    \end{subfigure}
    \begin{subfigure}[b]{0.24\textwidth}
        \includegraphics[width=\textwidth]{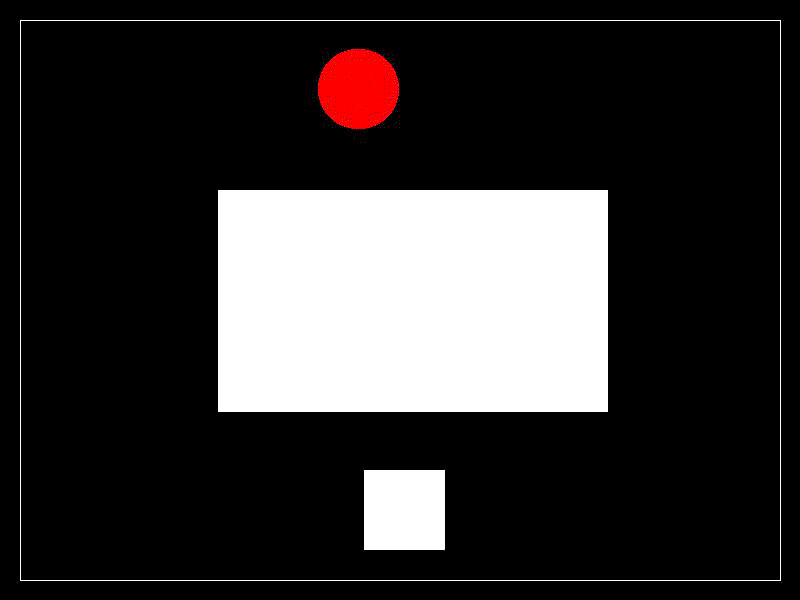}
        \caption{Frame 15}
    \end{subfigure}
        \begin{subfigure}[b]{0.24\textwidth}
        \includegraphics[width=\textwidth]{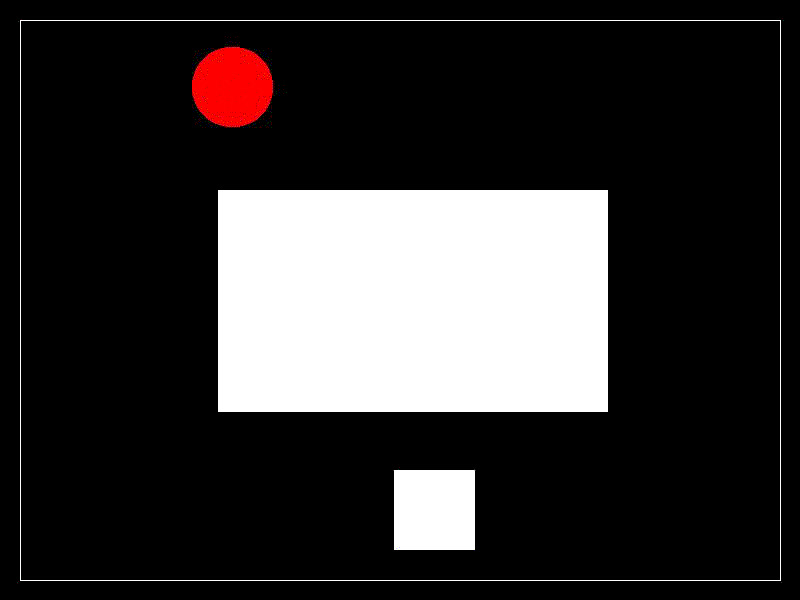}
        \caption{Frame 16}
    \end{subfigure}
    
    \begin{subfigure}[b]{0.24\textwidth}
        \includegraphics[width=\textwidth]{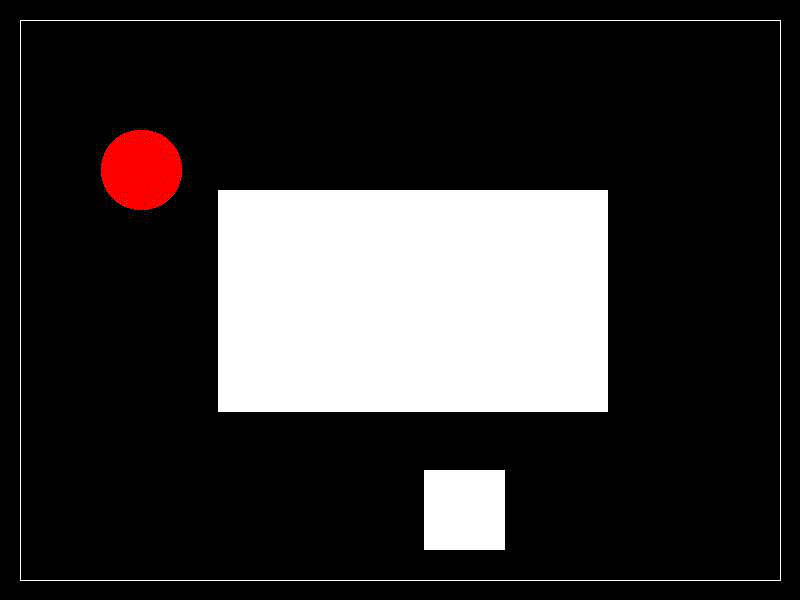}
        \caption{Frame 17}
    \end{subfigure}
    \begin{subfigure}[b]{0.24\textwidth}
        \includegraphics[width=\textwidth]{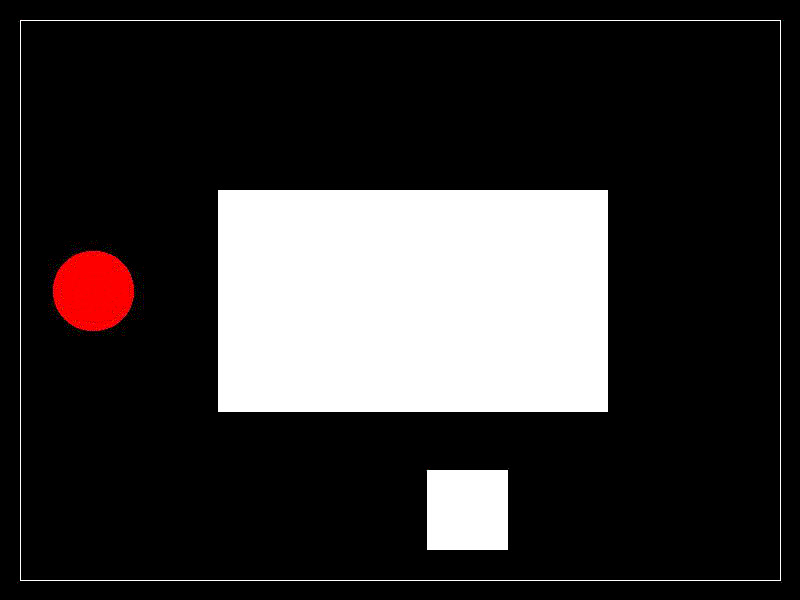}
        \caption{Frame 18}
    \end{subfigure}
    \begin{subfigure}[b]{0.24\textwidth}
        \includegraphics[width=\textwidth]{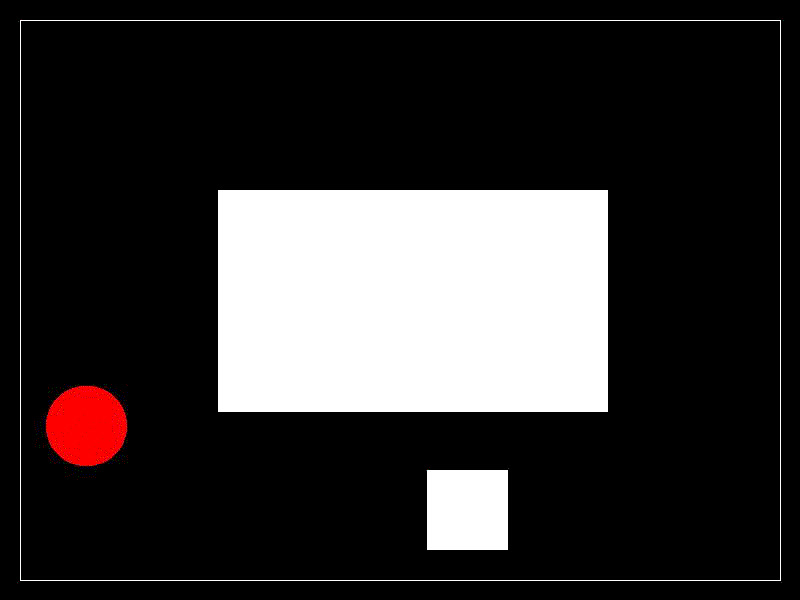}
        \caption{Frame 19}
    \end{subfigure}
    \begin{subfigure}[b]{0.24\textwidth}
        \includegraphics[width=\textwidth]{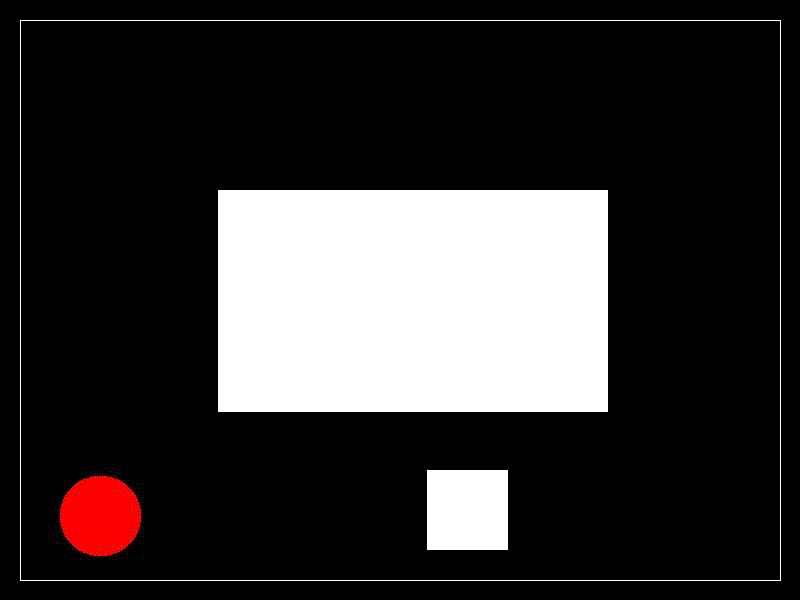}
        \caption{Frame 20}
    \end{subfigure}
    \caption{
        \textit{Motion planning with dynamic obstacle avoidance.} 
        Frames in a path followed by the robot to move from the start position to the goal position indicated in Figure~\ref{fig:dyn_obs_ex_start_goal}, while avoiding the static and dynamic obstacles. We can see that from Frame 5 to Frame 8 only the obstacle is moving and the robot is still since the moving obstacle is colliding with the goal state in these frames. Once the obstacle moves out of the goal position, the robot again starts moving from Frame 9. However, since the obstacle is now coming in the way of the robot, the robot cannot take the path it started with. So, it backtracks and takes a round-about route from the current position to the goal position.
    }
    \label{fig:dyn_obs_ex1}
\end{figure}

\clearpage

\section{Fast Collision Detection using RLE}
\label{sec:fast_collision_detection}


In robot motion planning, collision detection (i.e., detecting whether the robot hits an obstacle in a given configuration) is extremely important because otherwise, the robot can damage itself and/or the world. In this section, we discuss fast collision detection using run-length encoding (RLE) of binary images. See Section~\ref{sec:rle_intro} for a description of RLE.

\subsubsection{Background Subtraction}
\label{subsec:bg_sub}
Given a set of images of a robot in several different configurations in its workspace, an image of just the background can be obtained by taking the mean of all these images. Subtracting this background image from the robot image in each configuration will give a background subtracted image for that configuration. 

The quality of the background image obtained by this method heavily depends on the number of images that were averaged, the uniformity of lighting conditions and absence of any external noise leading to significantly visible changes in the environment, while taking images of the robot. In the presence of sources of any changes visible to the camera, more advanced methods will be needed to
extract the background more reliably. The collision detection method discussed here assumes a high-quality background subtraction.

\subsubsection{Collision Detection by Image Intersection}
\label{subsec:collision_detection_img}
To find if the robot collides with an obstacle in a given configuration of the robot, we can overlap the background subtracted images of the robot and the obstacle and see if there is an intersection between the two images. If the intersection is non-empty, then there is a collision; otherwise, that configuration is free. This, in the worst case, requires $O(p)$ comparisons, where $p$ is the number of pixels in the robot/obstacle image.

\subsubsection{Collision Detection using RLE}
\label{subsec:collision_detection_rle}
Collision detection can be made much faster by using the interval based RLE (see Section~\ref{subsec:interval_rle}) representations of the background subtracted images of the robot and obstacle. A background-subtracted image can be treated as a binary image with a point on the robot body or the obstacle represented by a 1 and a point of the background by a 0.

Let $r = \langle [rl_i, ru_i): i=0 \ldots m-1 \rangle \text{ and } o = \langle [ol_j, ou_j): j=0 \ldots n-1 \rangle$ be the interval based RLE encodings of the robot and obstacle images respectively, ignoring the image size. Here $rl_i$ and $ru_i$ are the lower and upper bounds of the $i^{th}$ interval of $r$, the robot image RLE; $ol_j$ and $ou_j$ are the lower and upper bounds of the $j^{th}$ interval of $o$, the obstacle image RLE; and $m$ and $n$ are the number of intervals in $r$ and $o$ respectively. Then the robot collides with the obstacle if and only if 
\[ 
    \exists i < m, j < n:  [rl_i, ru_i) \cap [ol_j, ou_j) \ne \emptyset. 
\]
A non-empty intersection between two integer intervals $[rl_i, ru_i)$ and $[ol_j, ou_j)$ can occur in four different ways as listed below: 
\begin{enumerate}
  \item $ol_j \le rl_i < ou_j \le ru_i$
  \item $rl_i \le ol_j < ru_i \le ou_j$
  \item $ol_j \le rl_i < ru_i \le ou_j$
  \item $rl_i \le ol_j < ou_j \le ru_i$.
\end{enumerate}

Equivalently, the robot collides with the obstacle if and only if 
\[ \exists i < m, j < n: min(ru_i, ou_j) > max(rl_i, ol_j). \]

A pseudocode implementing this method is shown in Algorithm~\ref{alg:collision_detection_rle}. It takes $O(min(m, n))$ comparisons, which is usually much smaller than the number of pixels in the robot/obstacle image.

\begin{algorithm}[ht]
\caption{ Collision Detection using Image RLE }
\label{alg:collision_detection_rle}
\begin{algorithmic}[1]
    \REQUIRE $rl, ru, ol, ou, m, n$ // see the text in 
    Section~\ref{subsec:collision_detection_rle} for details.
    \ENSURE $true$ if there is a collision, $false$ otherwise.
    
    \STATE $i \leftarrow 0, j \leftarrow 0$

    \WHILE { $i < m \textbf{ and } j < n$ }
        \IF {  $min(ru_i, ou_j) > max(rl_i, ol_j)$ }
            \RETURN $true$
        \ENDIF

        \IF { $ru_i \le ou_j$ }
            \STATE $i \leftarrow i + 1$
        \ELSE
            \STATE $j \leftarrow j + 1$
        \ENDIF
    \ENDWHILE

    \RETURN $false$
\end{algorithmic}
\end{algorithm}

\subsection{Degree of Penetration of Robot into the Obstacle}
\label{sec:degree_of_penetration}
Sometimes we may want to allow the robot to touch the obstacle without any impact. In such cases, it is useful to know how deep the robot would penetrate into the obstacle, in a given configuration. For this, we consider the number of overlapping pixels in the robot and obstacle images as a measure of the degree of penetration. For two intervals $[rl_i, ru_i) \text{ and } [ol_j, ou_j)$, the measure of overlap $\mu_{ij}$ is given by
    \[\mu_{ij} = min(ru_i, ou_j) - max(rl_i, ol_j)\]
and the overall measure of penetration $\mu$ is given by
    \[\mu = \sum_{i, j} \mu_{ij}.\]

A pseudocode implementing this method is shown in Algorithm~\ref{alg:measure_of_penetration_rle}. Since each interval of the robot RLE can overlap with at most one interval of the obstacle RLE, the time required to compute the measure of penetration is $O(m + n)$, as can be seen from Algorithm~\ref{alg:measure_of_penetration_rle}.

\begin{algorithm}[ht]
\caption{ Measure of Penetration using Image RLE }
\label{alg:measure_of_penetration_rle}
\begin{algorithmic}[1]
    \REQUIRE $rl, ru, ol, ou, m, n$ // see the text in 
    Section~\ref{subsec:collision_detection_rle} for details.
    \ENSURE measure of penetration of the robot into the obstacle
    
    \STATE $i \leftarrow 0, j \leftarrow 0, \mu \leftarrow 0$
    \WHILE { $i < m \textbf{ and } j < n$ }
        \STATE $\mu = \mu + min(ru_i, ou_j) - max(rl_i, ol_j)$
        \IF { $ru_i \le ou_j$ }
            \STATE $i \leftarrow i + 1$
        \ELSE
            \STATE $j \leftarrow j + 1$
        \ENDIF
    \ENDWHILE

    \RETURN $\mu$
\end{algorithmic}
\end{algorithm}


\section {Demonstrations on Real Robots}
\label{sec:real-robots}

\subsection{Planar SCARA Robot}
\label{sec:scara}

We now demonstrate the algorithm for a real robot, an MTAB SCARA 4 DOF arm, in which two revolute joints move the first two links in a plane, so the motion has two degrees of freedom. Two more degrees of freedom (at the wrist) are not used in this demonstration, so the robot is effectively planar. See Figure~\ref{fig:scara-intro}. The length of link-1 is 200 mm and that of link-2 200mm. The range of angles that each link can traverse is  -135 to +135 degrees. For precautionary purposes and robot safety, we limit the range to -125 to + 125 degrees in our work. 

\begin{figure}[ht!]
    \centering
    \begin{subfigure}[b]{0.55\columnwidth}
        \includegraphics[width=\columnwidth]{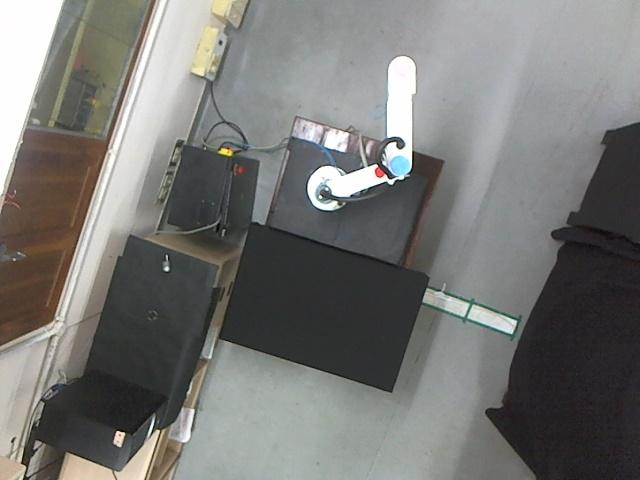}
        \caption{}
    \end{subfigure}
    
    \begin{subfigure}[b]{0.55\columnwidth}
        \includegraphics[width=\columnwidth]{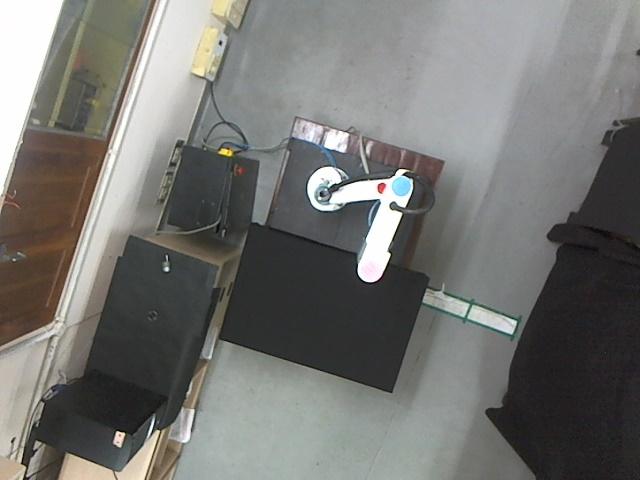}
        \caption{}
    \end{subfigure}
    
    \begin{subfigure}[b]{0.55\columnwidth}
        \includegraphics[clip, trim=1cm 7.5cm 1cm 8cm, width=\columnwidth]{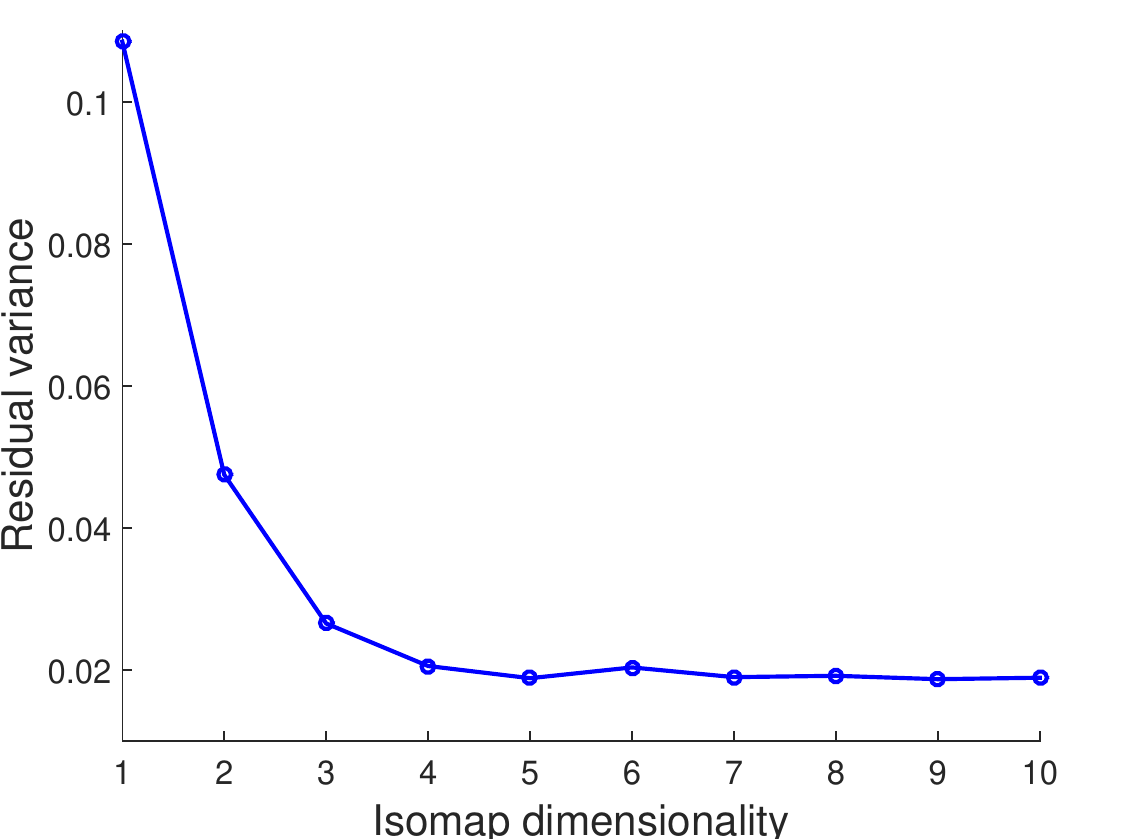}
        \caption{}
    \end{subfigure}
    \caption{
        \textit{MTAB SCARA robotic arm}:
        (a, b) Two of the 4000 images of the arm.  (c) Scree plot of Isomap suggesting that around 95\% of the variance in the data is explained by 2 dimensions.
    }
    \label{fig:scara-intro}
\end{figure}

We observe this robot with an overhead camera at a frame rate of 25 fps. 4000 images are sampled from a video while the robot is moving between random poses throughout its workspace. These images form the basic image dataset for our analysis. Background subtraction is performed on each image to generate the foreground robot. A visual roadmap (VRM) is computed using the background subtracted images. Here, we used the Euclidean metric to find neighbourhoods. 

\clearpage

\begin{figure}
    \centering
    \begin{subfigure}[b]{0.55\columnwidth}
        \includegraphics[width=\columnwidth]{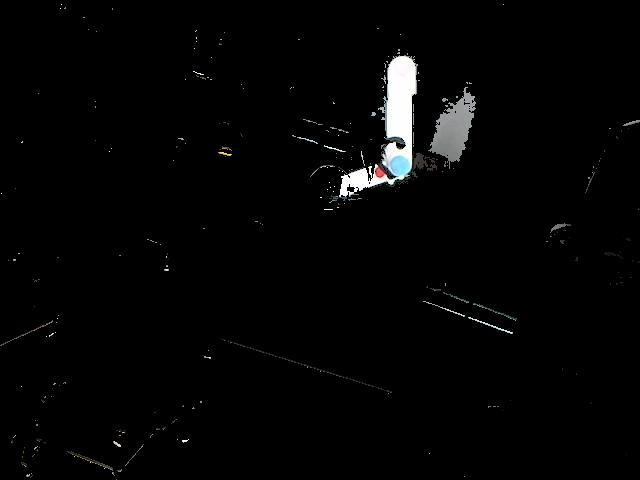}
        \caption{}
    \end{subfigure}
    
    \begin{subfigure}[b]{0.55\columnwidth}
        \includegraphics[width=\columnwidth]{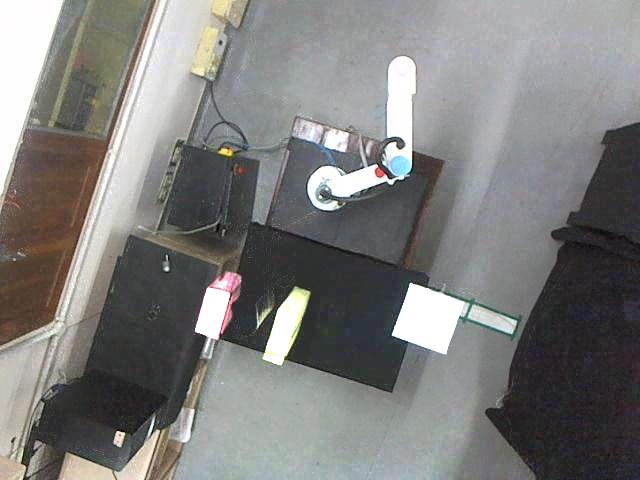}
        \caption{}
    \end{subfigure}
    
    \begin{subfigure}[b]{0.55\columnwidth}
        \includegraphics[width=\columnwidth]{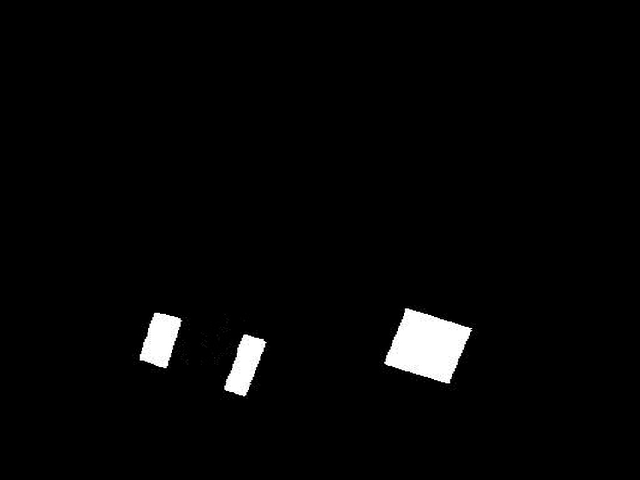}
        \caption{}
    \end{subfigure}
    \caption{
        \textit{Incorporating obstacles.} (a) Background subtracted image of the arm. (b) Image with obstacles. (c) Obstacles after image subtraction.  
    }
    \label{fig:scara-bgsub-obs}
\end{figure}

\clearpage

Thereafter, several obstacles are introduced in the workspace and the obstacles are discovered via background subtraction (\ref{fig:scara-bgsub-obs}). Note that owing to the motion being planar, a single camera view is quite adequate. Nodes corresponding to the obstacle configurations on the VCS are removed from the VRM.  A planned path is shown in Figure~\ref{fig:scara-vcs-vrm}. Note that the Isomap computations shown in the figures are only for the purpose of illustration. The actual path planning does not require Isomap. We only need to compute the VRM, which is a neighbourhood graph computed using the sampled random images of the robot. Figure~\ref{fig:scara_path_ws} shows some images of SCARA executing a path.

\begin{figure}
    \centering
    \includegraphics[width=\columnwidth]{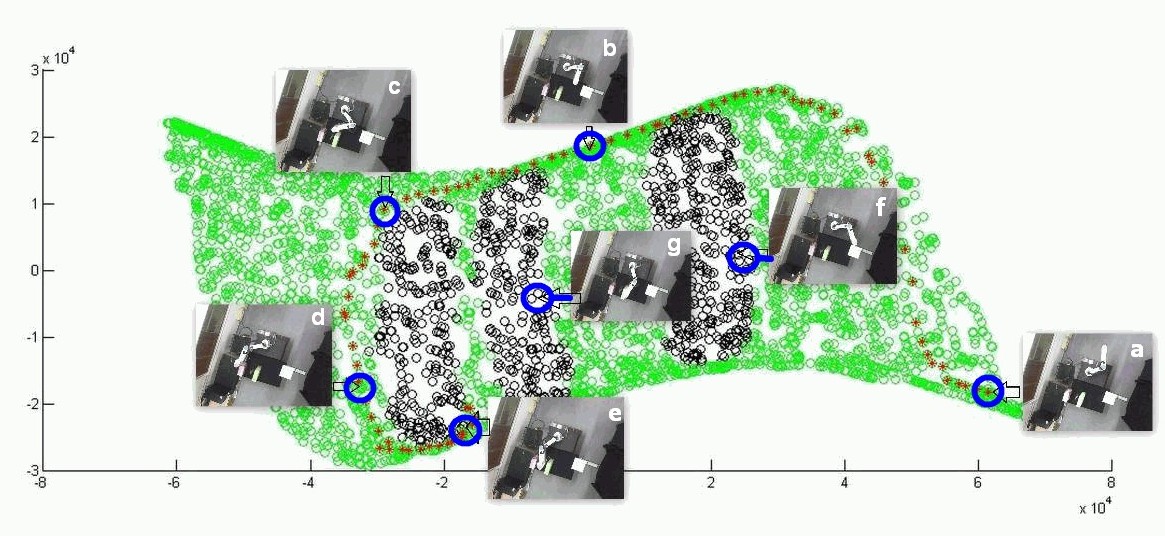}
    
    \caption{
        \textit{Path planning for SCARA using Visual Roadmap (VRM).} 
        \textit{Visual Configuration Space} (VCS) discovered by Isomap. The points corresponding to free configurations are shown in green and the points corresponding to obstacle configurations are shown in black. An obstacle-free path from a start configuration to a goal configuration, computed on the VRM embedded on the VCS, is shown in red. Images of SCARA in some of the configurations pointing to the corresponding points on the VCS are also shown here. 
    }
    \label{fig:scara-vcs-vrm}
\end{figure}

\begin{figure}
    \centering
       
    \begin{subfigure}[b]{0.55\columnwidth}
        \includegraphics[width=\columnwidth]{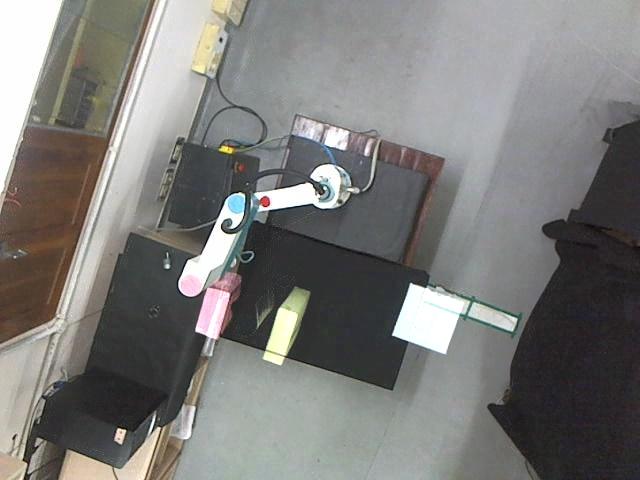}
        \caption{}
    \end{subfigure}
    
    \begin{subfigure}[b]{0.55\columnwidth}
        \includegraphics[width=\columnwidth]{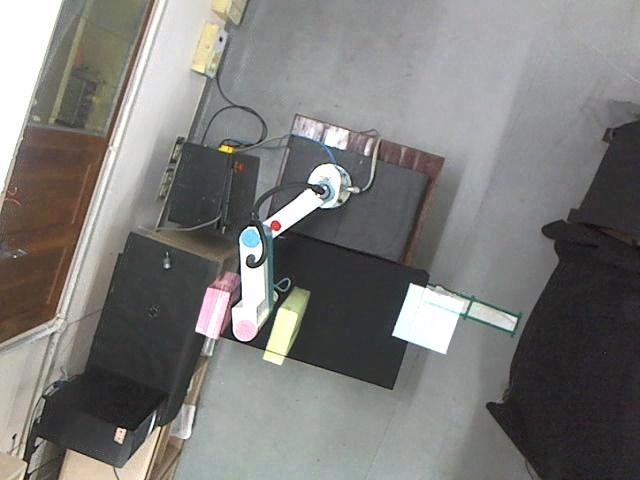}
        \caption{}
    \end{subfigure}
    
    \begin{subfigure}[b]{0.55\columnwidth}
        \includegraphics[width=\columnwidth]{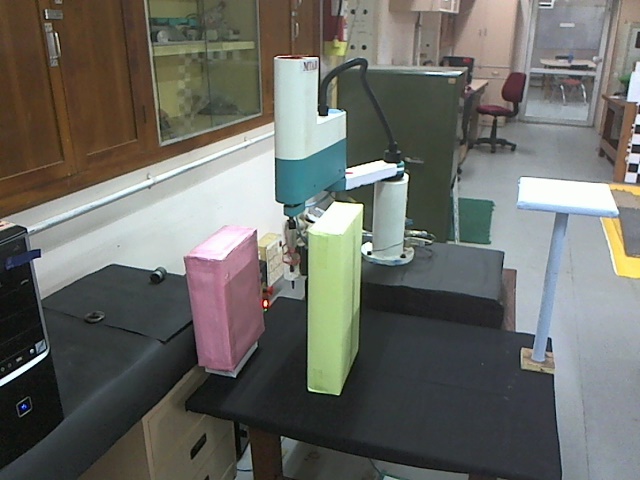}
        \caption{}
    \end{subfigure}
 
    \caption{
        Path being executed by SCARA. The last image in (c) is of the same configuration as in (b), from a different view.
    }
    \label{fig:scara_path_ws}
\end{figure}

\clearpage

\subsection{CRS A465 robot arm}
\label{sec:crs}
Here, we have a robot in a 3-D workspace, a 6-DOF CRS A465 robot. Out of its 6 DOFs, we used only 3. Since the motion is spatial (not restricted to a plane), a single camera view will not suffice for identifying collision situations. Hence, we construct the VRM using images from multiple views by stitching together the images coming from all the cameras for each sampled configuration. 

\begin{figure}[h]
    \centering
    \includegraphics[width=\columnwidth]{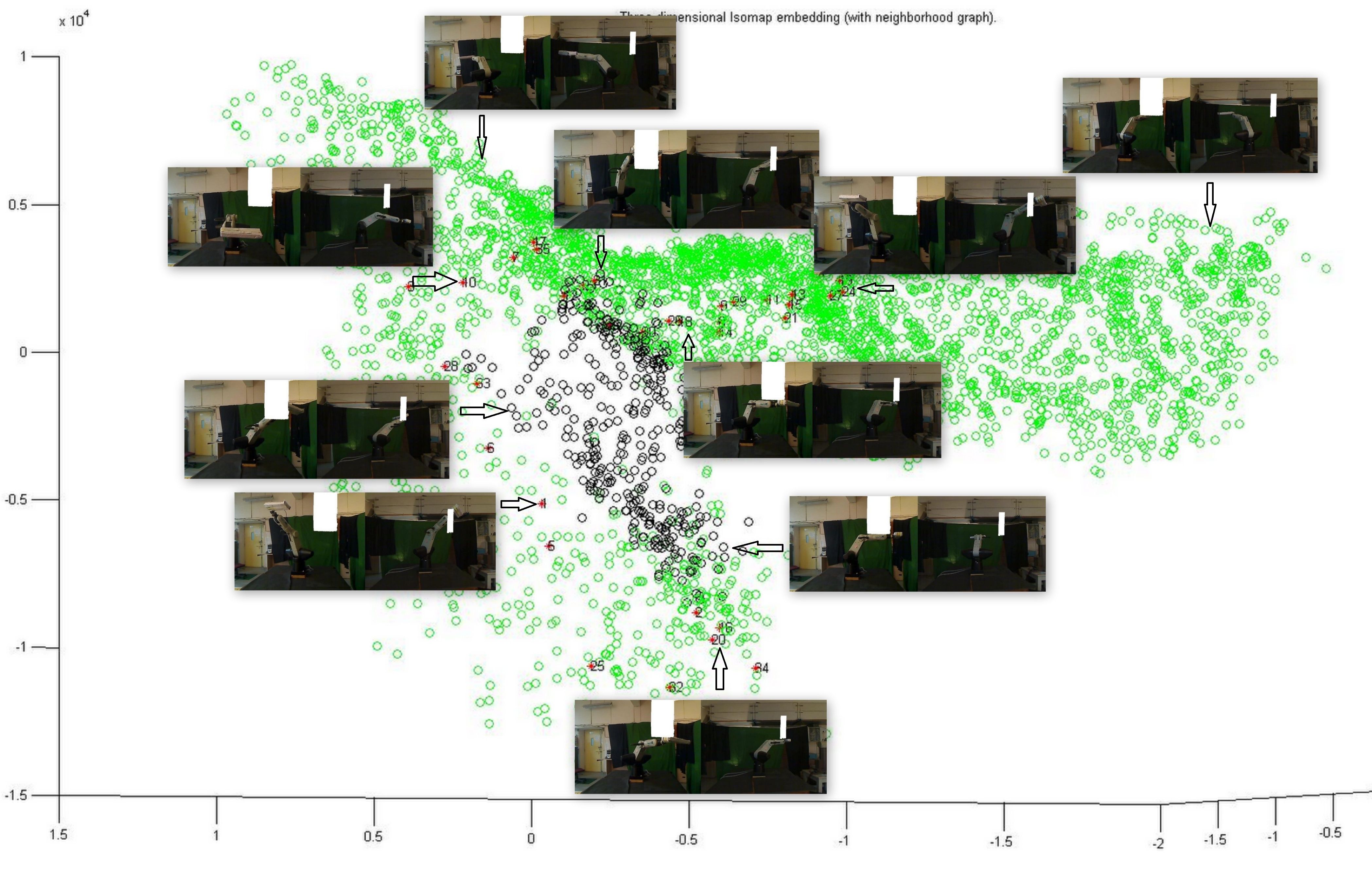}
    \caption{
        \textit{VRM for a 6-DOF CRS A465 robot, using images from two cameras.} 
        Since this is a 3-D workspace, obstacles cannot be distinguished from a single view. Here we use multiple cameras, and the intersection of the cones provides a (conservative) model for both obstacle and robot. To identify potential collision states, background-subtracted images of the obstacle (shown in white here) are overlaid on each foreground robot image. Only if the robot overlaps the obstacle in all the views, it is a potential collision node.  Collision nodes are shown in black.
    } 
    \label{fig:CRS-joint-manifold}
\end{figure}

We compute the neighbourhoods using a Euclidean metric on the stitched images. Using the foreground obstacle and robot images, we designate a robot state as an obstacle only if it overlaps the obstacle in all of the camera views. The 3-DOF workspace and a path are shown in Figure~\ref{fig:CRS-joint-manifold}, with the obstacle nodes marked in black. We used two cameras for this demonstration. 

\section{Effect of Noise}
\label{sec:effect_of_noise}

Each source of noise would introduce some additional dimensions in the manifold learning procedure, requiring many more images to be sampled. For example, additional noise due to illumination changes in case of a 2 DOF robotic arm, would make its configuration manifold 3 dimensional. In addition to requiring a dense enough image sample of the robot in various poses, we would then require images under various lighting conditions as well. While the DOFs of the robot have a direct relation with the robot’s motion, the additional dimensions due to noise would only make the manifold discovery process harder, without being useful. 

Assuming all the sources of noise can be controlled, the proposed approach can be used to plan motions for industrial robotic arms which are fixed in a position, where the environment does not change frequently and the obstacles are introduced rarely. With the help of external modules like tool-grasping, it can be used in assembly line automation. The method can also be applied to plan motions for mobile robots to perform various tasks (with the help of additional modules again) in restricted environments where the robot is always observable with cameras and obstacles are not frequent. 

As and when new technologies like 3D sensing become available, they can be incorporated into the proposed framework seamlessly to have better noise reduction and better background removal, thus leading to a reduction in the sample complexity and a qualitative improvement in the computed paths. In this thesis, we focused mainly on demonstrating the possibility of using visual manifolds as a fundamentally new construct in cognitive robotics to address various related problems, rather than worrying about extraneous factors such as background noise, which are more in the realm of computer vision. 

In the next chapter, we use similar manifold-based computational models to explain how a cognitive agent such as a human infant might learn the necessary representations to perform actions in its peripersonal space.
\chapter{Manifold Body Schema} 
\label{chap:manifold_body_schema} 

\lhead{\emph{Manifold Body Schema}}

In this chapter, we demonstrate the use of the manifold based framework developed in the previous chapters in modelling things beyond motion planning, in particular how an infant cognitive agent (e.g., a human infant) might \textit{learn to reach} objects in its surroundings. A body schema of a cognitive agent is a representation of its body that allows it to infer the position and orientation of its limbs relative to its world, and to move and perform actions in that world. We propose a \textit{computational model of body schema} based on manifolds. We suggest how it can be \textit{acquired} just by observing one's own body, and how it can be \textit{updated} as the body grows. We demonstrate how it could be used for tasks such as moving to a desired pose, swatting and reaching of objects within peripersonal space, avoiding obstacles and planning motions in the peripersonal space.

\section{Introduction}
\label{sec:intro}

Humans and animals are very good at producing smooth reaches and other movements from visually presented inputs. The inverse and forward aspects of this problem --- i.e. going from the task space to the actuator space and vice versa --- have long been a focus of interest~\citep{kawato-90_feedback-error-learning-NN-for-motor-neuro,wolpert-ghahramani-00_computational-movement-neuroscience}. However, in much of this work, the task is presented in terms of 3D coordinates, and not in terms of visual inputs (e.g. an image of the desired goal pose).  The few attempts to incorporate a visual map also work via a set of intermediate workspace coordinates~\citep{ofjall-felsberg-15_learning-vision-robot-control_autonomous}. In this work, we present an approach based on visuomotor babbling; i.e. the input is a set of images sampled from the motion space, and corresponding actuation parameters (joint angles). This differs from earlier work in motor babbling in that 
no workspace coordinates are needed~\citep{caligiore-baldassarre-08_motor-babbling-developmental-reaching-w-obstacles,lee-11_intrinsic-activitity-from-motor-babbling-to-play,rolf-steil-10_goal-babbling-to-learn-inverse-kinematics,rolf-asada-13_learning-inverse-models-w--goal-babbling}.

\subsection{Maps as Visuo-Motor manifolds}
Another significant difference of the present approach is that it is based on the discovery of low-dimensional manifolds in the image space. In executing any motor task, the sensory feedback co-varies with the actuator
signals, so that the input, output and combined configurations lie on a low-dimensional subspace of the sensory, motor, or joint spaces.  The number of signals (or dimension $D_i$) in each of these modalities vary, e.g. visual data may involve $O(10^8)$ photoreceptors, whereas motor output may involve thousands of muscle spindle
signals.  However, if a limb is being moved whose pose can be specified by fixing $d$ joint angles (its degrees of freedom), then the subspace occupied by either the images, actuator, or proprioception signals, all lie
on a $d$-manifold embedded in the high-dimensional sensorimotor space. Thus, the efferent muscle spindle feedback, afferent corticospinal commands, visual images of the hand, as well as other contingent sensory information like tactile perception, images of the full arm and body, or hand-only images, all can be singly or jointly represented on suitable $d$-dimensional non-linear manifolds.  The parameters on any of these manifolds --- even the visual manifold --- constitute a set of generalized coordinates, which can be used to uniquely specify a pose, and hence to describe a motion. Thus, these manifolds constitute multiple ``representations'' for sensorimotor space~\citep{grush-00-brain-mind_self-world-space}, and may be considered part of the elusive ``body schema''.

For implementing such a system in a neural substrate, we consider a set of neurons connected with its nearby nodes in the image space, motor space or some combination of these spaces.  Now, we know that a neuron can be used to compute the local principal components (PCA) on this neighbourhood~\citep{oja-82_simplified-neuron-model-as-principal-component-analyzer,philipona-oregan-nadal:2003}, which is an approximation of the tangent space at this point on the manifold. This local basis space then represents a ``chart''; the set of all such charts constitute the ``atlas'' for the manifold.  Both forward and inverse mappings --- i.e. from proprioception to what the arm should look like and target hand images to motor commands --- can now be read off from the same manifold.

We show how such a system may arise in a developing infant, and how it may explain aspects such as an early motor awareness of her own body, and the ability to reach for objects in peripersonal space. Newborns less
than a month old move the visible arm more than the one not attended to, and in the dark, they exert themselves to keep it where it is visible~\citep{van1997keeping}.  Such actions have been taken to be indicative of the possibility that the neonate may be learning a map between vision and proprioception, and discover new
possibilities for its motions~\citep{von-Hofsten-04trics_action-perspective-on-motor-development}. When in a darkened room with a beam of light, the infant attempts to keep the arm in the light and slows down the motion of the limb when it is about to reach the beam. The phenomenon of motor babbling is well-documented,
and~\citep{thelen-79_rhythmical-stereotypies-in-normal-human-infants} has noted how new motor patterns arise
just as the infant was gaining postural control over a previous part of the workspace, and reflect ``a degree of functional maturity but as yet incomplete voluntary control.''  In this work, we take this to mean that the developing system has explored most parts of its reachable space.  While some have even argued that such movements may be intentional and prospective~\citep{adolph-berger-06_motor-development}, we wish to seek the mapping to motor function without entering into these larger debates.

\subsection{An Empiricist View of Body Schema}

Foals, calves and other hoofed animal infants can start walking very soon after they are born (within hours). A newborn monkey baby can hang on to its mother while the mother jumps from one tree to another tree. These animals have much better innate motor skills than human babies. In humans, most of the motor skills are learned. We take an empiricist view to model the body schema of cognitive agents that do not have innate motor skills. See Figure~\ref{fig:body_schema_empiricist_view}.

\begin{figure}[ht!]
    \centering
    \includegraphics[page=2, clip, trim = 1cm 2cm 1cm 6cm, width=0.8\textwidth]{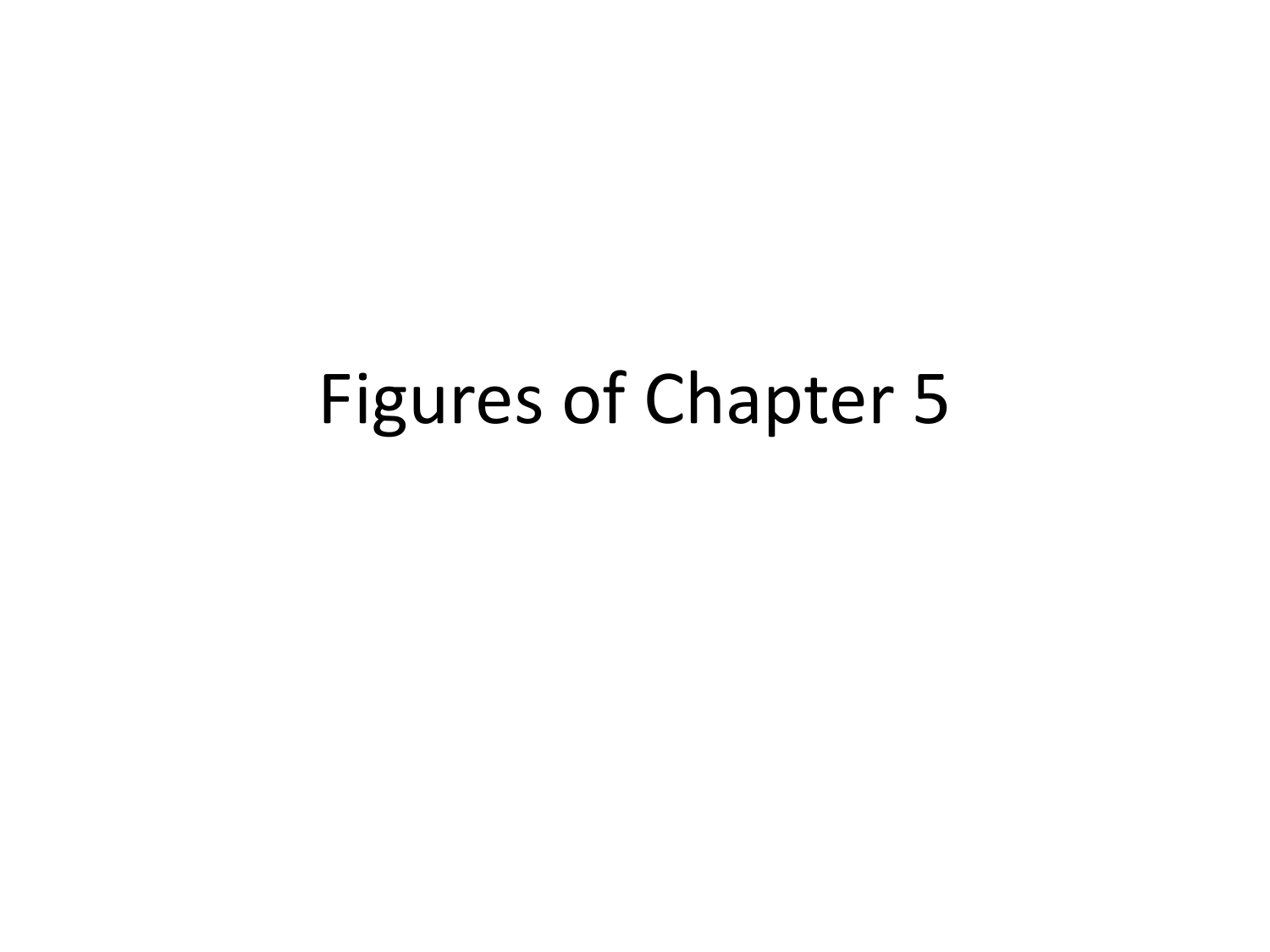}
    \caption{
        \textit{An empiricist view of body schema.} In the early stages, the agent does not have good enough visuomotor models to execute a motor task such as reaching an object in its peripersonal space. Its past visual experience is available in the form of images. Also available are the proprioceptive signals. In order to reach an object, a goal state is imagined as an image. When the agent tries to achieve the goal state, a motor decision is made, of which an efferent copy is retained. The error in the proprioceptive signals of the actually reached state from those of the desired goal state is fed back for a correction of the body schema. As the agent gains more and more experience with its own body and its world, its body schema gets better and better for making motor decisions for desired actions accurately.
    }
    \label{fig:body_schema_empiricist_view} 
\end{figure}

\section{Image Manifolds}

Figure~\ref{fig:overview-example} shows an agent with a red elliptical body and a single 3 jointed arm, having a rigid U-shaped hand at the end.  The range of angles for each joint is as follows: shoulder ($\theta_1)$ : -30 to 130$^\circ$; elbow ($\theta_2$) : 0 to 170$^\circ$; and wrist ($\theta_3$): -30 to 60 $^\circ$, so that the motion of the arm can hypothetically range over a cube in the joint angle space. 

Over the past two decades, abundant evidence has accumulated about the tight integration of motor and perceptual spaces.~\citep{prinz-97_perception-perception-and-action-planning,jordan-stork-02_action-planning-affects-spatial-localization}. However, the exact mechanics of this integration has not been explicated. There is also a question as to whether the mapping is a close association that does not participate in the function directly, or whether the same cells that command the interneurons are also involved in generating a visual expectation~\citep{gallese-lakoff-05_the-brains-concepts-sensory-motor}.

Following an idea gathering strength in neuroscience~\citep{ganguli-sompolinsky-12_compressed-sensing-sparsity-dimensionality}, we would like to suggest that the infant in the first few weeks of life is actually learning the integration of these perceptual and motor cues by projecting these onto a low-dimensional nonlinear manifold.  See Figure~\ref{fig:joint-manifold}. Such an approach is also neurally plausible in that a neural architecture could be computing non-linear manifolds as combinations of local tangent spaces computed via PCA using single neurons.   

\begin{figure}[t]
  \begin{center}
    
    \begin{subfigure}[b]{0.4\columnwidth}
        \centering
        \includegraphics[width=\columnwidth]{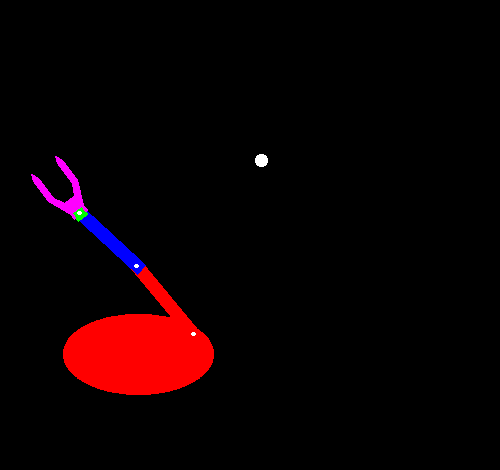}
        \caption{}
    \end{subfigure}
    \hspace{1mm}
    \begin{subfigure}[b]{0.47\columnwidth}
      \centering
        \includegraphics[width=\columnwidth]{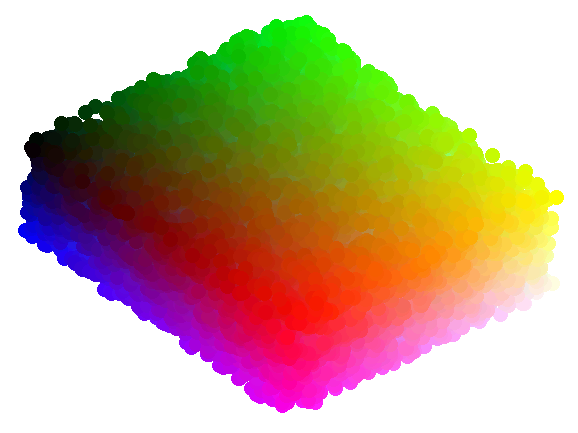}
        \caption{}
    \end{subfigure}
    
    \begin{subfigure}[b]{0.47\columnwidth}
      \centering
        \includegraphics[width=\columnwidth]{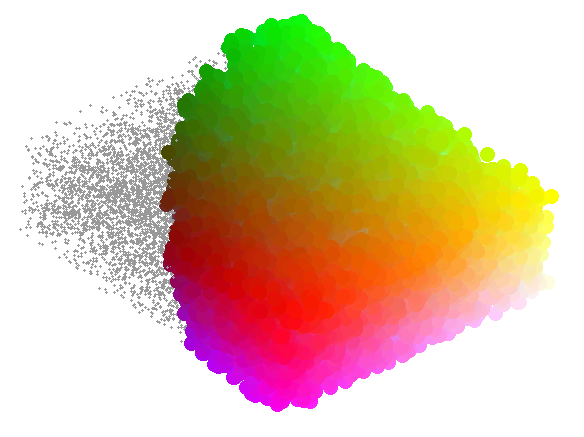}
        \caption{}
    \end{subfigure}
    \begin{subfigure}[b]{0.4\columnwidth}
        \centering
        \includegraphics[width=\columnwidth]{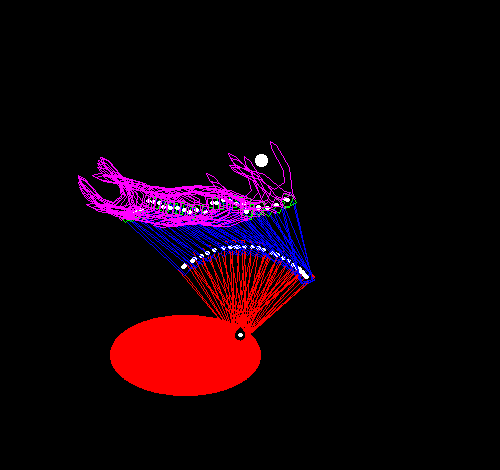}
        \caption{}
    \end{subfigure}

    \caption{
        \textit{Overview example}. (a) A simulated agent with a red elliptical body and an arm with 3 joints moving in the horizontal plane. Also shown is a white ball that the agent intends to reach. (b) The 3D manifold of the actuator parameters (joint angles), obtained by a random projection of the three joint angles into a 1000 dimensional space and then applying Isomap on the result. Colour coding is as follows: (increasing values) Red: $\theta_1$, Green: $\theta_2$, and Blue: $\theta_3$. (c) Same manifold after learning the contours of its own body; grey points are self-intersecting poses of the arm, which are removed from the model. (d) Trajectory followed by the agent to swat the white ball. 
    }

    \label{fig:overview-example}
  \end{center}
\end{figure}

\begin{figure}[ht!]
    \centering
    \includegraphics[page=3, clip, trim = 1.5cm 3cm 1.5cm 6cm, width=\textwidth]{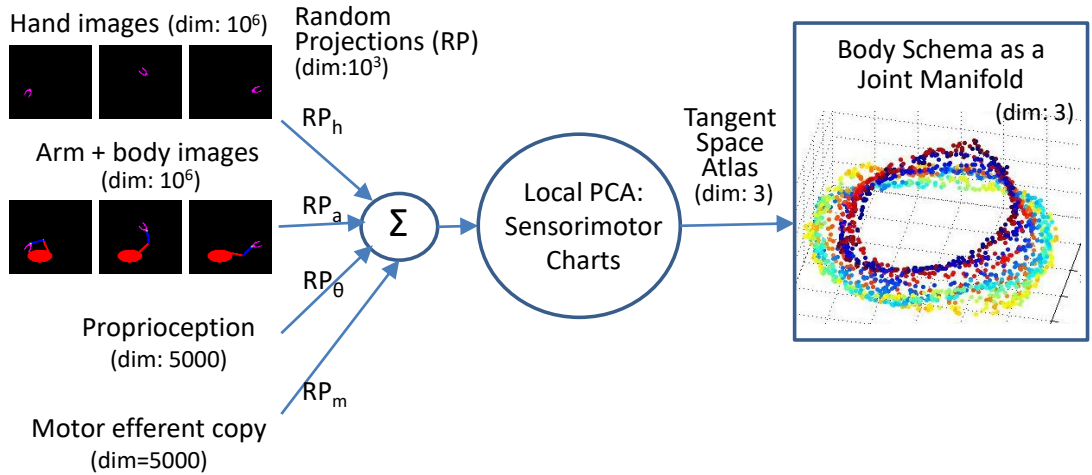}
  
    \caption{
        \textit{Body schema as a joint manifold}: 
        Multiple inputs  --- two visual feedbacks attending to only the hand and the whole arm, proprioception, and motor efferent copy, each signal with differing dimensions $D_i$ --- are all projected onto the same number of neurons, (say $D_{rp}$) by a set of random synaptic weights (each RP matrix is $D_i \times D_{rp}$,  with $D_{rp} << D_i$).  The joint manifold on which each of these data lies has the intrinsic dimension equalling the degrees of freedom $d$ of the system (for the right arm up to the wrist, $d=3$; other values of $D_i$ are suggestive only).  Though this \textit{joint manifold} is non-linear, local tangent spaces are computed all over the manifold, using neuronal computation for PCA.  This joint manifold serves to integrate the different sensory and motor modalities for moving the arm. To plan an action, the model is queried with the desired pose (image) of the hand (or arm). Interpolating on the relevant tangent space chart of the manifold (e.g. k-nearest neighbours), then returns the motor signals.  It also generates a set of expectations for proprioception, and visual feedback, without which rapid task feedback would not be possible. Several such  $d$-dimensional manifolds, with differing emphases (weights) for each of the modalities, may be maintained in motor and sensory areas.
    }
    \label{fig:joint-manifold}
\end{figure} 

We assume that during babbling, the system has explored most of the area in its reachable space. One aspect of the infant repertoire that has not received sufficient attention is how rapidly she learns to avoid motions that hit its own body. The motion that brings the hand to the mouth is often well developed before parturition, but knowledge of its body continues to mature.  Anecdotal evidence suggests that many parents are mortified when their newborn repeatedly pokes her own eyes. However, this behaviour disappears within a couple of days. 

Thus, in our little body+limb system, many of the possible motions self-intersect with its body, so like the newborn, our system quickly learns to rule out this part of  its motion space, while paying particular attention to poses on the boundary, where it is touching its own body, carrying food to the mouth or items in front of its eyes etc. This region of self-intersection is indicated in Figure~\ref{fig:overview-example} with gray points.

One question that arises in any visuomotor task is what part of the arm to attend to.  For most tasks, the hand is the part that will be interacting with the object; however in certain tasks, (e.g. in Figure~\ref{fig:3dof_path_w_obs} narrow gap), the whole body can be important.  We suggest that the system
maintains multiple representations so that it can adapt to different types of reasoning.

In this work, we focus on the following manifolds:
\begin{itemize}
    \item {
        \textit{Hand only}: Here the manifold is constructed from images that have only the hand, and ignore the rest of the arm. 
    }
    \item {
        \textit{Full arm} : Images show the full arm and the body.
    }
    \item {
        \textit{Actuator parameters} : Based on the joint angle data. 
    }
    \item {
        \textit{Combined maps - joint manifolds}: Combines these input data, by computing the local neighbourhood based on either (a) random projections of the different datasets to the same uniform lower dimension~\citep{ganguli-sompolinsky-12_compressed-sensing-sparsity-dimensionality}, or (b) by combining the metrics used in the three spaces into a single metric.  We can give different weightings to reflect different emphasis on the three types of input.
    }
\end {itemize}
Figures \ref{fig:hand-images} and \ref{fig:various_manifolds} illustrate these different manifolds. 

\begin{figure}[ht!]
    \centering
    \includegraphics[width=0.48\columnwidth]{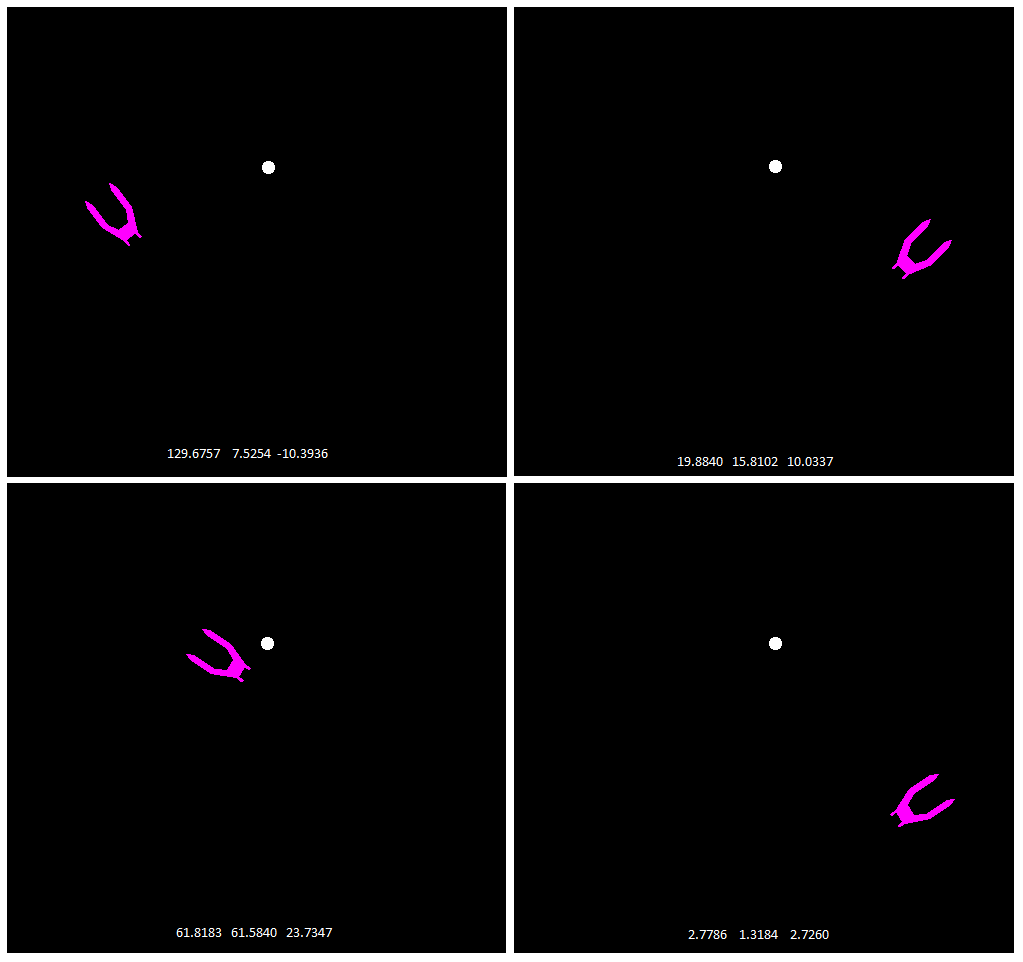}
    \includegraphics[width=0.48\columnwidth]{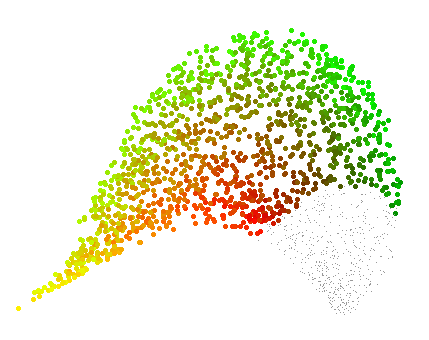} 
    \caption{
        \textit{Image space for just the hand, and its manifold}: a) Four hand images along with a ball. b) Resulting manifold with a target dimension $d$=2.  We observe that each image is very high-dimensional (e.g. $D=10^8$ in the retina, and $D=10^6$ in the optic nerve), but randomly assigning colours to the $D$ pixels will almost never create an image of a hand - so the hand-images subspace is a vanishingly small subspace.  If the hand is attached to an arm with $d$ joint angles, then it needs only $d$ parameters to fully specify its pose in space (in these images, $d=3$, corresponding to the angles shown in each image). Thus, the hand image subspace has an intrinsic dimensionality of $d$, which represents the number of ways in which the image can be altered while remaining an image of this hand.  In this particular manifold, almost the entire variability is explained by the first two dimensions (roughly, x and y), so the manifold shown here is for target dimension $d=2$. 
    }
    \label{fig:hand-images}
\end{figure} 
\clearpage
\begin{figure}[t]
  \begin{center}
    \begin{subfigure}[b]{0.42\columnwidth}
       \centering
        \includegraphics[width=\columnwidth]{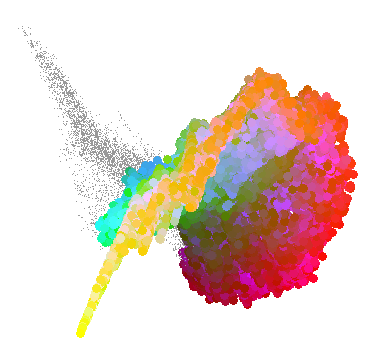}
        \caption{}
    \end{subfigure}
    \begin{subfigure}[b]{0.42\columnwidth}
       \centering
        \includegraphics[width=\columnwidth]{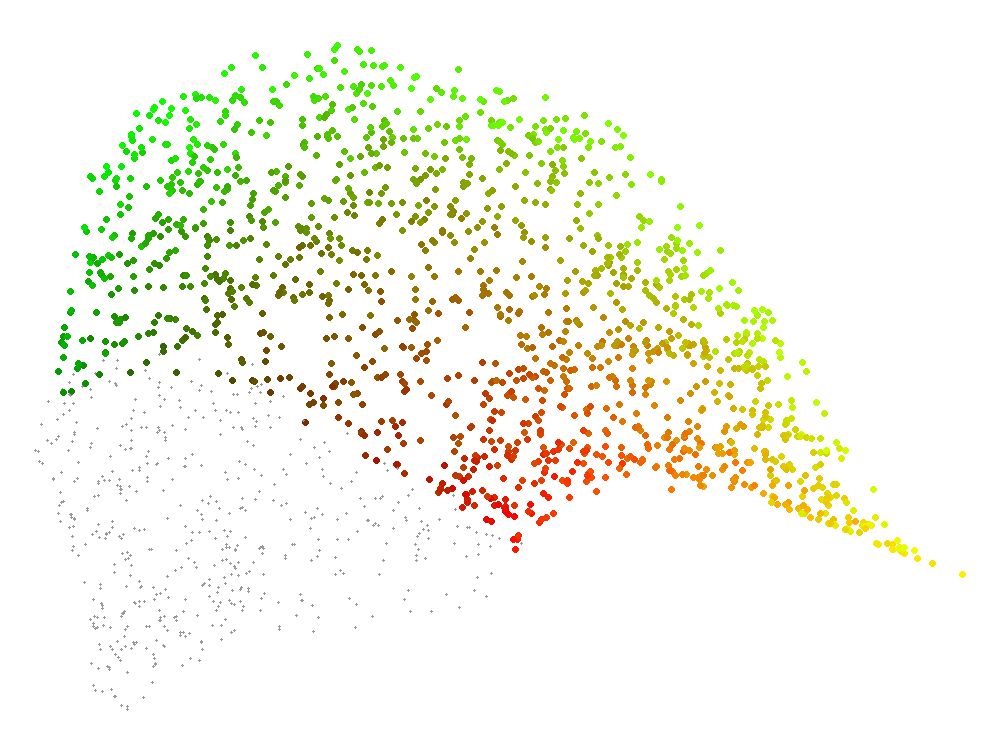}
        \caption{}
    \end{subfigure}
    \\
    \begin{subfigure}[b]{0.42\columnwidth}
       \centering
        \includegraphics[width=\columnwidth]{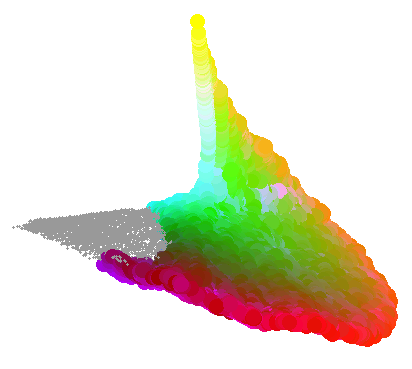}
        \caption{}
    \end{subfigure}
    \begin{subfigure}[b]{0.42\columnwidth}
       \centering
        \includegraphics[width=\columnwidth]{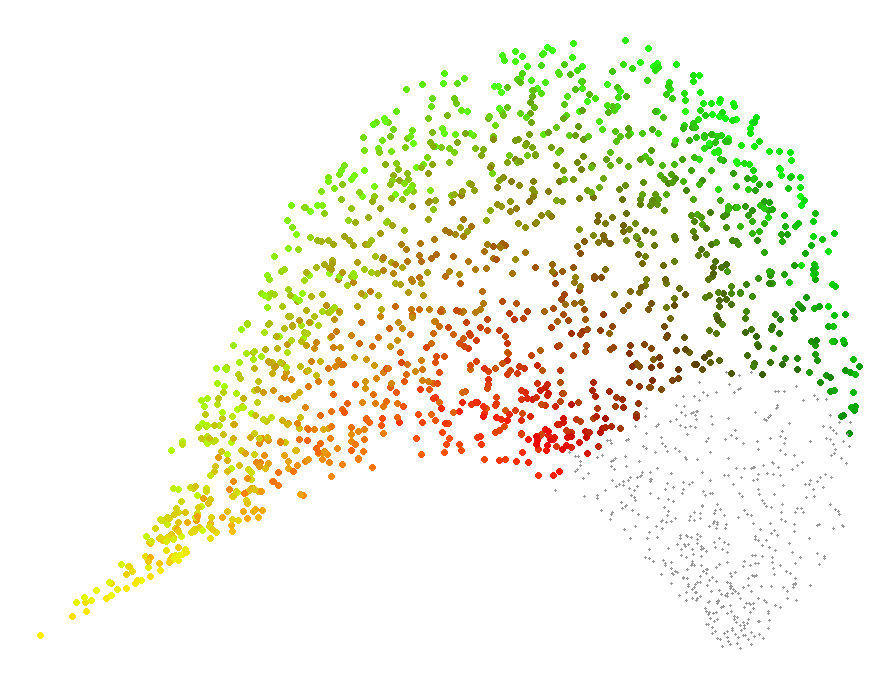}
        \caption{}
    \end{subfigure}
    \\
    \begin{subfigure}[b]{0.42\columnwidth}
       \centering
        \includegraphics[width=\columnwidth]{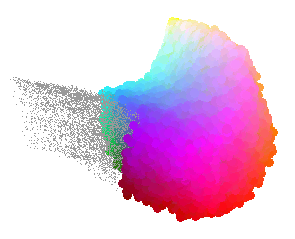}
        \caption{}
    \end{subfigure}
    \begin{subfigure}[b]{0.42\columnwidth}
       \centering
        \includegraphics[width=\columnwidth]{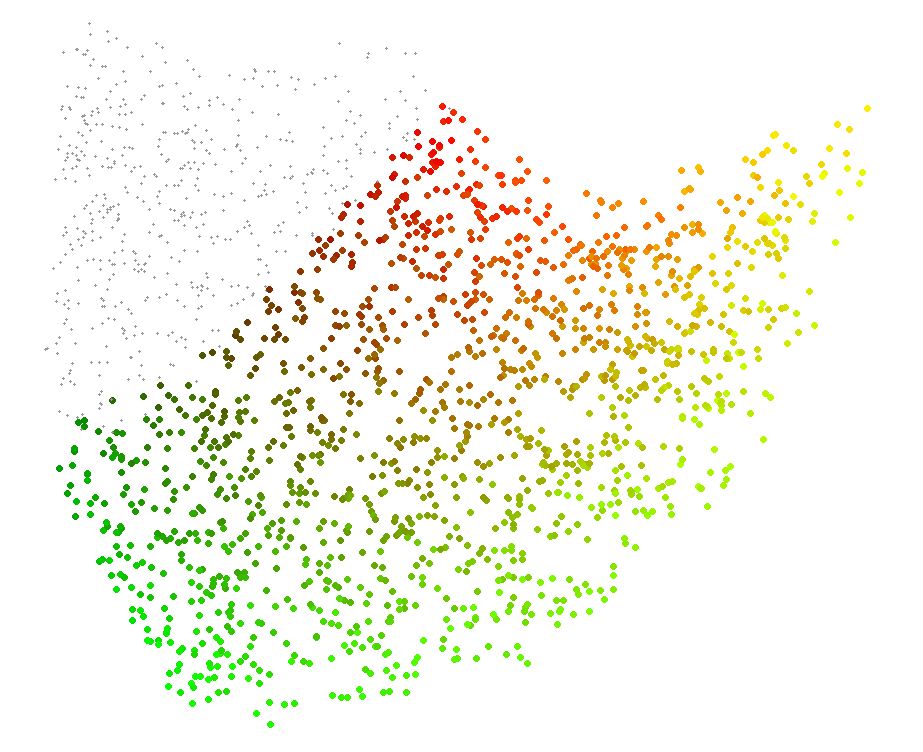}
        \caption{}
    \end{subfigure}
    \caption{
        \textit{Manifolds from full-arm and hand images, and joint manifold}. 
        The first column shows 3-manifolds (target dimension $d=3$) and the second column shows corresponding 2-manifolds (target dimension $d$=2) of images: (a, b) full-arm, (c, d) just the hand and (e, f) joint manifold. Joint manifold in (e, f) combines full arm + hand + actuator parameters. The 3-manifold for just the angle space was shown earlier in Figure~\ref{fig:overview-example}. The grey regions correspond to the poses of self-intersection.
    }
    \label{fig:various_manifolds}
  \end{center}
\end{figure} 
\clearpage

The manifolds shown here have been computed using the Isomap algorithm~\citep{tenenbaum2000global}. The system uses the Hausdorff distance~\citep{huttenlocher1993comparing} to compute the distance between any two images.  Given two sets of points $A$ and $B$, Hausdorff distance $d_H(A, B)$ is defined as 
\[
d_H(A, B) = \max \left\{ \adjustlimits \sup_{a \in A} \inf_{b \in B} d_E(a, b), \adjustlimits \sup_{b \in B} \inf_{a \in A} d_E(a, b) \right\},
\]
where $d_E(a, b)$ is the Euclidean distance between $a$ and $b$.

Since iterating over all the points in the images is expensive, here we approximate this distance over a set of high-contrast points~\citep{shi-tomasi_1994_cvpr_good-features-to-track}. For the angles, we use the $L_2$ (Euclidean) metric.  The Isomap algorithm then computes the nearest neighbour for every point using this metric and constructs a graph by joining these neighbourhoods.  It then finds an approximate geodesic distance between any two points and maps the entire set of distances to a lower dimension using Multi-Dimensional Scaling (MDS). Though other manifold learning approaches also work and generate similar maps, we find Isomap less deforming at the corners, possibly owing to its preserving the geodesic distances.  As mentioned earlier, neural structures may be simulating this construct via a series of PCA computations~\citep{oja-82_simplified-neuron-model-as-principal-component-analyzer}. The results in Figure~\ref{fig:overview-example} and ~\ref{fig:various_manifolds} are based on 20,000 samples, of which a little more than a fourth are rejected since they self-intersect with the body.  Even with fewer points (say 2,000), the manifold is well-defined, but it looks more patchy.

We note that as the infant is throwing its arms about, the efferent motor commands, the afferent proprioception, and the resulting image of hand or arm, are each of them extremely high dimensional.  Let us consider a system based on just the hand so that each input image shows just the hand, and nothing else.  A set of such images are shown in Figure~\ref{fig:hand-images}.  Each image is 500$\times$470 pixels = 235,000 dimensions.  However, if we randomly assign colours to each pixel, the resulting image will almost never be an image of this agent's hand.  At the same time, the image space is continuous, since arbitrarily small actuations result in arbitrarily small changes in the image.  Also, to change the image so that it remains within the subspace of robot images, at any point, we can do it only along $d$ basis directions -- the tangent space at this point. Thus, these high-dimensional inputs lie along a subspace whose local dimensionality everywhere is $d$.  However, these local spaces would typically not be parallel to each other, so that the overall manifold is curved (non-linear). 

Furthermore, we observe that if each of these inputs projects separately to different $d$-manifolds, then there would exist a one-to-one correspondence between the corresponding points on each of these manifolds and their neighbourhoods.  This is, in fact, the definition of \textit{homeomorphism}, the key requirement for data to constitute a manifold. Thus, these separate inputs are homeomorphic to each other, and they can also be mapped onto a single ``joint'' manifold~\citep{davenport-hegde-10_joint-manifolds-for-data-fusion}.   Also, the coordinates along each manifold specify a unique pose of the arm, so these coordinates constitute \textit{generalized coordinates}~\citep{mussa-95_geometrical-principles-in-motor-control}, which can be used not only in kinematics as demonstrated here but also in dynamics. 

\subsection{Joint Manifold via Metric Combination}
If our manifold discovery process can work with local distances, it is possible to obtain a projection for multiple datasets by combining their metrics.  Thus, a metric combining our three datasets, as shown in
fig:~\ref{fig:various_manifolds}c can be obtained as $d_{joint} = \alpha d_{full-arm} + \beta d_{hand} + \gamma d_{angle}$.  Results for $\alpha=\beta=\gamma=1$ are shown in Figure~\ref{fig:various_manifolds}(c).  Another approach to constructing a joint manifold would be to use random projections to map the data directly to the same intermediate dimension.

\subsection{Using the Joint Manifold as Body Schema}
Through the formulation of this joint manifold, the infant learns a combined map for visual poses and actuator/proprioception parameters.  Since the graph underlying the manifold already connects any two nodes via local neighbours, it can also plan paths on the graph that would \textit{reach} a desired object. Over time, it can also learn to overlay the location of objects in this space with the arm poses and try to allow it to reach around obstacles. Eventually, some salient parts of this sensorimotor space (e.g. the nose) can begin to acquire symbolic connotations as shown in Figure~\ref{fig:bs_map}. 

\begin{figure}[h!]
    \centering
    \begin{subfigure}[b]{0.45\columnwidth}
        \centering
        \includegraphics[width=\columnwidth]{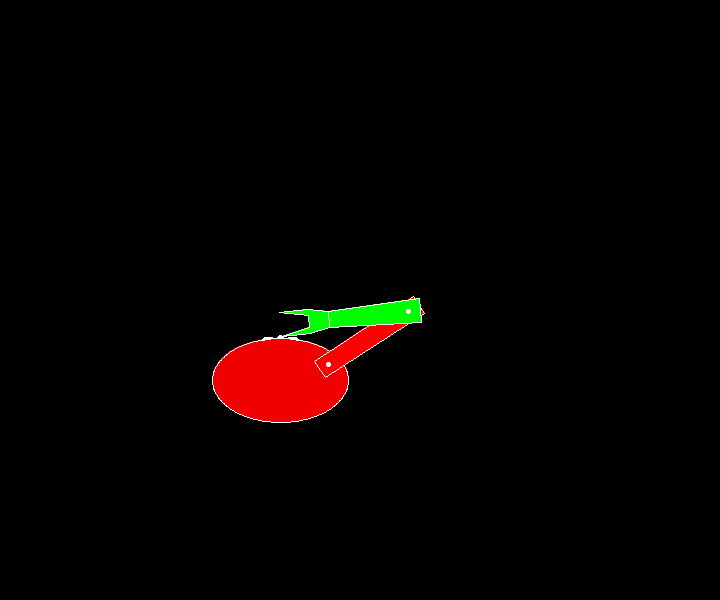}
        \caption{}
    \end{subfigure}
    \begin{subfigure}[b]{0.45\columnwidth}
        \centering
        \includegraphics[width=\columnwidth]{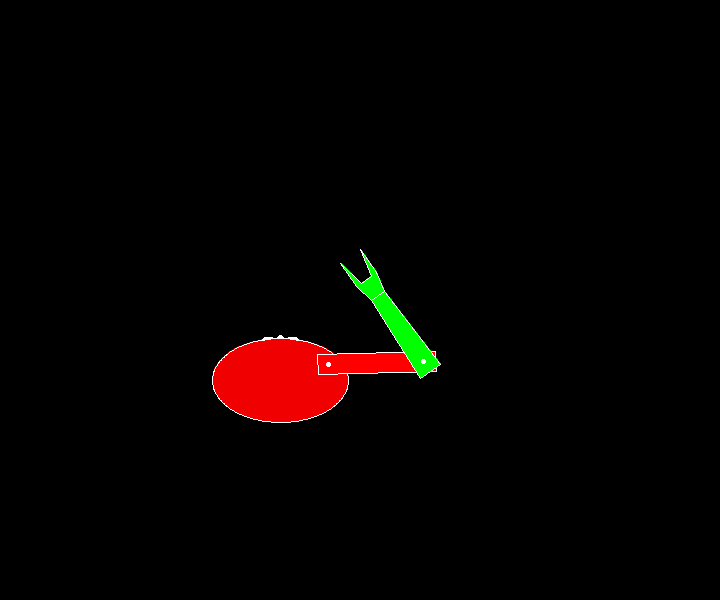}
        \caption{}
    \end{subfigure}
    
    \begin{subfigure}[b]{0.92\columnwidth}
        \centering
        \includegraphics[width=\columnwidth]{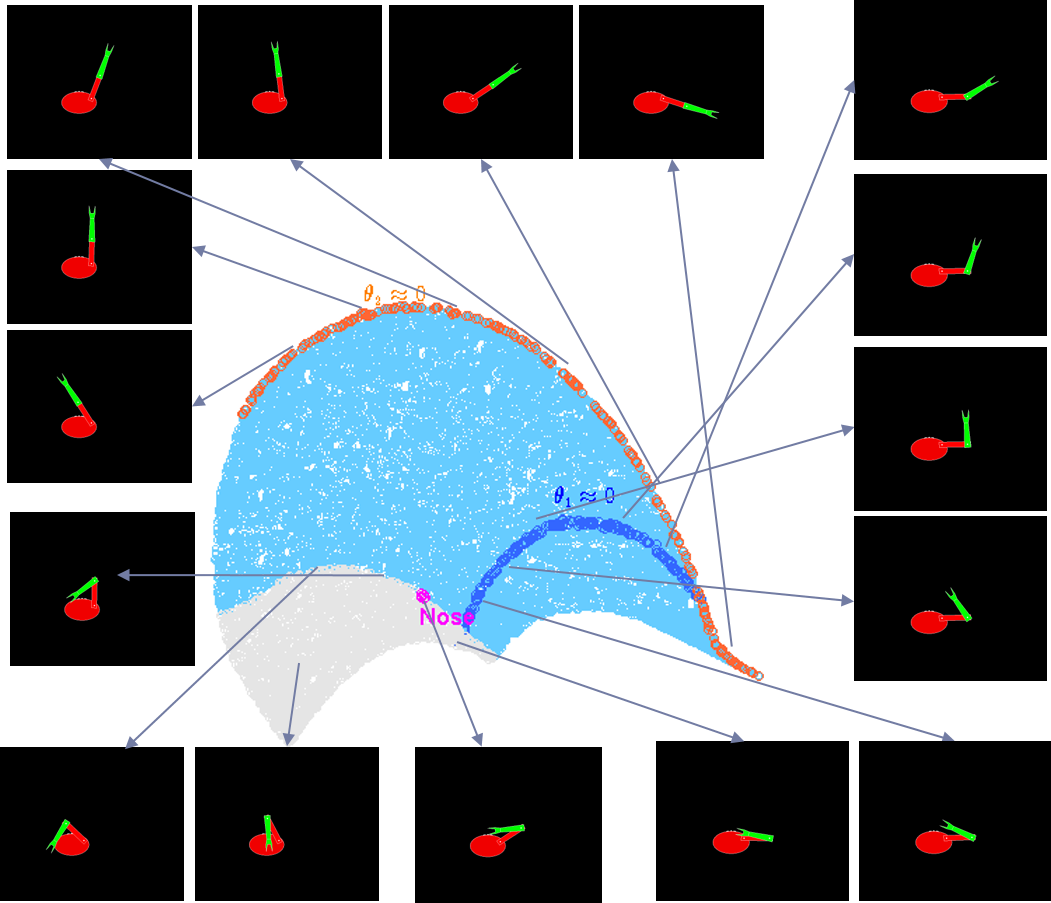}
        \caption{}
    \end{subfigure}
    
    \caption{
        \textit{Mapping of body parts on the manifold.}
        (a, b) Two images of an agent with a 2-DOF arm and an elliptical body with its nose and cheeks marked with small white regions on the body.
        (c) The 2-D manifold of this agent's images. Here, we show that the points on the manifold correspond to the different poses of the agent. The light blue region corresponds to poses where the arm does not intersect with the body and the grey region corresponds to poses where the arm intersects with the body. The boundary between these two regions makes up the boundary of the agent's body. Points corresponding to poses with joint angle $\theta_1 \approx 0$ are marked in blue, poses with $\theta_2 \approx 0$ are marked in orange and the points corresponding to the poses where the hand touches the nose are marked in magenta. 
    }
    \label{fig:bs_map}
\end{figure}
\clearpage

\section{Learning to Reach}
Having constructed the manifold as described above, we now illustrate how one may learn to reach using this map. Here we assume that the body dimensions are changing relatively slowly compared to the learning period, so the body is considered fixed.  We shall later elaborate on the question of transferring this knowledge to a growing
body. 

Reaching objects involves planning motions from the current pose to a target pose in which the hand can touch or hold the object. For purposes of motion planning, we consider a neighbourhood graph embedded on the body schema manifold. This graph is the \textit{visual roadmap} (VRM) discussed in the earlier chapters.

Let us say the agent's arm is in a given pose, and the agent wishes to move it to a new pose, which may be known as a desired hand image (it could also be a full arm or a set of angles).  We now find its nearest neighbours on the entire set of hand images and interpolate in the local tangent space to construct a neighbourhood for this desired reach position.  Then we can solve for a geodesic (shortest curve along the manifold) on any of the representations to obtain a path.  The geodesic is approximated as the shortest path on the graph for that particular manifold.  Interestingly, the paths are different for differing encodings (Figure~\ref{fig:reach_paths}), thus the hand-only manifold results in a motion that is more rectilinear in the task space, whereas the angle-based manifold results in a more convoluted path~\citep{danziger-mussa-12_influence-of-visual-motion-on-motor-learning}.

For an infant learning to reach, the paths will initially result in a swatting motion because her velocity control is poor, and also because the sampling of the motion space is not very good. Swatting gets better as the agent gains more and more visual experience with its own body. This is illustrated in Figure~\ref{fig:swatting-improves}.

When obstacles are introduced as in Figure~\ref{fig:3dof_path_w_obs}, we identify the full arm poses that overlap the obstacle in the image space, and remove them from the graph.  The remaining nodes in the free-space are used to compute the path. 

\clearpage
\begin{figure}[t]
 \begin{center}
    \begin{subfigure}[b]{0.36\columnwidth}
        \centering
        \includegraphics[width=\columnwidth]{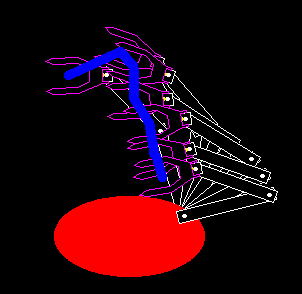}
        \caption{}
    \end{subfigure}
    \begin{subfigure}[b]{0.36\columnwidth}
        \centering
        \includegraphics[width=\columnwidth]{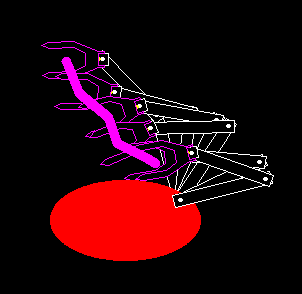}
        \caption{}
    \end{subfigure}
    
    \begin{subfigure}[b]{0.36\columnwidth}
        \centering
        \includegraphics[width=\columnwidth]{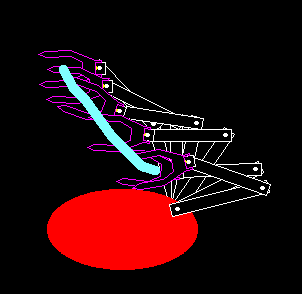}
        \caption{}
    \end{subfigure}
    \hspace{0.1mm}
    \begin{subfigure}[b]{0.36\columnwidth}
        \centering
        \includegraphics[width=\columnwidth]{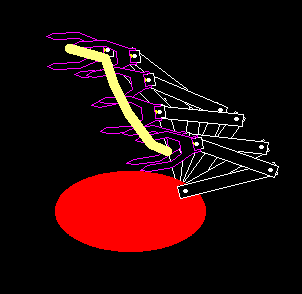}
        \caption{}
    \end{subfigure}
    
    \begin{subfigure}[b]{\columnwidth}
        \centering
        \includegraphics[width=0.72\columnwidth]{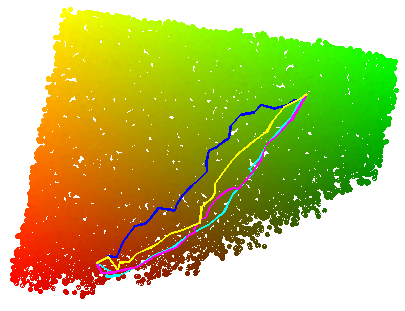}
        \caption{}
    \end{subfigure}
  
    \caption{\textit{Paths computed on different manifolds}:
        Paths between same source and destination, computed on different manifolds:
        (a) blue - actuator parameter manifold
        (b) magenta - manifold of corners of just the hand; 
        (c) cyan - manifold of corners of the full arm; 
        (d) yellow - joint manifold of angles + hand corners + full arm corners.
        (e) Paths superposed on a 2-manifold of actuator parameters. 
    }
    \label{fig:reach_paths}
  \end{center}
\end{figure} 

\clearpage

\begin{figure}[t]
  \begin{center}
    \begin{subfigure}[b]{0.42\columnwidth}
        \centering
        \includegraphics[width=\columnwidth]{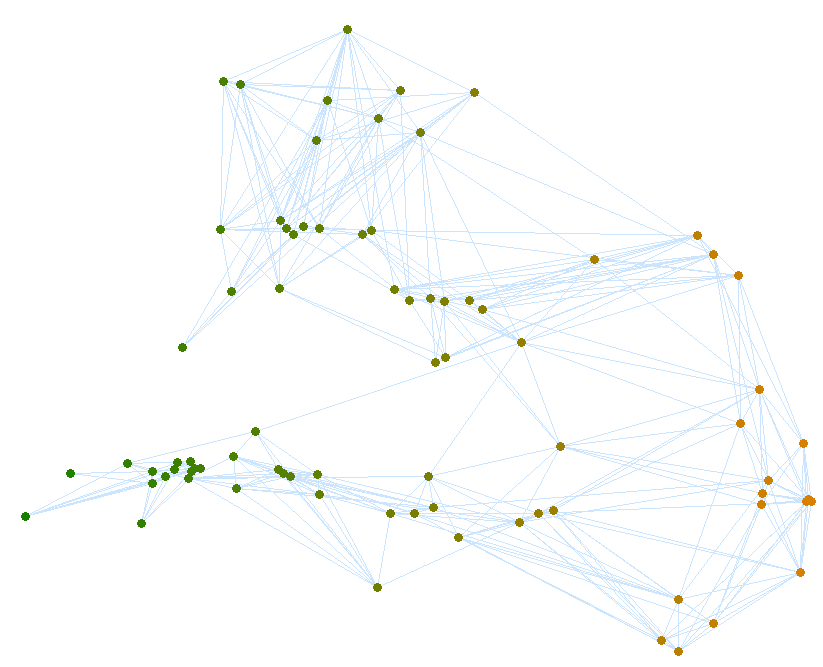}
        \caption{}
    \end{subfigure}
    \begin{subfigure}[b]{0.42\columnwidth}
        \centering
        \includegraphics[width=\columnwidth]{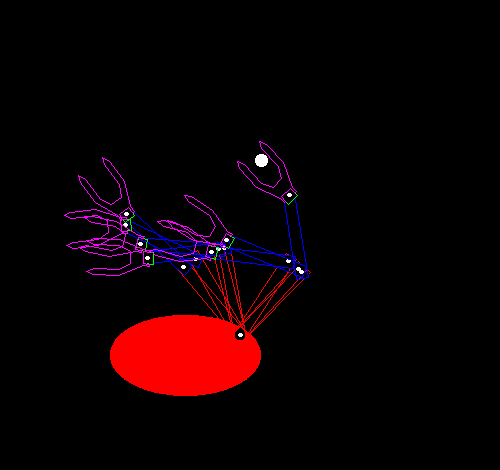}
        \caption{}
    \end{subfigure}
    
    \begin{subfigure}[b]{0.42\columnwidth}
        \centering
        \includegraphics[width=\columnwidth]{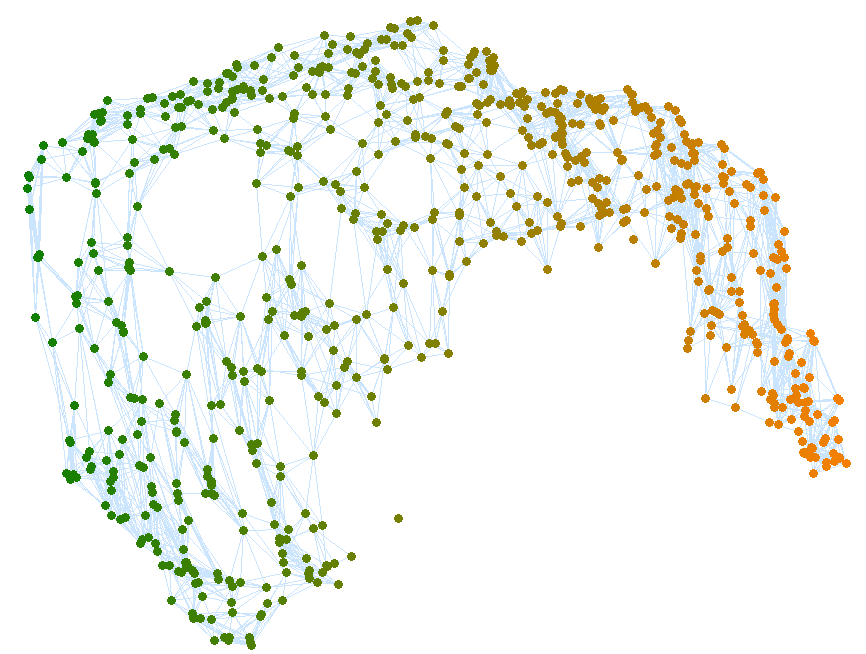}
        \caption{}
    \end{subfigure}
    \begin{subfigure}[b]{0.42\columnwidth}
        \centering
        \includegraphics[width=\columnwidth]{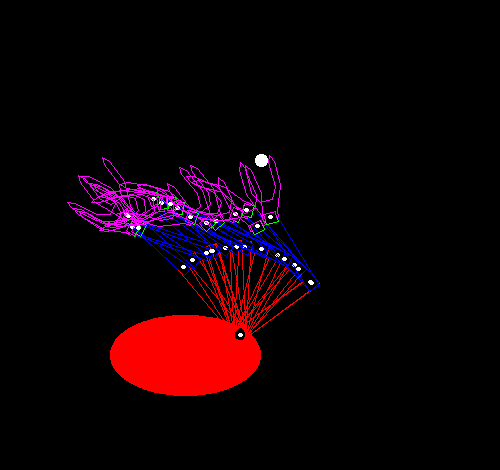}
        \caption{}
    \end{subfigure}
    
    \begin{subfigure}[b]{0.42\columnwidth}
        \centering
        \includegraphics[width=\columnwidth]{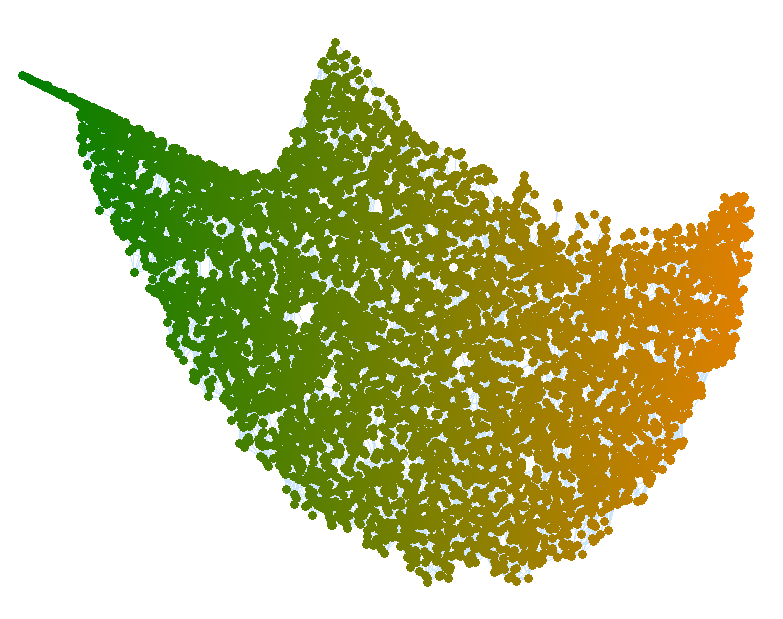}
        \caption{}
    \end{subfigure}
    \begin{subfigure}[b]{0.42\columnwidth}
        \centering
        \includegraphics[width=\columnwidth]{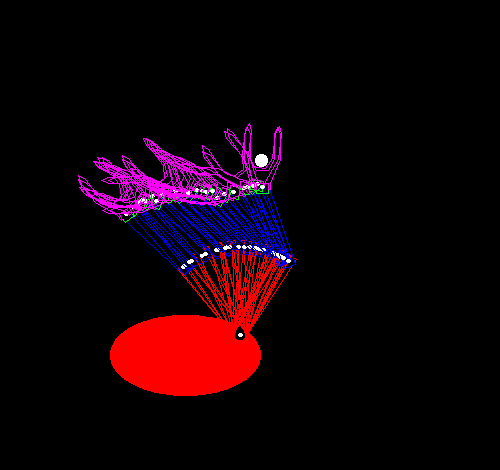}
        \caption{}
    \end{subfigure}
    
    \caption{
        {\em Swatting getting better with experience}.
        Column 1: Graphs used for motion planning. Number of nodes = 100, 1000, 10000 in the graphs of (a, c, e) respectively. Each node in the graph represents a random pose of the agent that it has visually observed; so, more nodes implies more visual experience. 
        Column 2: Trajectories followed by the agent in the workspace to swat the object. Figures (b, d, f) are the paths computed using the graphs of (a, c, e) respectively. It can be seen that the motion gets smoother with more experience (i.e., more nodes in the graph).
    }
    \label{fig:swatting-improves}
  \end{center}
\end{figure}

\clearpage

\begin{figure}[h!]
 \begin{center}
    \begin{subfigure}[b]{0.48\columnwidth}
        \includegraphics[width=\columnwidth]{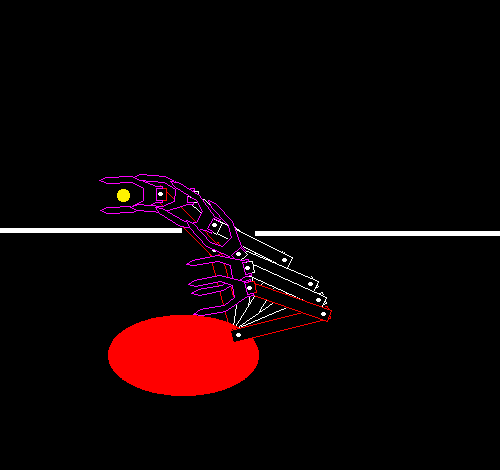}
        \caption{}
    \end{subfigure}
    \begin{subfigure}[b]{0.48\columnwidth}
        \includegraphics[width=\columnwidth]{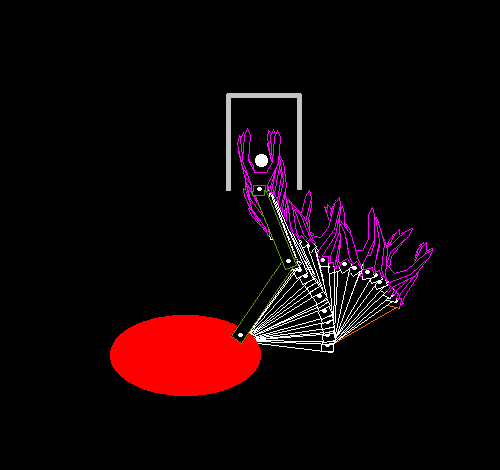}
        \caption{}
    \end{subfigure}
    \caption{
        \textit{Obstacle avoidance and object reaching}: (a) Path followed to reach an object thorough a small gap. (b) Path followed to reach an object inside a box. These paths were computed on the graphs embedded on the full-arm image manifold.
    } 
    \label{fig:3dof_path_w_obs}
  \end{center}
\end{figure}

\section{Growing Body}
\label{sec:growing}
So far we have considered the body dimensions to be fixed.  But how does the knowledge learned with a smaller body size translate to the situation where the body has grown larger?  We now consider such changes in the body schema that was learned earlier.  We note that since the body growth is fairly gradual, it should be possible to incrementally update the body schema at regular intervals, as the agent gains more and more experience with its own body and with its environment.

\begin{figure}[ht!]
\centering
    \begin{subfigure}[b]{0.32\columnwidth}
        \includegraphics[width=\columnwidth]{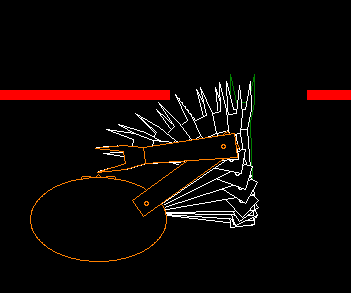}
        \caption{}
    \end{subfigure}
    \begin{subfigure}[b]{0.32\columnwidth}
        \includegraphics[width=\columnwidth]{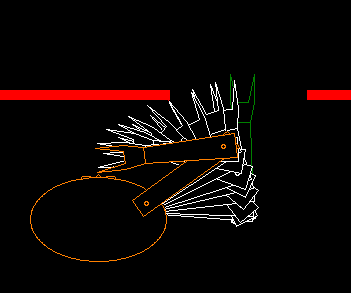}
        \caption{}
    \end{subfigure}
    \begin{subfigure}[b]{0.32\columnwidth}
        \includegraphics[width=\columnwidth]{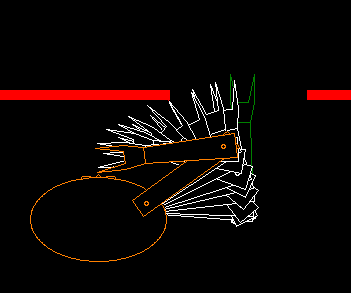}
        \caption{}
    \end{subfigure}
    
    \begin{subfigure}[b]{0.32\columnwidth}
        \includegraphics[width=\columnwidth]{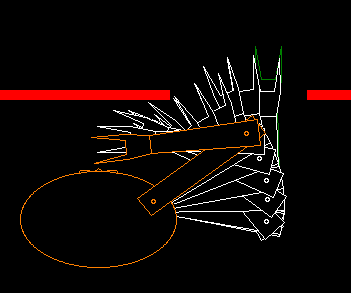}
        \caption{}
    \end{subfigure}
    \begin{subfigure}[b]{0.32\columnwidth}
        \includegraphics[width=\columnwidth]{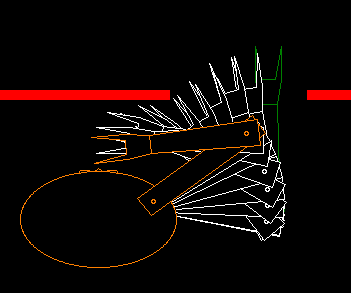}
        \caption{}
    \end{subfigure}
    \begin{subfigure}[b]{0.32\columnwidth}
        \includegraphics[width=\columnwidth]{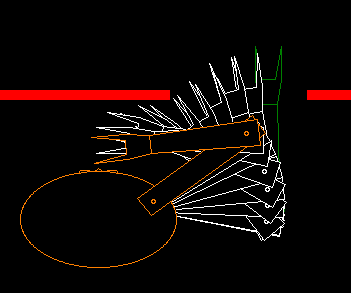}
        \caption{}
    \end{subfigure}
    
    \begin{subfigure}[b]{0.24\columnwidth}
        \includegraphics[width=\columnwidth]{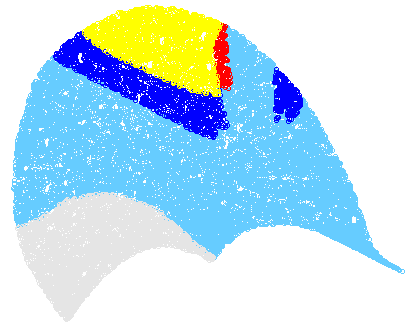}
        \caption{}
    \end{subfigure}
    \begin{subfigure}[b]{0.20\columnwidth}
        \includegraphics[width=\columnwidth]{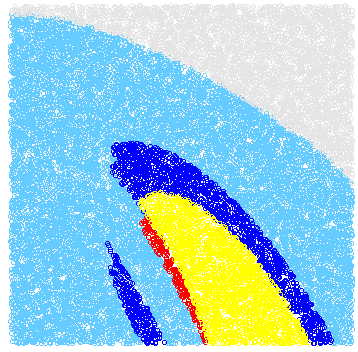}
        \caption{}
    \end{subfigure}
    \begin{subfigure}[b]{0.24\columnwidth}
        \includegraphics[width=\columnwidth]{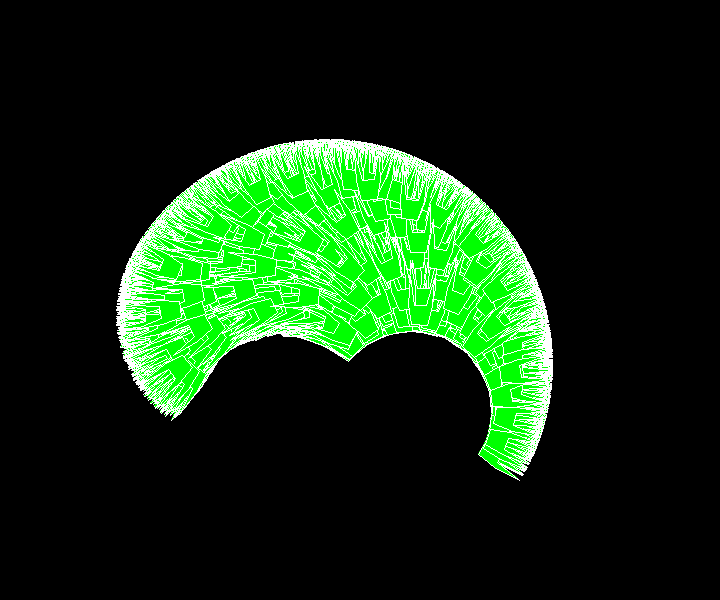}
        \caption{}
    \end{subfigure}
    \begin{subfigure}[b]{0.24\columnwidth}
        \includegraphics[width=\columnwidth]{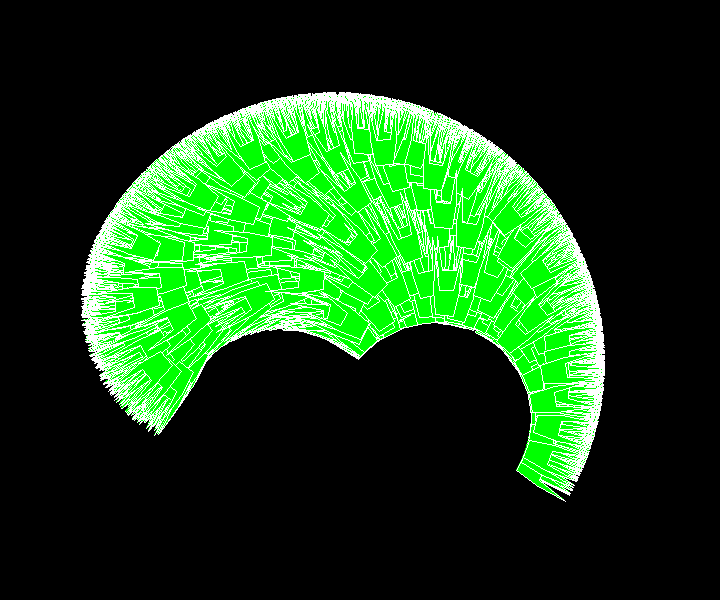}%
        \caption{}
    \end{subfigure}

    \caption{
        \textit{Growing robot}:
        Row 1: Path between a pair of poses of a robot in its infancy (time $t_1$) using  (a) joint angle distance (b) track distance on full arm images (c) track distance on just the gripper images. Track distance between two configurations is computed as the $L_2$ distance between the vectors obtained by concatenating all the corner point coordinates of each link. Here the arm is moving from a random pose to reach the mouth area through a window (gap in the red bar) in its workspace.
        Row 2: Same paths for the same robot after some growth (time $t_2$).
        Row 3: (g, h) Obstacle region marked on the manifold and in the angle space: yellow - common obstacle area at the two ages; red - obstacle for just the infant robot; blue - obstacle for just the bigger robot. (i, j) Gripper workspace at infancy ($t_1$) and after some growth ($t_2$).
    }
    \label{fig:growing-body}
\end{figure}

We consider the task of finding a path such that the agent’s hand would reach the mouth area from a random pose. We use the body schema learned at an earlier age, $t_1$ (Figure~\ref{fig:growing-body}, row 1), and consider using it to execute such a path with a grown body at a later age  $t_2$ (Figure~\ref{fig:growing-body}, row 2). Note that we do not reconstruct the body schema (roadmap graph) for the dimensions of  $t_2$; we use the earlier roadmap and measure how effective it is by looking at the collisions it may cause. From $t_1$ to $t_2$,  arm width, arm length, and body size were grown non-uniformly, by factors of 1.1, 1.2, and 1.15 respectively, to reflect the greater growth in limb lengths.

We investigated the adaptation to larger changes in body dimensions as follows. We considered the agent at three different ages: $t_1, t_2 \text{ and } t_3$. The same growth ratios were used from $t_2$ to $t_3$, as from $t_1$ to $t_2$. Based on the body schema (roadmap graph) at age $t_1$, we obtain the roadmaps $G', G'', \text{ and } G'''$ for ages $t_1, t2 \text{ and } t3$ by removing nodes where the images overlap with the obstacles shown (fig.~\ref{fig:growing-body} rows 1,2). In simulation we looked at the number of bad edges (edges between safe nodes, but lead to collision with obstacles when followed) in $G', G'' \text{ and } G'''$ and noticed that $G' \text{ and } G''$ did not have any bad edges while around 25\% of the edges in $G'''$ were bad. This suggests that the body schema computed at age $t_1$ could be used without much trouble at age $t_2$ , but needs to be updated before $t_3$. Such updates involve updating the local neighbourhood graph to account for collisions; this can be done based on the bad edges (unexpected, and therefore informative) that lead to collisions. 

\section{Conclusion}
\label{sec:conclusion}
In this chapter, we considered the problem of limb motion, and attempted to construct a visuomotor map that constitutes an implicit model of the organism's ``personal space''. The approach presented improves on earlier models by constructing a map directly from the image space onto actuator parameters, but even more significantly, the manifold provides a mechanism for feed-forward (open loop) modelling of actuator parameters from visual poses and also for imagining the arm pose for a putative actuator configuration. Such a sensorimotor map then constitutes a tight coupling of the image, proprioception, and motor actuation spaces.

Once such a manifold has been constructed, it is possible to imbue some parts of it with other relations (e.g. the part of the body that is the “mouth”, “eyes”, “nose”, ‘right”, “left” etc.). Another capability demonstrated is the ability to mark stable objects in the workspace as occupying a given part of the manifold. For example, if the limb is being used to type, then letters on a keyboard that are constantly being pressed in the same pose would map to those parts of each manifold, and would also be associated with the appropriate tactile feedback. This clearly suggests how such a model can make a start towards symbolic representations that are embedded on this personal space.

Such a map also generates a very compact representation in terms of a set of $d$ parameters (equal to the number of degrees of freedom). Thus, this representation constitutes a generalized coordinate --- any assignment of values to it specifies a unique pose of the arm. However, the process by which these coordinates are obtained is unstable, and hence these may not be as useful. But, they could still be there perhaps as an implicit or ``subconscious'' aspect of our computational infant. 

Also, the model is constantly being updated --- though this is computationally problematic with manifold learning approaches like Isomap --- the tangent-space charts can easily accommodate changes and new data. Thus, as the organism grows, or it picks up a stick, the relations between the poses can change subtly. 

One question that we have not addressed here, but one that is of interest, involves redundancy in most biological limbs. Surprisingly, in general, this is not a problem for this approach, since given a set of full-arm images and joint angles, it is still possible to create the image manifold and also the joint manifold. If given a target pose in terms of the full-arm image, everything above would still work. However, for a pose given only in terms of the hand pose, there are clearly multiple solutions.

An important ramification of this process would be that such a representation, after continued daily use, may lead to many kinds of generalization leading to an internal model for space itself, as hinted at by many others, e.g.~\citep{thelen-00_motor-development-foundation-for-dev-psych}.  A measure of distance can now be constructed in terms of the change required to reach other poses.  By generalizing over similar experiences, one may form notions of space such as direction, dimensionality, curvature, a hierarchical scale structure, and many other aspects relevant to space based on the action-perception pairings. By identifying the configurations that reach various parts of the workspace, the system is also constructing a model for space itself. This is a powerful argument and a possibility that such a computational model can be used to demonstrate. 

What we have presented here is just an initial step. The basic idea of discovering patterns from the lower-dimensional mapping of visual images is actually more general, and can also be used for locomotion; as one moves, the images change in a certain co-varying manner and can be learned in a similar manner. We hope that this initial work will open up these and many other questions which can be addressed using these tools. 
\chapter{Head Motion Animation on Motion Manifolds}
\label{chap:head_motion}
Animation plays a key role in adding realism to any graphical environment. In this chapter, we present a technique for animating avatars in a virtual environment designed to support remote collaboration between distributed work teams in which users are represented by avatars. There will be long periods of time when the user is not actively controlling the avatar and working on his official task. We need to animate his avatar with a `working-at-desk' animation that should be non-looping and sufficiently random for a single avatar as well as between multiple avatars to appear realistic. We present a technique for generating multiple head motions using the gaze space images to control the avatar motion. Our technique can automatically generate long sequences of motion without any user intervention. We present results from synthetic data.

\section{Introduction}
Motion image sequences with $m \times m$ images and $t$ frames may be thought of as points in an $m^{2t}$ dimensional space. Animation is difficult, time-consuming and laborious because it involves a search in this extremely high-dimensional space.

However, only an infinitesimally small set of points in the high-dimensional animation space are interesting motions that graphics systems need to care about. In recent years, this insight has led to a number of approaches based on dimensionality reduction for specifying animations of complex characters.

For example, \citep{Popovic:2007} decomposes the action space into a set of bases which can then be used to compose the very high dimensional motions of a stick figure changing gaits and avoiding obstacles. \citep{Kovar:Moextract} provides a way to generate new motions by blending clips from a space of motions parameterized by a continuous parameter. A survey touching on these approaches may be found in \citep{Moeslund:2006}. We address this body of work in more detail in section \ref{sec:head_motion:prevwork} below.

In this chapter, we present an approach motivated by mapping the motion in a low-dimensional manifold. The two key innovations of this work are that (a) the motion is specified in terms of the avatar's gaze, defined in terms of objects in the environment (e.g. ``look from the screen to the mouse"), and (b) although the computation inherently assumes the presence of a low-dimensional manifold for the motion, we are agnostic to the actual low-dimensional parameters, which are never computed.

Consider the following problem. A number of users log in every day to a virtual environment designed to support collaboration between distributed work teams. An avatar at a virtual workstation represents a user who could be physically at an office, home or other location. Avatars are animated depending on the activity the user is engaged in, for example, text chat, navigation or a virtual meeting. However, there could be long periods of time when a user is working alone. To depict this, the user's avatar needs to be animated with a non-repetitive and sufficiently random `working-at-desk' animation. Further, to appear realistic and natural, each avatar should be animated with a unique motion. Thus we need to generate multiple long sequences of animation in which an avatar could be reading a document, typing, thinking and so on. 

In order to create meaningful head motions for the required animation, we use the gaze space of the avatar in the virtual environment. Gaze space images are created as first-person camera views when the avatar moves only his head. Here we use the term \textit{gaze} to indicate the direction the head is looking at; we do not actively consider eyeball motion. Figure~\ref{fig:avatar_example_images} shows some example images of an avatar and the corresponding images in the gaze space. 

\begin{figure}[t]
    \centering
    \includegraphics[width=0.118\textwidth]{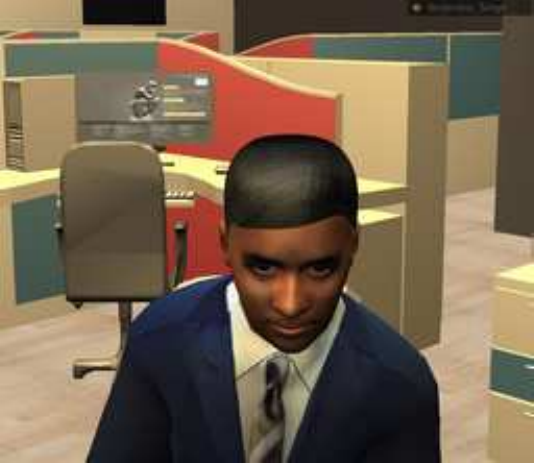}
    \includegraphics[width=0.118\textwidth]{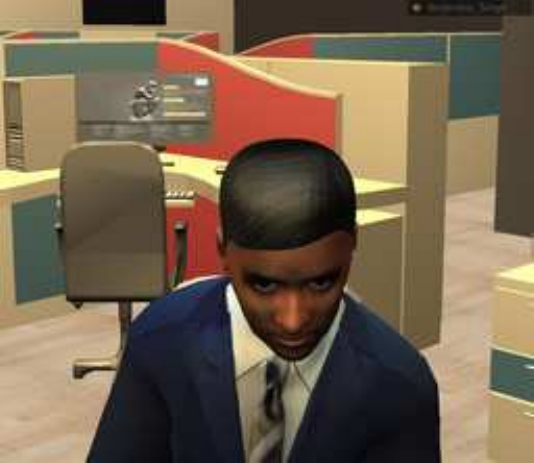}
    \includegraphics[width=0.118\textwidth]{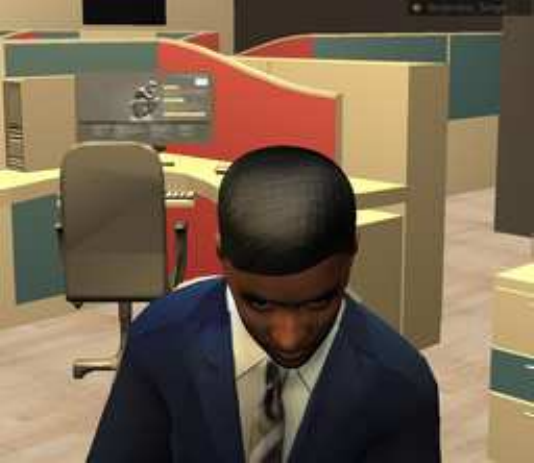}
    \includegraphics[width=0.118\textwidth]{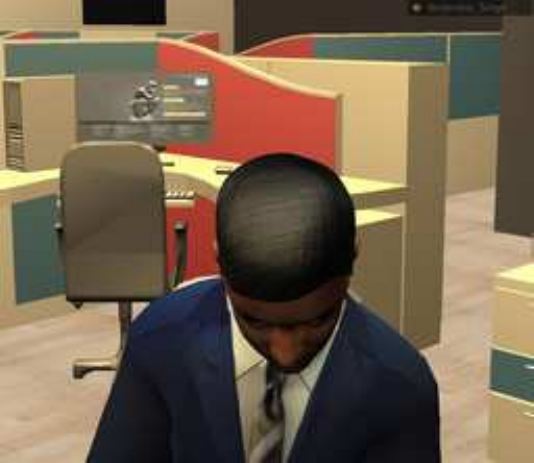}
    \includegraphics[width=0.118\textwidth]{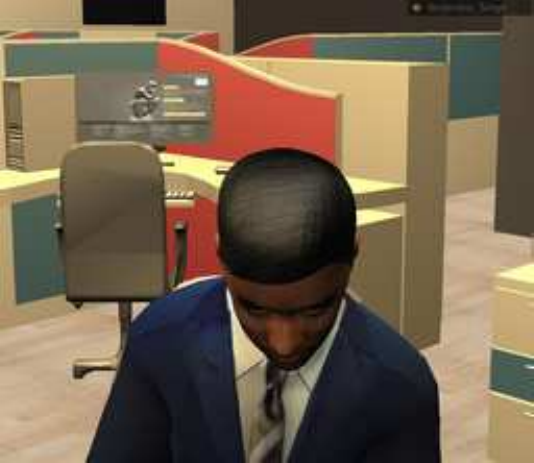}
    \includegraphics[width=0.118\textwidth]{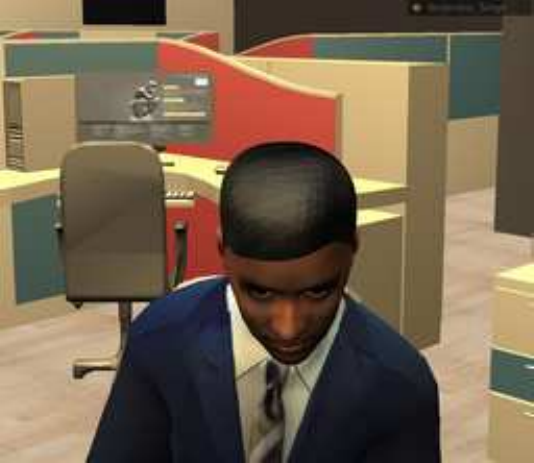}
    \includegraphics[width=0.118\textwidth]{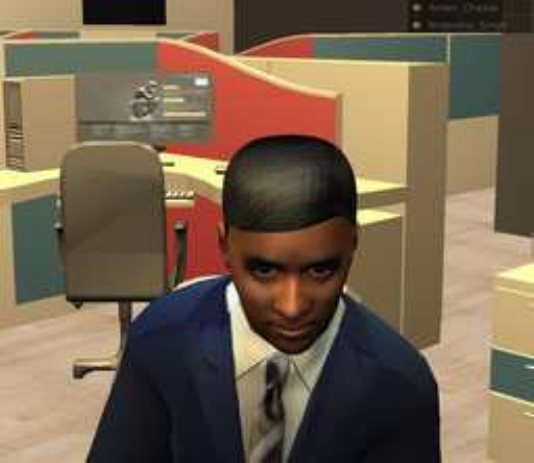}
    \includegraphics[width=0.118\textwidth]{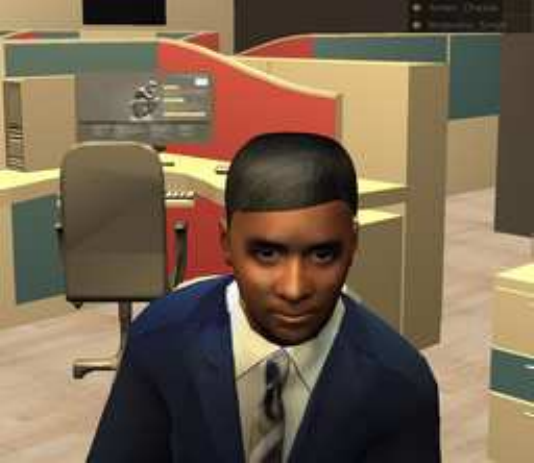}
    \\
    \includegraphics[width=0.118\textwidth]{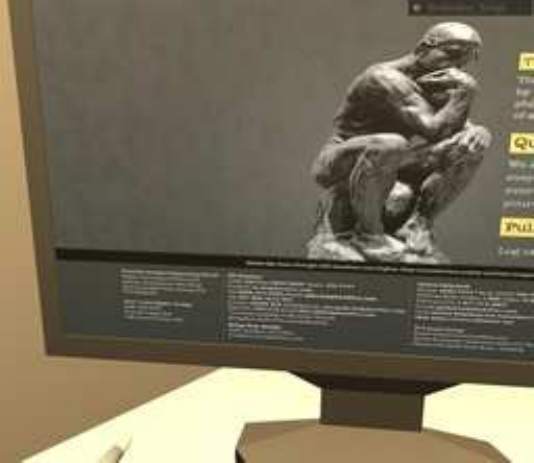}
    \includegraphics[width=0.118\textwidth]{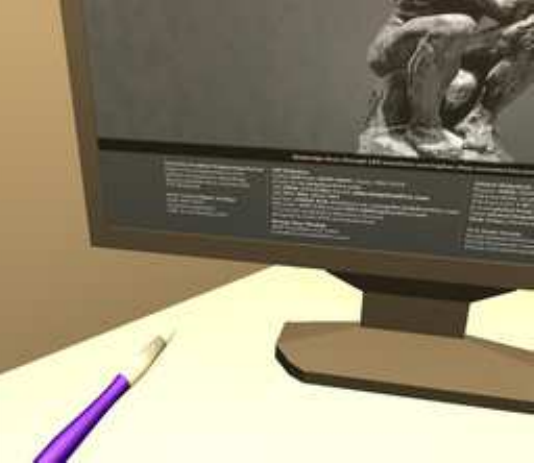}
    \includegraphics[width=0.118\textwidth]{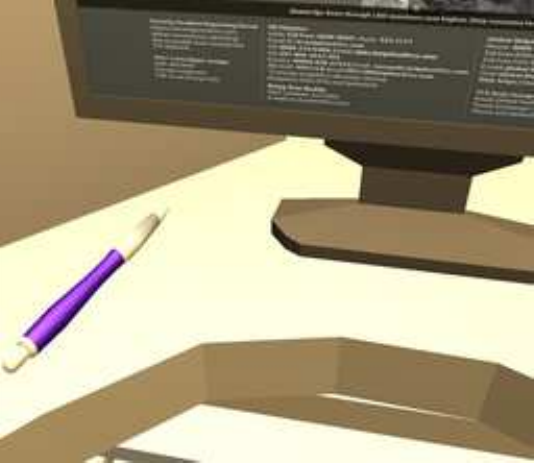}
    \includegraphics[width=0.118\textwidth]{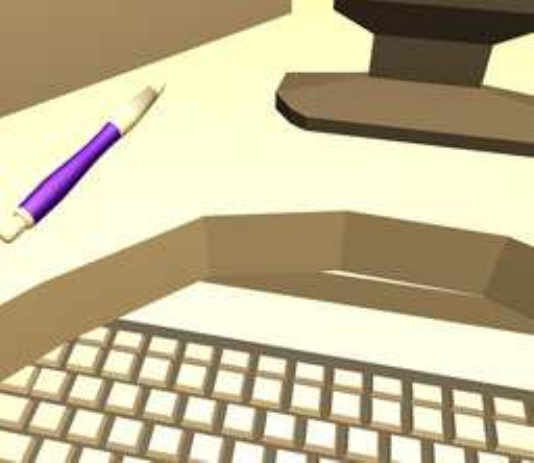}
    \includegraphics[width=0.118\textwidth]{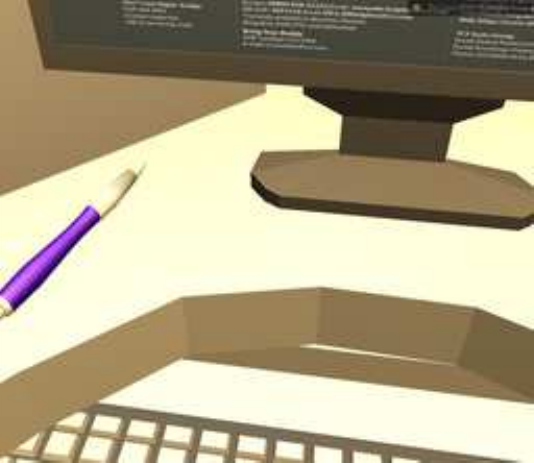}
    \includegraphics[width=0.118\textwidth]{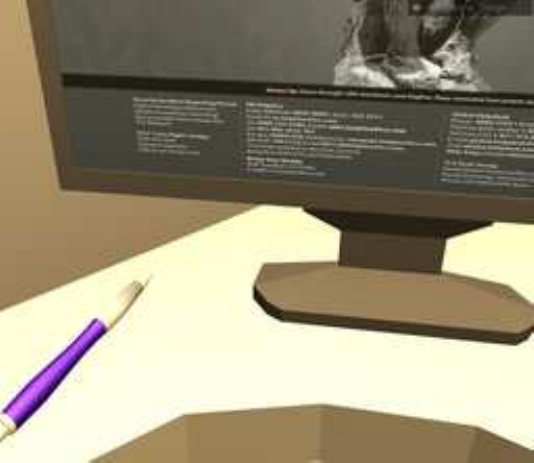}
    \includegraphics[width=0.118\textwidth]{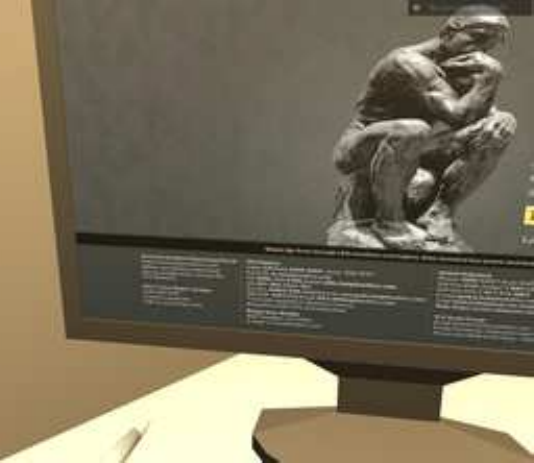}
    \includegraphics[width=0.118\textwidth]{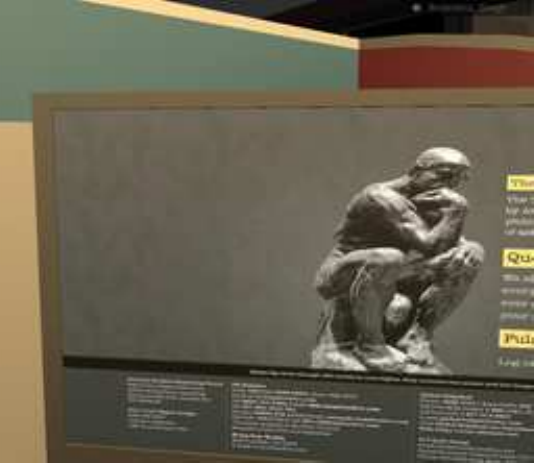}
    \caption{Some example images of an avatar (first row) and the corresponding images in the gaze space (second row).}
    \label{fig:avatar_example_images}
\end{figure}

We identify clusters of images in the gaze space based on the object that can be seen in them such as a screen, keyboard, coffee cup, etc. Each cluster is assigned a probability of occurrence based on its size. For example, screen images are expected to be largest in number as the user would be looking at his computer's screen most of the time. Long head animation sequences can then be generated automatically by choosing different source and destination images from various clusters based on their assigned probabilities. 

The rest of the chapter is organized as follows. We discuss previous work in section~\ref{sec:head_motion:prevwork}. Our technique is described in section~\ref{sec:head_motion:approach} followed by some results in section~\ref{sec:head_motion:results} and finally we conclude in section~\ref{sec:head_motion:conclusion}.

\section{Previous Work}
\label{sec:head_motion:prevwork}
Many techniques for blending motions have been proposed. Motion graphs \citep{kovar:Mographs} provide a mechanism to blend multiple motion capture sequences in different ways to create new animations. However, motion capture is not only expensive but also not suitable for the kind of motion sequences we need, as capturing motions with props such as chairs, tables and computers is hard. Further, the same captured motion will play for all avatars even if the order of blending is varied.   \citep{Kovar:Moextract} allows users to specify a motion parameterization function, such as the position of the hand, speed or curvature of walk, and then searches and blends from a database of motion capture clips. The manual effort required to create new motions makes it hard to use this technique for our requirement of multiple long sequences.

Graph techniques for interactively controlling characters in video games and virtual environments have been proposed in \citep{Shin:Fatgraphs}, \citep{McCann:2007:Characters} and \citep{Lee:2004}. While these blend short motion fragments instead of long sequences, the basic assumption is that the character is controlled by a user at runtime whereas we need the animation to run when the user is not controlling the avatar.

The `working-at-desk' animation is similar to an idle motion in the requirements of non-repetitiveness and uniqueness. \citep{Thalmann:2004} uses motion capture data of humans standing in different postures. Principal component representation of each key frame is extracted and variations in pose are created by Perlin noise functions. Finally, key frames are blended together to create the animation. This approach automatically generates animations which are non-repetitive but are personalized for the person whose motion was captured. Also, as stated earlier, motion capture is difficult with props.

\section{Our Approach}
\label{sec:head_motion:approach}
In this section, we describe our approach towards generating \emph{realistic} random head motions of avatars. Starting from a set of images of the avatar in various head poses, we first construct a $k$-nearest neighbours ($k$-NN) graph as described in section~\ref{sec:head_motion:knn_graph_construction}. Using this $k$-NN graph, we can animate the avatar from one head pose to another head pose as described in section~\ref{sec:head_motion:animating_head_poses}. In order to generate \emph{meaningful} head motions, we identify some semantic categories in the input images and tag some of the input images with these categories as described in section~\ref{sec:head_motion:adding_semantics}. To make the animations look \emph{realistic}, we then learn a probability distribution over the set of semantic categories as described in section~\ref{sec:head_motion:generating_realistic_motions}.

\subsection{Constructing k-Nearest Neighbours Graph}
\label{sec:head_motion:knn_graph_construction}
Here, we describe a method to build a representation that will be used later to generate animations. The input to this method is a set of images of the avatar in various head poses. Considering each image to be a grid of pixel intensities of say $r$ rows, $c$ columns and 3 channels (RGB), we will treat each $r \times c$ image as a point in $\mathbb{R}^{3rc}$ (i.e., a $3*r*c$ dimensional Euclidean space) by concatenating all $3rc$ pixel values into a vector. The distance between two images is the Euclidean distance between the corresponding points in $\mathbb{R}^{3rc}$. Algorithm~\ref{alg:head_motion:constructing_knn_graph} describes the steps to construct the $k$-nearest neighbours graph, which will be used later to generate animations.

\begin{algorithm}[ht!]
    \caption{Constructing $k$-Nearest Neighbours Graph}
    \label{alg:head_motion:constructing_knn_graph}
    \begin{algorithmic}[1]
        \REQUIRE{A set $X = \{x^{(1)}, x^{(2)}, \ldots x^{(n)}\}$ of $n$ points (flattened image vectors), and a neighbourhood size parameter $k$.}
        \ENSURE{A $k$-NN graph $G$}
        
        \STATE Compute the distance matrix $D_{n \times n}$ where $D_{i,j}$ is the Euclidean distance between $x^{(i)}$ and $x^{(j)}$. 
        \STATE Using $D$, compute the $k$ nearest neighbours of each point. 
        \STATE Construct a graph $G$ with $n$ vertices $\{v^{(1)}, v^{(2)}, \ldots, v^{(n)}\}$, so that each vertex $v^{(i)}$ corresponds to an input image $x^{(i)}$. Add an edge between two vertices $v^{(i)}$ and $v^{(j)}$ if $x^{(i)}$ is one of the k-nearest neighbours of $x^{(j)}$ or vice versa. Label the edge $(v^{(i)}, v^{(j)})$ with $D_{i, j}$. 
        \STATE Output $G$.
    \end{algorithmic}
\end{algorithm}

Constructing the $k$-NN graph is a one-time process for a given set of images.

\subsection{Animating between Head Poses}
\label{sec:head_motion:animating_head_poses}
In the nearest neighbours graph $G$, each vertex corresponds to an input image. Once such a graph is constructed, in order to generate an animation from one head pose to another head pose of the input image set, we compute the shortest path between the corresponding vertices in $G$ and animate the images corresponding to the vertices on the computed path. 

Figure~\ref{fig:isomap_2d_embedding} shows the 2D embedding of the images of the avatar shown in Figure~\ref{fig:avatar_example_images}, using the Isomap algorithm \citep{tenenbaum2000global}. The 2D embedding is shown in the form of a $k$-NN graph computed using the distances between image pairs in the avatar image set. Even though the actual 2D embedding is never computed explicitly in our method, we show it here for the purposes of visualization. Also shown is an example animation from a source pose to a destination pose in the form of a path in the $k$-NN graph between the corresponding source and destination vertices.

\begin{figure}[ht!]
    \centering
    \includegraphics[width=\textwidth]{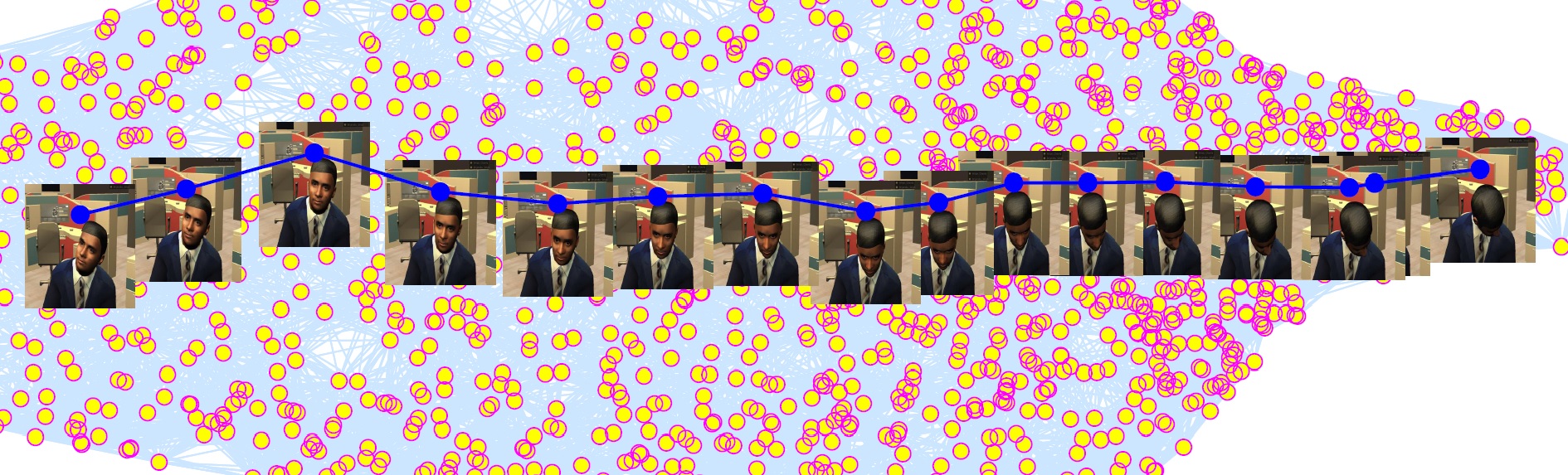}
    \caption{
        Head animation sequence shown as a path on a part of the 2-dimensional embedding of the avatar image manifold computed by Isomap.
    }
    \label{fig:isomap_2d_embedding}
\end{figure}

 Figure~\ref{fig:two_paths_face} shows two animations of the avatar between two different pairs of poses. It can be seen that near the intersection of paths images on the two paths look similar because the Isomap embedding preserves distances.

\begin{figure}[ht!]
    \centering
    \includegraphics[width=\textwidth]{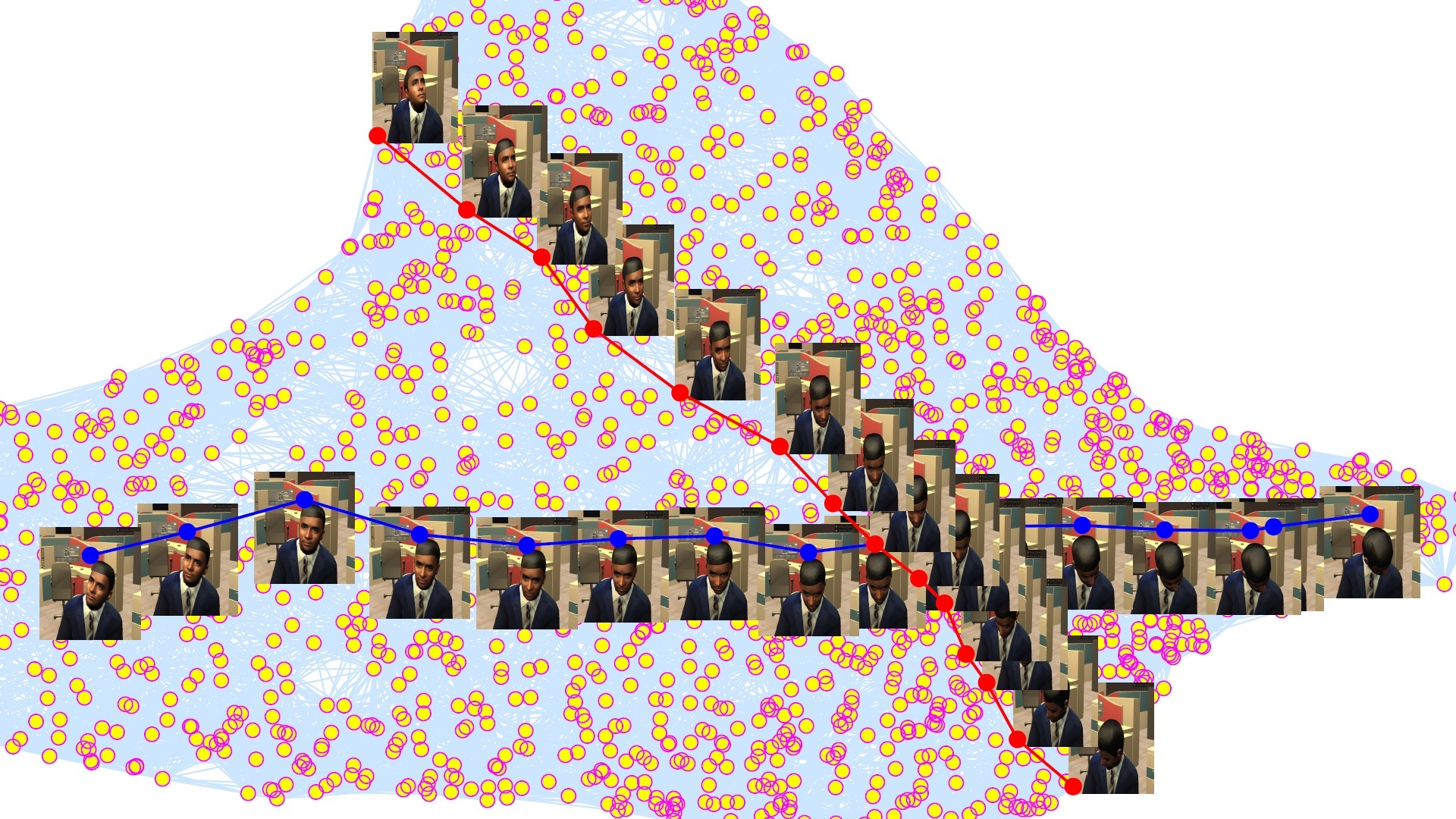}
    \caption{
        Head animation sequences between two different pairs of source and destination poses, shown as intersecting paths on a part of the 2-dimensional embedding of the avatar image manifold computed by Isomap. We can observe similar faces near the intersection of paths, indicating that near-by points on the manifold correspond to similar poses.
    }
    \label{fig:two_paths_face}
\end{figure}

\subsection{Adding Semantics to Images}
\label{sec:head_motion:adding_semantics}
 In order to be able to associate some meaning with motions, we wish to specify the source and destination poses of an animation in terms of what the avatar/person in the image is looking at. For example, in the case of an avatar sitting in front of a computer desk in a virtual office, we could identify some objects like \textit{keyboard}, \textit{screen}, \textit{mouse}, etc. These objects act as semantic categories to indicate what the avatar is looking at in a given image. We use these objects to specify the source and destination poses for an animation. Table~\ref{tab:exemplars} shows an example list of semantic categories along with some exemplars for each category for an avatar in a virtual office. Here exemplars are given in terms of the image file names from the input data set.

\begin{table*}[htp]
    \centering
    \caption{An example list of semantic categories along with some exemplars for each category}
    \begin{tabular}{|c|l|} \hline
        \label{tab:exemplars}
        \thead{Semantic \\ Category} & \thead{Exemplars (Image file names in the input data set)}\\ \hline
        Mouse& 0082.jpg, 0095.jpg, 0098.jpg, 0111.jpg, 0117.jpg, 0131.jpg, 0138.jpg\\ \hline
        Keyboard& 0006.jpg, 0011.jpg, 0016.jpg, 0025.jpg, 0059.jpg, 0075.jpg, 0076.jpg\\ \hline
        Screen& 0012.jpg, 0018.jpg, 0021.jpg, 0027.jpg, 0039.jpg, 0054.jpg, 0058.jpg\\ \hline
        Pen& 0206.jpg, 0215.jpg, 0192.jpg, 0220.jpg, 0221.jpg, 0244.jpg, 0251.jpg\\ \hline
        Wall& 0189.jpg, 0208.jpg, 0214.jpg, 0246.jpg, 0306.jpg, 0316.jpg, 0319.jpg\\ \hline
        Floor& 0197.jpg, 0485.jpg, 0490.jpg, 0331.jpg, 0332.jpg, 0333.jpg, 0574.jpg\\ \hline
        Space& 0009.jpg, 0557.jpg, 0562.jpg, 0565.jpg, 1255.jpg, 1303.jpg, 1320.jpg\\ \hline
    \end{tabular}
\end{table*}

Once such a list of semantic categories along with some exemplars is available, we can specify the source and destination of an animation in terms of these categories. For example, we can have the avatar animate from a pose in which it is looking at the \textit{screen} to a pose in which it is looking at the \textit{keyboard}. Given a pair of source and destination objects, we can randomly choose one of the images from the exemplars of the source object and one from the exemplars of the destination object and generate an animation between these two images. We can extend this ability to generate animations for an arbitrarily long sequence of objects. For example, we can have the avatar animated according to the following sequence: \textit{screen-screen-keyboard-screen-mouse-screen-screen-keyboard-screen}.

\subsection{Generating Realistic Head Motions}
\label{sec:head_motion:generating_realistic_motions}

In this section, we address the problem of generating head motions of avatars which are close to the movements of a real person. Towards this end, we learn a probability distribution, over the set of semantic categories, that tells how often a person/avatar looks at each of the objects in the environment. Algorithm~\ref{alg:learn_object_probabilities} describes the steps involved in learning such a probability distribution.

\begin{algorithm}[ht!]
    \caption{Learning Probabilities of Semantic Categories}
    \label{alg:learn_object_probabilities}
    \begin{algorithmic}[1]
        \REQUIRE{
            A set $X$ of $n$ images, a set $S$ of semantic categories along with some exemplars under each category. Let the set of all exemplars under all semantic categories be $E$.
        }
        \ENSURE{
            A probability distribution $P$ over $S$.
        }
        
        \FOR{each image $x \in X$}
            \IF{$x \notin E$}
                \STATE Assign to $x$ the category which is most frequent among its $k$ nearest neighbours in $E$.
            \ENDIF    
        \ENDFOR
        \FOR{each semantic category $s \in S$}
            \STATE $P(s) = $ (\textit{Number of images categorized as s}) / $n$
        \ENDFOR
        \STATE Output $P$.
    \end{algorithmic}
\end{algorithm}

\section{Results}
\label{sec:head_motion:results}

We applied the algorithms described in section~\ref{sec:head_motion:approach} on a set of images of an avatar sitting near a computer desk in a virtual environment. We have used a set of around 4500 images of the avatar and the same number of images in the gaze space. The semantic categories listed in Table~\ref{tab:exemplars} along with the avatar images were used to learn the probability distribution shown in Table~\ref{tab:probabilities} using Algorithm~\ref{alg:learn_object_probabilities}. This probability distribution was used to generate random sequences of objects using which animations can be generated as described in section~\ref{sec:head_motion:generating_realistic_motions}.

    \begin{table*}[ht!]
        \centering
        \caption{A probability distribution over the semantic categories.}
        \begin{tabular}{|c|l|} \hline
            \label{tab:probabilities}
            \textbf{Semantic Category} & \textbf{Probability of the Category}\\ \hline
                Mouse & 0.033011     \\ \hline
                Keyboard & 0.204710  \\ \hline
                Screen & 0.454308    \\ \hline
                Pen & 0.076288       \\ \hline
                Wall & 0.189815      \\ \hline
                Floor & 0.038245     \\ \hline
                Space & 0.003623     \\ \hline
        \end{tabular}
    \end{table*}

Figure~\ref{fig:screen_keyboard_screen} shows the result of animating the avatar from a pose in which it is looking at the screen to a pose in which it is looking at the keyboard and then back to looking at the screen. As the head of the avatar moves from one pose to another, what it would look at also changes. The corresponding changes in gaze space are shown in the figure immediately below each avatar head pose in the motion sequence.

\begin{figure}[ht!]
\centering

    \includegraphics[width=0.6in]{images/chap6/out3/000.pdf}
    \includegraphics[width=0.6in]{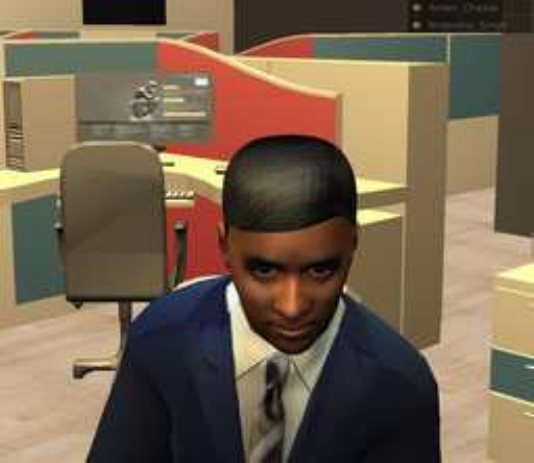}
    \includegraphics[width=0.6in]{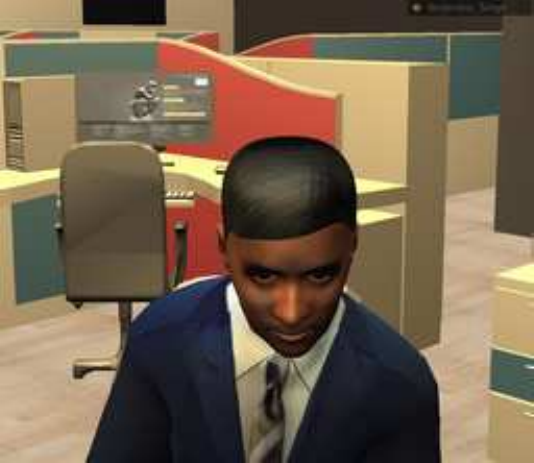}
    \includegraphics[width=0.6in]{images/chap6/out3/003.pdf}
    \includegraphics[width=0.6in]{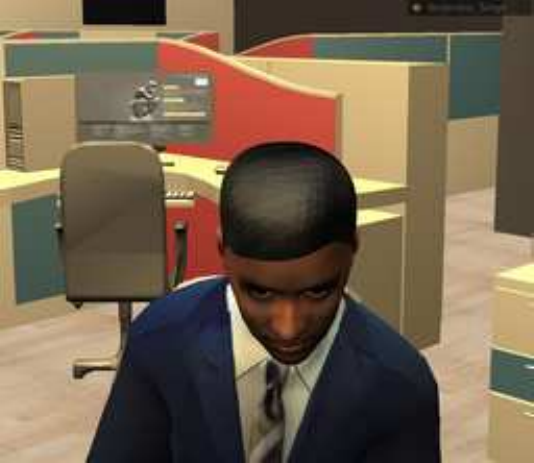}
    \includegraphics[width=0.6in]{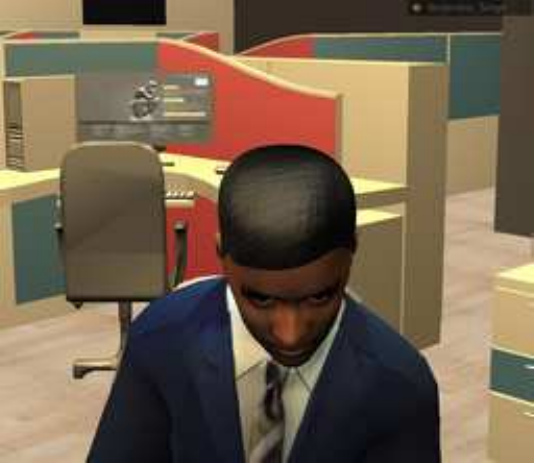}
    \includegraphics[width=0.6in]{images/chap6/out3/006.pdf}
    \includegraphics[width=0.6in]{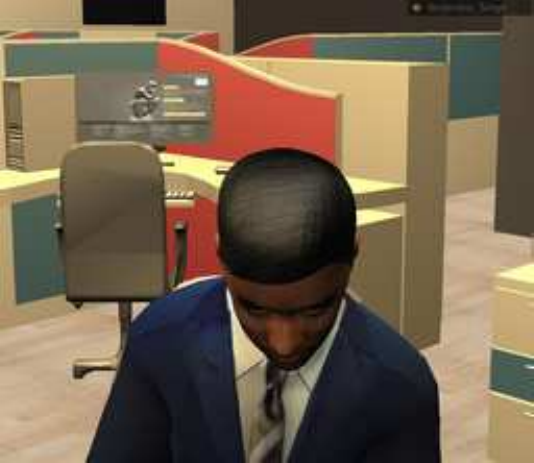}
    \\
    \includegraphics[width=0.6in]{images/chap6/out1/000.pdf}
    \includegraphics[width=0.6in]{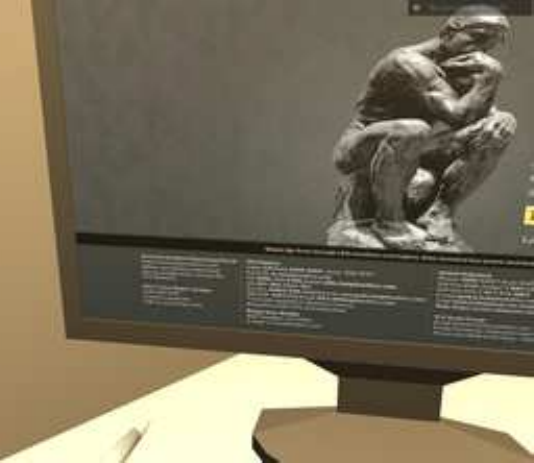}
    \includegraphics[width=0.6in]{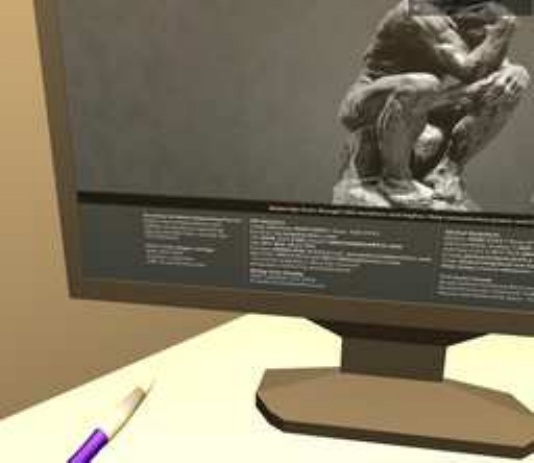}
    \includegraphics[width=0.6in]{images/chap6/out1/003.pdf}
    \includegraphics[width=0.6in]{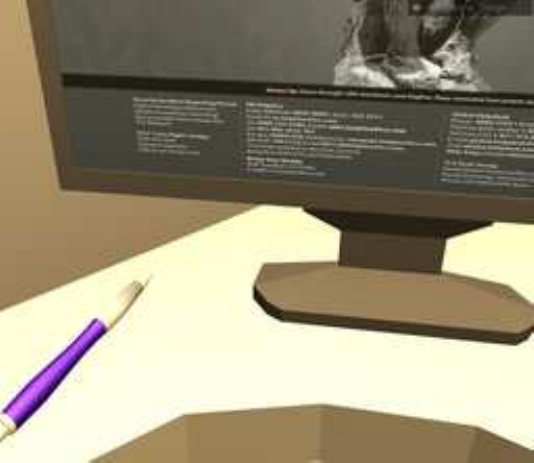}
    \includegraphics[width=0.6in]{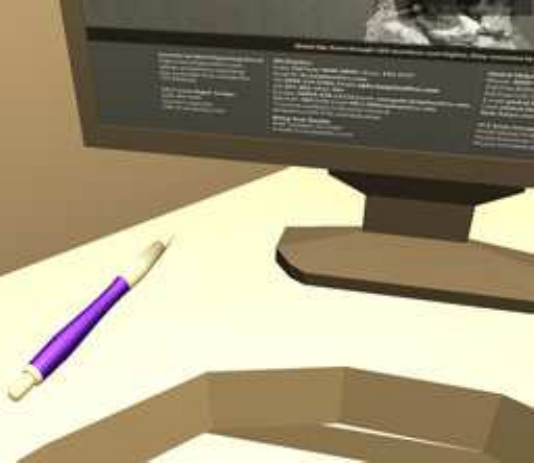}
    \includegraphics[width=0.6in]{images/chap6/out1/006.pdf}
    \includegraphics[width=0.6in]{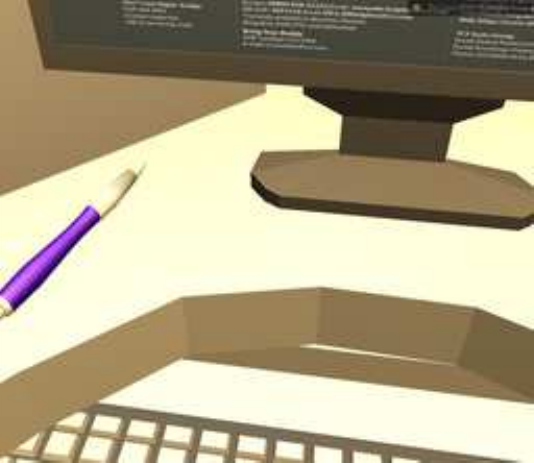}
    \\[2ex]    
    \includegraphics[width=0.6in]{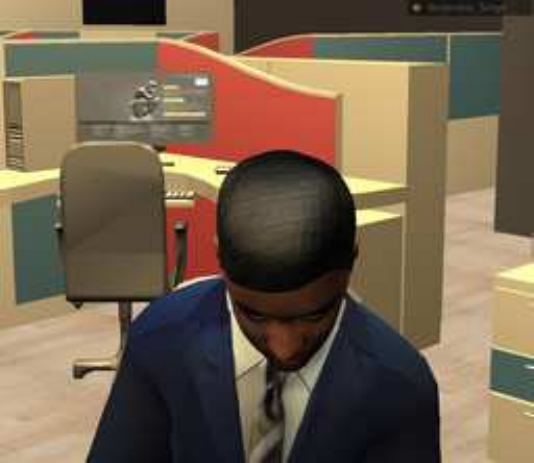}
    \includegraphics[width=0.6in]{images/chap6/out3/009.pdf}
    \includegraphics[width=0.6in]{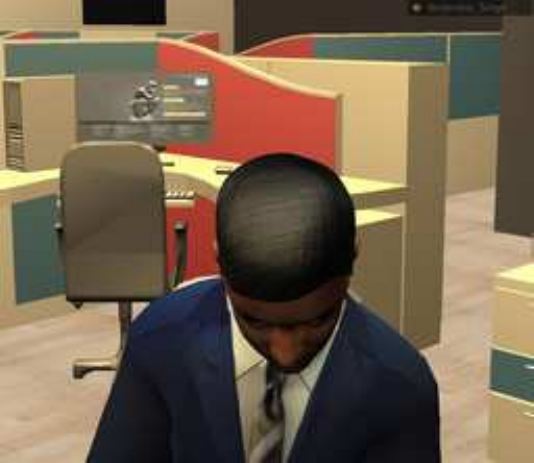}
    \includegraphics[width=0.6in]{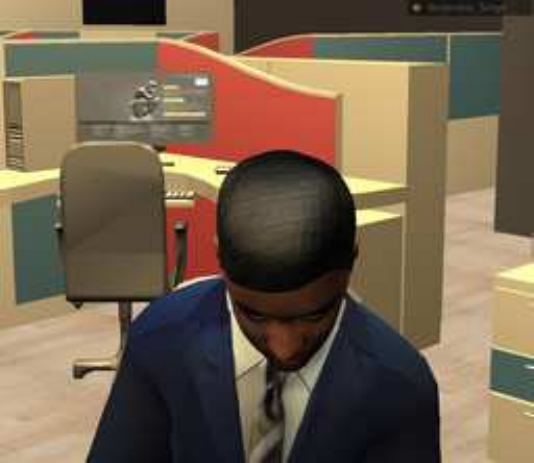}
    \includegraphics[width=0.6in]{images/chap6/out3/012.pdf}
    \includegraphics[width=0.6in]{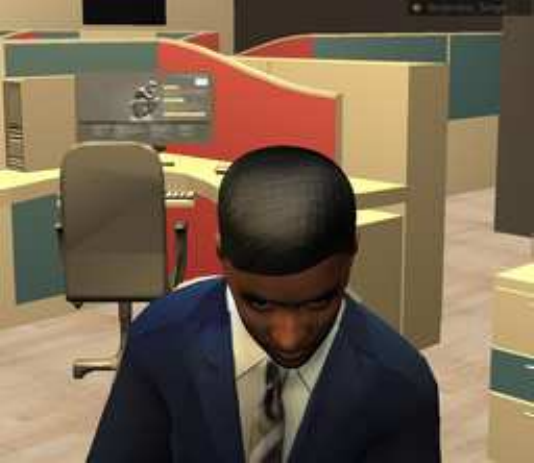}
    \includegraphics[width=0.6in]{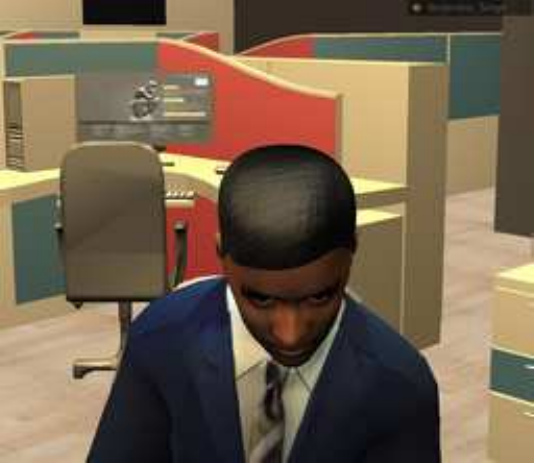}
    \includegraphics[width=0.6in]{images/chap6/out3/015.pdf}
    \\
    \includegraphics[width=0.6in]{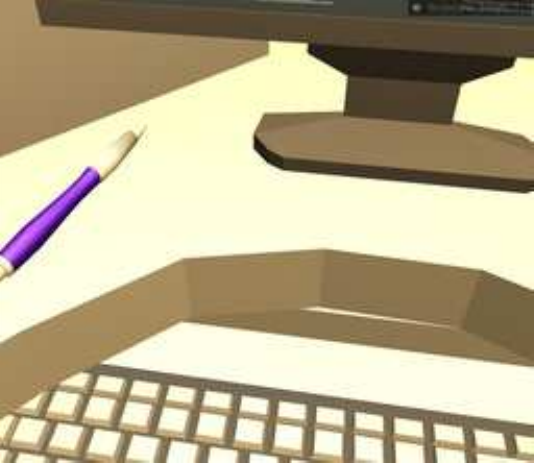}
    \includegraphics[width=0.6in]{images/chap6/out1/009.pdf}
    \includegraphics[width=0.6in]{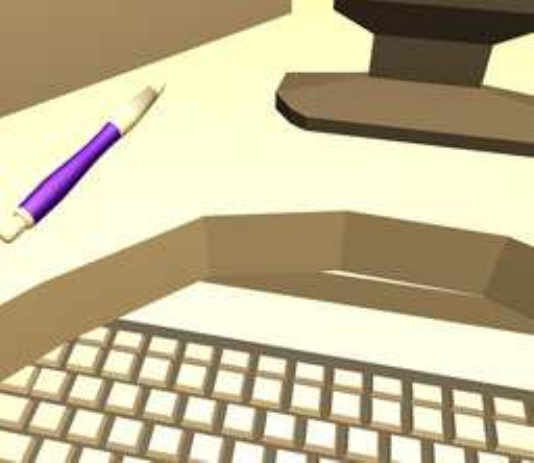}
    \includegraphics[width=0.6in]{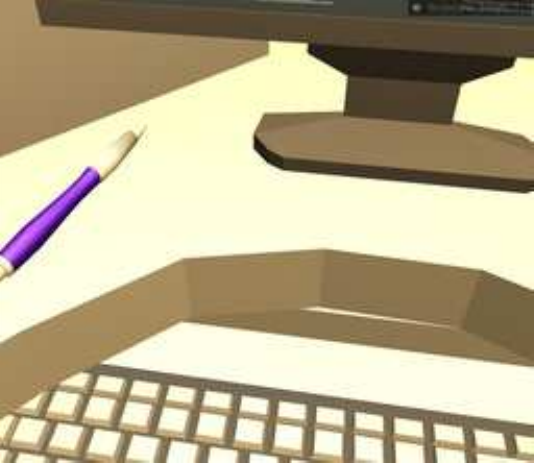}
    \includegraphics[width=0.6in]{images/chap6/out1/012.pdf}
    \includegraphics[width=0.6in]{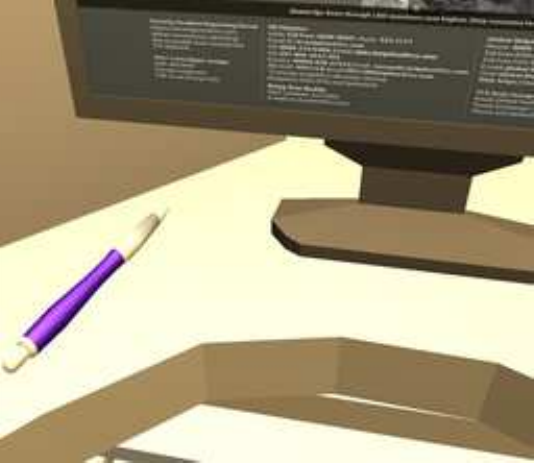}
    \includegraphics[width=0.6in]{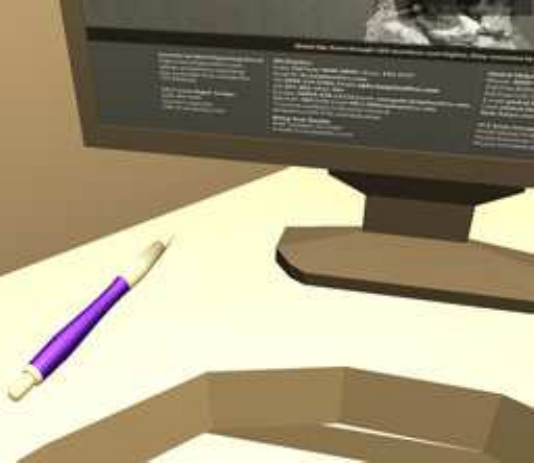}
    \includegraphics[width=0.6in]{images/chap6/out1/015.pdf}
    \\[2ex]    
    \includegraphics[width=0.6in]{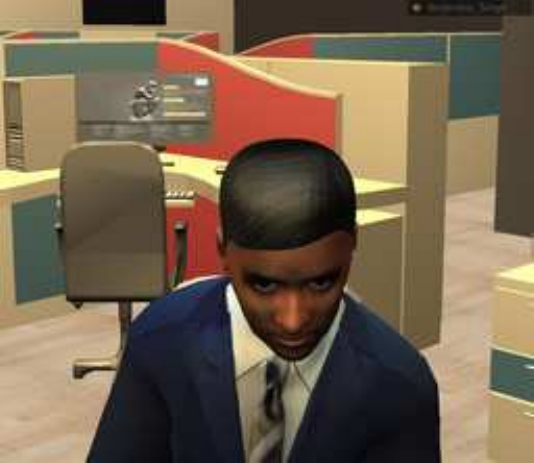}
    \includegraphics[width=0.6in]{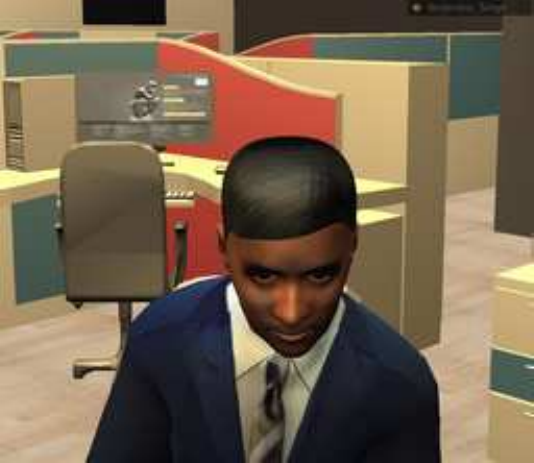}
    \includegraphics[width=0.6in]{images/chap6/out3/018.pdf}
    \includegraphics[width=0.6in]{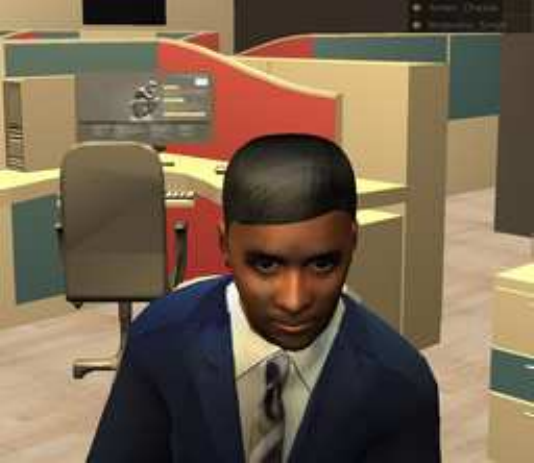}
    \includegraphics[width=0.6in]{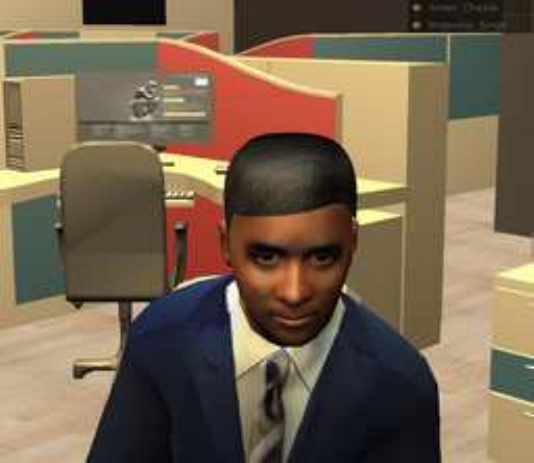}
    \includegraphics[width=0.6in]{images/chap6/out3/021.pdf}
    \includegraphics[width=0.6in]{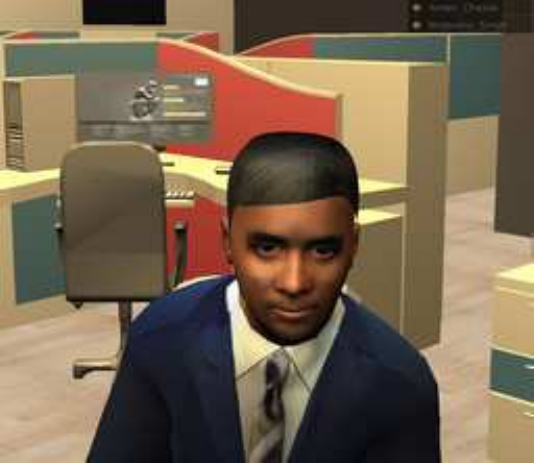}
    \includegraphics[width=0.6in]{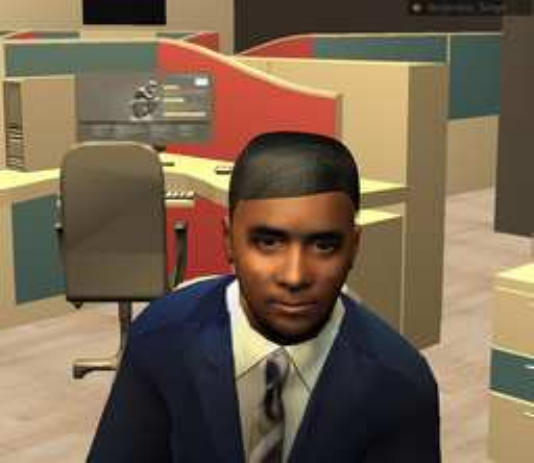}
    \\
    \includegraphics[width=0.6in]{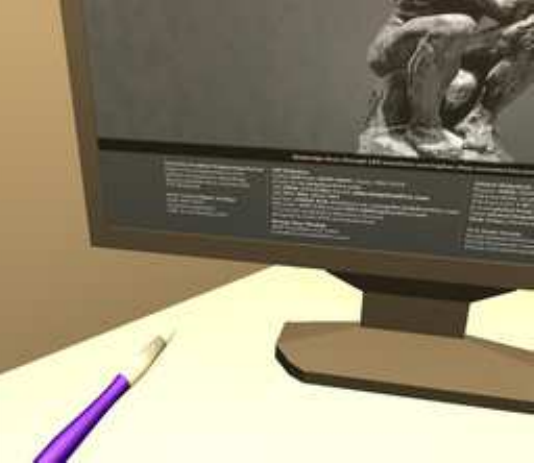}
    \includegraphics[width=0.6in]{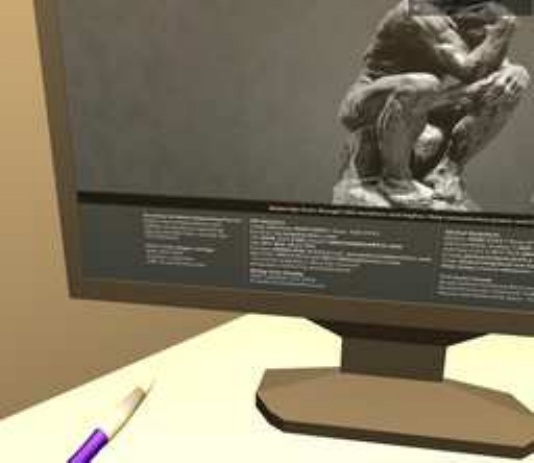}
    \includegraphics[width=0.6in]{images/chap6/out1/018.pdf}
    \includegraphics[width=0.6in]{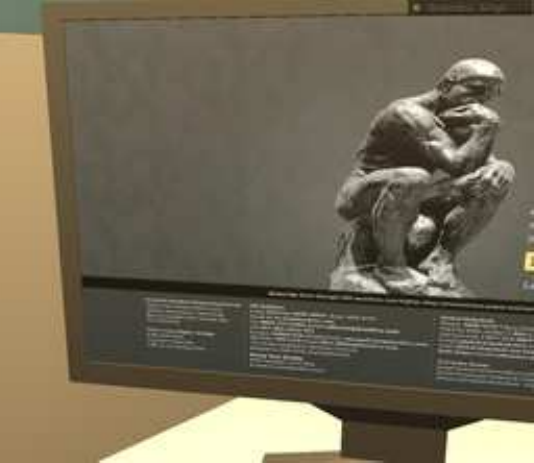}
    \includegraphics[width=0.6in]{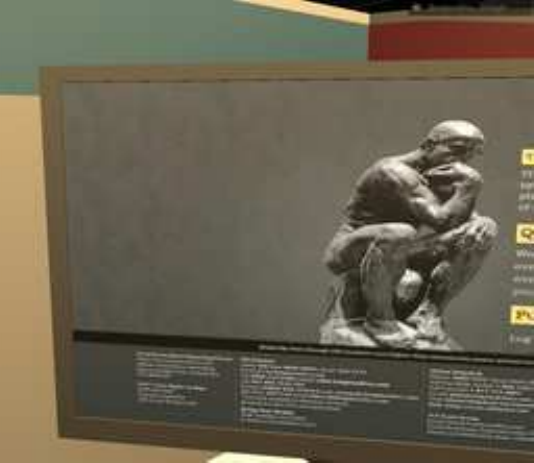}
    \includegraphics[width=0.6in]{images/chap6/out1/021.pdf}
    \includegraphics[width=0.6in]{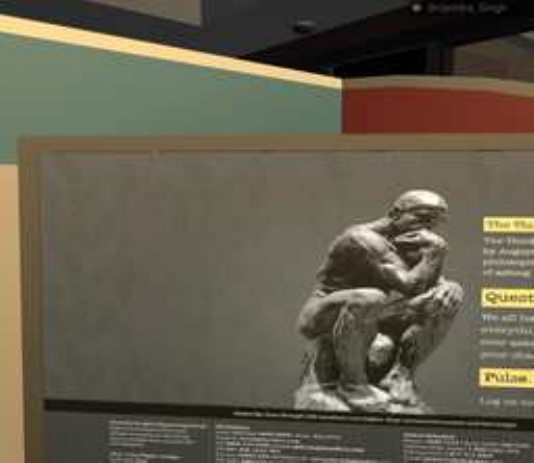}
    \includegraphics[width=0.6in]{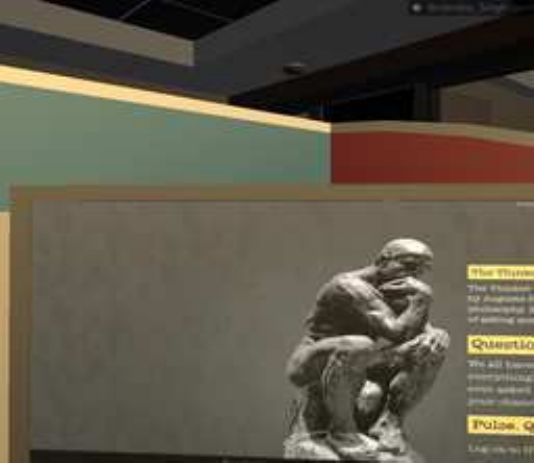}

    \caption{
        Sequence of images in a \textit{screen-keyboard-screen} animation of the avatar along with the changes in the gaze space. Animation frames proceed from left-to-right and top-to-bottom. There are 24 frames in this motion. Each pair of consecutive rows, starting from the first row, shows 8 frames from avatar's head animation and the corresponding 8 frames from the gaze space, respectively.
    }
    \label{fig:screen_keyboard_screen}
\end{figure}

 Similarly, we can generate arbitrarily long sequences of objects of gaze and generate animations through these object sequences using the methods described before.

\section{Conclusion}
\label{sec:head_motion:conclusion}
We have described a method towards generating \textit{realistic} random motion based on a set of images of the agent to be animated, and applied it on a set of images of an avatar sitting near a computer desk in a virtual environment. With a fairly simple algorithm, we were able to see some good results. These results can be further improved by using better models for motion probabilities. For example, in this work, we have only considered \textit{state probabilities} of gaze; we could extend it to consider \textit{transition probabilities}, in which case along with the probabilities of looking at an object, we will also have the probabilities of visually reaching a particular object (next state), given the current object of gaze (current state). Also in our work, we have taken constant velocity between all pairs of objects. Having different velocities between different pairs of objects, based on the \textit{importance} of objects in the environment, will be preferable. We believe that these extensions will significantly improve the results.

\chapter{Discussion and Conclusion} 

\label{chap:conclusion} 

In this work, we have introduced a new approach towards the longstanding perceptual robotics problem, which subsumes the problem of body schema learning~\citep{poincare-1895-space-and-geometry,philipona-oregan-2003_perception-of-structure-unknown-sensors,hoffmann2010body}. We consider the problem of a robot infant, and attempt to discover its sensorimotor map in a manner that draws upon some aspects of human infant cognition.

Although it has been long known that there may be many kinds of generalized coordinates (GC), so far there have been few attempts in robotics to build on this intuition.  The proposed paradigm attempts to develop such a non-traditional GC, and approximates the C-space that results from it in terms of a neighbourhood graph on a set of images.  We show that under some assumptions, an implicit learning system may be able to learn a mapping from the visuomotor space onto a joint manifold which encodes the degrees of freedom available to the system as a set of \emph{visual generalized coordinates} (VGC).   This sensorimotor map or \emph{i-representation} is an alternative, consistent, symbolic space and is as compact as the traditional \emph{e-representations}. We established some simple theoretical results for why such a system should work, and validated our approach with several demonstrations, in all of which we are able to learn an $i$-representation and execute motions in the space without invoking any kind of knowledge about the robot geometry, its kinematics, its dimensions, or the geometry of the obstacle. 

Unlike in methods used in robotics today, the \emph{visual configuration space} (VCS) approach eliminates several expensive aspects of robot modelling and planning.  First, it does not require a human expert to create models for robot geometry or kinematics.  It does not require precise obstacle shapes and poses, and does not require to calibrate the cameras so that this can be done.  There is no need for a precise simulator to test which poses collide with obstacles and which do not. The main benefit of this approach is that it discovers the motor self-structure and is able to construct obstacle maps on the VCS via visual overlap alone. For the purposes of obstacle avoidance, one may construct a \emph{Visual Roadmap} from the local neighbourhoods on the manifold.  Given an obstacle, putative collision poses (the images where the robot overlaps the obstacle) can be removed, and motion planning performed on the remaining free space. Even the local planner step, based on tracking image points to nearby images, results in a more principled approach than is available presently. 

This representation now allows the identification of objects in the workspace via visual overlap.  If the object is to be reached by a given part of the robot, poses for this can be identified by overlapping the obstacle image with a series of robot images.  

Another advantage is for environments that are changing rapidly, e.g. in interaction with humans or other robots. New obstacles are updated in $O(n)$ time, but small motions by another agent require $O(m)$, where there are $m$ nodes near the obstacle boundary. Additional obstacles or moving obstacles can be handled with incremental computation. 

The idea of generalized coordinates originated in Lagrangian dynamics, and here is another direction that needs to be pursued. Differentiating the GC would result in generalized velocities and accelerations and this may give rise to a \emph{visual dynamics}, when the same principles are applied on visual generalized coordinates. 
However, there are some significant trade-offs.  First, the approach is not \emph{complete} because the obstacle approximation is conservative, and there may exist paths which it cannot find. We observe that humans also face similar constraints where vision is less informative.  Secondly, it is applicable only to those situations where the entire C-Space is visible. Another constraint is the \emph{visual distinguishability assumption}, but this may not be very serious in practice.  As presented here, the algorithm requires that all robot pose images be stored, which can be done more efficiently via standard image compression techniques such as run length encoding. 

The approach presented is only a beginning for discovering generalized coordinates from sensorimotor data. One of the key future steps would be to fuse modalities other than vision into a joint manifold.  Thus, if we were to construct a fused visuomotor manifold, then even if poses that are separated in motion space look similar, they would remain distinguishable.  Similarly, touch stimuli could be modelled to predict the result of motions or in preparation for fine-motor tasks. Such a process would also make the model more robust against noise arising in any single modality.  On the whole, while the ideas presented seem promising, and open up many possibilities, much work remains to deploy VGC fully in theory and in practice. 

An important ramification of this process would be that such a representation, after repeated application in diverse situations, may lead to a generalization which may be considered to be an internal representation of space itself. Such a role for sensorimotor development has been suggested by many, for example, \citep{thelen-00_motor-development-foundation-for-dev-psych}.  As an example,  given a base pose for the robot, distant parts of the space are to be reached with a greater change in the manifold parameters (generalized coordinates).  Two locations may be close if they can be reached with similar configurations.  By generalizing over a large set of such experiences, one may acquire spatial concepts, such as its dimensionality, a hierarchical scale structure, and many other aspects based on the action-perception pairings.  By identifying the configurations that reach various parts of the workspace, the system is also constructing a model for space itself. 

What we have presented here is just an initial step. The basic idea of discovering patterns from the lower-dimensional mapping of visual images is actually more general, and can also be used for learning other regularities, as in learning the laws of Physics, or for handling self-motions of the eyes, based on the image space alone.  These and many other matters related to this approach remain to be explored. 

\appendix
\chapter{Metric Spaces and Topology}
\label{app:topology}

Recall that $\mathbb{R}$ is the real line and $\mathbb{R}^2$ is the Euclidean plane.

\begin{definition}
A nonempty set $A \subseteq \mathbb{R}$ is an \emph{open set} in $\mathbb{R}$ if $\forall x \in A, \exists r > 0 : (x-r, x+r) \subseteq A$.
\end{definition}

\begin{example}
\label{ex:metric_open_set}
Following are some examples of open sets in $\mathbb{R}$.
\begin{enumerate}
    \item {
        The empty set $\phi$ is open in $\mathbb{R}$, vacuously.
    }
    \item {
        $\mathbb{R}$ is open in $\mathbb{R}$. For every $x \in \mathbb{R}$, we can take any $r > 0$ to have $(x-r, x+r) \subseteq \mathbb{R}$.
    }
    \item {
        \label{ex:open_intervals_open_sets}
        All open intervals are open sets in $\mathbb{R}$.  \\
        Let $A = (a, b), x \in A$ and let $r_1 = x-a, r_2 = b-x$. If we choose $r = \min(r_1, r_2)/2$, then $(x-r, x+r) \subseteq (a, b)$. Such an $r$ can be chosen for all $x \in (a, b)$, hence $A = (a, b)$ is open in $\mathbb{R}$. See figure~\ref{fig:open_set_R}.
        \begin{figure}[ht]
            \centering
            \includegraphics[page=2,clip,trim=0cm 8cm 0cm 9cm,width=0.9\textwidth]{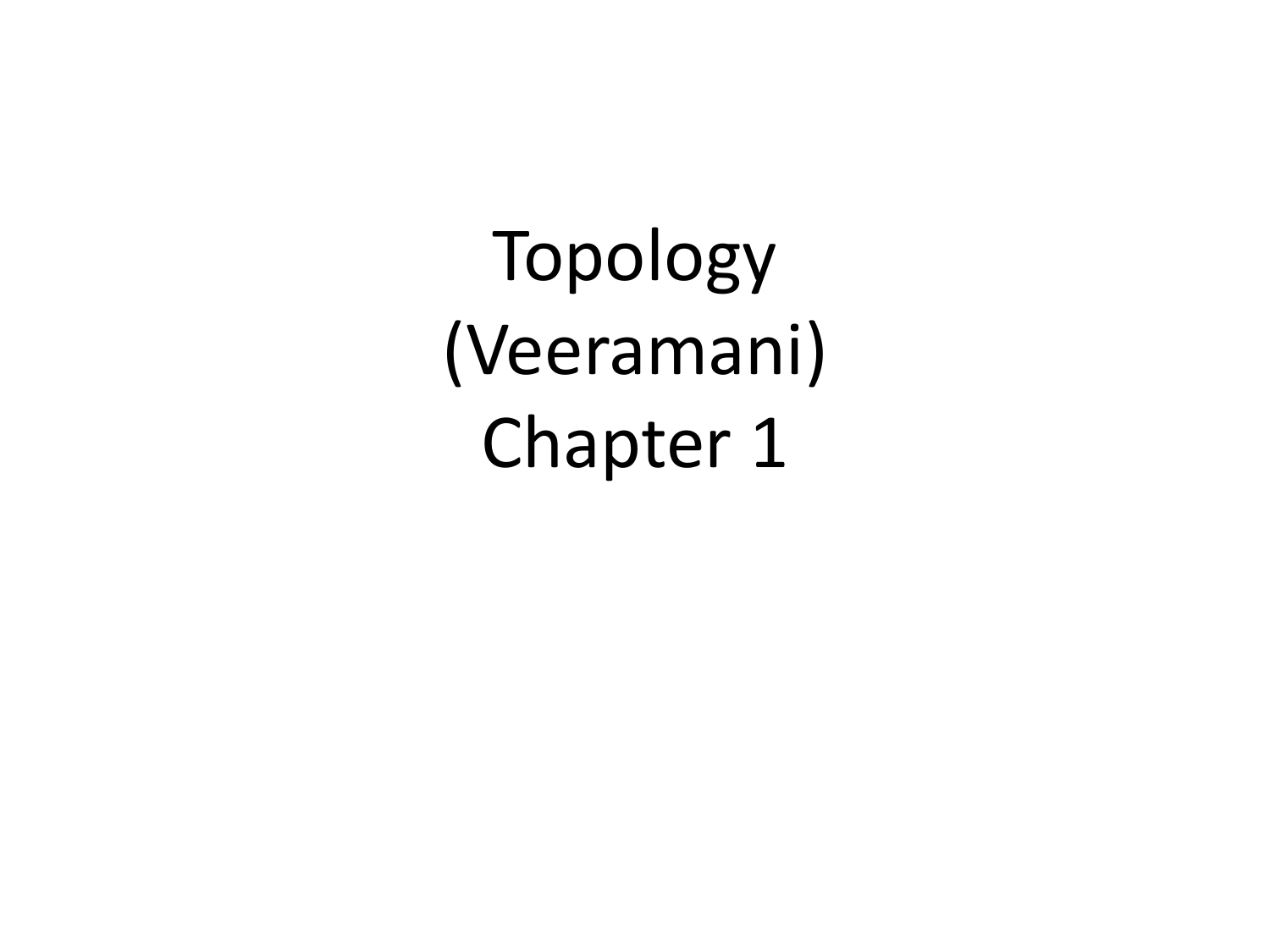}
            \caption{Open interval $(a, b)$ is an open set in $\mathbb{R}$}
            \label{fig:open_set_R}
        \end{figure}
    }
    \item {
        Unions of open intervals are open sets in $\mathbb{R}$. \\
        Let $\{A_i, A_2, \ldots, A_n\}$ be a collection of open intervals in $\mathbb{R}$ and $A = \bigcup\limits_{i=1}^n A_i$. Then for every $x \in A, \exists i: x \in A_i$ and an $r > 0$ can be found such that $(x-r, x+r) \subseteq A_i$, as was done in the previous example.
    }
    \item {
        Arbitrary union of open sets is open in $\mathbb{R}$. 
    }
    \item {
        Finite intersection of open sets is open in $\mathbb{R}$. \\
        Let $A_1, A_2$ be open sets in $\mathbb{R}$. If $x \in A_1 \cap A_2$, then $x \in A_1$ and $x \in A_2$ and since $A_1$, $A_2$ are open, $\exists r > 0 : (x-r, x+r) \subseteq A_1 \land (x-r, x+r) \subseteq  A_2$, which implies that $(x-r, x+r) \subseteq A_1 \cap A_2$ and hence $A_1 \cap A_2$ is open.
    }
\end{enumerate}
\end{example}

\begin{definition}
\label{def:topology}
Let $X$ be a set. A collection $\Tau \subseteq \mathcal{P}(X)$ is a \emph{topology} on $X$, if
\begin{enumerate}[label=(\roman*)]
    \item $\phi \in \Tau$ and $X \in \Tau$;
    \item $A_1, A_2, ... A_n \in \Tau \implies \bigcap\limits_{i=1}^n A_i \in \Tau$;
    \item $ \{A_\alpha\}_{\alpha \in I} \subseteq \Tau \implies \bigcup\limits_{\alpha \in I} A_\alpha \in \Tau$.
\end{enumerate}
Elements of $\Tau$ are, by definition, called \emph{open sets} of $\Tau$, and $(X, \Tau)$ is called a \emph{topological space}.
\end{definition}

\begin{example}
Following are some examples and non-examples of topologies.
\begin{enumerate}
    \item {
        For any set $X$, $\{\phi, X\}$ is a topology called the \emph{trivial topology} and $\mathcal{P}(X)$ is a topology called the \emph{discrete topology}.
    }
    \item {
        Let $X$ be an infinite set. Define a collection $\Tau \subseteq \mathcal{P}(X)$ as follows: 
        \[
            A \in \Tau \iff A = \phi \; \lor \; X \setminus A \text{ is finite }.
        \]
        Such a $\Tau$ is a topology on $X$ and is called the \emph{co-finite topology} of $X$.
    }
    \item {
        Unions of open intervals are the open sets defining a topology on $\mathbb{R}$, called the \emph{usual topology} or the \emph{Euclidean topology}.
    }
\end{enumerate}
\end{example}

Now, we will see the concept of a \emph{basis} of a topology $\Tau$, which is a small subset of $\Tau$  that can be used to generate all the open sets of $\Tau$.

\begin{example}
\label{ex:basis_R}
Consider the real line $\mathbb{R}$ along with the usual topology, $\Tau_\mathbb{R}$. Let $\mathcal{B} = \{(a, b) : a,b \in \mathbb{R} \land a < b\}$, be the set of all open intervals $(a, b)$ in $\mathbb{R}$, with $a < b$. We make the following observations about $\mathcal{B}$:
\begin{enumerate}[label=(\roman*)]
    \item {
        For every $x \in \mathbb{R}, \exists r > 0 : (x-r, x+r) \in \mathcal{B}$ (take r = 1, for example); i.e., $\forall x \in \mathbb{R}, \exists B \in \mathcal{B}: x \in B$.
    }
    \item {
        If $B_1, B_2 \in \mathcal{B}$ and $x \in B_1 \cap B_2$, then $\exists B_3 \in \mathcal{B}$ s.t. $x \in B_3 \subseteq B_1 \cap B_2$. See figure~\ref{fig:basis_Tau_R} for a proof.
        
        \begin{figure}[ht]
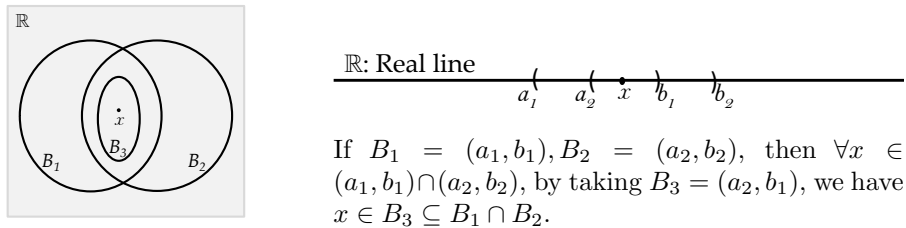

            \centering
            \captionsetup[subfigure]{labelformat=empty}
            \begin{subfigure}{.25\textwidth}
                \centering
                \includegraphics[page=3, clip, trim=0cm 6cm 16cm 5cm, width=\textwidth]{images/appendix/figures_appendix.pdf}
            \end{subfigure}
            \qquad
            \begin{subfigure}{0.5\textwidth}
                \centering
                \includegraphics[page=3, clip, trim=9.5cm 9cm 2cm 8cm, width=\textwidth]{images/appendix/figures_appendix.pdf} 
                \caption{If $B_1 = (a_1, b_1), B_2 = (a_2, b_2)$, then $\forall x \in (a_1, b_1) \cap (a_2, b_2)$, by taking $B_3 = (a_2, b_1)$, we have $x \in B_3 \subseteq B_1 \cap B_2$.}
            \end{subfigure}
            \caption{Open intervals as a basis for a topology on $\mathbb{R}$.}
            \label{fig:basis_Tau_R}
        \end{figure}
    }
\end{enumerate}
\end{example}

\begin{definition}
Let $X$ be a set. A function $d: X \times X \to \mathbb{R}$ is said to be a \emph{metric} or a \emph{distance function}, if $\forall x, y, z \in X$
\begin{enumerate}[label=(\roman*)]
    \item $d(x, y) \ge 0$
    \item $d(x, y) = d(y, x)$
    \item $d(x, y) + d(y, z) \ge d(x, z)$.
\end{enumerate}
Then $(X, d)$ is called a \emph{metric space}.
\end{definition}

\begin{example}
Following are some examples of metric spaces.
\begin{enumerate}
    \item {
        For $X = \mathbb{R}$, define $d: \mathbb{R} \times \mathbb{R} \to \mathbb{R}$ as 
        \[
            d(x, y) = |x - y|.
        \]
        Then $(\mathbb{R}, d)$ is a metric space.
    }
    \item {
        For $X = \mathbb{R}^n$, define $d_p: \mathbb{R}^n \times \mathbb{R}^n \to \mathbb{R}$ as 
        \[
            d_p(x, y) = \left(\sum\limits_{i=1}^n (x_i - y_i)^p\right)^{\frac{1}{p}};
        \]
        $d_p$ is called the $L_p$-norm metric. For $p=2$, it's called the Euclidean distance.
    }
\end{enumerate}
\end{example}

\begin{definition}
Let $(X, d)$ be a metric space. 
\begin{enumerate}
    \item {
        For any $x \in X, r > 0$, the \emph{open ball} or \emph{open neighbourhood} of $x$ of radius $r$ under $d$, is the set
        \[
            B_r(x) = \{y \in X: d(x, y) < r \}.
        \]
    }
    \item {
        A set $A \subseteq X$ is an \emph{open set} in $X$, if $\forall x \in A, \exists r > 0 : B_r(x) \subseteq A$.
    }
    \item {
        The \emph{topology $\Tau_d$ induced by $d$} is the set of all open sets in $X$.
    }
\end{enumerate}
\end{definition}

\begin{example}
\label{ex:basis_metric_space}
Let $(X, d)$ be a metric space. Let $\mathcal{B} = \{B_r(x) : x \in X, r > 0\}$ be the set of all open balls of $X$ under $d$. Then,
\begin{enumerate}[label=(\roman*)]
    \item {
        For every $x \in X, \exists r > 0 : B_r(x) \in \mathcal{B}$ (take r = 1, for example); i.e., $\forall x \in X, \exists B \in \mathcal{B}: x \in B$.
    }
    \item {
        If $B_1, B_2 \in \mathcal{B}$ and $x \in B_1 \cap B_2$, then $\exists B_3 \in \mathcal{B}$ s.t. $x \in B_3 \subseteq B_1 \cap B_2$. \\
        \begin{proof}
            Let $r$ be the distance to the closest point of $x$ in $B_1 \cap B_2$ and take $B_3 = B_r(x)$. Then $x \in B_3 \subseteq B_1 \cap B_2$.
        \end{proof}
    }
\end{enumerate}
\end{example}

\begin{definition}
Let $X$ be any set. A collection $\mathcal{B} \subseteq \mathcal{P}(X)$ is said to be a \emph{basis for a topology on $X$}, if 
\begin{enumerate}[label=(\roman*)]
    \item {
        $\forall x \in X, \exists B \in \mathcal{B} : x \in B$;
    }
    \item {
        If $B_1, B_2 \in \mathcal{B}$ and $x \in B_1 \cap B_2$, then $\exists B_3 \in \mathcal{B} : x \in B_3 \subseteq B_1 \cap B_2$. See figure~\ref{fig:basis_Tau_R} for an illustration.
    }
\end{enumerate}
Then the \emph{topology generated by $\mathcal{B}$} is:
\[
    \Tau_\mathcal{B} = \{ A \subseteq X \; | \; \forall x \in A, \exists B \in \mathcal{B} : x \in B \subseteq A \}.
\]
\end{definition}

\begin{claim*}
$\Tau_\mathcal{B}$ is indeed a topology.
\end{claim*}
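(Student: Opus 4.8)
The goal is to verify that $\Tau_\mathcal{B}$ satisfies the three axioms in Definition~\ref{def:topology}. The approach is entirely routine: unwind the definition of $\Tau_\mathcal{B}$ (a set $A$ is in $\Tau_\mathcal{B}$ iff every point of $A$ sits inside some basis element contained in $A$) and check each axiom directly, using only the two basis properties (i) and (ii) from the preceding definition.

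First I would dispose of the easy axiom. The empty set $\phi$ lies in $\Tau_\mathcal{B}$ vacuously, since there is no $x \in \phi$ for which the defining condition could fail. For $X$ itself: given any $x \in X$, basis property (i) supplies a $B \in \mathcal{B}$ with $x \in B$, and trivially $B \subseteq X$; hence $X \in \Tau_\mathcal{B}$. Next I would handle arbitrary unions. Let $\{A_\alpha\}_{\alpha \in I} \subseteq \Tau_\mathcal{B}$ and put $A = \bigcup_{\alpha \in I} A_\alpha$. For $x \in A$ pick $\alpha$ with $x \in A_\alpha$; since $A_\alpha \in \Tau_\mathcal{B}$ there is $B \in \mathcal{B}$ with $x \in B \subseteq A_\alpha \subseteq A$. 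So $A \in \Tau_\mathcal{B}$.

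The one place that actually uses the second basis axiom is the finite-intersection condition, so I would treat that last and most carefully. It suffices to prove closure under intersection of two sets and then induct. Let $A_1, A_2 \in \Tau_\mathcal{B}$ and let $x \in A_1 \cap A_2$. Since $A_1, A_2 \in \Tau_\mathcal{B}$, choose $B_1, B_2 \in \mathcal{B}$ with $x \in B_1 \subseteq A_1$ and $x \in B_2 \subseteq A_2$; then $x \in B_1 \cap B_2$, so by basis property (ii) there is $B_3 \in \mathcal{B}$ with $x \in B_3 \subseteq B_1 \cap B_2 \subseteq A_1 \cap A_2$. Hence $A_1 \cap A_2 \in \Tau_\mathcal{B}$. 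A straightforward induction on $n$ then gives $\bigcap_{i=1}^n A_i \in \Tau_\mathcal{B}$ for any $A_1,\dots,A_n \in \Tau_\mathcal{B}$, completing the verification.

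There is no real obstacle here — the proof is a direct unravelling of definitions. The only subtlety worth flagging explicitly is that the finite-intersection axiom is precisely where basis property (ii) is needed (and where it is \emph{only} needed), and that the base case of the induction must be stated for $n = 2$ with the $n = 1$ (and, if one wishes, $n = 0$ giving $X$) cases being immediate. I would also remark in passing that, by Example~\ref{ex:basis_metric_space}, the collection of open balls of a metric space satisfies these two basis axioms, so this claim retroactively justifies calling $\Tau_d$ a topology.
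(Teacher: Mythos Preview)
Your proposal is correct and follows essentially the same route as the paper's own proof: both verify the three topology axioms directly from the definition of $\Tau_\mathcal{B}$, handling $\phi$ and $X$ via basis property~(i), arbitrary unions by passing to a single $A_\alpha$, and the two-set intersection via basis property~(ii). Your explicit mention of the induction step for $n$-fold intersections and the closing remark about metric spaces are minor additions the paper leaves implicit, but the argument is otherwise identical.
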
 
\begin{proof}
We will verify the three conditions required for $\Tau_\mathcal{B}$ to be a topology.
\begin{enumerate}[label=(\roman*)]
    \item {
        The empty set $\phi$ vacuously satisfies the condition ($\forall x \in \phi, \exists B \in \mathcal{B} : x \in B$) and hence $\phi \in \Tau_\mathcal{B}$. Also, $X \in \Tau_\mathcal{B}$ trivially because of the first condition in the definition of basis.
    }
    \item {
        Suppose $\{A_\alpha\}_{\alpha \in I} \subseteq \Tau_\mathcal{B}$ and let $A = \bigcup\limits_{\alpha \in I} A_\alpha$. For every $x \in A, \exists \alpha \in I, B \in \mathcal{B}: x \in B \subseteq A_\alpha \subseteq A$ and hence $A \in \Tau_\mathcal{B}$.
    }
    \item {
        Suppose $A_1, A_2 \in \Tau_\mathcal{B}$ and let $A = A_1 \cap A_2$. For every $x \in A, x \in A_1 \land x \in A_2$ and since $A_1, A_2 \in \Tau_\mathcal{B}, \exists B_1, B_2 \in \mathcal{B}: x \in B_1 \subseteq A_1 \land x \in B_2 \subseteq A_2$, which implies, by the second condition in the definition of basis, that $\exists B_3 \in \mathcal{B} : x \in B_3 \subseteq B_1 \cap B_2 \subseteq A$, since $B_1 \subseteq A_1$ and $B_2 \subseteq A_2$. Hence $A \in \Tau_\mathcal{B}$. See figure~\ref{fig:Tau_B_topology}.
        
        \begin{figure}[ht]
            \centering
            \includegraphics[page=4,clip,trim=5cm 4cm 5cm 6.5cm,width=0.5\textwidth]{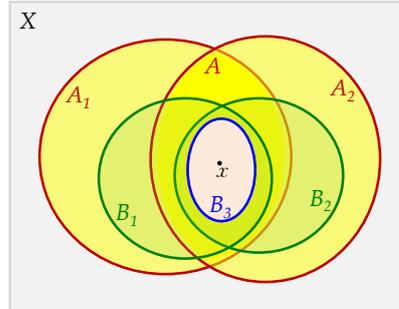}
            \caption{Illustration of existence of $B_3 \subseteq A_1 \cap A_2$.}
            \label{fig:Tau_B_topology}
        \end{figure}
        
    }
\end{enumerate}
Thus, $\Tau_\mathcal{B}$ is a topology.
\end{proof}

\begin{remark}
$\mathcal{B} \subseteq \Tau_\mathcal{B}$.
\end{remark}

\begin{example}
Following examples illustrate the notion of a basis of a topology.
\begin{enumerate}
    \item {
        For a metric space $(X, d)$, as discussed in examples~\ref{ex:basis_R} and \ref{ex:basis_metric_space}, the set of all open balls of $X$ forms a basis for the topology on $X$ whose open sets are arbitrary unions of all open balls. So, \textbf{every metric space is a topological space}.
    }
    \item {
        For any set $X, \mathcal{B} = \{ \{x\} : x \in X \}$ is a basis for the discrete topology on $X$.
    }
\end{enumerate}
\end{example}

\begin{proposition}
If $\Tau_1, \Tau_2$ are topologies generated by bases $\mathcal{B}_1, \mathcal{B}_2$ and $\mathcal{B}_1 \subseteq \mathcal{B}_2$, then $\Tau_1 \subseteq \Tau_2$.

\end{proposition}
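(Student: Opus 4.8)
The plan is to argue directly from the definition of ``the topology generated by a basis'' given earlier in the appendix: $\Tau_\mathcal{B} = \{ A \subseteq X \mid \forall x \in A, \exists B \in \mathcal{B} : x \in B \subseteq A \}$. So the whole proof is an unwinding of this membership criterion, using the hypothesis $\mathcal{B}_1 \subseteq \mathcal{B}_2$ at exactly one spot. (I will tacitly assume, as the statement intends, that $\mathcal{B}_1$ and $\mathcal{B}_2$ are bases for topologies on the same underlying set $X$; otherwise the inclusion $\mathcal{B}_1 \subseteq \mathcal{B}_2$ would not even typecheck.)

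First I would take an arbitrary $A \in \Tau_1$ and aim to show $A \in \Tau_2$. By the definition of $\Tau_1 = \Tau_{\mathcal{B}_1}$, for each point $x \in A$ there exists a basis element $B_x \in \mathcal{B}_1$ with $x \in B_x \subseteq A$. Next I would invoke the hypothesis $\mathcal{B}_1 \subseteq \mathcal{B}_2$: this same $B_x$ lies in $\mathcal{B}_2$. Hence for every $x \in A$ there exists $B_x \in \mathcal{B}_2$ with $x \in B_x \subseteq A$, which is precisely the condition for $A$ to belong to $\Tau_2 = \Tau_{\mathcal{B}_2}$. Since $A$ was an arbitrary element of $\Tau_1$, this gives $\Tau_1 \subseteq \Tau_2$.

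There is essentially no obstacle here — the statement is a routine consequence of the generated-topology definition, and the only ``content'' is recognizing that the witnessing basis element for each point carries over verbatim from $\mathcal{B}_1$ into $\mathcal{B}_2$. The one thing I would be careful to state explicitly is that the empty set and $X$ cases are already subsumed by the general argument (the condition is vacuous for $A = \phi$, and $X \in \Tau_1$ forces, for each $x$, some $B_x \in \mathcal{B}_1 \subseteq \mathcal{B}_2$ containing $x$, so $X \in \Tau_2$ as well), so no separate bookkeeping is needed. If desired, one could also remark that this shows the map $\mathcal{B} \mapsto \Tau_\mathcal{B}$ is monotone with respect to inclusion, which is the conceptual takeaway.
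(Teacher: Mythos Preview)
Your proof is correct and is exactly the standard direct argument from the definition of $\Tau_\mathcal{B}$. Note that the paper itself states this proposition without proof, so there is nothing to compare against; your write-up would serve perfectly well as the missing justification.
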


\singlespacing
\cleardoublepage 
\phantomsection
\addcontentsline{toc}{chapter}{Bibliography}
\printbibliography

@book{choset-05_robot-motion-theory,
   author = "Howie Choset and Kevin M. Lynch and Seth Hutchinson and George A Kantor and Wolfram Burgard and Lydia E. Kavraki and Sebastian Thrun",
   title = "Principles of Robot Motion: Theory, Algorithms, and Implementations",
   booktitle = "Principles of Robot Motion: Theory, Algorithms, and Implementations",
   publisher = "MIT Press",
   address = "Cambridge, MA",
   month = "June",
   year = "2005",
   Notes = "ISBN 0-262-03327-5"
}

@article{tenenbaum2000global,
  title={A global geometric framework for nonlinear dimensionality reduction},
  author={Tenenbaum, Joshua B and De Silva, Vin and Langford, John C},
  journal={Science},
  volume={290},
  number={5500},
  pages={2319--2323},
  year={2000},
  publisher={American Association for the Advancement of Science}
}

@article{lozano1983spatial,
  title={Spatial planning: A configuration space approach},
  author={Lozano-Perez, Tomas},
  journal={IEEE transactions on computers},
  number={2},
  pages={108--120},
  year={1983},
  publisher={IEEE}
}

@incollection{jolliffe1986principal,
  title={Principal Component Analysis and Factor Analysis},
  author={Jolliffe, Ian T},
  booktitle={Principal component analysis},
  pages={115--128},
  year={1986},
  publisher={Springer}
}

@article{hotelling1933analysis,
  title={Analysis of a Complex of Statistical Variables into Principal Components},
  author={Hotelling, Harold},
  journal={Journal of educational psychology},
  volume={24},
  number={6},
  pages={417},
  year={1933},
  publisher={Warwick \& York}
}

@article{ghodsi2006dimensionality,
  title={Dimensionality Reduction:  A short tutorial},
  author={Ghodsi, Ali},
  journal = {University of Waterloo},
  publisher={University of Waterloo, Ontario, Canada},
  year = {2006}
}

@book{lee-verlysen-07_nonlinear-dimensionality-reduction,
    author = "John Lee and Michel Verleysen",
    title = "Nonlinear Dimensionality Reduction",
    booktitle = "Nonlinear Dimensionality Reduction",
    publisher = "Springer",
    month = "June",
    year = "2007",
    Notes = "ISBN 978-0-387-39351-3"
}

@inproceedings{bengio2004out,
  title={Out-of-sample extensions for LLE, Isomap, MDS, Eigenmaps, and Spectral Clustering},
  author={Bengio, Yoshua and Paiement, Jean-fran{\c{c}}cois and Vincent, Pascal and Delalleau, Olivier and Roux, Nicolas L and Ouimet, Marie},
  booktitle={Advances in neural information processing systems},
  pages={177--184},
  year={2004}
}

@article{van1997keeping,
  title={Keeping the arm in the limelight: Advanced visual control of arm movements in neonates},
  author={van der Meer, Audrey L},
  journal={European Journal of Paediatric Neurology},
  volume={1},
  number={4},
  pages={103--108},
  year={1997},
  publisher={Elsevier}
}

@article{hoffmann2010body,
  title={Body schema in robotics: a review},
  author={Hoffmann, Matej and Marques, Hugo Gravato and Arieta, Alejandro Hernandez and Sumioka, Hidenobu and Lungarella, Max and Pfeifer, Rolf},
  journal={Autonomous Mental Development, IEEE Transactions on},
  volume={2},
  number={4},
  pages={304--324},
  year={2010},
  publisher={IEEE}
}

@inproceedings{bingham2001random,
  title={Random projection in dimensionality reduction: applications to image and text data},
  author={Bingham, Ella and Mannila, Heikki},
  booktitle={Proceedings of the seventh ACM SIGKDD international conference on Knowledge discovery and data mining},
  pages={245--250},
  year={2001},
  organization={ACM}
}

@inproceedings{hikita2008visual,
  title={Visual attention by saliency leads cross-modal body representation},
  author={Hikita, Masayuki and Fuke, Sawa and Ogino, Mamami and Minato, Tsuneaki and Asada, Minoru},
  booktitle={Development and Learning, 2008. ICDL 2008. 7th IEEE International Conference on},
  pages={157--162},
  year={2008},
  organization={IEEE}
}

@inproceedings{martinez2010body,
  title={Body schema acquisition through active learning},
  author={Martinez-Cantin, Ruben and Lopes, Manuel and Montesano, Luis},
  booktitle={Robotics and Automation (ICRA), 2010 IEEE International Conference on},
  pages={1860--1866},
  year={2010},
  organization={IEEE}
}

@article{kavraki-latombe-overmars-96_PRM-high-dimensional,
  title={Probabilistic roadmaps for path planning in high-dimensional configuration spaces},
  author={Kavraki, Lydia E and {\v{S}}vestka, Petr and Latombe, Jean-Claude and Overmars, Mark H},
  journal={Robotics and Automation, IEEE Transactions on},
  volume={12},
  number={4},
  pages={566--580},
  year={1996},
  publisher={IEEE},
  annote = {

The distance metric used aims at measuring the area swept by the robot while
going from one node to another. This is best captured by the maximum distance
covered by any point[Kavraki et al. (1996)] on the robot.

}}

@article{lavalle1998rapidly,
  title={Rapidly-exploring random trees: A new tool for path planning},
  author={LaValle, Steven M},
  year={1998},
  publisher={Citeseer}
}

@article{karaman2011sampling,
  title={Sampling-based algorithms for optimal motion planning},
  author={Karaman, Sertac and Frazzoli, Emilio},
  journal={The international journal of robotics research},
  volume={30},
  number={7},
  pages={846--894},
  year={2011},
  publisher={Sage Publications Sage UK: London, England}
}

@article{elbanhawi2014sampling,
  title={Sampling-based robot motion planning: A review},
  author={Elbanhawi, Mohamed and Simic, Milan},
  journal={IEEE Access},
  volume={2},
  pages={56--77},
  year={2014},
  publisher={IEEE}
}

@inproceedings{lanillos2018adaptive,
  title={Adaptive robot body learning and estimation through predictive coding},
  author={Lanillos, Pablo and Cheng, Gordon},
  booktitle={2018 IEEE/RSJ International Conference on Intelligent Robots and Systems (IROS)},
  pages={4083--4090},
  year={2018},
  organization={IEEE}
}

@inproceedings{zenha2018incremental,
  title={Incremental adaptation of a robot body schema based on touch events},
  author={Zenha, Rodrigo and Vicente, Pedro and Jamone, Lorenzo and Bernardino, Alexandre},
  booktitle={Proc. of 8th Joint IEEE International Conference on Development and Learning and on Epigenetic Robotics (ICDL-EPIROB 2018)},
  year={2018},
  organization={IEEE}
}

@article{sigaud-sala-11_online-regression-learning-robot-models_survey,
  title={On-line regression algorithms for learning mechanical models of robots: a survey},
  author={Sigaud, Olivier and Sala{\"u}n, Camille and Padois, Vincent},
  journal={Robotics and Autonomous Systems},
  volume={59},
  number={12},
  pages={1115--1129},
  year={2011},
}

@article{arleo-smeraldi-04_cognitive-navigation_nonuniform-gabor_reinforcement,
  title={Cognitive navigation based on nonuniform Gabor space sampling, unsupervised growing networks, and reinforcement learning},
  author={Arleo, Angelo and Smeraldi, Fabrizio and Gerstner, Wulfram},
  journal={Neural Networks, IEEE Transactions on},
  volume={15},
  number={3},
  pages={639--652},
  year={2004},
  publisher={IEEE},
  annote = {
}}

@book{engelberger-Joseph-1980_robotics-in-practice,
  title={Robotics in practice: management and applications of industrial robots},
  author={Engelberger, Joseph F},
  year={1980},
  publisher={Kogan Page},
  annote = {

from ch.9

% Whatever the intentions of their creators, robots are always going to be
% compared with men in terms of their attributes and general behavior.
% Although they might be just fine for doing repetitive, dirty, boring,
% dangerous jobs in factories, and even though they can often do such
% jobs with positive economic advantages, yet robots remain stupid, insensitive
% and limited devices when they are compared with human beings.

% Vision will allow a robot to recognize things and also to determine where
% things are. These are listed as recognition data and orientation data. 
}}

@inproceedings{shi-tomasi_1994_cvpr_good-features-to-track,
   author = "Jianbo Shi and Carlo Tomasi",
   title = "Good Features to Track",
   booktitle = "1994 IEEE Conference on Computer Vision and Pattern Recognition (CVPR'94)",
   pages = "593 - 600",
   year = "1994",
}

@inCollection{poincare-1895-space-and-geometry,
author = {Henri Poincar\'e},
title = {L'espace et la geometrie},
note = {transl. W. J. Greenstreet, 1905},
booktitle = {Science and Hypothesis},
year = {1895},
pages = {60-71},
}

@incollection{kazemi-gupta-10_path-planning-visual-servoing,
  title={Path-planning for visual servoing: a review and issues},
  author={Kazemi, Moslem and Gupta, Kamal and Mehrandezh, Mehran},
  booktitle={Visual Servoing via Advanced Numerical Methods},
  pages={189--207},
  year={2010},
}

@article{kambhatla-leen-97_dimension-red-by-local-PCA,
  title={Dimension reduction by local principal component analysis},
  author={Kambhatla, Nandakishore and Leen, Todd K},
  journal={Neural Computation},
  volume={9},
  number={7},
  pages={1493--1516},
  year={1997},
}

@conference{yang-wang-05_better-scaled-local-tangent-space,
  title={{A better scaled local tangent space alignment algorithm}},
  author={Yang, J. and Li, F. and Wang, J.},
  booktitle={Neural Networks, 2005. IJCNN'05. Proceedings. 2005 IEEE International Joint Conference on},
  volume={2},
  pages={1006--1011},
  isbn={0780390482},
  year={2005},
}

@article{kawato-90_feedback-error-learning-NN-for-motor-neuro,
  title={Feedback-error-learning neural network for supervised motor learning},
  author={Kawato, Mitsuo},
  journal={Advanced neural computers},
  volume={6},
  number={3},
  pages={365--372},
  year={1990},
}

@article{philipona-oregan-2003_perception-of-structure-unknown-sensors,
  title={Perception of the structure of the physical world using unknown multimodal sensors and effectors},
  author={Philipona, D. and O'Regan, JK and Nadal, J.P. and Coenen, O.},
  journal={Advances in neural information processing systems},
  volume={16},
  year={2003},
  month={Dec}
}

@article{von-Hofsten-04trics_action-perspective-on-motor-development,
  title =	{An action perspective on motor development},
  author =	{Von Hofsten, C.},
  journal =	{Trends in cognitive sciences},
  volume =	{8},
  number =	{6},
  pages =	{266--272},
  year =	{2004},
  publisher =	{Elsevier},

}

@inCollection{adolph-berger-06_motor-development,
  title =	{Motor development},
  author =	{Adolph, K.E. and Berger, S.E.},
  booktitle =	{Handbook of child psychology},
  editors =	{Deanna Kuhn and Robert S. Siegler},
  year =	{2006},
}

@article{shatkay-kaelbling-02_learning-hmms-sensorimotor-topological-geometric,
  title={Learning Geometrically-Constrained Hidden Markov Models for Robot Navigation: Bridging the Topological-Geometrical Gap},
  author={Shatkay, H. and Kaelbling, L.P.},
  journal={Journal of Artificial Intelligence Research},
  volume={16},
  pages={167--207},
  year={2002}
}

@article{ranganathan-dellaert-11_online-probabilistic-topological-mapping,
  title={Online probabilistic topological mapping},
  author={Ranganathan, A. and Dellaert, F.},
  journal={The International Journal of Robotics Research},
  volume={30},
  number={6},
  pages={755--771},
  year={2011}

}

@article{olsson-nehaniv-polani-06_from-sensorimotor-to-actions-info-theory,
  title={From unknown sensors and actuators to actions grounded in sensorimotor perceptions},
  author={Olsson, L. and Nehaniv, C. and Polani, D.},
  journal={Connection Science},
  volume={18:2},
  pages={121--144},
  year={2006},
  publisher={Taylor and Francis Ltd}
}

@inproceedings{modayil-kuipers-07aaai_robot-learning-grounded-object-ontology,
  title={Autonomous development of a grounded object ontology by a learning robot},
  author={Modayil, J. and Kuipers, B.},
  booktitle={Proceedings of the National Conference on Artificial Intelligence},
  volume={22:2},
  pages={1095},
  year={2007}
}

@inproceedings{caligiore-baldassarre-08_motor-babbling-developmental-reaching-w-obstacles,
  title={Using motor babbling and Hebb rules for modeling the development of reaching with obstacles and grasping},
  author={Caligiore, D. and Ferrauto, T. and Parisi, D. and Accornero, N. and Capozza, M. and Baldassarre, G.},
  booktitle={International Conference on Cognitive Systems},
  pages={1--8},
  year={2008}
}

@article{philipona-oregan-nadal:2003,
  title = {Is There Something Out There? Inferring Space from
  Sensorimotor Dependencies},
  author = {D. Philipona and J. Kevin O'Regan and J.-P. Nadal},
  journal = {Neural Computing},
  year = {2003},
  volume = {15:9},
  pages = {2029--2049},
}

@article{pierce-kuipers-97_map-learning-uninterpreted-sensors,
    author = {David Pierce and Benjamin Kuipers},
    title = {Map Learning with Uninterpreted Sensors and Effectors},
    journal = {Artificial Intelligence},
    year = {1997},
    volume = {92},
    pages = {169--229}
}

@article{thelen-00_motor-development-foundation-for-dev-psych,
  title =	 {Motor development as foundation and future of developmental
                  psychology},
  author =	 {Thelen, E.},
  journal =	 {International Journal of Behavioral Development},
  volume =	 {24},
  number =	 {4},
  pages =	 {385--397},
  year =	 {2000},
}

@inproceedings{stober-fishgold-kuipers-09_sensor-map-discovery,
  title={Sensor map discovery for developing robots},
  author={Stober, J. and Fishgold, L. and Kuipers, B.},
  booktitle={AAAI Fall Symposium on Manifold Learning and Its Applications},
  year={2009}
}

@inproceedings{modayil-10_discovering-sensor-space,
  title={Discovering sensor space: Constructing spatial embeddings that explain sensor correlations},
  author={Modayil, J.},
  booktitle={Development and Learning (ICDL), 2010 IEEE 9th International Conference on},
  pages={120--125},
  year={2010}
}

@inproceedings{dasgupta2000experiments,
  title={Experiments with random projection},
  author={Dasgupta, Sanjoy},
  booktitle={Proceedings of the Sixteenth conference on Uncertainty in artificial intelligence},
  pages={143--151},
  year={2000},
  organization={Morgan Kaufmann Publishers Inc.}
}

@article{mussa-95_geometrical-principles-in-motor-control,
  title={Geometrical principles in motor control},
  author={Mussa-Ivaldi, Ferdinando A},
  journal={The handbook of brain theory and neural networks},
  pages={434--438},
  year={1995},
}

@article{grush-00-brain-mind_self-world-space,
  title={Self, World and Space: The Meaning and Mechanisms of Ego-and Allocentric Spatial Representation},
  author={Grush, Rick},
  journal={Brain and Mind},
  volume={1},
  number={1},
  pages={59--92},
  year={2000},
}

@article{oja-82_simplified-neuron-model-as-principal-component-analyzer,
  title={Simplified neuron model as a principal component analyzer},
  author={Oja, Erkki},
  journal={Journal of mathematical biology},
  volume={15},
  number={3},
  pages={267--273},
  year={1982},
}

@article{thelen-79_rhythmical-stereotypies-in-normal-human-infants,
  title={Rhythmical stereotypies in normal human infants},
  author={Thelen, Esther},
  journal={Animal behaviour},
  volume={27},
  pages={699--715},
  year={1979},
  publisher={Elsevier},
  annote = {

from thelen-81:

Analyses reported previously (Thelen 1979) also indicated that rhythmical
stereotypies showed striking regularities in their developmental
profiles. The appearance of rhythmical behaviour in a muscle group either
preceded more complex coordinated activities using those muscles or appeared
just as the infant was gaining postural control over a new position. These
developmental data suggested that rhythmical stereotypies were manifestations
of simple pattern production characteristic of developing neuromuscular
tracts when infants had a degree of functional maturity but as yet incomplete
voluntary control. In other words, rhythmical stereotypies look like
transition behaviours between uncoordinated movements and mature,
goal-corrected behaviours.

}}

@article{huttenlocher1993comparing,
  title={Comparing images using the Hausdorff distance},
  author={Huttenlocher, Daniel P. and Klanderman, Gregory A. and Rucklidge, William J},
  journal={Pattern Analysis and Machine Intelligence, IEEE Transactions on},
  volume={15},
  number={9},
  pages={850--863},
  year={1993},
  publisher={IEEE}
}

@incollection{ofjall-felsberg-15_learning-vision-robot-control_autonomous,
  title={Online learning of vision-based robot control during autonomous operation},
  author={{\"O}fj{\"a}ll, Kristoffer and Felsberg, Michael},
  booktitle={New Development in Robot Vision},
  pages={137--156},
  year={2015},
  publisher={Springer},
  annote = {

uses optimization based search
Locally weighted projection regression, LWPR,
and Levenberg-Marquardt,LM, is evaluated.

Uses it to drive a mobile vehicle.

Also a 2-DOF  robot arm in simulation

Also results from inverse kinematics on a kuka robot. 

}}

@inproceedings{rolf-asada-13_learning-inverse-models-w--goal-babbling,
  title={Learning Inverse Models in High Dimensions with Goal Babbling and Reward-Weighted Averaging},
  author={Rolf, Matthias and Asada, Minoru},
  booktitle={Workshop on Advances in Machine Learning for Sensorimotor
	  Control, NIPS-13},
  year={2013},
}

@article{rolf-steil-10_goal-babbling-to-learn-inverse-kinematics,
  title={Goal babbling permits direct learning of inverse kinematics},
  author={Rolf, Matthias and Steil, Jochen J and Gienger, Michael},
  journal={Autonomous Mental Development, IEEE Transactions on},
  volume={2},
  number={3},
  pages={216--229},
  year={2010},
}

@inproceedings{lee-11_intrinsic-activitity-from-motor-babbling-to-play,
  title={Intrinsic activitity: from motor babbling to play},
  author={Lee, Mark H},
  booktitle={Development and Learning (ICDL), 2011 IEEE International Conference on},
  volume={2},
  pages={1--6},
  year={2011},
  organization={IEEE},
  annote = {

Abstract-This paper presents the hypothesis that intrinsic, apparently
goal-free, motor-centric activity is a fundamental and necessary component of
cognitive development in truly autonomous intelligent agents, both human and
artificial.

}}

@article{danziger-mussa-12_influence-of-visual-motion-on-motor-learning,
  title={The influence of visual motion on motor learning},
  author={Danziger, Zachary and Mussa-Ivaldi, Ferdinando A},
  journal={The Journal of Neuroscience},
  volume={32},
  number={29},
  pages={9859--9869},
  year={2012},
  publisher={Soc Neuroscience},
  annote = {

TO REFER:
Endpoint trajectories in visually
perceived space were significantly more rectilinear for the endpoint group
than for full arm group. 
}}

@article{wolpert-ghahramani-00_computational-movement-neuroscience,
  title={Computational principles of movement neuroscience},
  author={Wolpert, Daniel M and Ghahramani, Zoubin},
  journal={nature neuroscience},
  volume={3},
  pages={1212--1217},
  year={2000},
}

@article{prinz-97_perception-perception-and-action-planning,
  title={Perception and action planning},
  author={Prinz, Wolfgang},
  journal={European journal of cognitive psychology},
  volume={9},
  number={2},
  pages={129--154},
  year={1997},
  annote = {
}}

@inCollection{jordan-stork-02_action-planning-affects-spatial-localization,
  title={Action planning affects spatial localization},
  author={Jordan, Jerome Scott and Stork, Sonja and Knuf, Lothar and Kerzel, Dirk and M{\"u}sseler, J},
  pages={158--176},
  booktitle={Common mechanisms in perception and action: Attention and
  performance XIX},
  editor={Prinz, Wolfgang and Hommel, Bernhard},
  year={2002},
}

@article{gallese-lakoff-05_the-brains-concepts-sensory-motor,
  title={The brain's concepts: The role of the sensory-motor system in conceptual knowledge},
  author={Gallese, Vittorio and Lakoff, George},
  journal={Cognitive neuropsychology},
  volume={22},
  number={3-4},
  pages={455--479},
  year={2005},
}

@article{ganguli-sompolinsky-12_compressed-sensing-sparsity-dimensionality,
  title={Compressed sensing, sparsity, and dimensionality in neuronal information processing and data analysis},
  author={Ganguli, Surya and Sompolinsky, Haim},
  journal={Annual review of neuroscience},
  volume={35},
  pages={485--508},
  year={2012},
  publisher={Annual Reviews}
}

@article{davenport-hegde-10_joint-manifolds-for-data-fusion,
	author = {Mark A. Davenport and Chinmay Hegde and Marco F. Duarte and Richard G. Baraniuk},
    title = {Joint Manifolds for Data Fusion},
    year = {2009}
}

@article{Kovar:Mographs,
    title = "Motion graphs",
    author = "L.~Kovar and M.~Gleicher and F. Pighin",
    journal = "ACM Transactions on Graphics",
    volume = "21",
    number = "3",
    year = "2002",
    pages = "473--482",
    

}

@article{Kovar:Moextract,
    title = "Automated extraction and parameterization of motions in large data
  sets",
    author = "L. Kovar and M. Gleicher",
    journal = "ACM Transactions on Graphics",
    volume = "23",
    number = "3",
    year = "2004",
}

@article{ McCann:2007:Characters,
  author = "James McCann and Nancy S. Pollard",
  title = "Responsive Characters from Motion Fragments",
  year = "2007",
  month = aug,
  journal = "ACM Transactions on Graphics (SIGGRAPH 2007)",
  volume = "26",
  number = "3",
}

@article{ Popovic:2007,
  author = "Adrien Treuille and Yongjoon Lee and Zoran Popovic",
  title = "Near-optimal Character Animation with Continuous Control",
  year = "2007",
  month = aug,
  journal = "ACM Transactions on Graphics (SIGGRAPH 2007)",
  volume = "26",
  number = "3",
}

@article{ Moeslund:2006,
  author = "Thomas B. Moeslund and Adrian Hilton and Volker Kruger",
  title = "A survey of advances in vision-based human motion capture
and analysis",
  year = "2006",
  journal = "Computer Vision and Image Understanding",
  volume = "104",
  number={2-3},
  pages = "90-126"
}

@inproceedings{Shin:Fatgraphs,
  author    = {Hyun Joon Shin and Hyun Seok Oh},
  title     = {Fat Graphs: Constructing an interactive character with
continuous controls},
  booktitle = {Eurographics/ ACM SIGGRAPH Symposium on Computer Animation},
  pages     = {291--298},
  year      = {2006},
  
}

@inproceedings{Lee:2004,
  author = "Jehee Lee and Kang Hoon Lee",
  title = "Precomputing Avatar Behavior From Human Motion Data",
  booktitle = "Proceedings of Eurographics/ACM SIGGRAPH Symposium on Computer Animation",
  year = 2004,
  organization = "The Eurographics Association",
}

@inproceedings{Thalmann:2004,
author = {A. Egges and R. Visser and N. Magnenat-Thalmann},
title = {Example-Based Idle Motion Synthesis in a Real-Time Application},
booktitle = {Proc. CapTech Workshop on Modelling and Motion Capture Techniques for Virtual Environments, Zermatt, Switzerland},
pages = {13-19},
month = {12},
year = {2004}
}

@book{sturm2013approaches,
  title={Approaches to Probabilistic Model Learning for Mobile Manipulation Robots},
  author={Sturm, J{\"u}rgen},
  volume={89},
  year={2013},
  publisher={Springer},
  abstract={
    This book presents novel learning techniques that enable mobile manipulation robots, i.e., mobile platforms with one or more robotic manipulators, to autonomously adapt to new or changing situations. The approaches presented in this book cover the following four topics: (1) learning the robot’s kinematic structure and properties using actuation and visual feedback, (2) learning about articulated objects in the environment in which the robot is operating, (3) using tactile feedback to augment the visual perception, and (4) learning novel manipulation tasks from human demonstrations.  In the ﬁrst part of this book, we present innovative approaches to learning a robot’s own body schema from scratch using visual self-observation. This allows manipulation robots to calibrate themselves automatically and to adapt their body schemata autonomously, for example after hardware failures or during tool use. In the second part, we extend the developed framework to learning about articulated objects – such as doors and drawers – with which service robots often need to interact. The presented algorithms enable robots to learn accurate kinematic models of articulated objects, which in turn allow them to interact with the objects robustly. In the third part, we provide approaches that allow manipulation robots to make use of tactile perception – an ability that is known to play an important role in human object manipulation skills. The main contributions in this part are approaches to identifying objects and to perceiving aspects of their internal states. With this, a manipulation robot can verify that it has grasped the correct object and, for example, discriminate full from empty bottles. Finally, we present an integrated system that allows human operators to intuitively teach a robot novel manipulation tasks by demonstration.
  }
}

@incollection{hoffmann2014minimally,
  title={Minimally cognitive robotics: body schema, forward models, and sensorimotor contingencies in a quadruped machine},
  author={Hoffmann, Matej},
  booktitle={Contemporary Sensorimotor Theory},
  pages={209--233},
  year={2014},
  publisher={Springer},
  abstract={
    In response to the cognitivistic paradigm and its problems, the embodied cognition viewpoint was proposed. In robotics, this resulted in a radical move away from higher-level cognitive functions toward direct, almost "brain-less" interaction with the environment (e.g., behavior-based robotics). While some remarkable behaviors were demonstrated, the complexity of tasks the agents could master remained limited. A natural extension of this approach lies in letting the agents extract regularities in sensorimotor space and exploit them for more effective action guidance. We will use a collection of case studies featuring a quadruped robot to concretely explore this space of minimally cognitive phenomena and contrast the concepts of body schema, forward internal models, and sensorimotor contingencies. The results will be interpreted from a "grounded cognition" and a non-representationalist or enactive perspective. Finally, the utility of robots as cognitive science tools and their compatibility with different cognitive science paradigms will be discussed.
  }
}

@inproceedings{schmerling2015goal,
  title={Goal-directed learning of hand-eye coordination in a humanoid robot},
  author={Schmerling, Mathias and Schillaci, Guido and Hafner, Verena V},
  booktitle={Development and Learning and Epigenetic Robotics (ICDL-EpiRob), 2015 Joint IEEE International Conference on},
  pages={168--175},
  year={2015},
  organization={IEEE},
  abstract={
    Visuo-motor coordination is known to be highly important for the development of a broader range of cognitive and motor skills in human infants and can thus be considered one of the key skills for robots to master. In this paper, we investigate how a recent concept in developmental robotics, referred to as goal babbling, relates to a visuo-motor coordination task in the humanoid robot Aldebaran Nao that requires coordinated control of two subsystems of motors, namely head and arm motors. The idea of goal babbling builds on findings in developmental psychology showing that human infants attempt goal-directed movements early on in their development enabling them to rapidly and efficiently bootstrap their motor system. Goal babbling has been shown to be superior to the classical idea of random motor babbling for the learning of body kinematics in robotic systems, in particular for systems with many degrees of freedom. Our results not only support the utility of goal babbling for the acquisition of visuo-motor coordination skills but also suggest that goal babbling is particularly effective in the case where two separate motor sub-systems, head and arm, need to be coordinated.
  }
}

@article{schillaci2016exploration,
  title={Exploration behaviors, body representations, and simulation processes for the development of cognition in artificial agents},
  author={Schillaci, Guido and Hafner, Verena V and Lara, Bruno},
  journal={Frontiers in Robotics and AI},
  volume={3},
  pages={39},
  year={2016},
  publisher={Frontiers}
}

@inproceedings{lanillos2017enactive,
  title={Enactive Self: a study of engineering perspectives to obtain the sensorimotor self through enaction},
  author={Lanillos, Pablo and Dean-Leon, Emmanuel and Cheng, Gordon},
  booktitle={IEEE International Conference on Developmental Learning and Epigenetic Robotics},
  year={2017},
  abstract={
    In this paper we discuss the enactive self from a computational point of view and study the suitability of current methods to instantiate it onto robots. As an assumption, we consider any cognitive agent as an autonomous system that constructs its identity by continuous interaction with the environment. We start examining algorithms to learn the bodyschema and to enable tool-extension, and we finalise by studying their viability for generalizing the enactive self computational model. This paper points out promising techniques for bodily self-modelling and exploration, as well as formally link sensorimotor models with differential kinematics. Although the study is restricted to basic sensorimotor construction of the self, some of the analysed works also traverse into more complex self constructions with a social component. Furthermore, we discuss the main gaps of current engineering approaches for modelling enactive robots and describe the main characteristics that a synthetic sensorimotor self-model should present.
  }
}

\end{document}